\documentclass[mnsc]{informs3}

\OneAndAHalfSpacedXI
\TheoremsNumberedThrough
\EquationsNumberedThrough

\usepackage{amsmath,amssymb,bm}
\usepackage{fix-cm}
\usepackage{booktabs}
\usepackage{graphicx}
\usepackage{caption}
\usepackage{subcaption}
\usepackage{algorithm}
\usepackage{algorithmic}
\usepackage{natbib}
\usepackage{xcolor}
\usepackage{tikz}
\usetikzlibrary{decorations.pathreplacing,arrows.meta,positioning,shapes.geometric,calc,fit,backgrounds}
\usepackage{hyperref}
\hypersetup{colorlinks=true,pdfnewwindow=true,pdfstartview=FitH,%
urlcolor=blue!90!red!45!black,citecolor=blue!90!red!45!black,linkcolor=red!90!black}
\usepackage[nameinlink,noabbrev]{cleveref}
\crefname{assumption}{Assumption}{Assumptions}
\Crefname{assumption}{Assumption}{Assumptions}

\newcommand{\X}{\mathcal{X}}
\newcommand{\Y}{\mathcal{Y}}
\newcommand{\D}{\mathcal{D}}
\newcommand{\G}{\mathcal{G}}
\newcommand{\M}{\mathcal{M}}

\newcommand{\Prob}{\mathbb{P}}
\newcommand{\E}{\mathbb{E}}
\newcommand{\1}{\mathbf{1}}
\newcommand{\OPT}{\mathrm{OPT}}
\newcommand{\RR}{\mathrm{RR}}

\bibpunct[, ]{(}{)}{,}{a}{}{,}%
\def\bibfont{\fontsize{9}{11}\selectfont}%
\def\bibsep{\smallskipamount}%
\tikzset{
    arr/.style={-{Latex[length=2.5mm,width=1.8mm]}, thick},
    pararr/.style={-{Latex[length=2.5mm,width=1.8mm]}, thick, teal!70!black},
    io/.style={draw, rounded corners=2pt, fill=gray!10, minimum width=15mm,
        minimum height=8mm, align=center, font=\small},
    model/.style={draw, rounded corners=3pt, fill=violet!12, minimum width=24mm,
        minimum height=10mm, align=center, font=\small\bfseries},
    agent/.style={draw, rounded corners=2pt, fill=cyan!10, minimum width=18mm,
        minimum height=8mm, align=center, font=\small},
    groupbox/.style={draw, rounded corners=5pt, dashed, inner sep=5pt}
}

\MANUSCRIPTNO{MS-0000-0000.00}

\begin{document}

\RUNAUTHOR{Abdolmaleki, Jasin and Wang}
\RUNTITLE{Workflow Portfolios under Imperfect Selection and Compute Cost}

\TITLE{Designing Agentic AI Workflow Portfolios under \\ Imperfect Selection and Compute Cost}

\ARTICLEAUTHORS{%
\AUTHOR{Mojtaba Abdolmaleki}
\AFF{Ross School of Business, University of Michigan, United States, \EMAIL{\href{mailto:mojtabaa@umich.edu}{mojtabaa@umich.edu}}}
\AUTHOR{Stefanus Jasin}
\AFF{Ross School of Business, University of Michigan, United States, \EMAIL{\href{mailto:sjasin@umich.edu}{sjasin@umich.edu}}}
\AUTHOR{Boyu Wang}
\AFF{TrueFoundry, \EMAIL{\href{mailto:boyu.wang@truefoundry.com}{boyu.wang@truefoundry.com}}}
\AFF{School of Management, University of San Francisco, United States, \EMAIL{\href{mailto:bwang62@usfca.edu>}{bwang62@usfca.edu}}}
}

\ABSTRACT{Agentic AI systems often approach the same task through multiple workflows that differ in reasoning strategy, verification structure, and compute cost. A natural deployment policy is to use the workflow with the highest average performance, but this can be suboptimal because different workflows may succeed on different instances. We study a portfolio-and-selector paradigm in which a firm runs multiple workflow executions and selects the final answer after observing their outputs. Additional executions may uncover correct answers that the best standalone workflow misses, but they consume compute and introduce plausible distractors that complicate final selection. We formulate this as a workflow portfolio problem in which the firm jointly chooses run size and allocation across workflow types. We summarize selector quality through an odds-lift index and derive sharp bounds on the value of workflow variety. For finite workflow pools, we develop exact formulations, linear programming relaxations, randomized rounding procedures, and computable performance certificates. For large implicit workflow classes, we derive a finite-dimensional dual and an ellipsoid method using a pricing oracle to identify workflows with high weighted accuracy net of recurring compute cost. Under a weak condition, the method obtains a near-optimal solution to the relaxation with polynomially many oracle calls. We evaluate the framework on three datasets: ABCD, Schema-Guided Dialogue, and HotpotQA. Relative to the best standalone workflow, portfolio optimization improves held-out selector accuracy by 3.1, 7.5, and 0.9 percentage points, respectively. Dual-guided workflow generation adds 3.5 points on ABCD and 24.1 on HotpotQA, with no additional gain on Schema-Guided Dialogue.

}

{\color{blue}
\KEYWORDS{agentic AI; workflow portfolios; large language models; endogenous portfolio size; compute allocation; selector design; selector strength; odds lift; random utility; ellipsoid method; separation oracle; repeated execution}
}

\maketitle
\setcounter{page}{1}


\fontsize{11pt}{16pt}\selectfont

\section{Introduction}
\label{sec:introduction}

Artificial intelligence is moving beyond individual productivity tools and
into the operating core of organizations. Enterprise software vendors are
embedding task-specific AI capabilities into business applications, while
firms are beginning to delegate not only information retrieval and drafting,
but also parts of operational decision processes to AI systems
\citep{gartner2025agents,gartner2025cancellations}. Recent business surveys
report that many organizations are experimenting with or deploying agentic AI,
while also emphasizing that realizing value requires redesigning workflows,
establishing governance, and building reliable orchestration layers rather than
simply giving employees access to models
\citep{mckinsey2025stateai,mckinsey2026agenticfoundations,
mitsmrbcg2025agentic,hbr2025agentictrust}. This shift raises a natural
operations-management question: once AI workflows become part of recurring
business processes, how should firms decide which workflows to deploy and how
much compute to devote to them?

The setting we study arises in service operations, compliance, claims
processing, internal knowledge work, and decision support. A
customer-service system may classify a ticket, retrieve relevant policies,
identify the customer's need, draft a response, check the response for
compliance, and decide whether to escalate the case to a human. A
claims-processing system may summarize a claim, compare it with contract
language, identify relevant exceptions, and recommend whether to approve or
deny it. A compliance assistant may retrieve applicable rules, analyze a
transaction, flag inconsistencies, and recommend an appropriate action. In
each of these settings, the relevant operational unit is not a single model
call, but a workflow: a sequence or graph of prompts, model calls, retrieval
steps, tools, verifiers, and aggregation rules that produces a candidate
answer.

At first glance, a natural approach is to identify the workflow with the
highest average accuracy and deploy it alone. Average performance, however,
can conceal substantial heterogeneity across tasks: different workflows may
succeed on very different cases. Relying on a single workflow can therefore
leave important gaps that another workflow could fill. A firm may have many
workflows that it could develop and maintain. Some are cheap and fast, while
others are more expensive but more careful. Some rely on retrieval, some on
decomposition or repeated reasoning, and others on verification or critique.
A workflow that performs poorly on average may still be valuable if it solves
cases that stronger workflows miss, whereas a high-performing workflow may add
little if it succeeds mainly on cases that are already easy. The firm must
therefore decide which workflow types to maintain, how many times to execute
each type, and which candidate answer to select as the final output, while
accounting for the recurring compute cost of these decisions.

Much of the existing literature follows a plan, then execute
paradigm \citep{shen2023hugginggpt,liang2024taskmatrix,lu2023chameleon,zhang2024aflow,hu2024adas,zhang2025maas}. A large model, router, or planner first selects or designs a single
workflow, which is then executed to produce the final answer. This plan--then--execute paradigm is illustrated in
\Cref{fig:traditional-paradigm}. This approach is
natural when the system can reliably predict, before execution, which workflow
is best suited to the task. In many operational settings, however, the
firm may not know in advance which workflow will succeed. It may be
easier to recognize a correct answer after observing the candidate outputs and
their supporting evidence, such as execution traces, retrieved sources,
agreement patterns, confidence signals, or verifier results, than to identify
ex ante the single workflow that will generate it.

\begin{figure}[h]
\centering
\begin{tikzpicture}[
    scale=0.75,
    transform shape
]

\node[model] (planner) {Large model\\plans one\\workflow};

\node[agent, right=18mm of planner] (d) {Decomposer};
\node[agent, right=7mm of d] (s1) {Solver};
\node[agent, right=7mm of s1] (v1) {Verifier};
\node[agent, right=7mm of v1] (s2) {Solver};
\node[agent, right=7mm of s2] (v2) {Verifier};
\node[io, right=10mm of v2] (ans) {Final\\output};

\draw[arr] (planner) --
    node[above, font=\footnotesize] {} (d);
\draw[arr] (d) -- (s1);
\draw[arr] (s1) -- (v1);
\draw[arr] (v1) --
    node[above, font=\footnotesize] {} (s2);
\draw[arr] (s2) -- (v2);
\draw[arr] (v2) -- (ans);

\node[groupbox, fit=(planner),
      label={[font=\bfseries\small, text=black!75]
      above:Step one: expensive planning}] {};

\node[groupbox, fit=(d)(s1)(v1)(s2)(v2)(ans),
      label={[font=\bfseries\small, text=black!75]
      above:Step two: execute the single planned workflow with cheaper agents}] {};

\end{tikzpicture}
\caption{Traditional orchestration paradigm: a large model synthesizes one
workflow, which is then executed by cheaper specialized agents.}
\label{fig:traditional-paradigm}
\end{figure}

In this paper, we propose a portfolio and selector paradigm. The firm first builds
and maintains a set of workflow types. For each task class or deployment
setting, it decides how many execution slots to use and how to allocate those
slots across the available workflows. The same workflow may receive more than
one slot. The selected executions are then run in parallel, and a selector
chooses the final answer from the resulting candidates. This operating model creates a fundamental tradeoff. An additional execution
may produce a correct answer that would otherwise be unavailable, or it may
increase the representation of a workflow that performs well on many tasks.
At the same time, every execution consumes tokens, latency, tool calls, and
money. If its output is wrong, it also adds another plausible distractor for
the selector. More executions are therefore not necessarily better.

\begin{figure}[t]
\centering
\begin{tikzpicture}[
    scale=0.90,
    transform shape,
    arr/.style={
        -{Latex[length=2mm,width=1.4mm]},
        semithick
    },
    pararr/.style={
        -{Latex[length=1.7mm,width=1.2mm]},
        semithick,
        teal!70!black
    },
    stage/.style={
        draw,
        dashed,
        rounded corners=4pt,
        inner sep=4pt
    },
    portfolio/.style={
        draw,
        rounded corners=2pt,
        fill=blue!6,
        minimum width=15mm,
        minimum height=5.8mm,
        inner sep=1pt,
        align=center,
        font=\scriptsize\bfseries
    },
    selectbox/.style={
        draw,
        rounded corners=3pt,
        fill=green!12,
        minimum width=36mm,
        minimum height=10mm,
        inner sep=2pt,
        align=center,
        font=\footnotesize
    },
    agent/.style={
        draw,
        rounded corners=2pt,
        fill=cyan!10,
        minimum width=12mm,
        minimum height=5.2mm,
        inner xsep=1.2pt,
        inner ysep=1pt,
        align=center,
        font=\scriptsize
    },
    outnode/.style={
        draw,
        rounded corners=2pt,
        fill=gray!8,
        minimum width=11mm,
        minimum height=5.2mm,
        inner sep=1pt,
        align=center,
        font=\scriptsize
    },
    selector/.style={
        draw,
        rounded corners=3pt,
        fill=orange!15,
        minimum width=17mm,
        minimum height=9mm,
        inner sep=1.5pt,
        align=center,
        font=\footnotesize\bfseries
    },
    io/.style={
        draw,
        rounded corners=2pt,
        fill=gray!8,
        minimum width=14mm,
        minimum height=7mm,
        inner sep=1.5pt,
        align=center,
        font=\footnotesize
    }
]


\node[portfolio] (p1) at (-3.5, 0.35) {Workflow A};
\node[portfolio, right=2mm of p1] (p2) {Workflow B};
\node[portfolio, below=2mm of p1] (p3) {Workflow C};
\node[portfolio, right=2mm of p3] (pd) {\ldots};

\node[
    fit=(p1)(p2)(p3)(pd),
    inner sep=0pt
] (pbox) {};

\node[
    selectbox,
    right=9mm of pbox
] (select)
{Select workflows:\\[-0.5mm]
 { accuracy--compute optimization}};

\draw[arr] (pbox.east) -- (select.west);

\begin{scope}[on background layer]
\node[
    stage,
    fit=(p1)(p2)(p3)(pd)(select)
] (designbox) {};
\end{scope}

\node[
    font=\bfseries\footnotesize
] at ([yshift=1.5mm]designbox.north)
{1. Portfolio design and allocation};


\node[agent] (w2d) at (-3.6,-3.20) {Decompose};
\node[agent, right=1.3mm of w2d] (w2s1) {Solve};
\node[agent, right=1.3mm of w2s1] (w2v1) {Verify};
\node[agent, right=1.3mm of w2v1] (w2s2) {Solve};
\node[agent, right=1.3mm of w2s2] (w2v2) {Verify};
\node[outnode, right=3mm of w2v2] (y2) {Answer B};

\node[agent, above=6mm of w2d] (w1d) {Decompose};
\node[agent, right=1.3mm of w1d] (w1s) {Solve};
\node[agent, right=1.3mm of w1s] (w1v) {Verify};
\node[outnode, right=3mm of w1v] (y1) {Answer A};

\node[agent, below=6mm of w2d] (w3d) {Decompose};
\node[agent, right=1.3mm of w3d] (w3r) {Retrieve};
\node[agent, right=1.3mm of w3r] (w3s) {Solve};
\node[agent, right=1.3mm of w3s] (w3v) {Verify};
\node[outnode, right=3mm of w3v] (y3) {Answer C};

\node[
    font=\scriptsize\bfseries,
    left=4.5mm of w1d
] (lab1) {A};

\node[
    font=\scriptsize\bfseries,
    left=4.5mm of w2d
] (lab2) {B};

\node[
    font=\scriptsize\bfseries,
    left=4.5mm of w3d
] (lab3) {C};

\draw[pararr] (w1d) -- (w1s);
\draw[pararr] (w1s) -- (w1v);
\draw[pararr] (w1v) -- (y1);

\draw[pararr] (w2d) -- (w2s1);
\draw[pararr] (w2s1) -- (w2v1);
\draw[pararr] (w2v1) -- (w2s2);
\draw[pararr] (w2s2) -- (w2v2);
\draw[pararr] (w2v2) -- (y2);

\draw[pararr] (w3d) -- (w3r);
\draw[pararr] (w3r) -- (w3s);
\draw[pararr] (w3s) -- (w3v);
\draw[pararr] (w3v) -- (y3);

\begin{scope}[on background layer]
\node[
    stage,
    fit=(lab1)(w1d)(w1s)(w1v)(y1)
        (lab2)(w2d)(w2s1)(w2v1)(w2s2)(w2v2)(y2)
        (lab3)(w3d)(w3r)(w3s)(w3v)(y3)
] (execbox) {};
\end{scope}

\node[
    font=\bfseries\footnotesize,
    anchor=south west
] at ([xshift=1mm,yshift=1.2mm]execbox.north west)
{2. Execute selected workflows in parallel};


\node[
    circle,
    fill=black,
    inner sep=1.1pt
] (split) at ([yshift=-10mm]select.south) {};

\draw[arr]
    (select.south)
    --
    node[
        right,
        font=\scriptsize,
        align=left
    ] {}
    (split.north);

\coordinate (fan1) at ([xshift=-5mm]w1d.west);
\coordinate (fan2) at ([xshift=-5mm]w2d.west);
\coordinate (fan3) at ([xshift=-5mm]w3d.west);

\draw[semithick] (split.west) -| (fan2);
\draw[semithick] (fan1) -- (fan3);

\draw[arr] (fan1) -- (w1d.west);
\draw[arr] (fan2) -- (w2d.west);
\draw[arr] (fan3) -- (w3d.west);


\node[
    selector,
    right=8mm of y2
] (selector) {Selector\\model};

\node[
    io,
    right=5mm of selector
] (final) {Final\\output};

\draw[arr] (y1.east) -| (selector.north west);
\draw[arr] (y2.east) -- (selector.west);
\draw[arr] (y3.east) -| (selector.south west);
\draw[arr] (selector) -- (final);

\begin{scope}[on background layer]
\node[
    stage,
    fit=(selector)(final)
] (postbox) {};
\end{scope}

\node[
    font=\bfseries\footnotesize
] at ([yshift=1.5mm]postbox.north)
{\quad 3. Post-output selection};

\end{tikzpicture}

\caption{Portfolio-based orchestration: the decision maker selects workflow
types and allocates execution slots, runs the selected workflows in parallel,
and chooses the final answer after observing their realized outputs.}
\label{fig:portfolio-paradigm}
\end{figure}

The central question of this paper is:
\emph{How should a firm decide how many workflow executions to run
and how to allocate them across workflow types, when additional executions
can improve the quality of the candidate set but also increase selector
confusion and compute cost?}

\vspace{1mm}
\textbf{Related literature.} Our work brings together several related streams, which we discuss in detail
in \Cref{sec:related}. One stream studies how to design effective agentic
workflows and inference time architectures
\citep{zhang2024aflow,hu2024adas,zhuge2024gptswarm,
saadfalcon2024archon,zhang2025maas}. A second stream studies how to route queries
across models or workflows to balance quality and cost
\citep{chen2023frugalgpt,ong2024routellm,hu2024routerbench,
huang2025thriftllm}. A third stream generates multiple candidate answers and uses
ranking, fusion, self consistency, or multi-agent aggregation to choose among
them
\citep{wang2022selfconsistency,jiang2023llmblender,
wang2024mixture}. Together, these studies address important parts of the
deployment problem, namely workflow design, pre-execution allocation, and
post-output aggregation. Once these decisions are considered jointly, however, the problem
becomes one of choosing and allocating a portfolio of alternatives under
limited resources and imperfect final selection. This perspective relates to several established areas in operations research, including choice modeling, assortment optimization, coverage, submodular optimization, column generation, and optimization through separation oracles
\citep{luce1959choice,mcfadden1974conditional,
vanryzin1999assortment,talluri2004theory,bertsimas2019exact,dong2023pasta, wang2024randomized,nemhauser1978analysis,
barman2021concave,alaei2010maximizing,dantzig1960decomposition,
desaulniers2005column,groetschel1981ellipsoid,
groetschel1988geometric}. What remains missing is an integrated framework for designing workflow portfolios when both execution decisions and final answer selection affect
performance and cost.

The challenge is not limited to choosing among workflows that have already
been evaluated. In practice, the feasible workflow space is large and cannot
usually be enumerated in advance. We therefore need a way to model how the
firm accesses workflows outside its current evaluated pool. In this paper, we
assume that new workflows can be proposed by an oracle, which may take the
form of a teacher model, an automated workflow generator, a structured search
procedure, or a human engineering team. The firm guides the oracle toward
generating workflows that perform well on tasks that the current portfolio
handles poorly, while also accounting for the recurring cost of executing
those workflows. This raises two questions: how should the firm use the oracle
to search the implicit workflow space, and how can it certify the quality of
the resulting portfolio?

We address these questions through a selector aware workflow portfolio model.
The firm decides how many times to execute each workflow type, which jointly
determines the composition and total size of the candidate set. Each execution
produces a candidate answer, and a selector chooses the final output after
observing the resulting candidates. This portfolio-and-selector paradigm is illustrated in
\Cref{fig:portfolio-paradigm}. We summarize selector performance through
recovery curves that relate the number of correct candidates in the set to the
probability that the final selected answer is correct. The firm's objective
is to maximize deployed decision quality net of recurring compute cost.

\vspace{1mm}
\textbf{Our contributions.}
Our first contribution develops a general theory of selector strength. We
introduce an odds lift index that measures how much the selector increases the
odds of returning a correct answer relative to random selection. A finite bound
on this index places the selector's recovery curve below a common concave
envelope. This envelope gives a sharp limit on how much any workflow portfolio
can improve on the best single workflow, regardless of how many workflows are
available or how complementary they appear. It also yields simple screening
rules that identify when selector quality and execution cost make a
multi-workflow portfolio unattractive, so that the best decision is to use a
single workflow or not deploy the system.

Our second contribution develops an optimization framework for finite workflow
pools when the total number of workflow executions, which we call the
\emph{run size}, is itself a decision. The framework determines both the run
size and the allocation of execution slots across workflow types, recognizing
that an additional execution may improve accuracy on some tasks while
increasing selector confusion on others. For any fixed run size, the problem
becomes a concave coverage problem. This structure leads to exact integer
programming formulations, linear programming relaxations, randomized rounding
procedures, and computable optimality certificates. We then develop a sparse
grid method for choosing the run size. Rather than solving a separate
optimization problem for every possible size, the method evaluates only a
small collection of candidate values while retaining a provable approximation
guarantee. This provides a practical approach for jointly choosing how many
workflow executions to run and how to allocate them across workflow types.

Our third contribution develops an optimization method over the
\emph{implicit workflow class}, by which we mean the full set of feasible
workflows that the oracle can search over but the firm cannot enumerate in
advance. Because the same workflow type may occupy multiple execution slots,
the linear program for a fixed run size does not require an upper bound on the
number of times each type can be used. Although this linear program contains
one variable for every workflow type in the implicit class, its dual has only
finitely many task price variables, together with one constraint for each
workflow. A global pricing oracle can therefore identify a workflow whose
constraint is violated, or certify that no important violation remains. Using
the classical equivalence between optimization and separation, the ellipsoid
method computes a solution within $\varepsilon$ of the full linear programming
relaxation using a number of oracle calls that is polynomial in the problem
encoding and $\log(1/\varepsilon)$. Combining this optimization error with the
losses from the cardinality grid and randomized rounding yields a high
probability approximation guarantee relative to the entire implicit workflow
class, without requiring that class to be enumerated.

{\color{black}
Our fourth contribution is empirical. We first calibrate selector recovery
across the ABCD and Schema-Guided Dialogue service-operations tasks
\citep{chen2021abcd,rastogi2020sgd} and the HotpotQA question-answering
benchmark \citep{yang2018hotpotqa}. Selector strength is substantially above
random selection in the two service domains and remains positive, although
weaker, on HotpotQA. We then evaluate the complete stochastic workflow-
generation and portfolio-optimization pipeline in all three domains. On fresh
held-out tasks, actual selector accuracy rises from 43.625\% for the best
initial singleton to 50.250\% for the final ABCD portfolio, from 85.250\% to
92.750\% on Schema-Guided Dialogue, and from 30.250\% to 55.250\% on
HotpotQA. Dual-guided generation incorporates four workflows on ABCD, none on
Schema-Guided Dialogue, and one on HotpotQA. This variation is consistent with
the model's central implication: a generated workflow creates value only when
its incremental coverage and selection benefit justify its recurring compute
cost.}

The central message is that deploying agentic AI is neither a search for the
single best workflow nor a simple rule that more candidates are always better.
Workflow variety creates value only when the selector can identify useful
outputs reliably enough to justify the additional compute cost and selection
difficulty. Effective deployment therefore requires firms to manage workflow
generation, run size, the allocation of executions across workflow types,
selector strength, and compute expenditure as a joint decision.

\vspace{1mm}
\textbf{Organization of the paper.}
The rest of the paper is organized as follows. \Cref{sec:related}
reviews the related literature. \Cref{sec:model} formulates the
portfolio and selector problem and introduces the oracle interface for
searching the implicit workflow class. \Cref{sec:selector-curve}
studies how selector strength limits the value of workflow variety.
\Cref{sec:ip-lp-rounding} develops the finite pool optimization
framework, including exact integer programs, linear programming
relaxations, randomized rounding, and optimality certificates.
\Cref{sec:implicit-oracle} derives the dual formulation for the
implicit workflow class and uses a workflow pricing oracle together
with the ellipsoid method to obtain performance guarantees without
enumerating all feasible workflows.
\Cref{sec:stochastic-workflow-outcomes} extends the model and its
algorithmic guarantees to stochastic workflow execution.
\Cref{sec:numerical-experiments} presents the numerical experiments,
and \Cref{sec:conclusion} concludes the paper. Unless otherwise noted,
all proofs are provided in the appendix.

\section{Related Literature}
\label{sec:related}

Our work relates to several streams of literature. The first studies the
automated design of agentic workflows, compound AI systems, and
LLM generated algorithms or heuristics. The second examines model routing,
cascades, and methods that generate and aggregate multiple candidate outputs.
The third includes work on random utility, assortment choice, and selection
among competing alternatives. The fourth concerns coverage, submodular
optimization, and optimization over large implicitly represented decision
spaces. Finally, our paper contributes to the emerging operations management
literature on the deployment, allocation, and governance of AI systems. We
discuss these streams in turn and explain how they inform, but do not resolve,
the joint problem of workflow generation, execution allocation, and final
answer selection studied here.

\vspace{1mm}
\paragraph{Automated design of agentic workflows and compound AI systems.}
A growing computer science literature studies how to design and optimize
multistep language model systems. AFlow searches over workflows represented as
code, using execution feedback and tree search
\citep{zhang2024aflow}. Automated Design of Agentic Systems treats agent design
as a search problem in which a meta agent proposes new agentic systems
\citep{hu2024adas}. GPTSwarm represents interacting language agents as a graph
whose structure can be optimized \citep{zhuge2024gptswarm}. Archon searches
over inference architectures that combine repeated sampling, ranking, critique,
verification, fusion, and model choice
\citep{saadfalcon2024archon}. MaAS constructs an agentic supernet and selects
task dependent multiagent architectures to balance performance and resource
use \citep{zhang2025maas}. LLMSelector studies model assignment within a
compound AI system, asking which language model should be used in each module
of a fixed multicall architecture \citep{chen2025llmselector}. Collectively,
these studies show that workflow structure, module choice, and automated search
can have substantial effects on system performance. Our work treats all these methods as potential mechanisms for generating candidate
workflows, but studies a different operational decision. The primary objective
in automated workflow design is typically to discover a strong workflow,
architecture, or assignment of models to modules. We instead ask which workflow
types should be retained, how many execution slots should be allocated to each,
and how much compute should be spent when the final answer is chosen only after
the candidate outputs are observed. Under this perspective, a workflow that is
not the strongest on average may still be valuable if it solves cases missed by
the existing portfolio and the selector can recognize its contribution. The
same workflow may be harmful if it mainly produces costly distractors that make
final selection more difficult.

\vspace{1mm}
\paragraph{LLM based algorithm design and heuristic portfolio generation.}
A related literature studies the use of large language models to design
algorithms and heuristics. Recent surveys organize this work around the roles
of language models as optimizers, designers, predictors, and components of
algorithmic search \citep{liu2026survey}. AlphaEvolve combines language model
generated code modifications with evaluator feedback and evolutionary search
to improve algorithms \citep{novikov2025alphaevolve}. EoH-S moves beyond the
search for a single heuristic and instead constructs a small set of
complementary heuristics \citep{liu2026eohs}. The recurring process of
generating candidates, evaluating their performance, and reoptimizing the
system also has methodological parallels in self adjusting control and joint
learning and optimization
\citep{jasin2014reoptimization,chen2019selfadjusting,
chen2021joint,zhang2022cyclic,agrawal2014dynamic,
keskin2014dynamic,elmachtoub2022smart, faradonbeh2023online}. The above literature is closely related to our workflow generation layer. In our
setting, however, the generated objects are executable AI workflows drawn from
an implicit workflow class, and their value depends on how they are deployed
together. The objective therefore accounts not only for the performance of each
generated workflow, but also for its contribution to a portfolio under
imperfect final selection and its recurring execution cost.

\vspace{1mm}
\paragraph{LLM routing, model cascades, and efficient inference.}
Another stream studies how to allocate queries across models with different
cost and quality profiles. FrugalGPT develops adaptive strategies, including
model cascades, to reduce inference cost while preserving or improving
performance \citep{chen2023frugalgpt}. RouteLLM learns routers that choose
between stronger and weaker models using preference data
\citep{ong2024routellm}, while RouterBench provides a benchmark for evaluating
systems that route queries across multiple language models
\citep{hu2024routerbench}. ThriftLLM studies the cost conscious selection of
LLM ensembles for classification tasks \citep{huang2025thriftllm}. Recent
operations research also treats heterogeneous LLM use as a resource allocation
problem. \citet{dean2026multillm} optimize parallel query counts under
reliability constraints, \citet{li2026sequentialllm} study sequential model
choice and stopping with query and waiting costs, and
\citet{guo2026congestion} analyze static cascades under congestion and latency.
Together, these papers determine how model calls should be allocated,
sequenced, or stopped. Our paper studies a complementary problem. Rather than deciding which model to query next, we choose how many workflow executions to run, how to allocate
those executions across workflow types, and which answer to select from the
resulting candidates. Routing and cascade methods govern the process of
acquiring model outputs, while our framework governs the design and evaluation
of the candidate portfolio presented to the selector.

\vspace{1mm}
\paragraph{Output ensembling, ranking, self consistency, and inference time
aggregation.}
A related literature improves language model performance by generating
multiple candidate outputs and then selecting, combining, or refining them.
Self consistency samples several reasoning paths and returns the answer with
the greatest agreement across paths \citep{wang2022selfconsistency}.
LLM Blender ranks outputs from multiple language models and combines the
highest quality candidates through generative fusion
\citep{jiang2023llmblender}. Mixture of Agents uses a layered architecture in
which agents at later stages build on outputs produced by agents in earlier
stages \citep{wang2024mixture}. Archon treats ranking, fusion, critique,
verification, and related inference techniques as components of an
architecture that can be selected through search
\citep{saadfalcon2024archon}. Our paper abstracts from the details of the selector itself. The selector may use ranking, fusion, consistency, critique, verification, or some other
mechanism. We instead study the upstream portfolio decision: which workflows
should generate candidates, how many executions should be run, and when an
additional candidate is worth its compute cost and its effect on final
selection.

\vspace{1mm}
\paragraph{Choice modeling, coverage, and implicit optimization.}
Our selector model draws on random utility and assortment choice. Multinomial
logit and Plackett--Luce provide tractable models of selection and ranking
\citep{luce1959choice,mcfadden1974conditional,plackett1975analysis,
vanryzin1999assortment,talluri2004theory}. 
Related work estimates preference parameters from observed choices and response times \citep{echenique2025general}.
In our setting, the alternatives
are workflow outputs rather than products, and their value depends on whether
they are correct. A wrong output enlarges the selector's choice set without
adding correct attraction. We use Plackett--Luce as an estimable
specialization, while our main variety bound relies only on a general odds
lift envelope. This choice structure leads directly to the optimization problem studied in
the paper. Once the run size is fixed, portfolio composition affects selector
performance only through the number of correct outputs on each task, yielding
a concave coverage problem and allowing us to use established linear
programming and rounding tools
\citep{nemhauser1978analysis,barman2021concave,
ageev2004pipage,chekuri2010dependent}. When run size is endogenous, however,
adding an execution changes both the correct count and the total number of
candidates, so the global objective is neither monotone nor generally
submodular. We handle this additional decision through a sparse cardinality
grid. Finally, when the feasible workflow class cannot be enumerated, a global
workflow pricing oracle separates the workflow indexed dual constraints,
allowing the ellipsoid method to optimize over the implicit class
\citep{groetschel1981ellipsoid,groetschel1988geometric}.

\vspace{1mm}
\paragraph{Operations management and AI deployment.}
An emerging operations management literature studies how AI systems should be
adopted, allocated, priced, and governed. Recent work examines query allocation
across heterogeneous language models
\citep{dean2026multillm,li2026sequentialllm,guo2026congestion},
the timing of AI adoption and access
\citep{abdolmaleki2026holdback}, pricing and delay in agentic AI services
\citep{abdolmaleki2026delayed}, and the provenance of language model generated
content \citep{radvand2026trainingfree}. 
\citet{baek2026ai} study complementarity among humans, large language
models, and operations research algorithms in inventory control.
We contribute to this literature by
studying workflow portfolio design. Our focus is how a firm should
choose the number and allocation of workflow executions, generate new workflow
types, and select among their outputs when performance depends jointly on
workflow complementarity, selector strength, and recurring compute cost.

\section{Model}
\label{sec:model}

This section develops the operational model for selector-aware workflow
portfolio design. We first define the task environment, workflow executions,
selector recovery, and the firm's objective of choosing both the composition
and size of the deployed portfolio. We then distinguish optimization over a
finite evaluated workflow pool from search over the larger implicit workflow
class and formalize the cost-aware pricing interface used to generate new
workflows.

\subsection{Deployment setting}
\label{sec:deployment-objective}

We study an operational setting in which a firm handles a stream of similar
tasks. Although the tasks share a common objective, individual instances may
differ substantially in the information, reasoning, and verification required
to solve them. A workflow designed for one source of difficulty may therefore
perform poorly on cases that require a different approach. This heterogeneity
creates a role for workflow portfolios: by running several complementary
workflows and selecting among their outputs, the firm may obtain a correct
answer on cases that no single workflow handles reliably. As a running example, consider customer-service routing. A customer initiates
a service conversation, and the system observes the customer's initial
messages together with relevant account information, order status, and policy
context. Determining the appropriate route may require the system to infer the customer's objective, identify the current service state, retrieve the
applicable policy, and verify that the proposed action is feasible. The final
decision may be, for example, to initiate an order cancellation, address a
refund-status inquiry, modify a subscription, begin troubleshooting, or
escalate the case to a human agent. Different workflows may emphasize different parts of this reasoning process. Running multiple workflows can increase the likelihood that at least one produces the correct route, but it also incurs additional compute cost and creates more candidate outputs for the selector to distinguish. The model below formalizes this tradeoff. (Remark~\ref{rem:task-types} discusses the extension to settings with multiple task types.)

\vspace{1mm}
\textbf{Tasks.}
Let $X\in\X$ denote a task instance drawn from a population $\D$, and let
$Y^\star\in\Y$ denote the correct answer, target decision, or benchmark
solution. The output space $\Y$ may be finite, numerical, structured, or
textual. In the baseline model, a returned answer $y$ receives payoff $\1\{y=Y^\star\}.$
In the customer-service routing example, $X$ contains the observed conversation, account
information, order status, and relevant context, while the target $Y^\star$ is the
correct routing label, such as ``cancel order,'' ``refund status,''
``subscription change,'' or ``escalate to human.''

\vspace{1mm}
\textbf{Workflows.}
A workflow $g\in\G$ is an executable AI procedure that maps a task instance 
to a candidate answer and an observable trace. Workflows can take many forms: 
a single prompt, a chain of specialized agents, a retrieval-augmented routine, 
a verify-and-revise loop, or a graph of agents and tools. The feasible space $\G$
contains all workflows consistent with the firm's operational constraints, such as allowed models, tools, retrieval sources, context length, latency, and termination rules. The main analysis does not require a particular graph representation of workflows; it only uses each workflow's evaluated correctness
vector and running cost. Appendix~\ref{app:workflow-grammar} gives one possible graph-based representation of $\G$ for readers who want a concrete finite workflow class.


When workflow $g$ is run on task $X$, it produces
\begin{equation}
    O_g(X)=\bigl(A_g(X),T_g(X)\bigr),
    \label{eq:workflow-output}
\end{equation}
where $A_g(X)\in\Y$ is the candidate answer and $T_g(X)$ is the observable
execution trace, potentially including intermediate messages, retrieved
documents, tool outputs, confidence scores, verifier signals, token counts, and
latency. We suppress the dependence on $X$ when it is clear from context.
Because $\G$ is typically too large to enumerate, the firm must search over
an ``implicit'' workflow space rather than selecting from a fixed list; we return to
this search layer in \Cref{sec:outer-inner-view,sec:workflow-access}.
In the routing example, a workflow might first retrieve relevant policy text,
infer the customer's intent, and propose a routing label. A verifier then checks
the proposal against the conversation and retrieved policy. If the proposal
fails verification, a revision agent updates it using the verifier's feedback,
and the verify--revise loop continues until the proposal passes verification or
a prespecified iteration limit is reached.

\begin{figure}[h]
\centering
\begin{tikzpicture}[
    node distance=7mm and 9mm,
    scale=0.92,
    transform shape
]
\node[io] (task) {Customer\\ticket};
\node[agent, right=of task] (ret) {Retrieve\\policy};
\node[agent, right=of ret] (intent) {Infer\\intent};
\node[agent, right=of intent] (route) {Propose\\route};
\node[agent, right=of route] (check) {Verify};
\node[agent, below=9mm of check] (revise) {Revise\\proposal};
\node[io, right=13mm of check] (ans) {Candidate\\answer};

\draw[arr] (task) -- (ret);
\draw[arr] (ret) -- (intent);
\draw[arr] (intent) -- (route);
\draw[arr] (route) -- (check);
\draw[arr] (check) -- node[above, font=\footnotesize] {pass} (ans);
\draw[arr] (check) -- node[right, font=\footnotesize] {fail} (revise);
\draw[arr] (revise.west) -| node[pos=0.25, below, font=\footnotesize]
    {feedback} (route.south);
\end{tikzpicture}
\caption{A customer-service routing workflow with a bounded
verify--revise loop.}
\label{fig:model-example-workflow}
\end{figure}


\vspace{1mm}
\textbf{Workflow accuracy.}
Each workflow has a recurring per-task running cost $c_g\ge0$. This cost may
represent dollars, tokens, latency-adjusted compute, tool-call charges, or an
additive resource index. The cost is recurring because it is paid every time the
workflow is run on an incoming task.
Let $(x_i,y_i^\star)_{i=1}^n$ be a labeled
development sample, i.e., the training or validation tasks used to evaluate
candidate workflows before deployment. 
The portfolio optimization problem below is defined on this training sample: workflow
correctness, and portfolio values are computed from
$(x_i,y_i^\star)_{i=1}^n$. 
For workflow $g$, define its task-level
correctness indicator by
\begin{equation}
    a_{ig}
    =
    \1\{A_g(x_i)=y_i^\star\},
    \qquad i=1,\ldots,n,\quad g\in\G.
    \label{eq:coverage-matrix}
\end{equation}
Thus, $a_{ig}=1$ if workflow $g$ returns the correct routing label for task
$i$, and $a_{ig}=0$ otherwise. Throughout the main analysis, we treat
$a_{ig}$ as deterministic, so each workflow--task pair has a fixed evaluated
correctness outcome on the development sample.
\Cref{sec:stochastic-workflow-outcomes} extends the model to stochastic
workflow execution, where $a_{ig}\in[0,1]$ denotes the probability that one
execution of workflow $g$ is correct on task $i$. 

\textbf{Portfolio and execution cost.}
We use $g\in\G$ to index workflow types. For each incoming task, the firm
chooses an execution-count vector
\[
    \bm m=(m_g)_{g\in\G}\in\mathbb Z_+^{\G}
\]
with finite support, where $m_g$ is the number of execution slots assigned to
workflow type $g$. It may appear natural to impose $m_g\le1$ for every
workflow type. Under an imperfect selector, however, repeated execution of a
strong workflow may raise the share of correct candidate outputs on the tasks
it already solves and thereby improve selector recovery; see
Appendix~\ref{app:workflow-multiplicity-example}. Under stochastic execution,
repeated runs can also yield different outcomes, making multiplicity even more
natural; \Cref{sec:stochastic-workflow-outcomes} extends most of our results to
that setting.

The total number of workflow executions is
\begin{equation}
    k(\bm m)
    =
    \sum_{g\in\G}m_g,
    \label{eq:portfolio-multiplicity}
\end{equation}
and must satisfy $k(\bm m)\le K_{\max}$, where $K_{\max}$ reflects
operational limits such as latency, context-window capacity, compute budget,
or risk policy.

Each execution occupies a separate slot, incurs cost $c_g$, and produces a
separate candidate for the selector. In the deterministic development-sample
model, all executions of workflow type $g$ share the evaluated correctness
vector $(a_{1g},\ldots,a_{ng})$. The zero vector $\bm 0$ represents the
outside option of not deploying the AI system and has value zero. For
$g\in\G$, let $\bm e_g$ denote the execution-count vector containing one
execution of workflow $g$ and zero executions of every other workflow.

For a feasible execution-count vector $\bm m$, let
\begin{equation}
    r_i(\bm m)
    =
    \sum_{g\in\G}a_{ig}m_g
    \label{eq:correct-count}
\end{equation}
denote the number of correct candidate outputs produced on task $i$, and let
\begin{equation}
    C(\bm m)
    =
    \sum_{g\in\G}c_gm_g
    \label{eq:portfolio-cost}
\end{equation}
denote the total recurring execution cost.

\vspace{1mm}

\vspace{1mm}
\textbf{Selector.}
Consider a feasible nonzero execution-count vector $\bm m$. Choose any
labeling $g_1,\ldots, $ $g_{k(\bm m)}$ of its execution slots satisfying
$
    |\{j:g_j=g\}|=m_g$ for all $g\in\G.
$
After all workflow executions have been completed, the firm observes
\begin{equation}
    O_{\bm m}
    =
    \bigl(O_{g_j}:j=1,\ldots,k(\bm m)\bigr).
\end{equation}
A post-output selector then chooses one of the candidate outputs:
\begin{equation}
    \operatorname{Sel}(X,O_{\bm m})
    \in
    \left\{
        A_{g_j}:j=1,\ldots,k(\bm m)
    \right\}.
    \label{eq:selector-rule}
\end{equation}
The selector may be an LLM judge, a verifier, a ranking model, a rule-based
policy checker, a human reviewer, or a combination of these mechanisms. In the
customer-service routing example, the selector observes the proposed routing
decisions and their execution traces and chooses the route submitted to the
service system.

We summarize selector performance through a family of \emph{recovery curves}
$\{\psi_k(r)\}_{k=1}^{K_{\max}}$, one for each candidate-set size $k$. Each
curve
\begin{equation}
    \psi_k:\{0,1,\ldots,k\}\to[0,1],
    \qquad
    \psi_k(0)=0,\quad \psi_k(k)=1,
    \label{eq:selector-curve-generic}
\end{equation}
maps the number of correct candidate outputs $r$ to the probability that the
selector returns a correct final output. The endpoint conditions are natural:
if no candidate is correct, the selector cannot recover one, and if all
candidates are correct, any choice succeeds.

The empirical selector-aware accuracy of a nonzero execution-count vector
$\bm m$ is
\begin{equation}
    J_\psi(\bm m)
    =
    \frac1n\sum_{i=1}^n
    \psi_{k(\bm m)}\bigl(r_i(\bm m)\bigr),
    \label{eq:endogenous-accuracy-generic}
\end{equation}
with $J_\psi(\bm 0)=0$ by convention.


\vspace{1mm}
\vspace{1mm}
\textbf{Deployment objective.}
The firm's goal is to choose an execution-count vector $\bm m$ that maximizes
selector-aware accuracy on the development sample net of recurring execution
cost. To place accuracy and cost on a common scale, let $\gamma\ge0$ denote
the cost price. If a correct decision is worth $v>0$ monetary units relative
to an incorrect decision, dividing monetary value by $v$ gives
$\gamma=1/v$. The net deployed value of $\bm m$ is
\begin{equation}
    \Pi_{\psi,\gamma}(\bm m)
    =
    J_\psi(\bm m)-\gamma C(\bm m),
    \qquad
    \Pi_{\psi,\gamma}(\bm 0)=0.
    \label{eq:net-deployed-value}
\end{equation}
The ideal deployment objective is
\begin{equation}
    \OPT_{\psi,\gamma}(\G)
    =
    \max_{\substack{
        \bm m\in\mathbb Z_+^{\G}\\
        \bm m\text{ has finite support}\\
        k(\bm m)\le K_{\max}
    }}
    \Pi_{\psi,\gamma}(\bm m).
    \label{eq:endogenous-optimum}
\end{equation}
This is the best net value achievable over the full workflow space $\G$, and 
serves as the benchmark for the algorithms developed in subsequent sections.

\vspace{1mm}
\begin{remark}[Multiple task types]
\label{rem:task-types}
The model is written for a fixed task type. In settings with several task
types, the same formulation can be applied separately within each type, so that
the firm chooses a type-specific workflow portfolio. A task type may be defined
by operational information available before execution, such as the claim category, product line, customer segment, language, channel,
risk tier, or required output format. Thus, task-type
information can be used to select the relevant portfolio before workflows are
run. Throughout the paper, we fix one task type and suppress the
type index. 
\end{remark}

\subsection{Restricted pools and the inner--outer view}
\label{sec:outer-inner-view}

Solving \eqref{eq:endogenous-optimum} directly is generally infeasible because
the workflow class $\G$ is large and implicit. We therefore distinguish two
objects. The first is a finite evaluated pool of workflows, over which the
portfolio problem can be solved directly. The second is the larger implicit
workflow class, over which the algorithm must search for additional useful
workflows.

For any finite evaluated pool $\M\subseteq\G$, define the restricted benchmark
\begin{equation}
    \OPT_{\psi,\gamma}(\M)
    =
    \max_{\substack{
        \bm m\in\mathbb Z_+^{\M}\\
        k(\bm m)\le K_{\max}
    }}
    \Pi_{\psi,\gamma}(\bm m).
    \label{eq:restricted-inner-optimum}
\end{equation}
Here, $\bm m$ is the execution-count vector restricted to workflow types in
$\M$. Every workflow type $g\in\M$ has already been evaluated on the
development sample, so its correctness vector
$(a_{1g},\ldots,a_{ng})$ and recurring execution cost $c_g$ are known. The restricted problem \eqref{eq:restricted-inner-optimum} is the
best net value achievable using only these evaluated workflow types. When
$\M=\G$, it coincides with the full deployment benchmark
\eqref{eq:endogenous-optimum}; otherwise it is only a finite-pool approximation.

The role of the outer layer is to decide where to search next in the implicit
class $\G$. The guiding principle is marginal value. Once a finite-pool linear program (LP)
relaxation is solved, the optimization problem assigns dual prices to development
tasks and to execution slots. A high task price means that an additional correct
candidate on that task would be valuable. A high execution-slot price means that
a new workflow must deliver enough marginal value to justify occupying one of
the limited run slots. These prices are passed to a workflow-pricing oracle,
which searches the implicit class for a workflow with high price-weighted
correctness net of recurring cost. Thus, the outer layer repeatedly asks a simple operational question:
\emph{is there a workflow, not yet available to the optimizer, whose expected
marginal contribution is large enough to matter?} 

\vspace{1mm}
\textbf{Workflow generator oracle.} We do not answer the above
question by searching $\G$ ourselves. Instead, we assume access to a workflow
generator, which we call the pricing oracle, and which may be a teacher model,
an automated workflow-search procedure, or a human engineering team. The
optimizer supplies the current prices, and the oracle proposes a workflow that
scores well against them. If the oracle returns such a workflow, the workflow
is evaluated on the development sample and becomes available to the
finite-pool optimizer. If the oracle can certify that no
workflow has sufficiently large price-adjusted value, then the current LP
relaxation has no important missing workflow at that query tolerance. We discuss this in more detail in \Cref{sec:workflow-access}. 

The overall information flow is summarized in \Cref{fig:inner-outer-loop}.

\begin{figure}[h]
\centering
\begin{tikzpicture}[
    node distance=7mm and 8mm,
    stepbox/.style={
        draw,
        rounded corners=2pt,
        fill=gray!8,
        align=center,
        inner xsep=7pt,
        inner ysep=5pt,
        font=\small
    },
    looparr/.style={-{Latex[length=2mm,width=1.5mm]}, thick}
]
\node[stepbox] (opt) {Solve finite-pool\\relaxation};
\node[stepbox, right=of opt] (price) {Compute task and\\slot prices};
\node[stepbox, right=of price] (gen) {Search $\G$ for a\\high-value workflow};
\node[stepbox, right=of gen] (eval) {Evaluate workflow\\and update the pool};

\begin{scope}[on background layer]
\node[
    groupbox,
    fit=(opt)(price),
    inner sep=4pt,
    label={[font=\bfseries\footnotesize, text=black!75]
        above:{Inner: finite-pool optimization}}
] (innerbox) {};

\node[
    groupbox,
    fit=(gen)(eval),
    inner sep=4pt,
    label={[font=\bfseries\footnotesize, text=black!75]
        above:{Outer: workflow generation}}
] (outerbox) {};
\end{scope}

\draw[looparr] (opt) -- (price);
\draw[looparr] (price) -- (gen);
\draw[looparr] (gen) -- (eval);
\draw[looparr]
    (eval.south)
    to[out=-90,in=-90,looseness=0.55]
    (opt.south);

\end{tikzpicture}
\caption{Inner--outer workflow-generation loop. The dashed regions distinguish
the finite-pool inner optimization from the outer workflow-generation and
pool-update steps.}
\label{fig:inner-outer-loop}
\end{figure}

The next subsection formalizes this workflow-pricing interface.
\Cref{sec:inner-lp-dual} derives the corresponding task and slot prices from
the LP relaxation, and \Cref{sec:implicit-oracle} uses these prices to optimize over the implicit workflow class through an ellipsoid-based method.

\subsection{Cost-aware generation over an implicit workflow class}
\label{sec:workflow-access}

To search beyond the currently evaluated workflow pool, the optimizer needs a
way to identify promising workflows in the implicit class $\G$. As noted above,
this search is carried out by the pricing oracle, and it is guided by
nonnegative \emph{task prices} $w_i$ supplied by the optimizer. These prices
come from the current dual solution: a larger $w_i$ means that an additional
correct output on development task $i$ would create greater marginal value in
the current LP relaxation. The prices therefore direct the oracle toward the
\emph{residual tasks}, meaning those the current workflow pool still handles
poorly and on which an additional correct output would create the most value.

Given task prices $w=(w_1,\ldots,w_n)$ and the cost price $\gamma$, define the
global workflow-pricing value
\begin{equation}
    P(w,\gamma)
    =
    \max_{g\in\G}
    \left\{
        \sum_{i=1}^n w_i a_{ig}
        -
        \gamma c_g
    \right\}.
    \label{eq:global-pricing-value}
\end{equation}
The term $\sum_i w_i a_{ig}$ rewards a workflow for being correct on
high-price tasks, while $\gamma c_g$ penalizes its recurring execution cost.
We
refer to the difference
\[
    s_g(w)
    =
    \sum_{i=1}^n w_i a_{ig}
    -
    \gamma c_g
\]
as the \emph{score} of workflow $g$ at prices $w$, so that
$P(w,\gamma)=\max_{g\in\G}s_g(w)$. Accordingly, the pricing problem searches
for a workflow with the greatest price-weighted correctness net of compute
cost.

Let $\theta$ denote the current dual price of occupying an execution slot (i.e., the \emph{slot price}).
The largest violation among the workflow-indexed dual constraints is
\begin{equation}
    \Phi(w,\theta,\gamma)
    =
    P(w,\gamma)-\theta
    =
    \max_{g\in\G}
    \left\{
        s_g(w)
        -
        \theta
    \right\}.
    \label{eq:cost-aware-generation-problem}
\end{equation}
Hence, $\Phi(w,\theta,\gamma)>0$ means that some workflow creates more price-weighted value than the slot price. The pricing interface asks the oracle to identify workflows that perform well on tasks with high current prices, while screening out candidates whose recurring execution costs outweigh their potential contribution. In the customer-service routing example, high task prices may concentrate on cancellation, refund, or escalation cases for which an additional correct workflow output would be especially valuable.

Because exact maximization over $\G$ may itself be difficult, we allow an
approximate stochastic pricing procedure. A $(\delta,\beta)$
\emph{weak global pricing oracle} returns a workflow
$\widetilde g\in\G$ such that, conditional on the query history, with
probability at least $1-\beta$,
\begin{equation}
    s_{\widetilde g}(w)
    \ge
    P(w,\gamma)-\delta.
    \label{eq:additive-weak-pricing-oracle}
\end{equation}
Given a proposed slot price $\theta$, exactly one of two things happens.
If the returned workflow has score strictly greater than
$\theta$, then the proposed price cannot be right: a workflow in $\G$ is worth
more than the slot it would occupy, even after its recurring execution cost is
charged, so the optimizer has found a better workflow than it currently has.
If, however, the returned workflow has score at most $\theta$, then, on
the event that the guarantee in \eqref{eq:additive-weak-pricing-oracle} holds,
no workflow in the entire class scores more than $\delta$ above the proposed
price:
\[
    P(w,\gamma)
    \le
    \theta+\delta.
\]
Thus a single call to the oracle either produces a workflow that beats the
current slot price or certifies, up to tolerance $\delta$, that no workflow in
$\G$ can do so. This is precisely the weak-separation information required by
the ellipsoid method, which \Cref{sec:implicit-oracle} uses to optimize over
the implicit class $\G$ without enumerating it.

Recall that the oracle may be implemented by a teacher model, an automated workflow-search
procedure, an explicit search over a prespecified workflow grammar, or a human
engineering team. The theoretical guarantee developed later requires the oracle
to satisfy the global approximation property in
\eqref{eq:additive-weak-pricing-oracle}. In particular, the mere failure of a
search procedure to find a workflow with score above $\theta$ does not certify
that no such workflow exists.



\section{Selector Strength and the Value of Workflow Variety}
\label{sec:selector-curve}

This section studies how selector quality limits the value of running multiple 
workflows. Even if the firm can generate many diverse candidate workflows, 
variety is only beneficial when the selector can reliably identify correct 
answers from the realized candidate set. The results in this section 
formalize this limitation: they bound how much any multi-workflow portfolio 
can outperform the best single workflow, as a function of selector strength 
and workflow cost. These bounds provide pre-optimization screens that can 
inform the firm whether expanding the portfolio or improving the selector is the more valuable investment.

The remainder of this section develops these bounds in three steps. 
Section~\ref{sec:odds-lift} defines the odds-lift index $\Lambda$, a single 
scalar that summarizes how much the selector improves the odds of returning a 
correct output relative to picking a candidate at random. Section~\ref{sec:general-variety-bound} uses $\Lambda$ to bound the net 
value of any workflow pool relative to the best available singleton 
workflow, producing a screening rule that can rule out portfolio expansion 
before solving any optimization problem. Section~\ref{sec:pl-ranking} 
specializes this bound to the Plackett--Luce choice model, in which the 
odds-lift index reduces to a single discrimination parameter that can be 
estimated directly from data.

\subsection{Recovery envelopes and selector odds lift}
\label{sec:odds-lift}

The recovery curves $\{\psi_k\}$ introduced in Section~\ref{sec:model} may 
vary with candidate-set size $k$ and need not follow any particular 
random-utility model. To obtain a single structural bound that applies across 
all portfolio sizes, we impose the following assumption. 

\vspace{1mm}
\begin{assumption}[Common fraction-based recovery envelope]
\label{ass:recovery-envelope}
There exists a function $H:[0,1]\to[0,1]$ with $H(0)=0$ and $H(1)=1$ such 
that
\begin{equation}
    \psi_k(r)
    \le
    H\left(\frac{r}{k}\right),
    \qquad
    k=1,\ldots,K_{\max},\quad r=0,\ldots,k.
    \label{eq:common-recovery-envelope}
\end{equation}
\end{assumption}

\vspace{1mm}
\Cref{ass:recovery-envelope} states that selector performance can be bounded by a function 
of the \emph{fraction} of correct candidates alone, rather than the count 
$r$ and size $k$ separately. This reflects the premise that a selector's 
task difficulty is governed primarily by how concentrated the correct answer 
is among the alternatives it is shown, a premise that holds across many 
selector implementations. It is worth noting that \Cref{ass:recovery-envelope} is mild: some envelope 
always exists. Specifically, given any finite collection of curves 
$\{\psi_k\}_{k=1}^{K_{\max}}$, the choice
\begin{equation}
    H(p) :=\max\{\psi_k(r):1\le k\le K_{\max},\ 0\le r\le k,\ r/k\le p\}
\end{equation}
is a valid envelope, since $\psi_k(0)=0$ and $\psi_k(k)=1$ for every $k$. 

We now introduce an index that measures how much the selector improves the 
odds of a correct final output relative to random selection.

\vspace{1mm}
\begin{definition}[Selector odds-lift index]
\label{def:odds-lift}
For a recovery envelope $H$, define
\begin{equation}
    \Lambda(H)
    =
    \sup_{p\in(0,1)}
    \frac{H(p)/(1-H(p))}{p/(1-p)},
    \label{eq:odds-lift-index}
\end{equation}
with the convention that $\Lambda(H)=+\infty$ if $H(p)=1$ for some 
$p\in(0,1)$.
\end{definition}
\vspace{1mm}

The index $\Lambda(H)$ admits a direct interpretation as an odds ratio. At 
any correct fraction $p$, $p/(1-p)$ is the odds of returning a correct 
output under uniform random selection, and $H(p)/(1-H(p))$ is the 
corresponding odds under the selector's recovery envelope. Their ratio 
measures the selector's odds improvement over random chance at that 
fraction, and $\Lambda(H)$ takes the largest such improvement, i.e., the 
supremum, over all fractions $p\in(0,1)$.

Three cases illustrate the range of $\Lambda(H)$. A selector no better than 
random chance has $H(p)=p$ at every fraction, giving $\Lambda(H)=1$. A 
selector that systematically improves on random chance, without ever 
achieving certainty in returning the correct output unless all candidates are correct, has 
$\Lambda(H)\in(1,\infty)$. A selector that recovers 
the correct output with certainty at some interior fraction $p\in(0,1)$, 
that is, $H(p)=1$ while $p<1$, has infinite odds-lift, $\Lambda(H)=+\infty$, 
by the convention in Definition~\ref{def:odds-lift}. This last case is 
excluded whenever we assume $\Lambda(H)$ is finite, as we do throughout the 
results that follow.

The next lemma converts the  bound $\Lambda(H)\le\Lambda$ into an explicit closed-form envelope 
for $H$.

\vspace{1mm}
\begin{lemma}[Odds-lift envelope]
\label{lem:odds-lift-envelope}
If $\Lambda(H)\le\Lambda<\infty$ for some $\Lambda\ge1$, then
\begin{equation}
    H(p)
    \le
    h_\Lambda(p)
    :=
    \frac{\Lambda p}{1+(\Lambda-1)p},
    \qquad p\in[0,1].
    \label{eq:odds-lift-envelope}
\end{equation}
\end{lemma}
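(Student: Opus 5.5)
The plan is to treat the endpoints separately and then invert the odds inequality at interior points. At $p=0$ we have $H(0)=0=h_\Lambda(0)$, and at $p=1$ we have $H(1)=1=h_\Lambda(1)$, so the inequality in \eqref{eq:odds-lift-envelope} holds with equality at both endpoints. The substantive case is $p\in(0,1)$. There, finiteness of $\Lambda(H)$ rules out $H(p)=1$ by the convention in Definition~\ref{def:odds-lift}. Hence $H(p)\in[0,1)$, and the odds $H(p)/(1-H(p))$ is a well-defined nonnegative real number.

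For fixed $p\in(0,1)$, Definition~\ref{def:odds-lift} together with $\Lambda(H)\le\Lambda$ gives
\[
    \frac{H(p)}{1-H(p)} \le \Lambda\,\frac{p}{1-p}.
\]
The key observation is that the odds map $u\mapsto u/(1-u)$ is a strictly increasing bijection from $[0,1)$ onto $[0,\infty)$, with inverse $t\mapsto t/(1+t)$, which is also increasing. Applying this inverse to both sides preserves the inequality and yields
\[
    H(p)\le \frac{\Lambda p/(1-p)}{1+\Lambda p/(1-p)}.
\]
Multiplying numerator and denominator by $1-p>0$ turns the right-hand side into
\[
    \frac{\Lambda p}{1-p+\Lambda p} = \frac{\Lambda p}{1+(\Lambda-1)p} = h_\Lambda(p),
\]
which is the claimed bound.

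There is no real obstacle here; the only care needed is bookkeeping. One point is to note that $H(p)<1$ on the interior before forming the odds, which is exactly what the convention in Definition~\ref{def:odds-lift} secures. The other is to check that the denominator $1+(\Lambda-1)p$ is positive, which holds because $\Lambda\ge1$; this makes $h_\Lambda$ well defined on all of $[0,1]$.
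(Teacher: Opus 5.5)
Your proof is correct and follows the paper's argument: handle the endpoints directly, use the convention in Definition~\ref{def:odds-lift} to get $H(p)<1$ on $(0,1)$, and solve the odds inequality $H(p)/(1-H(p))\le\Lambda p/(1-p)$ for $H(p)$. The only difference is cosmetic. You apply the increasing inverse $t\mapsto t/(1+t)$ of the odds map, whereas the paper cross-multiplies and divides by $1+(\Lambda-1)p>0$; the two steps are equivalent.
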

\vspace{1mm}

The function $h_\Lambda$ is the tight recovery envelope implied by an odds-lift
bound $\Lambda$: no selector with odds lift at most $\Lambda$ can exceed
$h_\Lambda$ at any correct fraction. We call this a Luce-shaped envelope. (The
formal Plackett--Luce specialization, for which this same functional form
arises exactly, is introduced in \Cref{sec:pl-ranking}.) The next property, the concavity of $h_\Lambda$, is what makes the envelope 
useful for comparing portfolios of different sizes.

\vspace{1mm}
\begin{lemma}[Shape of the odds-lift envelope]
\label{lem:odds-envelope-concavity}
For every $\Lambda\ge1$, the function $h_\Lambda$ is increasing and concave on $[0,1]$.
\end{lemma}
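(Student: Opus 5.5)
The plan is to differentiate $h_\Lambda(p)=\Lambda p/(1+(\Lambda-1)p)$ directly on $[0,1]$. Write $D(p)=1+(\Lambda-1)p$. Since $\Lambda\ge1$ and $p\in[0,1]$, we have $D(p)\ge1>0$, so $h_\Lambda$ is a smooth rational function on $[0,1]$ with no singularity.

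First, the quotient rule gives
\[
h_\Lambda'(p)=\frac{\Lambda D(p)-\Lambda p(\Lambda-1)}{D(p)^2}=\frac{\Lambda}{\bigl(1+(\Lambda-1)p\bigr)^2}.
\]
This is strictly positive because $\Lambda\ge1>0$, so $h_\Lambda$ is (strictly) increasing.

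Next, differentiating again yields
\[
h_\Lambda''(p)=-\frac{2\Lambda(\Lambda-1)}{\bigl(1+(\Lambda-1)p\bigr)^3}\le0,
\]
since $\Lambda(\Lambda-1)\ge0$ and the denominator is positive. A twice-differentiable function with nonpositive second derivative on an interval is concave, which settles concavity.

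As a check, the boundary case $\Lambda=1$ gives $h_1(p)=p$, which is linear and hence still increasing and (weakly) concave, consistent with the formulas above. An alternative derivative-free route would be to write $h_\Lambda(p)=\frac{\Lambda}{\Lambda-1}\bigl(1-\frac{1}{1+(\Lambda-1)p}\bigr)$ for $\Lambda>1$, and observe that $p\mapsto -1/(1+(\Lambda-1)p)$ is increasing and concave, being the negative of a convex decreasing function composed with an affine map. I do not expect a genuine obstacle; the only care needed is confirming positivity of $D(p)$ on all of $[0,1]$, including at $p=0$, and handling $\Lambda=1$ separately if one uses the alternative decomposition.
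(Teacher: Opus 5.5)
Your proposal is correct and follows the paper's proof: it computes $h_\Lambda'(p)=\Lambda/\{1+(\Lambda-1)p\}^2$ and $h_\Lambda''(p)=-2\Lambda(\Lambda-1)/\{1+(\Lambda-1)p\}^3\le0$ and reads off monotonicity and concavity. Your check that the denominator stays positive on $[0,1]$ and your observation that $h_\Lambda'>0$ (strictly increasing) are small additions the paper leaves implicit.
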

\vspace{1mm}

The key here is that the realized recovery curves $\psi_k$ need not themselves be concave. 
Lemmas~\ref{lem:odds-lift-envelope} and \ref{lem:odds-envelope-concavity} 
show that a finite odds-lift bound nonetheless forces every such curve to 
lie below a common concave, Luce-shaped, envelope. 

\subsection{The value of workflow variety}
\label{sec:general-variety-bound}

We now bound $\OPT_{\psi,\gamma}(\M)$, the best net value attainable using
workflow types from the evaluated pool $\M$. For each workflow type $g\in\M$, define its \emph{standalone accuracy} as its
average correctness on the development sample:
\begin{equation}
    \bar a_g
    =
    \frac1n\sum_{i=1}^n a_{ig}.
    \label{eq:standalone-accuracy}
\end{equation}

Because repeated assignments are allowed, the extremal accuracy and cost
quantities for a size-$k$ portfolio can be defined directly from the workflow
types. For each $k\le K_{\max}$, define
\begin{equation}
    A_k
    =
    \max_{g\in\M}\bar a_g,
    \qquad
    C_k
    =
    k\min_{g\in\M}c_g.
    \label{eq:screening-sequences}
\end{equation}
Indeed, all $k$ execution slots may be assigned to the same workflow type.
Thus, $A_k$ is the largest average standalone accuracy that can be assigned
to $k$ execution slots, whereas $C_k$ is the smallest total execution cost of
any $k$ slots. Consequently, every execution-count vector
$\bm m\in\mathbb Z_+^{\M}$ satisfying $k(\bm m)=k$ obeys
\[
    \frac1k\sum_{g\in\M}\bar a_g m_g
    \le
    A_k
    \qquad\text{and}\qquad
    C(\bm m)
    \ge
    C_k.
\]
The two bounds are computed separately and need not be attained by the same
workflow type.

Combining these quantities with the concave odds-lift envelope
$h_\Lambda$ from \Cref{sec:odds-lift} yields an upper bound on the net value
of every size-$k$ portfolio and, consequently, on
$\OPT_{\psi,\gamma}(\M)$.

\vspace{1mm}
\begin{theorem}[Selector-limited value of workflow variety]
\label{thm:diversification-bound}
Suppose \Cref{ass:recovery-envelope} holds and
$\Lambda(H)\le\Lambda<\infty$ for some $\Lambda\ge1$. For a finite evaluated
workflow pool $\M$, let
    $V_1
    =
    \max\big\{0, $ $\max_{g\in\M}\{\bar a_g-\gamma c_g\}\big\}$
denote the best net value achievable by a singleton portfolio or the outside
option. Then
\begin{equation}
    V_1
    \le
    \OPT_{\psi,\gamma}(\M)
    \le
    U_{\Lambda,\gamma}(\M)
    :=
    \max\left\{
        0,
        \max_{1\le k\le K_{\max}}
        \left[h_\Lambda(A_k)-\gamma C_k\right]
    \right\}.
    \label{eq:diversification-upper-bound}
\end{equation}
In particular, when $\gamma=0$, letting $a^\star=\max_g\bar a_g$ yields
\begin{equation}
    a^\star
    \le
    \OPT_{\psi,0}(\M)
    \le
    h_\Lambda(a^\star).
    \label{eq:no-cost-diversification-bound}
\end{equation}
We call the upper bound $\OPT_{\psi,0}(\M)\le h_\Lambda(a^\star)$ in
\eqref{eq:no-cost-diversification-bound} the no-cost upper bound: it applies
when the recurring workflow running costs are ignored, i.e., when $\gamma=0$. This
no-cost upper bound is tight over the class of selectors with odds lift at most
$\Lambda$.
\end{theorem}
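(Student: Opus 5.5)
The proof has three parts: the lower bound, the upper bound, and tightness of the no-cost bound.

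\emph{Lower bound.} The outside option $\bm 0$ has value $0$. For each $g\in\M$, the singleton $\bm e_g$ is feasible since $K_{\max}\ge1$. Because $\psi_1(0)=0$ and $\psi_1(1)=1$, we get $\psi_1(a_{ig})=a_{ig}$. Hence $\Pi_{\psi,\gamma}(\bm e_g)=\bar a_g-\gamma c_g$, and taking the maximum gives $V_1\le \OPT_{\psi,\gamma}(\M)$.

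\emph{Upper bound.} Fix a nonzero feasible $\bm m$ with $k=k(\bm m)$.
\begin{itemize}
\item By \Cref{ass:recovery-envelope} and \Cref{lem:odds-lift-envelope}, $\psi_k(r_i(\bm m))\le h_\Lambda(r_i(\bm m)/k)$.
\item By \Cref{lem:odds-envelope-concavity}, $h_\Lambda$ is concave, so Jensen's inequality over the $n$ tasks gives
\[J_\psi(\bm m)\le h_\Lambda\Bigl(\tfrac1n\sum_i r_i(\bm m)/k\Bigr)=h_\Lambda\Bigl(\tfrac1k\sum_g \bar a_g m_g\Bigr).\]
\item The argument of $h_\Lambda$ is at most $A_k$, and $h_\Lambda$ is increasing, so $J_\psi(\bm m)\le h_\Lambda(A_k)$.
\item Since $C(\bm m)\ge C_k$, it follows that $\Pi_{\psi,\gamma}(\bm m)\le h_\Lambda(A_k)-\gamma C_k$.
\end{itemize}
Maximizing over $k$ and including the zero option gives $U_{\Lambda,\gamma}(\M)$.

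\emph{The case $\gamma=0$.} Here $A_k=a^\star$ for every $k$, so the upper bound is $\max\{0,h_\Lambda(a^\star)\}=h_\Lambda(a^\star)$. The lower bound is $a^\star$.

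\emph{Tightness.} This is the main obstacle. We must exhibit, for given $\Lambda$ and $a^\star$, a pool and recovery curves with odds lift at most $\Lambda$ such that $\OPT$ equals or approaches $h_\Lambda(a^\star)$. Equality in Jensen requires every task to have the same correct fraction $a^\star$. So the plan is to take $a^\star=r/k$ rational, with $n=k$ tasks and $k$ workflows arranged cyclically: workflow $j$ is correct on tasks $j,j+1,\dots,j+r-1 \pmod k$. Each workflow then has $\bar a_g=r/k$, and with one slot per workflow every task has exactly $r$ correct candidates out of $k$.

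For the selector, take the Luce curves $\psi_k(r)=h_\Lambda(r/k)$. These satisfy the endpoint conditions, and the envelope $H=h_\Lambda$ has odds ratio exactly $\Lambda$ at every $p$, so $\Lambda(H)=\Lambda$. Concretely, this is the selector that gives correct candidates weight $\Lambda$ and wrong ones weight $1$, which makes the curves valid selector probabilities. The portfolio then attains $J=h_\Lambda(r/k)$, matching the upper bound $h_\Lambda(a^\star)$.

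For irrational $a^\star$, approximate it by rationals and use continuity of $h_\Lambda$. This requires $K_{\max}\ge k$, which should be stated as part of the tightness claim, that is, tightness holds as $K_{\max}$ grows or for suitable $K_{\max}$. The remaining care is in fixing the precise sense of tightness (supremum over instances and selectors), not in the calculus.
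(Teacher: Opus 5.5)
Your proof is correct and follows the paper's argument essentially step for step: singleton feasibility for the lower bound, then for the upper bound the envelope chain $\psi_k\le H\le h_\Lambda$, Jensen's inequality, and the bounds via $A_k$ and $C_k$. For tightness you use the same cyclic $r$-regular incidence construction with Luce recovery $h_\Lambda$, and your explicit caveat is, if anything, more careful than the paper's brief remark about replication and approximation: exact tightness needs $a^\star=r/k$ with $k\le K_{\max}$, and approximating a general $a^\star$ requires $K_{\max}$ to grow.
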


\vspace{1mm}
The lower bound $V_1\le\OPT_{\psi,\gamma}(\M)$ immediately holds because both the singleton portfolios and the outside option are feasible. Thus, the main content of the theorem is really
the upper bound, which gives a quick test for whether any genuinely
multi-workflow portfolio can be worth considering. Specifically, if
\begin{equation}
    V_1
    \ge
    \max_{2\le k\le K_{\max}}
    \left[h_\Lambda(A_k)-\gamma C_k\right],
    \label{eq:singleton-screen}
\end{equation}
then no multi-workflow portfolio in the evaluated pool can outperform the best
singleton or the outside option. (Note that the maximum in \eqref{eq:singleton-screen}
starts at $k=2$ because $k=0$ and $k=1$ are already accounted for by $V_1$.)

\begin{corollary}[Selector-strength and uniform-cost implications]
\label{cor:diversification-implications}
Let $a^\star=\max_g\bar a_g$ be as in \Cref{thm:diversification-bound}. The following hold:
\begin{enumerate}
    \item If 
    $\gamma=0$,
    the maximum gain over the best workflow satisfies
    \begin{equation}
        \OPT_{\psi,0}(\M)-a^\star
        \le
        h_\Lambda(a^\star)-a^\star
        =
        \frac{(\Lambda-1)a^\star(1-a^\star)}
        {1+(\Lambda-1)a^\star}
        \le
        \frac{\sqrt\Lambda-1}{\sqrt\Lambda+1}.
        \label{eq:selector-only-gain-cap}
    \end{equation}
    \item If every workflow has the same execution cost $c$, and
    \begin{equation}
        \gamma c
        \ge
        h_\Lambda(a^\star)-a^\star,
        \label{eq:uniform-cost-singleton-condition}
    \end{equation}
    then the best nonempty portfolio is a singleton. With the outside option,
    the global optimum is either that singleton or no deployment.
\end{enumerate}
\end{corollary}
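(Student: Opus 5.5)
Part 1 follows from the no-cost bound \eqref{eq:no-cost-diversification-bound} in \Cref{thm:diversification-bound}. We have $\OPT_{\psi,0}(\M)\le h_\Lambda(a^\star)$, so subtracting $a^\star$ gives the first inequality. The closed form comes from direct algebra:
\[
h_\Lambda(a)-a=\frac{\Lambda a-a-(\Lambda-1)a^2}{1+(\Lambda-1)a}=\frac{(\Lambda-1)a(1-a)}{1+(\Lambda-1)a}.
\]
For the uniform cap, I would maximize $f(a)=(\Lambda-1)a(1-a)/(1+(\Lambda-1)a)$ over $a\in[0,1]$. The case $\Lambda=1$ is trivial. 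For $\Lambda>1$, set $u=1+(\Lambda-1)a\in[1,\Lambda]$, so that $a=(u-1)/(\Lambda-1)$ and $1-a=(\Lambda-u)/(\Lambda-1)$. Then
\[
f=\frac{(u-1)(\Lambda-u)}{(\Lambda-1)u}=\frac{\Lambda+1-u-\Lambda/u}{\Lambda-1}.
\]
By AM--GM, $u+\Lambda/u\ge2\sqrt\Lambda$, with equality at $u=\sqrt\Lambda\in[1,\Lambda]$. Hence
\[
f\le\frac{(\sqrt\Lambda-1)^2}{\Lambda-1}=\frac{\sqrt\Lambda-1}{\sqrt\Lambda+1}.
\]

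For part 2, every workflow has cost $c$, so $C_k=kc$ and $A_k=a^\star$ for all $k$. Hence the upper bound of \Cref{thm:diversification-bound}, applied size by size, bounds any portfolio of size $k$ by $h_\Lambda(a^\star)-\gamma kc$. A direct per-$k$ version is cleaner: for $\bm m$ with $k(\bm m)=k$,
\[
\Pi_{\psi,\gamma}(\bm m)\le \tfrac1n\sum_i h_\Lambda(r_i/k)-\gamma kc\le h_\Lambda\!\Big(\tfrac1k\sum_g\bar a_g m_g\Big)-\gamma kc\le h_\Lambda(a^\star)-\gamma kc.
\]
This uses \Cref{ass:recovery-envelope}, \Cref{lem:odds-lift-envelope}, Jensen with the concavity from \Cref{lem:odds-envelope-concavity}, and monotonicity. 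I expect this inequality is exactly the one established inside the proof of the theorem.

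For $k\ge2$, the condition \eqref{eq:uniform-cost-singleton-condition} then gives
\[
h_\Lambda(a^\star)-2\gamma c\le a^\star-\gamma c,
\]
and the right-hand side is the value of the singleton $\bm e_{g^\star}$ for the workflow attaining $a^\star$. Since $\psi_1(1)=1$ and $\psi_1(0)=0$, that singleton's value is exactly $\bar a_{g^\star}-\gamma c$. So every portfolio with $k\ge2$ is weakly dominated by the best singleton, and among nonempty portfolios a singleton (the best one) is optimal. Adding the outside option $\bm 0$ with value $0$ leaves the global optimum as either that singleton or no deployment.

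The only delicate point is that "the best nonempty portfolio is a singleton" should be read as weak optimality, since ties are possible. I would state it that way; otherwise the argument is routine.
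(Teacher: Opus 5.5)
Your proposal is correct and matches the paper's proof. Part 1 uses the no-cost bound of \Cref{thm:diversification-bound} and then maximizes $h_\Lambda(a)-a$; you do this by substituting $u=1+(\Lambda-1)a$ and applying AM--GM where the paper differentiates, and you reach the same maximizer $a=1/(1+\sqrt\Lambda)$. Part 2 uses the same per-size bound $\Pi_{\psi,\gamma}(\bm m)\le h_\Lambda(a^\star)-k\gamma c$ for $k\ge2$, compared against the best singleton, and reading ``the best nonempty portfolio is a singleton'' as weak optimality matches the paper's ``no multi-workflow portfolio beats the best singleton.''
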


The corollary highlights the limit of adding more workflows. When
$\Lambda=1$, the envelope is $h_1(p)=p$, so no combination of workflows can
outperform the best individual workflow. As $\Lambda$ grows, stronger selection
can support more value from complementary workflows, but this value is capped by
the selector-strength term in \eqref{eq:selector-only-gain-cap}. The finite pool
$\M$ can be arbitrarily large: in the no-cost bound, its size affects the cap
only through the best singleton accuracy $a^\star$, not directly through
$|\M|$. Thus a larger pool cannot by itself overcome a weak selector.

\subsection{Plackett--Luce specialization}
\label{sec:pl-ranking}

The general bound in \Cref{sec:general-variety-bound} does not require a random-utility model. We now specialize to
the Plackett--Luce model, a standard model for choice and ranking in which each
alternative has an attraction weight and is selected with probability
proportional to that weight. Beyond its foundations in choice theory, the model
has been used extensively in computer science for Bayesian ranking, label
ranking, rank aggregation, and online preference learning
\citep{luce1959choice,plackett1975analysis,guiver2009bayesian,
cheng2010label,hajek2014minimax,szorenyi2015online}.
The model is useful here because it gives an interpretable one-parameter
measure of selector strength, closed-form fixed-size increments, and a
tractable marginal-value formula for the generation loop.

Recall from Section~\ref{sec:model} that, for a nonzero execution-count vector
$\bm m$, the selector $\operatorname{Sel}(X,O_{\bm m})$ observes the task, the candidate
outputs, and their execution traces and returns one candidate as the final
output. To model how it makes this 
choice, condition on a task instance and let $Z_j=\1\{A_j=Y^\star\}$ 
indicate whether candidate $j$ is correct. We posit that the selector 
assigns each candidate $j$ a latent score
\begin{equation}
    S_j=\eta Z_j+\varepsilon_j,
    \label{eq:pl-score}
\end{equation}
and returns the candidate with the highest score. Here $\eta\ge0$ is the 
selector's \emph{discrimination advantage}: it measures how much more 
attractive a correct candidate is to the selector than an incorrect one, on 
average. When $\eta=0$, correct and incorrect candidates are equally 
attractive and the selector behaves like a coin flip; larger $\eta$ means 
the selector systematically favors correctness. The terms $\varepsilon_j$ 
are i.i.d.\ standard Gumbel shocks representing idiosyncratic factors, 
unrelated to correctness, that also influence the selector's choice. The 
selector never observes $Z_j$ directly; \eqref{eq:pl-score} is a 
statistical description of the relationship between true correctness and 
the selector's realized preference, not a claim about the selector's 
internal reasoning.

Let $\lambda=\exp(\eta)\ge1$. By the Gumbel-max identity, choosing the 
candidate with the highest score $S_j$ is equivalent to choosing candidate 
$j$ with probability proportional to $\exp(\eta Z_j)$: each correct 
candidate receives attraction weight $\lambda = \exp(\eta)$ and each incorrect candidate 
receives weight $1 = \exp(0)$, and the selector returns a given candidate with 
probability equal to its weight divided by the sum of all weights. Summing 
this probability over all correct candidates, if $r$ of $k$ candidates are 
correct, the probability that the selector returns a correct answer is
\begin{equation}
    \psi^{\mathrm{PL}}_{k,\lambda}(r)
    =
    \frac{\lambda r}{\lambda r+k-r}
    =
    \frac{\lambda r}{k+(\lambda-1)r},
    \qquad r=0,1,\ldots,k.
    \label{eq:pl-recovery-curve}
\end{equation}
This is the Plackett--Luce recovery curve. Define
\begin{equation}
    h_\lambda(p)
    =
    \frac{\lambda p}{1+(\lambda-1)p},
    \qquad p\in[0,1].
    \label{eq:h-lambda}
\end{equation}
Dividing numerator and denominator of \eqref{eq:pl-recovery-curve} by $k$ 
shows $\psi^{\mathrm{PL}}_{k,\lambda}(r)=h_\lambda(r/k)$: the Plackett--Luce 
recovery probability depends on $r$ and $k$ only through the correct 
fraction $p=r/k$, exactly the fraction-based structure assumed in 
Assumption~\ref{ass:recovery-envelope}. Moreover, $h_\lambda$ has the same 
functional form as the envelope $h_\Lambda$ from 
Lemma~\ref{lem:odds-lift-envelope}, and a direct calculation confirms
\begin{equation}
    \frac{h_\lambda(p)/(1-h_\lambda(p))}{p/(1-p)}
    =\lambda,
    \qquad p\in(0,1).
    \label{eq:pl-odds-lift}
\end{equation}
Thus the Plackett--Luce discrimination parameter $\lambda$ is exactly its 
corresponding selector odds-lift index: a Plackett--Luce selector with strength $\lambda$ 
satisfies $\Lambda(H)=\lambda$ with $H=h_\lambda$, so the general bounds of 
Section~\ref{sec:general-variety-bound} apply to it with equality at 
$\Lambda=\lambda$, not merely as an upper bound.

The continuous concavity of $h_\lambda$ follows immediately from
\Cref{lem:odds-envelope-concavity} by setting $\Lambda=\lambda$. The next lemma
records the additional result needed for the inner
optimization problem in \Cref{sec:ip-lp-rounding}.

\vspace{1mm}
\begin{lemma}[Fixed-size Plackett--Luce increments]
\label{prop:pl-concavity}
For every $k\ge1$ and $\lambda\ge1$, $\psi^{\mathrm{PL}}_{k,\lambda}$ is
nondecreasing and discrete concave. Its increments are
\begin{equation}
    d^{\mathrm{PL}}_{\ell,k}
    =
    \psi^{\mathrm{PL}}_{k,\lambda}(\ell)
    -
    \psi^{\mathrm{PL}}_{k,\lambda}(\ell-1)
    =
    \frac{\lambda k}
    {\left(k+(\lambda-1)\ell\right)
     \left(k+(\lambda-1)(\ell-1)\right)},
    \quad \ell=1,\ldots,k.
    \label{eq:pl-discrete-increments}
\end{equation}
\end{lemma}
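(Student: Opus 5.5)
The plan is to compute the increments directly from the closed form \eqref{eq:pl-recovery-curve} and read off both monotonicity and discrete concavity from the resulting expression. Write $D(\ell)=k+(\lambda-1)\ell$, so that $\psi^{\mathrm{PL}}_{k,\lambda}(\ell)=\lambda\ell/D(\ell)$ for $\ell=0,\ldots,k$. Since $\lambda\ge1$ and $k\ge1$, every $D(\ell)$ with $0\le\ell\le k$ is at least $k>0$. Hence the curve is well defined and all subsequent manipulations are legitimate.

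\textbf{Step 1 (increment formula).} Put the difference over a common denominator:
\[
\psi^{\mathrm{PL}}_{k,\lambda}(\ell)-\psi^{\mathrm{PL}}_{k,\lambda}(\ell-1)
=\frac{\lambda\bigl[\ell D(\ell-1)-(\ell-1)D(\ell)\bigr]}{D(\ell)D(\ell-1)}.
\]
Expanding the bracket, the $(\lambda-1)\ell(\ell-1)$ terms cancel and the bracket equals $k$. This yields exactly \eqref{eq:pl-discrete-increments}.

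\textbf{Step 2 (monotonicity).} The numerator $\lambda k$ is positive and both denominators are positive. So each increment $d^{\mathrm{PL}}_{\ell,k}$ is strictly positive, and the curve is nondecreasing; in fact it is strictly increasing.

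\textbf{Step 3 (discrete concavity).} We need $d^{\mathrm{PL}}_{\ell+1,k}\le d^{\mathrm{PL}}_{\ell,k}$ for $\ell=1,\ldots,k-1$. Using the formula, the ratio is
\[
\frac{d^{\mathrm{PL}}_{\ell+1,k}}{d^{\mathrm{PL}}_{\ell,k}}=\frac{D(\ell-1)}{D(\ell+1)}.
\]
Because $D$ is affine with nonnegative slope $\lambda-1$, this ratio is at most $1$. Equality holds only when $\lambda=1$, which gives the linear curve $r/k$, consistent with $h_1(p)=p$.

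An alternative route is to note that $\psi^{\mathrm{PL}}_{k,\lambda}(r)=h_\lambda(r/k)$. One could then invoke \Cref{lem:odds-envelope-concavity}, since a concave increasing function sampled on an equally spaced grid is discrete concave and nondecreasing. However, the direct computation is needed anyway to obtain the explicit increments.

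The only step requiring any care is the algebraic cancellation in Step 1, and that is routine. I expect no genuine obstacle.
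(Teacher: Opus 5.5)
Your proof is correct and follows essentially the paper's route: compute the increment by subtracting adjacent values, then read discrete concavity off the fact that the denominator $D(\ell)D(\ell-1)$ is nondecreasing in $\ell$, which is exactly your ratio $D(\ell-1)/D(\ell+1)\le 1$. The paper also records the continuous first and second derivatives of $\lambda r/(k+(\lambda-1)r)$ and of $h_\lambda$, which matches your alternative route, but your purely discrete computation is already complete.
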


\vspace{1mm}
The estimation of $\lambda$ from validation data is
described in \Cref{sec:numerical-abcd}. Once $\lambda$ is computed,
the general portfolio objective from 
Section~\ref{sec:model} has a closed-form 
expression. For every nonzero execution-count vector $\bm m$, define
\begin{equation}
    J_\lambda(\bm m)
    =
    \frac1n\sum_{i=1}^n
    h_\lambda\left(
        \frac{r_i(\bm m)}{k(\bm m)}
    \right),
    \qquad
    \Pi_{\lambda,\gamma}(\bm m)
    =
    J_\lambda(\bm m)-\gamma C(\bm m),
    \label{eq:pl-endogenous-objective}
\end{equation}
with both functions set to zero at $\bm m=\bm 0$. Here
$J_\lambda(\bm m)$ plays the role of $J_\psi(\bm m)$ from
Section~\ref{sec:model}, but with the recovery curve
$\psi_{k(\bm m)}$ replaced by its Plackett--Luce form $h_\lambda$ evaluated
at the correct fraction $r_i(\bm m)/k(\bm m)$. This is the objective used 
in the optimization algorithms of Sections~\ref{sec:ip-lp-rounding} and 
\ref{sec:implicit-oracle}.

To study the structure of $J_\lambda$, we represent each execution slot as a distinct labeled copy of its workflow type. Thus, an execution-count vector $\bm m$ can be viewed as an ordinary subset of the ground set $\G\times\{1,\ldots,K_{\max}\}$ containing $m_g$ labeled copies of each workflow type $g$, with the second coordinate serving only to distinguish repeated executions. A natural question is whether $J_\lambda$, viewed as a set function under this representation, has the familiar properties of a coverage objective commonly found in submodular optimization.

The next proposition shows that neither property holds in general. The
endogenous-size objective can decrease when an execution is added and can
exhibit both increasing and decreasing marginal returns. Standard
monotone-submodular optimization tools therefore do not apply directly,
which motivates our development in
\Cref{sec:ip-lp-rounding}.

\begin{proposition}[More workflows need not be better]
\label{prop:endogenous-structure}
For every $\lambda\ge1$, $J_\lambda$ is generally nonmonotone with respect
to adding executions: there exist a feasible $\bm m$ and workflow type $g$
such that selector-aware accuracy decreases $J_\lambda(\bm m+\bm e_g)
    <
    J_\lambda(\bm m).$
Moreover, $J_\lambda$ is neither
submodular nor supermodular. Subtracting the modular workflow-cost term
$\gamma C(\bm m)$ preserves these failures, so
$\Pi_{\lambda,\gamma}$ has the same general structure.
\end{proposition}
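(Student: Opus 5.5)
The plan is to prove all three failures with explicit small instances that work uniformly in $\lambda\ge1$. Each instance uses a single development task ($n=1$) and two workflow types: a type $g$ with $a_{1g}=1$ (always correct) and a type $b$ with $a_{1b}=0$ (always wrong). With $n=1$, $J_\lambda(\bm m)=h_\lambda(r_1(\bm m)/k(\bm m))$ for $\bm m\neq\bm 0$ and $J_\lambda(\bm 0)=0$. I view $J_\lambda$ as a set function on labeled copies, as described just before the statement. The only facts about $h_\lambda$ that I need are those of \Cref{lem:odds-envelope-concavity} (increasing on $[0,1]$), together with $h_\lambda(1)=1$ and $h_\lambda(p)<1$ for $p<1$, since $\lambda<\infty$. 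I will assume $K_{\max}\ge3$ so that every vector used below is feasible. If $K_{\max}$ is smaller, the corresponding failure cannot be exhibited, and I will state this caveat explicitly.

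\emph{Nonmonotonicity.} Take $\bm m=\bm e_g$ and add one execution of $b$. Then $J_\lambda(\bm e_g)=h_\lambda(1)=1$, while $J_\lambda(\bm e_g+\bm e_b)=h_\lambda(1/2)=\lambda/(\lambda+1)<1$. Adding a wrong candidate therefore strictly lowers selector-aware accuracy for every finite $\lambda$. This includes $\lambda=1$, where the value drops from $1$ to $1/2$.

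\emph{Not submodular.} I exhibit $A\subseteq B$ and a new copy $x$ whose marginal gain is larger at $B$. Take $A=\{g^{(1)}\}$, $B=\{g^{(1)},b^{(1)}\}$ and $x=g^{(2)}$. At $A$ the marginal is $h_\lambda(2/2)-h_\lambda(1/1)=0$. At $B$ it is $h_\lambda(2/3)-h_\lambda(1/2)>0$, because $h_\lambda$ is strictly increasing; for $\lambda=1$ this is $1/6$.

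\emph{Not supermodular.} I need the reverse inequality. Take $A=\emptyset$, $B=\{b^{(1)}\}$ and $x=g^{(1)}$. At $A$ the marginal is $h_\lambda(1)-0=1$. At $B$ it is $h_\lambda(1/2)-h_\lambda(0)=\lambda/(\lambda+1)<1$. (If one prefers to avoid the convention at $\bm 0$, the pair $A=\{g^{(1)}\}$, $B=\{g^{(1)},b^{(1)}\}$ with $x=b^{(2)}$ gives marginals $h_\lambda(1/2)-1$ and $h_\lambda(1/3)-h_\lambda(1/2)$. Comparing these requires a short computation, so I would use the first pair.)

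\emph{Extension to $\Pi_{\lambda,\gamma}$.} The cost term $\gamma C(\bm m)$ is modular, so adding a copy $x$ changes $\Pi_{\lambda,\gamma}$ by the $J_\lambda$-marginal minus the same constant $\gamma c_x$, regardless of the base set. All strict comparisons between marginals at $A$ and at $B$ are therefore preserved. For nonmonotonicity, $\Pi_{\lambda,\gamma}(\bm e_g+\bm e_b)-\Pi_{\lambda,\gamma}(\bm e_g)=h_\lambda(1/2)-1-\gamma c_b<0$ because $c_b\ge0$. I expect no step to be a genuine obstacle. The only points needing care are checking that each example holds for all $\lambda\ge1$, including the boundary case $\lambda=1$, and being explicit about the convention $J_\lambda(\bm 0)=0$ and the feasibility requirement on $K_{\max}$.
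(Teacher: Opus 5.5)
Your argument is correct and is essentially the paper's proof: the same one-task correct/wrong counterexamples give nonmonotonicity and the failure of submodularity. Your non-supermodularity witness ($\emptyset\subseteq\{b^{(1)}\}$ with $x=g^{(1)}$, marginals $1>h_\lambda(1/2)$) is a harmless variant of the paper's ($\{w_1\}\subseteq\{w_1,w_2\}$ with $x=c_1$, marginals $h_\lambda(1/2)>h_\lambda(1/3)$), and the paper's triple is also the right fallback if you want to avoid the convention at $\bm 0$.

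Drop your parenthetical alternative, however. For $A=\{g^{(1)}\}$, $B=\{g^{(1)},b^{(1)}\}$ and $x=b^{(2)}$, one gets $\Delta(x\mid A)-\Delta(x\mid B)=-2/\{(\lambda+1)(\lambda+2)\}<0$ for every $\lambda\ge1$. That triple therefore exhibits increasing returns, which is another failure of submodularity, not a failure of supermodularity.
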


\vspace{1mm}

The reason is a selection externality: increasing one component $m_g$ by one
changes not only the number of correct candidates $r_i(\bm m)$ but also the
total number of candidates $k(\bm m)$ faced by the selector. If the added
execution is incorrect on task $i$, then $r_i(\bm m)$ is unchanged while
$k(\bm m)$ increases, this lowers the correct fraction
$r_i(\bm m)/k(\bm m)$, and since $h_\lambda$ is 
increasing, strictly lowers recovery probability on that task; this is why 
$J_\lambda$ can decrease when a workflow is added, ruling out monotonicity. 
Conversely, because $h_\lambda$ is concave, the marginal gain from adding a
correct workflow depends on the composition of the existing portfolio. A
correct workflow can be more valuable after an incorrect workflow has entered
the portfolio and diluted the correct fraction, producing increasing marginal
returns and violating submodularity.
Supermodularity would require the opposite inequality: the marginal gain from
adding a workflow must weakly increase as the base portfolio becomes larger. This property also fails\footnote{For example, let $w_1$ and $w_2$ be incorrect
workflows and let $c_1$ be correct. Adding $c_1$ to $\{w_1\}$ produces the
marginal gain $h_\lambda(1/2)$, whereas adding it to the larger set
$\{w_1,w_2\}$ produces only $h_\lambda(1/3)$. Since $h_\lambda$ is increasing,
$h_\lambda(1/2)>h_\lambda(1/3)$, which violates increasing marginal returns.
Thus some configurations exhibit increasing marginal returns and others
exhibit decreasing marginal returns, so $J_\lambda$ is neither submodular nor
supermodular.}. In Appendix \ref{app:proofs}, we construct explicit examples exhibiting both failures.

\section{The Inner Optimization Problem}
\label{sec:ip-lp-rounding}

This section studies the portfolio problem for a given nonempty evaluated
workflow-type pool $\M\subseteq\G$, allowing repeated execution of the same
workflow type. The resulting finite-pool formulations serve two purposes. They
produce exact and approximate deployment plans for the current workflow pool,
and they reveal the dual-price structure used to search over the implicit
workflow class in \Cref{sec:implicit-oracle}.

The restricted problem decomposes exactly by run size:
\begin{equation}
    \OPT_{\psi,\gamma}(\M)
    =
    \max\left\{
        0,
        \max_{1\le k\le K_{\max}}\OPT_k(\M)
    \right\},
    \qquad
    \OPT_k(\M)
    =
    \max_{\substack{
        \bm m\in\mathbb Z_+^{\M}\\
        k(\bm m)=k
    }}
    \Pi_{\psi,\gamma}(\bm m).
    \label{eq:size-decomposition}
\end{equation}
This decomposition isolates the source of difficulty created by endogenous run
size. As shown in \Cref{prop:endogenous-structure}, adding an execution changes
both the number of correct candidates and the total number of alternatives
faced by the selector, so the global objective need not be monotone or
submodular. 
Within a fixed-size problem, however, every feasible execution-count vector
satisfies $k(\bm m)=k$. The selector therefore faces the same number of
candidates across all portfolios in that slice, and portfolio composition
affects recovery only through the correct counts $r_i(\bm m)$. Under \Cref{ass:concave-selector} below, each fixed-size accuracy objective is
a \emph{concave-coverage} function in the sense of
\citet{barman2021concave}. We formulate the fixed-size problem exactly as an
integer program, derive an LP relaxation and its dual prices, and develop a
randomized-rounding procedure with an explicit performance certificate
\citep{ageev2004pipage,chekuri2010dependent,calinescu2011maximizing}.

The remainder of this section proceeds in five steps. \Cref{sec:geometric-cardinality-grid} 
shows that a sparse, geometrically spaced cardinality grid certifies a 
near-optimal run size without solving the fixed-size problem for every $k$. 
\Cref{sec:fixed-size-concave-recovery} then fixes a run size and imposes a 
concavity assumption on each $\psi_k$, introducing a curvature parameter 
that will later control the tightness of the rounding guarantee. 
\Cref{sec:inner-ip} formulates an exact integer program for each fixed size 
and shows its optimal value coincides with $\OPT_k(\M)$. 
\Cref{sec:inner-lp-dual} relaxes this integer program to an LP whose dual 
prices identify the residual tasks most worth targeting with new 
workflows. \Cref{sec:inner-rounding} rounds the LP solution into a feasible 
portfolio and bounds how far its value can fall short of the true 
fixed-size optimum, yielding a certificate that holds across the entire 
cardinality grid.

\subsection{Sparse geometric grids for the optimal run size}
\label{sec:geometric-cardinality-grid}

\Cref{prop:endogenous-structure} rules out simply deploying the largest
feasible portfolio because more workflows need not increase net value. Thus, finding
the best run size requires comparing across $k=1,\ldots,K_{\max}$, not
assuming $k=K_{\max}$ is optimal. Doing this by solving $\OPT_k(\M)$ for every
candidate $k$ can be expensive, especially since the inner problem must be
resolved at every generation round as the pool grows. This subsection shows
that when a single concave function generates every recovery curve, checking
only a sparse, geometrically spaced set of sizes is enough to certify a
near-optimal size, without solving the fixed-size problem at every $k$.
Formally, this is the case when the fraction-based envelope of
\Cref{ass:recovery-envelope} holds with equality rather than merely as an
upper bound. Recall that \Cref{sec:pl-ranking} showed that Plackett--Luce recovery has
exactly this form.

\vspace{1mm}
\begin{theorem}[Geometric cardinality approximation]
\label{thm:geometric-cardinality-approx}
Suppose the recovery curves are generated by a common increasing concave
function $h:[0,1]\to[0,1]$ with $h(0)=0$, so that
$\psi_k(r)=h(r/k)$,
    $r=0,\ldots,k$.
Let
    $\OPT_k^+(\M)=\max\{0,\OPT_k(\M)\}$.
Then, for every $1\le b\le k\le K_{\max}$,
\begin{equation}
    \OPT_b^+(\M)
    \ge
    \frac{b}{k}\OPT_k^+(\M).
    \label{eq:cross-cardinality-stability}
\end{equation}
Consequently, if a cardinality grid
$\mathcal K\subseteq\{1,\ldots,K_{\max}\}$ has coverage ratio $\varrho\ge1$,
meaning that for every $k\le K_{\max}$ there exists $b\in\mathcal K$ with
$b\le k\le \varrho b$, then
\begin{equation}
    \max_{b\in\mathcal K}\OPT_b^+(\M)
    \ge
    \frac1\varrho
    \OPT_{\psi,\gamma}(\M).
    \label{eq:geometric-grid-guarantee}
\end{equation}
In particular, the dyadic grid gives a $1/2$-approximation, and a grid with
ratio $\varrho=1/(1-\eta_{\mathrm{grid}})$ gives a $(1-\eta_{\mathrm{grid}})$-approximation using
$O(\eta_{\mathrm{grid}}^{-1}\log K_{\max})$ fixed-size solves.
\end{theorem}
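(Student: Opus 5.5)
The plan is to prove the cross-cardinality inequality \eqref{eq:cross-cardinality-stability} by a random subsampling argument, and then read off the grid guarantee. Fix $1\le b\le k$. If $\OPT_k(\M)\le 0$, the claim is trivial because $\OPT_b^+(\M)\ge 0$. Otherwise, let $\bm m$ attain $\OPT_k(\M)$ and label its $k$ execution slots $g_1,\ldots,g_k$. Draw a uniformly random $b$-subset $S$ of these slots, without replacement, and let $\bm M$ be the induced execution-count vector. It satisfies $k(\bm M)=b$ and $\bm M\le\bm m$ componentwise, so it is feasible in the size-$b$ slice. I will show that $\E[\Pi_{\psi,\gamma}(\bm M)]\ge (b/k)\Pi_{\psi,\gamma}(\bm m)$. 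Some realization then achieves at least this value, which gives $\OPT_b(\M)\ge (b/k)\OPT_k(\M)$.

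The cost term is immediate. Each slot enters $S$ with probability $b/k$, so $\E[C(\bm M)]=(b/k)C(\bm m)$.

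For accuracy, fix a task $i$ with $r=r_i(\bm m)\ge1$ and set $p=r/k$ (tasks with $r=0$ contribute zero on both sides). Let $R$ be the number of correct slots in $S$. Concavity together with $h(0)=0$ makes $h(x)/x$ nonincreasing, and $h$ is increasing. Hence for every $y\in[0,1]$,
\[
h(y)\ge h(p)\min\{1,y/p\}.
\]
Substituting $y=R/b$ gives $y/p=(k/b)(R/r)$. With $t=R/r\in[0,1]$ and $k/b\ge1$, we have $\min\{1,(k/b)t\}\ge t$. Therefore
\[
\E\big[h(R/b)\big]\ge h(p)\,\E[R/r]=\frac{b}{k}\,h(r/k),
\]
using $\E[R]=rb/k$. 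Averaging over tasks gives $\E[J(\bm M)]\ge (b/k)J(\bm m)$.

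The main obstacle is this accuracy step. The direct approach via Jensen's inequality goes the wrong way, since $h$ is concave. The key move is to replace $h$ by the pointwise lower bound $h(p)\min\{1,y/p\}$ and then use the elementary inequality $\min\{1,(k/b)t\}\ge t$, which turns a nonlinear expectation into a linear one.

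For \eqref{eq:geometric-grid-guarantee}: if $\OPT_{\psi,\gamma}(\M)=0$ there is nothing to prove. Otherwise, by \eqref{eq:size-decomposition} some $k^\star$ satisfies $\OPT_{k^\star}(\M)=\OPT_{\psi,\gamma}(\M)>0$. Pick $b\in\mathcal K$ with $b\le k^\star\le\varrho b$. Then $\OPT_b^+(\M)\ge (b/k^\star)\OPT_{k^\star}(\M)\ge \OPT_{\psi,\gamma}(\M)/\varrho$.

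The dyadic grid $\{2^j\}$ has $\varrho=2$. For general $\varrho=1/(1-\eta_{\mathrm{grid}})$, take all integers $\lfloor\varrho^j\rfloor$ together with $1,\ldots,\lceil 1/\eta_{\mathrm{grid}}\rceil$. The small integers handle rounding at small sizes, and the powers handle the rest. This grid has $O(\log K_{\max}/\log\varrho+\eta_{\mathrm{grid}}^{-1})=O(\eta_{\mathrm{grid}}^{-1}\log K_{\max})$ elements, which gives the stated count of fixed-size solves.
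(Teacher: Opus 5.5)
Your proof of \eqref{eq:cross-cardinality-stability} and \eqref{eq:geometric-grid-guarantee} is correct and essentially matches the paper's. It uses the same uniform without-replacement subsample of $b$ of the $k$ labeled slots and the same pointwise bound $h(y)\ge h(p)\min\{1,y/p\}$. Your linearization $\min\{1,(k/b)t\}\ge t$ is the paper's $\min\{z,\alpha\}\ge\alpha z$ rescaled, and the anchor argument on the grid is the same.

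The one step that fails is your explicit ratio-$\varrho$ grid. Rounding down undershoots. The shortfall $\varrho^{j+1}-\varrho\lfloor\varrho^j\rfloor=\varrho(\varrho^j-\lfloor\varrho^j\rfloor)$ can exceed one. When it does, an integer just below $\lfloor\varrho^{j+1}\rfloor$ is left with no anchor, and this is not confined to the range $1,\ldots,\lceil 1/\eta_{\mathrm{grid}}\rceil$ that you patch.

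For example, take $\varrho=1.9$. Then $\eta_{\mathrm{grid}}=9/19$, $\lceil 1/\eta_{\mathrm{grid}}\rceil=3$, and your grid is $\{1,2,3,6,13,24,\ldots\}$. The size $k=12$ has no $b$ in the grid with $b\le 12\le 1.9\,b$, since $1.9\cdot 6=11.4<12$. So this grid only has coverage ratio $2$, not $1.9$.

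The fix is to round up, as the paper does with $\mathcal K_\varrho=\{\lceil\varrho^j\rceil\}$. Let $j$ be the largest index with $b=\lceil\varrho^j\rceil\le k$. Then $\lceil\varrho^{j+1}\rceil>k$, which forces $\varrho^{j+1}>k$ because $k$ is an integer. Hence $k<\varrho\cdot\varrho^j\le\varrho b$.

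With rounding up, no supplementary small integers are needed. Since $\log\varrho=-\log(1-\eta_{\mathrm{grid}})\ge\eta_{\mathrm{grid}}$, the grid has $O(\eta_{\mathrm{grid}}^{-1}\log K_{\max})$ points. The greedy recursion $b_0=1$, $b_{t+1}=\lfloor\varrho b_t\rfloor+1$ would also work.
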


\vspace{1mm}
Inequality \eqref{eq:cross-cardinality-stability} says that a portfolio of
size $b\le k$ can always secure at least a $b/k$ fraction of the value
achievable at size $k$: shrinking the run size costs at most proportionally,
never more, because concavity of $h$ rules out returns to scale that vanish
faster than linearly. This is what makes a coarse grid safe: whichever true
optimal size $k^\star$ exhaustive search would find, some grid point $b$
within a factor $\varrho$ of $k^\star$ still retains at least a $1/\varrho$ share
of the optimal value, which is exactly \eqref{eq:geometric-grid-guarantee}.

Two concrete grids turn this general guarantee into the specific ratios. The \emph{dyadic grid}
\[
\mathcal K_{\mathrm{dyad}}=\{2^j:j=0,1,\ldots,\lceil\log_2K_{\max}\rceil\}
\cap\{1,\ldots,K_{\max}\}
\]
has coverage ratio $\varrho=2$ and $O(\log K_{\max})$
points, giving the $1/2$-approximation. More generally, for any $\varrho>1$,
the \emph{ratio-$\varrho$ grid} 
\[
\mathcal K_\varrho=\{\lceil\varrho^j\rceil:
j=0,1,\ldots,\lceil\log_\varrho K_{\max}\rceil\}\cap\{1,\ldots,K_{\max}\}
\]
has
coverage ratio $\varrho$ and $O(\log K_{\max})$ points. Taking
$\varrho=1/(1-\eta_{\mathrm{grid}})$ gives the $(1-\eta_{\mathrm{grid}})$-approximation using
$O(\eta_{\mathrm{grid}}^{-1}\log K_{\max})$ fixed-size solves. The full grid $\mathcal K=\{1,\ldots,K_{\max}\}$ corresponds to the
case $\varrho=1$, where every size is its own anchor and the guarantee in
\eqref{eq:geometric-grid-guarantee} becomes exact. Small pools can afford this exhaustive search. Large generation loops
resolve the inner problem at every round, so a sparse grid, dyadic or
otherwise, cuts the number of LP solves and oracle calls.

\subsection{Concave recovery and selector curvature at fixed size}
\label{sec:fixed-size-concave-recovery}

We now fix $k$ and build towards solving 
$\OPT_k(\M)$: exactly, through the integer program in \Cref{sec:inner-ip}, and at
scale, through the rounding certificate in \Cref{sec:inner-rounding}. This
subsection lays the groundwork with the following condition on the fixed-size recovery curve $\psi_k$.

\vspace{1mm}
\begin{assumption}[Concave recovery at each fixed size]
\label{ass:concave-selector}
For every $k=1,\ldots,K_{\max}$, $\psi_k$ is nondecreasing and discrete concave:
\begin{equation}
    d_{1,k}\ge d_{2,k}\ge\cdots\ge d_{k,k}\ge0,
    \qquad
    d_{\ell,k}=\psi_k(\ell)-\psi_k(\ell-1).
    \label{eq:concave-increments}
\end{equation}
When $d_{1,k}>0$, define
\begin{equation}
    c_{\psi,k}=1-\frac{d_{k,k}}{d_{1,k}}.
    \label{eq:selector-curvature}
\end{equation}
\end{assumption}

\vspace{1mm}
The curvature $c_{\psi,k}\in[0,1]$ measures how much recovery flattens as
correct candidates accumulate: it is zero when marginal recovery gains are
constant and approaches one when the last correct candidate contributes much
less than the first.
The requirements used below are nested. The exact
integer-program result in \Cref{prop:ip-exact} requires only that $\psi_k$ be
nondecreasing, equivalently, that $d_{\ell,k}\ge0$ for every $\ell$. This is weaker than \Cref{ass:concave-selector}, which additionally requires
the diminishing-increment inequalities
$d_{1,k}\ge\cdots\ge d_{k,k}$. The stronger condition is used only to identify
the fixed-size objective as concave coverage and to establish the LP-rounding
certificate.
Diminishing increments are natural when the first correct candidate provides
most of the selector's recoverable signal and additional correct candidates
are partly redundant. For example, once the candidate set already contains a
clearly correct answer, a second or third correct answer may provide less
additional help to the selector than the first. Plackett--Luce recovery satisfies
the condition exactly by \Cref{prop:pl-concavity}.

For the rounding analysis in \Cref{sec:inner-rounding}, it is useful to
extend the fixed-size curve beyond the deployed range $r=0,\ldots,k$:
\begin{equation}
    \widetilde\psi_k(r)
    =
    d_{k,k}r
    +
    \sum_{t=1}^{k-1}\left(d_{t,k}-d_{t+1,k}\right)\min\{r,t\},
    \qquad r\ge0.
    \label{eq:fixed-size-analytical-continuation}
\end{equation}
This continuation expresses the fixed-size accuracy function as a modular
linear term plus a nonnegative combination of threshold coverage functions
$\min\{r,t\}$, the form used in the rounding proof, and it agrees with
$\psi_k$ at the deployed integer points $r=0,\ldots,k$, so it changes
nothing about the value of any actual size-$k$ portfolio.

\subsection{Exact integer programs}
\label{sec:inner-ip}

For a fixed run size $k$, the selector-aware contribution of task $i$ under an
execution-count vector $\bm m$ is
$\psi_k(r_i(\bm m))$, which is generally nonlinear in the execution counts.
We linearize this term using binary tier indicators.

Let $z_g\in\mathbb Z_+$ denote the number of execution slots assigned to
workflow type $g$. For each task $i$ and tier
$\ell=1,\ldots,k$, let $y_{i\ell}\in\{0,1\}$ indicate whether the execution
plan produces at least $\ell$ correct candidates on task $i$. Because
\[
    \psi_k(r)
    =
    \sum_{\ell=1}^{r} d_{\ell,k},
    \qquad
    d_{\ell,k}
    =
    \psi_k(\ell)-\psi_k(\ell-1),
\]
an integral allocation $z$ induces the execution-count vector with
$m_g=z_g$. Weighting $y_{i\ell}$ by $d_{\ell,k}$ therefore reproduces
$\psi_k(r_i(\bm m))$ exactly. The formulation below enforces that the active tiers
form a prefix and that their total number cannot exceed the number of correct
candidate executions produced on the task. For each $k$, solve
\begin{align}
    I_k(\M)=
    \max_{y,z}\quad
    &\frac1n\sum_{i=1}^n\sum_{\ell=1}^{k}d_{\ell,k}y_{i\ell}
    -\gamma\sum_{g\in\M}c_gz_g
    \label{eq:cost-aware-ip}\\
    \text{s.t.}\quad
    &\sum_{\ell=1}^{k}y_{i\ell}
      \le
      \sum_{g\in\M}a_{ig}z_g,
      &&i=1,\ldots,n,\nonumber\\
    &y_{i\ell}\le y_{i,\ell-1},
      &&i=1,\ldots,n,\quad \ell=2,\ldots,k,\nonumber\\
    &\sum_{g\in\M}z_g=k,\nonumber\\
    &z_g\in\mathbb Z_+,
      && g\in\M,\nonumber\\
    &y_{i\ell}\in\{0,1\},
      &&i=1,\ldots,n,\quad \ell=1,\ldots,k.\nonumber
\end{align}

Solving \eqref{eq:cost-aware-ip} is not an approximation. The next result
shows its optimal value coincides exactly with the fixed-size optimum
$\OPT_k(\M)$, and it does so under only a nondecreasing recovery curve,
without invoking the concavity assumed in \Cref{sec:fixed-size-concave-recovery}.

\vspace{1mm}
\begin{proposition}[Exactness of the fixed-size inner problem]
\label{prop:ip-exact}
If $\M$ is nonempty and $\psi_k$ is nondecreasing, then
$I_k(\M)=\OPT_k(\M)$. Consequently,
\begin{equation}
    \OPT_{\psi,\gamma}(\M)
    =
    \max\left\{
        0,
        \max_{1\le k\le K_{\max}} I_k(\M)
    \right\}.
    \label{eq:exact-global-enumeration}
\end{equation}
\end{proposition}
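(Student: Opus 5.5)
The plan is to prove the two inequalities $I_k(\M)\ge\OPT_k(\M)$ and $I_k(\M)\le\OPT_k(\M)$ separately. Throughout, the integer variables $z$ are identified with an execution-count vector via $m_g=z_g$. The constraint $\sum_g z_g=k$ together with $z_g\in\mathbb Z_+$ says exactly that $\bm m\in\mathbb Z_+^{\M}$ with $k(\bm m)=k$. Hence the feasible $z$-parts of \eqref{eq:cost-aware-ip} are in bijection with the feasible set of $\OPT_k(\M)$, and this set is nonempty because $\M\neq\emptyset$. The cost terms coincide, since $\gamma\sum_g c_gz_g=\gamma C(\bm m)$. It therefore suffices to show that, for each fixed feasible $z$, maximizing over $y$ gives exactly $\frac1n\sum_i\psi_k(r_i(\bm m))$.

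\textbf{Lower bound.} Given $\bm m$ with $k(\bm m)=k$, set $y_{i\ell}=\1\{\ell\le r_i(\bm m)\}$. Since $0\le r_i(\bm m)\le k(\bm m)=k$ (because $a_{ig}\in\{0,1\}$), this is a prefix of length exactly $r_i(\bm m)$. The monotonicity constraints therefore hold, and $\sum_\ell y_{i\ell}=r_i(\bm m)=\sum_g a_{ig}z_g$. The objective contribution of task $i$ is $\sum_{\ell\le r_i}d_{\ell,k}=\psi_k(r_i)-\psi_k(0)=\psi_k(r_i)$ by telescoping. Hence $I_k(\M)\ge\Pi_{\psi,\gamma}(\bm m)$ for every such $\bm m$, so $I_k(\M)\ge\OPT_k(\M)$.

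\textbf{Upper bound.} Take any feasible $(y,z)$. For each $i$, the prefix constraints force $y_{i\cdot}$ to be of the form $\1\{\ell\le s_i\}$ for some integer $s_i\in\{0,\ldots,k\}$. The coverage constraint gives $s_i\le r_i(\bm m)$. The contribution of task $i$ is then $\sum_{\ell\le s_i}d_{\ell,k}=\psi_k(s_i)\le\psi_k(r_i(\bm m))$, and this is the only place monotonicity of $\psi_k$ is used. Equivalently, every $d_{\ell,k}\ge0$, so dropping the terms with $\ell\in(s_i,r_i]$ can only lower the value. Summing over $i$ shows that the IP objective is at most $\Pi_{\psi,\gamma}(\bm m)\le\OPT_k(\M)$.

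Combining the two bounds gives $I_k(\M)=\OPT_k(\M)$. Substituting this into the size decomposition \eqref{eq:size-decomposition} yields \eqref{eq:exact-global-enumeration}. The decomposition itself holds because the feasible set of \eqref{eq:restricted-inner-optimum} is the disjoint union over $k\in\{0,1,\ldots,K_{\max}\}$ of the slices $\{k(\bm m)=k\}$, and the $k=0$ slice is $\{\bm 0\}$ with value $0$.

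No step is genuinely hard. The point needing care is the upper bound: one must check that the prefix structure pins $y_{i\cdot}$ down to a single integer $s_i$, and that $r_i\le k$ so the tier index range suffices. Both facts follow directly from the constraints, and concavity is never needed.
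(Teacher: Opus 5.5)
Your proof is correct and follows the paper's argument essentially step for step. The prefix $y_{i\ell}=\1\{\ell\le r_i(\bm m)\}$ gives $I_k(\M)\ge\OPT_k(\M)$; for any integral feasible $(y,z)$, the prefix length $s_i\le r_i(\bm m)$ together with nonnegative increments gives $I_k(\M)\le\OPT_k(\M)$; and the size decomposition finishes it. Your side remarks, that monotonicity is needed only in the upper bound and the explicit justification of the decomposition, are accurate but do not change the route.
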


\vspace{1mm}
Equation \eqref{eq:exact-global-enumeration} is the size decomposition
\eqref{eq:size-decomposition} with each $\OPT_k(\M)$ replaced by its exact integer program
value $I_k(\M)$, so endogenous run size does not require a monolithic
nonlinear formulation: exact enumeration solves \eqref{eq:cost-aware-ip} for
every $k$, while \Cref{thm:geometric-cardinality-approx} permits a sparse grid
instead when a controlled approximation is sufficient.

\subsection{LP relaxation, dual prices, and reduced costs}
\label{sec:inner-lp-dual}

The exact integer program in \Cref{sec:inner-ip} solves the inner problem for the current
pool, but it says nothing about which workflows outside the pool are worth
generating next. Extracting that information requires relaxing integrality
and reading off dual prices.\footnote{\citet{chen2024noisy} likewise use shadow prices to coordinate primal
resource allocations, although their focus is joint differential privacy
rather than search over an implicit workflow class.}

Relaxing integrality in \eqref{eq:cost-aware-ip}
gives
\begin{align}
    L_k(\M)=
    \max_{y,z}\quad
    &\frac1n\sum_{i=1}^n\sum_{\ell=1}^{k}d_{\ell,k}y_{i\ell}
    -\gamma\sum_{g\in\M}c_gz_g
    \label{eq:cost-aware-lp}\\
    \text{s.t.}\quad
    &\sum_{\ell=1}^{k}y_{i\ell}
      \le
      \sum_{g\in\M}a_{ig}z_g,
      &&i=1,\ldots,n,\nonumber\\
    &y_{i\ell}\le y_{i,\ell-1},
      &&i=1,\ldots,n,\quad \ell=2,\ldots,k,\nonumber\\
    &\sum_{g\in\M}z_g=k,\nonumber\\
    & z_g\ge0,
      && g\in\M, \nonumber\\
    &0\le y_{i\ell}\le1,
      &&i=1,\ldots,n,\quad \ell=1,\ldots,k.\nonumber
\end{align}
For fixed $z$, let $x_i(z)=\sum_g a_{ig}z_g$, now possibly fractional. Under
\Cref{ass:concave-selector}, the marginal weights $d_{\ell,k}$ are already
sorted in decreasing order, so filling the $y$ slots in index order up to
$x_i(z)$ is optimal; the accuracy component then equals
$\widetilde\psi_k(x_i(z))$, the same analytical continuation defined in
\eqref{eq:fixed-size-analytical-continuation}, now evaluated at a possibly
fractional argument rather than an integer one.

Let $\mu_i\ge0$ denote the dual variable associated with the task-$i$
coverage constraint, let $\nu_{i\ell}\ge0$ correspond to the prefix constraint
for tier $\ell$, let $\sigma_{i\ell}\ge0$ correspond to the upper bound
$y_{i\ell}\le1$, and let $\theta$ be the unrestricted dual variable associated
with the fixed-cardinality constraint $\sum_g z_g=k$. Because repeated
execution is allowed, the primal variables $z_g$ have no upper bounds; hence,
the dual contains no workflow-specific variables corresponding to constraints
of the form $z_g\le1$. Using the boundary convention
$\nu_{i1}=\nu_{i,k+1}=0$, a dual formulation of
\eqref{eq:cost-aware-lp} is
\begin{align}
    D_k(\M)=
    \min_{\mu,\nu,\theta,\sigma}\quad
    &k\theta+
      \sum_{i=1}^n\sum_{\ell=1}^k\sigma_{i\ell}
    \label{eq:cost-aware-dual}\\
    \text{s.t.}\quad
    &\mu_i+\nu_{i\ell}-\nu_{i,\ell+1}+\sigma_{i\ell}
      \ge\frac{d_{\ell,k}}n,
      &&i=1,\ldots,n,\quad \ell=1,\ldots,k,\nonumber\\
    &\sum_{i=1}^n a_{ig}\mu_i-\gamma c_g
      \le\theta,
      &&g\in\M,\nonumber\\
    &\mu_i,\sigma_{i\ell}\ge0,
      &&i=1,\ldots,n,\quad \ell=1,\ldots,k,\nonumber\\
    &\nu_{i\ell}\ge0,
      &&i=1,\ldots,n,\quad \ell=2,\ldots,k,\nonumber\\
    &\theta\ \text{free}.\nonumber
\end{align}

For a workflow type $g\in\G\setminus\M$, define its reduced-cost score by
\begin{equation}
    \Gamma_k(g;\mu,\theta)
    =
    \sum_{i=1}^n\mu_i a_{ig}
    -
    \gamma c_g
    -
    \theta.
    \label{eq:cost-aware-reduced-cost}
\end{equation}
The first term is the task-price-weighted value of the workflow's correct
outputs, $\gamma c_g$ is its recurring execution cost, and $\theta$ is the
dual price of one execution slot. Thus,
$\Gamma_k(g;\mu,\theta)>0$ means that workflow $g$ violates its full-dual
constraint
$
    \sum_{i=1}^n\mu_i a_{ig}-\gamma c_g\le\theta.
$
To see why all workflow-indexed constraints can be checked through one pricing
problem, recall the score function
$
    s_g(\mu)
    =
    \sum_{i=1}^n\mu_i a_{ig}
    -
    \gamma c_g.
$
Because repeated-execution multiplicities are uncapped, we have
\begin{equation}
    \max_{\substack{z_g\ge0,\ g\in\G\\
                    \sum_{g\in\G}z_g=k}}
    \sum_{g\in\G}s_g(\mu)z_g
    =
    k\max_{g\in\G}s_g(\mu).
    \label{eq:single-workflow-support}
\end{equation}
Hence it is enough to solve the single global pricing problem
\[
    \max_{g\in\G}s_g(\mu).
\]
If its value exceeds $\theta$, the maximizing workflow identifies a missing
workflow that could improve the current LP solution and should therefore be
added to the evaluated pool. If its value is at most $\theta$, no workflow in
the implicit class has enough price-weighted accuracy, net of execution cost,
to improve the current relaxation at these dual prices.

\subsection{Cost-preserving randomized rounding and certificates}
\label{sec:inner-rounding}

The integer program in \Cref{sec:inner-ip}, when exactly solved, already returns a deployable
portfolio, with no rounding required. The obstacle is scale, not exactness:
this is a concave-coverage problem, and coverage problems are known to be NP-hard in
general \citep{nemhauser1978analysis,feige1998threshold}, so solving the integer program
to optimality by branch-and-bound can become expensive when $\M$ or $n$ is
large. This subsection trades exactness for scalability: it solves
the efficient LP relaxation \eqref{eq:cost-aware-lp}, rounds the fractional
solution into a feasible portfolio, and certifies how close the result
comes to the true fixed-size optimum $\OPT_k(\M)$.

For any feasible fractional execution vector $z\ge0$ satisfying
$\sum_{g\in\M}z_g=k$, let
$
    x_i(z)
    =
    \sum_{g\in\M}a_{ig}z_g,
$
be the fractional number of correct executions assigned to task $i$. We separate
the relaxed objective into three components:
\begin{equation}
    Q_k(z)
    =
    \frac1n\sum_{i=1}^n
    \widetilde\psi_k\bigl(x_i(z)\bigr),
    \qquad
    R_k(z)
    =
    \frac1n\sum_{i=1}^n x_i(z),
    \qquad
    C_k(z)
    =
    \sum_{g\in\M}c_gz_g.
    \label{eq:lp-components}
\end{equation}
The corresponding relaxed net value
is
$
    F_k(z)
    =
    Q_k(z)-\gamma C_k(z).
$

Let $z^{k\star}$ be an optimal solution of the size-$k$ LP, and write
$
    Q_k^\star=Q_k(z^{k\star}),
$
$
    R_k^\star=R_k(z^{k\star}),
$
and
$
    C_k^\star=C_k(z^{k\star}).
$
Then
$
    L_k(\M)
    =
    F_k(z^{k\star})
    =
    Q_k^\star-\gamma C_k^\star.
$
To obtain an integral execution plan, consider any feasible fractional
execution vector $z$ satisfying $\sum_{g\in\M}z_g=k$, and define
\[
    q_g(z)
    =
    \frac{z_g}{k},
    \qquad g\in\M.
\]
Because $\sum_g z_g=k$, the vector $q(z)$ is a probability distribution over
workflow types. Draw $ G_1,\ldots,G_k
    \stackrel{\mathrm{iid}}{\sim}
    q(z),$
and define the random execution-count vector
\[
    M_{k,g}^{\RR}(z)
    =
    \sum_{j=1}^k\1\{G_j=g\},
    \qquad
    \bm M_k^{\RR}(z)
    =
    \bigl(M_{k,g}^{\RR}(z):g\in\M\bigr).
\]
Thus, each of the $k$ execution slots independently receives a workflow type
according to the fractional allocation, and
\[
    k\bigl(\bm M_k^{\RR}(z)\bigr)=k
    \quad\text{almost surely},\qquad
    \E\!\left[M_{k,g}^{\RR}(z)\right]=z_g.
\]
Hence the rounding procedure preserves every workflow multiplicity, and
therefore total execution cost, in expectation. Moreover, for task $i$, the
random number of correct candidates satisfies
\[
    R_i^{\RR}(z)
    :=
    r_i\bigl(\bm M_k^{\RR}(z)\bigr)
    \sim
    \mathrm{Binomial}\!\left(
        k,
        \frac{x_i(z)}{k}
    \right).
\]
When $z=z^{k\star}$, abbreviate
\[
    \bm M_k^{\RR}
    :=
    \bm M_k^{\RR}(z^{k\star}),
    \qquad
    M_{k,g}^{\RR}
    :=
    M_{k,g}^{\RR}(z^{k\star}),
    \qquad
    R_i^{\RR}
    :=
    R_i^{\RR}(z^{k\star}).
\]

The next theorem bounds this rounding scheme for any feasible fractional
vector $z$, not only an exact LP optimizer.

\vspace{1mm}
\begin{theorem}[Fixed-size net-value rounding certificate]
\label{thm:cost-rounding}
Suppose \Cref{ass:concave-selector} holds for run size $k$ and
$d_{1,k}>0$. Let $z$ be any feasible fractional execution vector for the
repeated-execution LP, and let $\bm M_k^{\RR}(z)$ be the size-$k$ execution
multiset obtained by the with-replacement rounding procedure described above.
Then
\begin{align}
    \E\left[
        \Pi_{\psi,\gamma}\bigl(\bm M_k^{\RR}(z)\bigr)
    \right]
    &\ge
    \left(1-\frac1e\right)Q_k(z)
    +
    \frac{d_{k,k}}{e}R_k(z)
    -
    \gamma C_k(z),
    \label{eq:net-rounding-lower-bound}\\
    F_k(z)
    -
    \E\left[
        \Pi_{\psi,\gamma}\bigl(\bm M_k^{\RR}(z)\bigr)
    \right]
    &\le
    \frac1e
    \left\{
        Q_k(z)-d_{k,k}R_k(z)
    \right\}
    \le
    \frac{c_{\psi,k}}{e}Q_k(z).
    \label{eq:net-rounding-fractional-gap}
\end{align}
In particular, if $z=z^{k\star}$ is an optimal solution of the size-$k$ LP,
then
\begin{equation}
    \OPT_k(\M)
    -
    \E\left[
        \Pi_{\psi,\gamma}\bigl(\bm M_k^{\RR}\bigr)
    \right]
    \le
    \frac1e
    \left(
        Q_k^\star-d_{k,k}R_k^\star
    \right)
    \le
    \frac{c_{\psi,k}}{e}Q_k^\star.
    \label{eq:net-rounding-additive-gap}
\end{equation}
\end{theorem}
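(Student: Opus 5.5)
The plan is to work task by task and then average. Fix $z$ and a task $i$, and write $x=x_i(z)\in[0,k]$ and $p=x/k$, so that $R_i^{\RR}\sim\mathrm{Binomial}(k,p)$. The cost term is exact in expectation: $\E[C(\bm M_k^{\RR}(z))]=C_k(z)$, because $\E[M^{\RR}_{k,g}]=z_g$. It therefore suffices to show, for each task,
\[
\E\bigl[\psi_k(R_i^{\RR})\bigr]\ge\Bigl(1-\tfrac1e\Bigr)\widetilde\psi_k(x)+\tfrac{d_{k,k}}{e}\,x .
\]
Averaging over $i$ then gives \eqref{eq:net-rounding-lower-bound}. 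Since $R_i^{\RR}\le k$, we may replace $\psi_k$ by $\widetilde\psi_k$ inside the expectation.

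Next I will use the decomposition \eqref{eq:fixed-size-analytical-continuation}. By linearity, $\E[\widetilde\psi_k(R)]=d_{k,k}\E[R]+\sum_{t=1}^{k-1}(d_{t,k}-d_{t+1,k})\E[\min\{R,t\}]$, and all the weights are nonnegative under \Cref{ass:concave-selector}. The modular part is exact, since $\E[R]=x$. The key step, and the main obstacle, is the threshold-coverage inequality
\[
\E\bigl[\min\{R,t\}\bigr]\ge\Bigl(1-\tfrac1e\Bigr)\min\{x,t\}
\quad\text{for } R\sim\mathrm{Binomial}(k,x/k),\ 1\le t\le k .
\]
This is the known correlation-gap / Poisson-concentration bound used in concave coverage \citep{barman2021concave}. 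I would prove it as follows. The map $x\mapsto\E[\min\{R,t\}]$ is concave in $x$, because a binomial with success probability $x/k$ has a concave expected truncated mean; equivalently, $\E[\min\{R,t\}]=\sum_{s=1}^{t}\Prob(R\ge s)$, and one can argue via Poisson approximation or a direct comparison. It vanishes at $0$. At $x=t$ it is at least $(1-1/e)t$, using the fact that a Binomial$(k,t/k)$ variable satisfies $\E[\min\{R,t\}]\ge t(1-t^t e^{-t}/t!)\ge(1-1/e)t$. Concavity then gives the bound on $[0,t]$, and monotonicity in $x$ extends it to $x\ge t$. If the concavity route is messy, the fallback is to cite the standard inequality that a sum of independent Bernoullis with mean $\mu$ satisfies $\E[\min\{R,t\}]\ge(1-1/e)\min\{\mu,t\}$.

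Combining the pieces gives
\[
\E[\widetilde\psi_k(R)]\ge d_{k,k}x+(1-1/e)\bigl(\widetilde\psi_k(x)-d_{k,k}x\bigr),
\]
which is the per-task claim. For the first inequality in \eqref{eq:net-rounding-fractional-gap}, I subtract this bound from $F_k(z)=Q_k(z)-\gamma C_k(z)$. For the second, I note that $\widetilde\psi_k(x)\le d_{1,k}x$ by concavity, so $Q_k\le d_{1,k}R_k$. Hence $Q_k-d_{k,k}R_k\le Q_k(1-d_{k,k}/d_{1,k})=c_{\psi,k}Q_k$, where $d_{1,k}>0$ is used. Finally, \eqref{eq:net-rounding-additive-gap} follows by taking $z=z^{k\star}$ and using $\OPT_k(\M)=I_k(\M)\le L_k(\M)=F_k(z^{k\star})$ from \Cref{prop:ip-exact} and the LP relaxation.
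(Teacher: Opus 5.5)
Your proof is correct, and its overall structure matches the paper's: the per-task reduction, the threshold decomposition of $\widetilde\psi_k$ with nonnegative weights, exact preservation of the modular term and of cost, the curvature step via $\widetilde\psi_k(x)\le d_{1,k}x$, and $\OPT_k(\M)\le L_k(\M)=F_k(z^{k\star})$ at the end.

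The difference lies in the key inequality $\E[\min\{R,t\}]\ge(1-1/e)\min\{x,t\}$.
\begin{itemize}
\item The paper uses the pointwise bound $\min\{r,t\}\ge t\{1-(1-1/t)^r\}$. The binomial generating function then gives $\E[\min\{R,t\}]\ge t[1-(1-x/(kt))^k]\ge t(1-e^{-x/t})\ge(1-1/e)\min\{x,t\}$ directly.
\item You instead use concavity in $x$ together with an anchor value at $x=t$.
\end{itemize}

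Your route is valid, but both ingredients are asserted rather than proved, and the hint you give for concavity does not work as stated. For $s\ge2$ the tails $\Prob(R\ge s)$ are S-shaped in $p$, so concavity does not follow term by term from $\sum_{s\le t}\Prob(R\ge s)$. Poisson approximation does not establish concavity either. The clean justification is $\frac{d}{dp}\E[\min\{R,t\}]=k\,\Prob\{\mathrm{Binomial}(k-1,p)\le t-1\}$, which is nonincreasing in $p$.

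For the anchor, use convex order. Every convex function lies above its chord through $0$ and $1$ at each nonnegative integer, so $\mathrm{Bernoulli}(p)\le_{\mathrm{cx}}\mathrm{Poisson}(p)$. Closure under independent sums then gives $\mathrm{Binomial}(k,t/k)\le_{\mathrm{cx}}\mathrm{Poisson}(t)$. Since $\min\{\cdot,t\}$ is concave, with $N\sim\mathrm{Poisson}(t)$ you get $\E[\min\{R,t\}]\ge t-\E[(t-N)^+]=t(1-t^te^{-t}/t!)\ge(1-1/e)t$.

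What each approach buys:
\begin{itemize}
\item The paper's generating-function argument takes two lines and is fully elementary. It also carries over verbatim to sums of independent non-identical Bernoullis, via $\prod_q(1-p_q/t)\le e^{-\sum_q p_q/t}$. That is exactly how it is reused in the stochastic rounding lemma.
\item Your concavity step, as parametrized, relies on identical trials. You could remove that dependence by doing the Poisson comparison first and then running the concavity argument on the Poisson truncated mean.
\item Your anchor value $1-t^te^{-t}/t!$ is strictly better than $1-1/e$ for $t\ge2$. This points to a threshold-dependent sharpening of the certificate near $x=t$. The paper's bound $t(1-e^{-x/t})$ remains tighter for small $x/t$.
\end{itemize}
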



\vspace{1mm}
The final bound has a uniform interpretation. Because
$0\le y_{i\ell}^{k\star}\le1$ and $\sum_{\ell=1}^k d_{\ell,k}
    =
    \psi_k(k)-\psi_k(0)
    =
    1,$
the LP accuracy component satisfies $0\le Q_k^\star\le1.$
Consequently,
\begin{equation*}
    0
    \le
    \OPT_k(\M)
    -
    \E\left[\Pi_{\psi,\gamma}(\bm M_k^{\RR})\right]
    \le
    \frac{c_{\psi,k}}{e}Q_k^\star
    \le
    \frac{c_{\psi,k}}{e}
    \le
    \frac1e.
\end{equation*}
Thus $c_{\psi,k}/e$ is a directly interpretable worst-case additive loss in
accuracy-equivalent units, while the bound involving $Q_k^\star$ can be
strictly sharper for the realized LP solution (see also \Cref{remark:uniform bound} at the end of this section).

The recurring execution-cost term does not incur any additional rounding loss.
Because with-replacement rounding preserves each workflow multiplicity in
expectation,
\[
    \E\left[
    C\bigl(\bm M_k^{\RR}\bigr)
\right]
=
\sum_{g\in\M}c_g
\E\left[M_{k,g}^{\RR}\right]
=
\sum_{g\in\M}c_gz_g^{k\star}
=
C_k^\star.
\]
Thus, all loss in the certificate arises from rounding the nonlinear
selector-aware accuracy term, not from execution cost. The with-replacement
procedure is the natural rounding scheme for the repeated-execution model,
because it permits the same workflow type to be assigned to multiple execution
slots while preserving expected multiplicities and execution cost.

Specializing \eqref{eq:selector-curvature} to Plackett--Luce recovery gives a
closed form for the curvature:
\begin{equation}
    c_{k,\lambda}^{\mathrm{PL}}
    =
    1-
    \frac{k+\lambda-1}
    {\lambda\{k+(\lambda-1)(k-1)\}}.
    \label{eq:pl-selector-curvature}
\end{equation}
This expression depends only on the run size $k$ and selector strength
$\lambda$. It satisfies $c_{1,\lambda}^{\mathrm{PL}}=0,
    c_{k,1}^{\mathrm{PL}}=0.$
For $k\ge2$ and $\lambda>1$, the curvature is nondecreasing in both $k$ and
$\lambda$. Moreover, for fixed $\lambda$, $\lim_{k\to\infty}c_{k,\lambda}^{\mathrm{PL}}
    =
    1-\frac1{\lambda^2}.$
Thus the rounding certificate is exact for a singleton portfolio and for a
random selector. As the portfolio grows or the selector becomes more
discriminating, the recovery curve becomes more curved: the first correct
candidate accounts for a larger share of the total recovery gain, and the
worst-case additive rounding certificate becomes looser. This does not mean
that a stronger selector reduces portfolio value; it means only that a linear
relaxation may approximate the more strongly curved recovery objective less
tightly.

For a cardinality grid $\mathcal K$, define the grid LP benchmark
\begin{equation}
    U_{\mathrm{LP}}(\mathcal K;\M)
    :=
    \max\left\{0,\max_{k\in\mathcal K}L_k(\M)\right\}.
    \label{eq:finite-pool-grid-lp-upper}
\end{equation}
Under Plackett--Luce recovery, $c_{k,\lambda}^{\mathrm{PL}}$ is
nondecreasing in $k$. Let $\bar k=\max\mathcal K$ and define the
worst-case rounding loss on the grid by
\[
    \varepsilon_{\mathrm{rnd}}(\mathcal K,\lambda)
    :=
    \frac{c_{\bar k,\lambda}^{\mathrm{PL}}}{e}
    =
    \frac1e
    \max_{k\in\mathcal K}c_{k,\lambda}^{\mathrm{PL}}.
\]

\vspace{1mm}
\begin{corollary}[Grid-level Plackett--Luce rounding guarantee]
\label{cor:pl-multiplicative-rounding}
Suppose the selector follows the Plackett--Luce recovery curve with
strength $\lambda\ge1$.
For each $k\in\mathcal K$, let
$\bm M_k^{\RR}$ be obtained by applying the with-replacement rounding procedure to
an optimal solution of the size-$k$ LP in
\eqref{eq:cost-aware-lp}. 
Assume the firm retains a feasible baseline execution-count vector $\bm m^0\in\mathbb Z_+^{\M}$, such as a validated incumbent deployment, satisfying $\Pi_{\lambda,\gamma}(\bm m^0) \ge \underline V > 0. $
After rounding, choose the best realized portfolio among the baseline and the
rounded grid candidates:
$
    \widehat{\bm M}
    \in
    \operatorname*{arg\,max}_{
        S\in
        \{\bm m^0\}
        \cup
        \{\bm M_k^{\RR}:k\in\mathcal K\}
    }
    \Pi_{\lambda,\gamma}(S).
$
Then, we have
\begin{equation}
    \E\!\big[
        \Pi_{\lambda,\gamma}(\widehat{\bm M})
    \big]
    \ge
    \max\left\{
        \underline V,\,
        U_{\mathrm{LP}}(\mathcal K;\M)
        -
        \varepsilon_{\mathrm{rnd}}(\mathcal K,\lambda)
    \right\}
    \ge
    \frac{
        \underline V
    }{
        \underline V
        +
        \varepsilon_{\mathrm{rnd}}(\mathcal K,\lambda)
    }
    U_{\mathrm{LP}}(\mathcal K;\M).
    \label{eq:grid-multiplicative-rounding}
\end{equation}
Moreover, if the size-$k$ integer programs are solved exactly instead, the same guarantee
holds without the expectation.
\end{corollary}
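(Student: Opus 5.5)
The argument has three steps: bound each rounded candidate using \Cref{thm:cost-rounding}, pass from candidates to the selected $\widehat{\bm M}$ by pointwise dominance, and then convert the resulting additive bound into the multiplicative one. Throughout, for each $k\in\mathcal K$, the PL recovery curve satisfies \Cref{ass:concave-selector} by \Cref{prop:pl-concavity}, and $d_{1,k}>0$.

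\textbf{Per-candidate bound.} Apply \eqref{eq:net-rounding-fractional-gap} with $z=z^{k\star}$. Here $F_k(z^{k\star})=L_k(\M)$, and $Q_k^\star\le1$, so
\[
\E[\Pi_{\lambda,\gamma}(\bm M_k^{\RR})]\ge L_k(\M)-c^{\mathrm{PL}}_{k,\lambda}/e\ge L_k(\M)-\varepsilon_{\mathrm{rnd}}(\mathcal K,\lambda).
\]
The last inequality uses that $c^{\mathrm{PL}}_{k,\lambda}$ is nondecreasing in $k$, which the paper states after \eqref{eq:pl-selector-curvature}. If needed, I would verify this directly: the expression in \eqref{eq:pl-selector-curvature} equals $1-\frac{k+\lambda-1}{\lambda(\lambda k-\lambda+1)}$, and its $k$-derivative sign reduces to $(\lambda k-\lambda+1)-\lambda(k+\lambda-1)=1-\lambda^2\le0$ in the subtracted term.

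\textbf{Passing to $\widehat{\bm M}$.} Since $\widehat{\bm M}$ maximizes realized value over the candidate set, pointwise
\[
\Pi(\widehat{\bm M})\ge\Pi(\bm m^0)\ge\underline V \quad\text{and}\quad \Pi(\widehat{\bm M})\ge\Pi(\bm M_k^{\RR})\ \text{for every } k\in\mathcal K.
\]
Taking expectations and maximizing over $k$ gives
\[
\E[\Pi(\widehat{\bm M})]\ge\max\{\underline V,\max_{k}L_k-\varepsilon_{\mathrm{rnd}}\}.
\]
This also equals the stated bound with $U_{\mathrm{LP}}$, because when $U_{\mathrm{LP}}=0$ the max with zero is absorbed by $\underline V>0$. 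Note that $\E\max\ge\max\E$ is exactly what is needed here, so no independence between the roundings is required.

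\textbf{Multiplicative form.} Write $U=U_{\mathrm{LP}}$ and $\varepsilon=\varepsilon_{\mathrm{rnd}}$. It suffices to show
\[
\max\{\underline V,\,U-\varepsilon\}\ge\frac{\underline V}{\underline V+\varepsilon}\,U.
\]
The right-hand side is a convex combination of the two terms:
\[
\frac{\varepsilon}{\underline V+\varepsilon}\,\underline V+\frac{\underline V}{\underline V+\varepsilon}\,(U-\varepsilon)=\frac{\underline V U}{\underline V+\varepsilon}.
\]
A maximum is at least any convex combination, which proves the inequality. Equivalently, split into the cases $U\le\underline V+\varepsilon$ and $U>\underline V+\varepsilon$.

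\textbf{Exact-IP variant.} If the size-$k$ integer programs are solved exactly, each candidate is deterministic with value $I_k(\M)=\OPT_k(\M)$ by \Cref{prop:ip-exact}. The chain $\OPT_k(\M)\ge L_k(\M)-\varepsilon$ still holds, because the expectation in \eqref{eq:net-rounding-additive-gap} is bounded above by $\OPT_k(\M)$. The same two inequalities then follow with no expectation.

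\textbf{Main obstacle.} The only delicate point is controlling $Q_k^\star\le1$ uniformly together with the grid monotonicity, which is what lets the worst-case curvature term be taken at $\bar k$. Everything else is bookkeeping.
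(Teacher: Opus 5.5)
Your proof is correct and follows essentially the paper's argument. You use the per-size bound from \Cref{thm:cost-rounding} with $Q_k^\star\le 1$, then $\E\max\ge\max\E$ (the paper's ``Jensen for the convex maximum'') to pass to $\widehat{\bm M}$, and the same domination argument for the exact-IP variant. The one small difference is that you prove $\max\{\underline V,U-\varepsilon\}\ge \underline V U/(\underline V+\varepsilon)$ by a convex-combination observation rather than the paper's two-case split. Your monotonicity check on $c^{\mathrm{PL}}_{k,\lambda}$ is correct but not needed, since $\varepsilon_{\mathrm{rnd}}(\mathcal K,\lambda)$ is already defined as the maximum over the grid.
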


\vspace{1mm}
Note that because $L_k(\M)$ is an LP relaxation of the size-$k$ problem, we have $U_{\mathrm{LP}}(\mathcal K;\M) \ge \max\big \{ 0,\max_{k\in\mathcal K} \OPT_k(\M) \big\}.$ Thus,
\begin{eqnarray*}
\E\!\big[
        \Pi_{\lambda,\gamma}(\widehat{\bm M})
    \big] \, \ge \, \frac{
        \underline V
    }{
        \underline V
        +
        \varepsilon_{\mathrm{rnd}}(\mathcal K,\lambda)
    } \, \max_{k\in\mathcal K} \OPT_k(\M). 
\end{eqnarray*}

The first inequality in (\ref{eq:grid-multiplicative-rounding}) combines two safeguards. The retained baseline guarantees
value at least $\underline V$, while the rounded grid solution achieves the LP
benchmark up to the uniform additive rounding loss
$\varepsilon_{\mathrm{rnd}}(\mathcal K,\lambda)$. The second inequality converts
these two additive guarantees into a multiplicative bound relative to the grid
LP benchmark.

The resulting factor depends only on the baseline value and the largest
selector curvature among the grid sizes. Since
$
    0
    \le
    \varepsilon_{\mathrm{rnd}}(\mathcal K,\lambda)
    \le
    1/e,
$
the factor is always strictly positive whenever $\underline V>0$. Moreover,
the rounding loss vanishes when the fixed-size recovery curve is linear, in
which case the LP solution is preserved in expectation by the rounding
procedure.

\vspace{1mm}
\begin{remark}[On the uniform bound $1/e$] \label{remark:uniform bound}
    The fact that the uniform bound $1/e$ in \Cref{thm:cost-rounding} does not vanish as the selector becomes
perfect is structural. Under Plackett--Luce recovery, as
$\lambda\to\infty$, $\psi^{\mathrm{PL}}_{k,\lambda}(r)
    \to
    \1\{r\ge1\}.$
The fixed-size accuracy problem then reduces to maximum coverage: the firm
must choose $k$ workflows to maximize the fraction of tasks covered by at
least one correct workflow. Thus, even a perfect selector removes selection
error but not the combinatorial difficulty of constructing the portfolio.
The resulting $1-1/e$ rounding factor, or equivalently the uniform additive
loss of at most $1/e$ under normalized accuracy, is consistent with the
classical maximum-coverage barrier. This gap concerns the LP-rounding method;
it disappears if the fixed-size integer program is solved exactly.
\end{remark}

\section{Ellipsoid-Based Optimization Over an Implicit Workflow Class}
\label{sec:implicit-oracle}

The finite-pool formulations in \Cref{sec:ip-lp-rounding} introduce one primal
variable for each evaluated workflow type. When the feasible workflow class
$\G$ is large and represented only implicitly, the corresponding
repeated-execution LP may contain too many variables to enumerate. The dual,
however, has only finitely many price variables. Its only implicit component is
a family of constraints indexed by workflow types. This is exactly the setting
in which the ellipsoid method can optimize through separation rather than
explicit enumeration
\citep{groetschel1981ellipsoid,groetschel1988geometric}.

Fix a run size $k$. An equivalent projected dual of the full LP relaxation is
\begin{align}
    E_k(\G)
    =
    \min_{\mu,\xi,\theta}\quad
    &\sum_{i=1}^n \xi_i
    +
    k\theta
    \label{eq:ellipsoid-dual}\\
    \text{s.t.}\quad
    &\xi_i+r\mu_i
    \ge
    \frac{\psi_k(r)}{n},
    &&i=1,\ldots,n,\quad r=0,\ldots,k,
    \nonumber\\
    &0
    \le
    \mu_i
    \le
    \frac{d_{1,k}}{n},
    \qquad
    0
    \le
    \xi_i
    \le
    \frac1n,
    &&i=1,\ldots,n,
    \nonumber\\
    &-\gamma c_{\max}
    \le
    \theta
    \le
    d_{1,k},
    \nonumber\\
    &\sum_{i=1}^n \mu_i a_{ig}
    -
    \gamma c_g
    \le
    \theta,
    &&g\in\G.
    \nonumber
\end{align}
Appendix~\ref{app:ellipsoid-pricing-details} derives
\eqref{eq:ellipsoid-dual} and proves that
$
    E_k(\G)=L_k(\G).
$

\subsection{Single-workflow separation}
\label{sec:ellipsoid-separation}

\Cref{sec:workflow-access} introduced the global workflow-pricing value
\[
    P(w,\gamma)
    =
    \max_{g\in\G}
    \left\{
        \sum_{i=1}^n w_i a_{ig}
        -
        \gamma c_g
    \right\}
\]
and explained how it compares with the price $\theta$ of one execution slot.
The dual in \eqref{eq:ellipsoid-dual} now provides the formal origin of these
quantities: the task prices $w_i$ are precisely the dual variables $\mu_i$,
and the workflow-indexed constraints require
$
    P(\mu,\gamma)
    \le
    \theta.
$
Thus, the pricing interface from \Cref{sec:workflow-access} is exactly the separation oracle needed to optimize the implicit dual. We do not solve $P(\mu,\gamma)$ by explicitly enumerating or optimizing over $\G$. Instead, at each candidate dual solution, we pass the task prices and execution-cost penalty to the weak global pricing oracle, which searches for a high-scoring workflow according to \eqref{eq:additive-weak-pricing-oracle}.


In particular, if the oracle returns a workflow $\widetilde g$ satisfying
$
    \sum_{i=1}^n \mu_i a_{i\widetilde g}
    -
    \gamma c_{\widetilde g}
    >
    \theta,
$
then the constraint indexed by $\widetilde g$ in
\eqref{eq:ellipsoid-dual} is violated. That constraint supplies a separating
hyperplane, which the ellipsoid method uses to exclude the current candidate
dual point and continue its search. Conversely, suppose the oracle is $\delta$-accurate in the sense of
\eqref{eq:additive-weak-pricing-oracle} and returns a workflow whose score is
at most $\theta$. The guarantee established in
\Cref{sec:workflow-access} then implies
$
    P(\mu,\gamma)
    \le
    \theta+\delta.
$
Hence all workflow-indexed constraints are satisfied after increasing the slot
price from $\theta$ to $\theta+\delta$. Because $\theta$ has coefficient $k$
in the dual objective, this adjustment increases the objective by at most
$k\delta$.

The pricing oracle therefore implements weak separation for the implicit dual:
it either produces a violated workflow constraint or certifies feasibility of
the entire workflow-indexed constraint family up to an objective error of
$k\delta$.

\subsection{Batched stochastic pricing}
\label{sec:batched-stochastic-pricing}

The pricing oracle may itself be stochastic. We assume that there exists a
constant $p_{\mathrm{orc}}>0$ such that every primitive oracle call, conditional
on the full query history, satisfies the additive weak-pricing guarantee in
\eqref{eq:additive-weak-pricing-oracle} with probability at least
$p_{\mathrm{orc}}$. Thus, when queried at task prices $\mu$ with tolerance
$\delta$, a primitive call returns a workflow $\widetilde g$ satisfying
\[
    \sum_{i=1}^n \mu_i a_{i\widetilde g}
    -
    \gamma c_{\widetilde g}
    \ge
    P(\mu,\gamma)-\delta
\]
with conditional probability at least $p_{\mathrm{orc}}$.

A single unsuccessful pricing call cannot safely be used to certify that no
violated workflow constraint exists. More seriously, an invalid separator can
undermine the correctness of the entire ellipsoid routine. We therefore
amplify the primitive oracle at each separation query. Specifically, the
algorithm makes $m$ pricing calls at the same dual prices and retains the
returned workflow with the highest score. The batch is unsuccessful only if none of its $m$ primitive calls satisfies
the weak-pricing guarantee. Therefore, conditional on the history before the
batch,
\begin{equation}
    \Prob\{\text{batch fails}\mid\text{history}\}
    \le
    (1-p_{\mathrm{orc}})^m
    \le
    e^{-p_{\mathrm{orc}}m}.
    \label{eq:batch-failure-tail}
\end{equation}
Thus, batching converts a primitive stochastic pricing procedure with constant
success probability into a weak-separation oracle whose failure probability
decays exponentially in the batch size.

\subsection{Ellipsoid algorithm and finite-call guarantee}
\label{sec:dual-guided-generation}

We now combine the projected dual, the batched pricing oracle, and the
fixed-size rounding procedure into an end-to-end algorithm. The method solves
the implicit LP separately for each run size on the cardinality grid, recovers
a fractional execution allocation supported on workflows discovered during
separation, and rounds that allocation into a deployable portfolio. We write
$\varepsilon_{\mathrm{ell}}>0$ for the prescribed ellipsoid-method tolerance,
where the subscript ``$\mathrm{ell}$'' denotes the ellipsoid method.

\begin{algorithm}[t]
\caption{Ellipsoid-Based Dual-Guided Workflow Optimization}
\label{alg:cost-aware-generation}
\begin{algorithmic}[1]
\small
\STATE \textbf{Input}: cardinality grid $\mathcal K$; recovery curves
$\{\psi_k\}$; cost price $\gamma$; implicit-LP tolerance
$\varepsilon_{\mathrm{ell}}>0$; batch size $m$; primitive stochastic pricing
oracle; feasible baseline execution-count vector $\bm m^0$.
\FOR{$k\in\mathcal K$}
    \STATE Set the pricing tolerance $\tau_k
        \leftarrow
        \frac{\varepsilon_{\mathrm{ell}}}{2k}.$
    \STATE Initialize
    $\G_k^{\mathrm{rec}}\leftarrow\varnothing$
    and initialize the ellipsoid routine for
    \eqref{eq:ellipsoid-dual} with optimization tolerance
    $\varepsilon_{\mathrm{ell}}/2$; set $t\leftarrow0$.
    \WHILE{the ellipsoid routine has not met its stopping criterion}
        \STATE Let $(\mu^t,\xi^t,\theta^t)$ be the current ellipsoid iterate.
        \IF{an explicit constraint in \eqref{eq:ellipsoid-dual} is violated}
            \STATE Supply one such violated explicit constraint to the
            ellipsoid routine as a separating hyperplane.
        \ELSE
            \STATE Call the primitive pricing oracle $m$ times at
            $(\mu^t,\gamma,\tau_k)$, obtaining
            $\widetilde g^{t,1},\ldots,\widetilde g^{t,m}$, and set $g^t
                \in
                \arg\max_{j\in[m]}
                \left\{
                    \sum_{i=1}^n
                    \mu_i^t a_{i\widetilde g^{t,j}}
                    -
                    \gamma c_{\widetilde g^{t,j}}
                \right\}.$
            \STATE Record the returned workflow:
            $
                \G_k^{\mathrm{rec}}
                \leftarrow
                \G_k^{\mathrm{rec}}\cup\{g^t\}.
            $
            \IF{$
                \sum_{i=1}^n\mu_i^t a_{ig^t}
                -
                \gamma c_{g^t}
                >
                \theta^t
            $}
                \STATE Supply the violated workflow constraint
                $\sum_{i=1}^n\mu_i a_{ig^t}
                    -
                    \gamma c_{g^t}
                    \le
                    \theta$
                to the ellipsoid routine as a separating hyperplane.
            \ELSE
                \STATE Supply the weak-separation certificate
                $
                    P(\mu^t,\gamma)
                    \le
                    \theta^t+\tau_k
                $
                to the ellipsoid routine.
            \ENDIF
        \ENDIF
        \STATE Let the ellipsoid routine perform its update and stopping test;
        set $t\leftarrow t+1$.
    \ENDWHILE
    \STATE Perform primal recovery by solving the restricted fixed-size LP
    relaxation \eqref{eq:cost-aware-lp} with
    $\M=\G_k^{\mathrm{rec}}$. Let $(y^k,z^k)$ be the resulting recovered
    primal solution, extended by $z_g^k=0$ for
    $g\notin\G_k^{\mathrm{rec}}$, such that $z_g^k\ge0,
        \sum_{g\in\G_k^{\mathrm{rec}}}z_g^k=k,
        F_k(z^k)
        \ge
        L_k(\G)-\varepsilon_{\mathrm{ell}}.$
    \STATE Set $q_g^k=z_g^k/k$, draw
    $
        G_1^k,\ldots,G_k^k
        \stackrel{\mathrm{iid}}{\sim}q^k,
    $
    and define $M_{k,g}^{\RR}
        =
        \sum_{j=1}^k\1\{G_j^k=g\},
        \bm M_k^{\RR}
        =
        \bigl(
            M_{k,g}^{\RR}:
            g\in\G_k^{\mathrm{rec}}
        \bigr).$
\ENDFOR
\STATE Return the execution-count vector $\widehat{\bm M}$ with the largest
realized net value among the outside option $\bm 0$, the baseline $\bm m^0$,
and the rounded vectors
$\{\bm M_k^{\RR}:k\in\mathcal K\}$.
\normalsize
\end{algorithmic}
\end{algorithm}


For each $k\in\mathcal K$, let
$N_{\mathrm{ell},k}(\varepsilon_{\mathrm{ell}})$ be a deterministic upper
bound on the number of iterations of the while-loop in
Algorithm~\ref{alg:cost-aware-generation} before the ellipsoid routine reaches
optimization tolerance $\varepsilon_{\mathrm{ell}}/2$. Thus, in the run
corresponding to size $k$, the iteration index takes the values $t=0,\ldots,T_k-1, \text{for some}
    T_k
    \le
    N_{\mathrm{ell},k}(\varepsilon_{\mathrm{ell}}).$
Classical ellipsoid-method complexity bounds imply\footnote{Here $B$ denotes
an upper bound on the binary encoding length of the rational problem data; see
\citet{groetschel1981ellipsoid,groetschel1988geometric}.}
$N_{\mathrm{ell}}(\varepsilon_{\mathrm{ell}})
    :=
    \sum_{k\in\mathcal K}
    N_{\mathrm{ell},k}(\varepsilon_{\mathrm{ell}})
    =
    \sum_{k\in\mathcal K}
    \operatorname{poly}\!\left(
        n,
        k,
        B,
        \log\frac{1}{\varepsilon_{\mathrm{ell}}}
    \right).$
Because each iteration invokes at most one batch of $m$ primitive pricing
calls, Algorithm~\ref{alg:cost-aware-generation} makes at most $T
    :=
    m\,N_{\mathrm{ell}}(\varepsilon_{\mathrm{ell}})$
primitive pricing calls.


\begin{theorem}[High-probability implicit-class guarantee]
\label{thm:pricing-certificate}
Suppose $\G$ is a finite, implicitly represented workflow-type class whose
rational data have encoding length at most $B$, and suppose
$c_g\le c_{\max}$ for every $g\in\G$. Suppose the selector follows the
Plackett--Luce recovery curve with strength $\lambda\ge1$, and let the
cardinality grid $\mathcal K$ have coverage ratio $\varrho\ge1$. Retain a
feasible baseline portfolio $\bm m^0$ satisfying
$
    \Pi_{\lambda,\gamma}(\bm m^0)
    \ge
    \underline V
    >
    0.
$
Suppose further that every primitive pricing call satisfies
\eqref{eq:additive-weak-pricing-oracle}, conditional on the query history,
with probability at least $p_{\mathrm{orc}}>0$. Run Algorithm~\ref{alg:cost-aware-generation} with batch size $m$. Then, with
probability at least
\begin{equation}
    1
    -
    N_{\mathrm{ell}}(\varepsilon_{\mathrm{ell}})
    e^{-p_{\mathrm{orc}}m}
    =
    1
    -
    \exp\left\{
        -\frac{
            p_{\mathrm{orc}}T
        }{
            N_{\mathrm{ell}}(\varepsilon_{\mathrm{ell}})
        }
        +
        \log N_{\mathrm{ell}}(\varepsilon_{\mathrm{ell}})
    \right\},
    \label{eq:ellipsoid-oracle-success-probability}
\end{equation}
all batched separation calls satisfy their weak-pricing guarantees, and the
returned portfolio obeys
\begin{equation}
    \E_{\mathrm{rnd}}\!\left[
        \Pi_{\lambda,\gamma}(\widehat{\bm M})
    \right]
    \ge
    \max\left\{
        \underline V,\,
        \frac{1}{\varrho}
        \OPT_{\lambda,\gamma}(\G)
        -
        \varepsilon_{\mathrm{ell}}
        -
        \varepsilon_{\mathrm{rnd}}(\mathcal K,\lambda)
    \right\}.
    \label{eq:ellipsoid-additive-final}
\end{equation}
Consequently,
\begin{equation}
    \E_{\mathrm{rnd}}\!\big[
        \Pi_{\lambda,\gamma}(\widehat{\bm M})
    \big]
    \ge
    \frac{1}{\varrho}
    \frac{
        \underline V
    }{
        \underline V
        +
        \varepsilon_{\mathrm{ell}}
        +
        \varepsilon_{\mathrm{rnd}}(\mathcal K,\lambda)
    }
    \OPT_{\lambda,\gamma}(\G).
    \label{eq:ellipsoid-multiplicative-final}
\end{equation}
Here, the expectation $\E_{\mathrm{rnd}}$ is taken over the final randomized-rounding step,
conditional on the successful batched-separation event.
\end{theorem}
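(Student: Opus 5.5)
The proof splits into three layers: a probabilistic bound on the oracle-failure event, a deterministic LP argument valid on that event, and the rounding and grid argument.

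\emph{Probability layer.} Let $\mathcal E$ be the event that every batched separation call made by Algorithm~\ref{alg:cost-aware-generation} satisfies \eqref{eq:additive-weak-pricing-oracle} with tolerance $\tau_k$. There are at most $N_{\mathrm{ell}}(\varepsilon_{\mathrm{ell}})$ batches. By \eqref{eq:batch-failure-tail}, each batch fails with conditional probability at most $e^{-p_{\mathrm{orc}}m}$ given the history. A union bound, valid because each conditional bound holds regardless of the history, gives $\Prob(\mathcal E^c)\le N_{\mathrm{ell}}e^{-p_{\mathrm{orc}}m}$. Substituting $m=T/N_{\mathrm{ell}}$ yields the exponential form in \eqref{eq:ellipsoid-oracle-success-probability}.

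\emph{Deterministic layer on $\mathcal E$.} Fix $k\in\mathcal K$. The explicit constraints of \eqref{eq:ellipsoid-dual} are box and polyhedral constraints, so the dual feasible region is bounded with rational data of encoding length $B$. By \S\ref{sec:ellipsoid-separation}, each batch either returns a truly violated workflow constraint or certifies $P(\mu^t,\gamma)\le\theta^t+\tau_k$. This is weak separation with objective slack $k\tau_k=\varepsilon_{\mathrm{ell}}/2$. The classical Gr\"otschel--Lov\'asz--Schrijver weak optimization theorem then terminates within $N_{\mathrm{ell},k}$ iterations, with value within $\varepsilon_{\mathrm{ell}}/2+\varepsilon_{\mathrm{ell}}/2$ of $E_k(\G)=L_k(\G)$.

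The key step, and the one I expect to be the main obstacle, is primal recovery. I must show that the restricted LP over the recorded workflows satisfies $L_k(\G^{\mathrm{rec}}_k)\ge L_k(\G)-\varepsilon_{\mathrm{ell}}$. My plan is to rerun the same ellipsoid transcript on the restricted dual $E_k(\G^{\mathrm{rec}}_k)$. Every hyperplane the algorithm supplied is either explicit or indexed by a recorded workflow, so the same sequence of hyperplanes is also a valid run on the restricted problem. The weak-separation certificates remain valid there, because restricting the constraint set only lowers $P$. The run therefore certifies that no dual point of value below $L_k(\G)-\varepsilon_{\mathrm{ell}}$ is feasible for the restricted dual either. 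Hence $D_k(\G^{\mathrm{rec}}_k)\ge L_k(\G)-\varepsilon_{\mathrm{ell}}$, and LP duality for the finite pool (\S\ref{sec:inner-lp-dual}) gives $F_k(z^k)=L_k(\G^{\mathrm{rec}}_k)\ge L_k(\G)-\varepsilon_{\mathrm{ell}}$. The tolerance bookkeeping between the $\varepsilon_{\mathrm{ell}}/2$ optimization error and the $k\tau_k$ slack is what makes the total error $\varepsilon_{\mathrm{ell}}$. This argument is the standard one, but it needs careful phrasing, since the ellipsoid's "infeasibility" certificate relies on volume shrinkage of the ellipsoid, and I must check that this certificate transfers to the restricted problem.

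\emph{Rounding and grid.} Apply Theorem~\ref{thm:cost-rounding} to $z^k$, which is valid for any feasible fractional $z$. Using $c^{\mathrm{PL}}_{k,\lambda}\le c^{\mathrm{PL}}_{\bar k,\lambda}$ and $Q_k\le1$, this gives $\E[\Pi(\bm M^{\RR}_k)]\ge F_k(z^k)-\varepsilon_{\mathrm{rnd}}\ge L_k(\G)-\varepsilon_{\mathrm{ell}}-\varepsilon_{\mathrm{rnd}}$. Next, $L_k(\G)\ge\OPT_k(\G)$ because the LP is a relaxation, and Theorem~\ref{thm:geometric-cardinality-approx} applied with $\M=\G$ (using concave $h_\lambda$ by Lemma~\ref{lem:odds-envelope-concavity}) gives $\max_{k\in\mathcal K}\OPT^+_k(\G)\ge\OPT_{\lambda,\gamma}(\G)/\varrho$. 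The returned portfolio maximizes realized value over $\bm 0$, $\bm m^0$, and the rounded vectors. Its realized value therefore dominates each candidate pointwise, so its expectation is at least the maximum of the individual expectations. This gives \eqref{eq:ellipsoid-additive-final}; when $\OPT^+_k=0$ the bound is trivial via $\underline V>0$.

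For \eqref{eq:ellipsoid-multiplicative-final}, write $\epsilon=\varepsilon_{\mathrm{ell}}+\varepsilon_{\mathrm{rnd}}$ and $U=\OPT/\varrho$. We have $\max\{\underline V,U-\epsilon\}\ge\alpha\underline V+(1-\alpha)(U-\epsilon)$ for any $\alpha\in[0,1]$. Choosing $\alpha=\epsilon/(\underline V+\epsilon)$ makes the right-hand side equal to $\underline V U/(\underline V+\epsilon)$, as in Corollary~\ref{cor:pl-multiplicative-rounding}.
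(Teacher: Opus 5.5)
Your proposal is correct and follows essentially the same route as the paper's proof. Both arguments take a union bound over at most $N_{\mathrm{ell}}(\varepsilon_{\mathrm{ell}})$ batches, then use weak separation with slack $k\tau_k=\varepsilon_{\mathrm{ell}}/2$ and primal recovery to obtain $F_k(z^k)\ge L_k(\G)-\varepsilon_{\mathrm{ell}}$, and finish with the rounding certificate, the cardinality-grid theorem, comparison with the baseline, and the inequality $\max\{\underline V,A-e\}\ge \underline V A/(\underline V+e)$. The only differences are cosmetic: you spell out the transcript-replay argument for primal recovery, which the paper simply calls standard, and you prove the final inequality by a convex combination where the paper uses a two-case split.
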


To make the probability of any failed batch at most $\beta$, it is sufficient
to choose
\begin{equation}
    m
    \ge
    \frac{1}{p_{\mathrm{orc}}}
    \left[
        \log N_{\mathrm{ell}}(\varepsilon_{\mathrm{ell}})
        +
        \log\frac{1}{\beta}
    \right].
    \label{eq:ellipsoid-batch-size}
\end{equation}
The resulting primitive-call complexity is
\begin{equation}
    T
    =
    O\!\left(
        \frac{
            N_{\mathrm{ell}}(\varepsilon_{\mathrm{ell}})
        }{
            p_{\mathrm{orc}}
        }
        \log
        \frac{
            N_{\mathrm{ell}}(\varepsilon_{\mathrm{ell}})
        }{
            \beta
        }
    \right).
    \label{eq:ellipsoid-total-call-complexity}
\end{equation}

The guarantee separates three distinct sources of loss. The factor
$1/\varrho$ comes from replacing exhaustive search over all run sizes by the
cardinality grid. The term $\varepsilon_{\mathrm{ell}}$ is the error from
solving the implicit LP only approximately, and
$\varepsilon_{\mathrm{rnd}}(\mathcal K,\lambda)$ is the loss from converting
the recovered fractional allocations into integral execution portfolios. The
stochastic pricing oracle affects the confidence level, but not the value bound
conditional on successful separation.

Most importantly, the benchmark in
\eqref{eq:ellipsoid-additive-final}--\eqref{eq:ellipsoid-multiplicative-final}
is the best portfolio over the full implicit workflow class $\G$, not merely
over the workflows explicitly generated during the algorithm. The method
therefore does not require enumerating $\G$ or discovering every workflow with
positive reduced cost; it requires only enough global pricing calls to optimize
the implicit dual to the prescribed tolerance.

\section{Stochastic Workflow Outcomes}
\label{sec:stochastic-workflow-outcomes}

Sections~\ref{sec:model} through~\ref{sec:implicit-oracle} treat $a_{ig}$ as
deterministic: workflow $g$ either solves development task $i$ or it does not,
and repeated executions reproduce the same outcome. In practice, an execution
of the same workflow on the same task need not return the same answer twice.
In this section, we reinterpret $a_{ig}\in[0,1]$ as the probability that a
single execution of workflow $g$ is correct on task $i$, with the deterministic
model recovered when every $a_{ig}$ is zero or one.

The deployment decision is unchanged. It remains the execution-count vector
$\bm m=(m_g)_{g\in\G}\in\mathbb Z_+^{\G}$, with finite support. What changes is that the
number of correct candidate answers on a task is now random rather than
determined by $\bm m$.

The stochastic model requires the following independent-sampling assumption.

\vspace{1mm}
\begin{assumption}[IID workflow execution]
\label{ass:stochastic-execution-law}
For every development task $i$ and workflow type $g$, there is a parameter
$a_{ig}\in[0,1]$. Each execution of workflow $g$ on task $i$ has a correctness
indicator distributed as $\mathrm{Bernoulli}(a_{ig})$. Correctness indicators
are mutually independent across tasks, workflow types, execution copies, and
sampling stages. Thus, for each fixed pair $(i,g)$, repeated executions are
i.i.d. Bernoulli draws with mean $a_{ig}$.
\end{assumption}

\vspace{1mm}
For a feasible $\bm m$, let $Z_{igq}\sim\mathrm{Bernoulli}(a_{ig})$ denote
the correctness indicator of copy $q$ of workflow $g$ on task $i$. Under
\Cref{ass:stochastic-execution-law}, the random number of correct outputs on
task $i$ is
\begin{equation}
    R_i(\bm m)
    =
    \sum_{g\in\G}\sum_{q=1}^{m_g}Z_{igq}
    \in\{0,\ldots,k(\bm m)\},
    \qquad
    \E[R_i(\bm m)]
    =
    \sum_{g\in\G}m_ga_{ig}.
    \label{eq:stoch-correct-count-main}
\end{equation}
The stochastic net value is
\begin{equation}
    \Pi_{\psi,\gamma}^{\mathrm{stoch}}(\bm m)
    =
    \E\left[
        \frac1n\sum_{i=1}^n
        \psi_{k(\bm m)}\bigl(R_i(\bm m)\bigr)
    \right]
    -\gamma C(\bm m),
    \qquad
    \Pi_{\psi,\gamma}^{\mathrm{stoch}}(\bm 0)=0.
    \label{eq:stoch-true-value-main}
\end{equation}
Recovery is evaluated at the realized correct count and only then averaged. In
general,
$\E[\psi_k(R_i(\bm m))]\ne\psi_k(\E[R_i(\bm m)])$, so the deterministic
objective is not recovered by substituting expected correctness for realized
correctness. For a finite workflow pool $\M\subseteq\G$, define
\begin{equation*}
    \OPT_{\psi,\gamma}^{\mathrm{stoch}}(\M)
    =
    \max_{\substack{\bm m\in\mathbb Z_+^{\M}\\
                    k(\bm m)\le K_{\max}}}
    \Pi_{\psi,\gamma}^{\mathrm{stoch}}(\bm m).
\end{equation*}

\vspace{1mm}
\paragraph{Selector-strength bound.}
The deterministic selector-strength bound of
\Cref{thm:diversification-bound} extends to stochastic execution. Only the
reading of $\bar a_g$ changes: under \eqref{eq:standalone-accuracy} it was the
fraction of development tasks workflow $g$ solves, whereas it is now the
average probability that one execution of $g$ is correct. Recall that
$h_\Lambda(p)=\Lambda p/\{1+(\Lambda-1)p\}$ from
\Cref{lem:odds-lift-envelope}.

\vspace{1mm}
\begin{proposition}[Selector odds lift under stochastic execution]
\label{prop:stoch-odds-lift-main}
Suppose \Cref{ass:recovery-envelope} holds and
$\Lambda(H)\le\Lambda<\infty$. Every feasible nonzero $\bm m$ satisfies
\begin{equation}
    \frac1n\sum_{i=1}^n
    \E\!\left[
        \psi_{k(\bm m)}\bigl(R_i(\bm m)\bigr)
    \right]
    \le
    h_\Lambda\!\left(
        \frac{1}{k(\bm m)}
        \sum_{g\in\G}m_g\bar a_g
    \right).
    \label{eq:stoch-odds-lift-main}
\end{equation}
For a finite pool $\M$, let
$a^\star=\max_{g\in\M}\bar a_g$,
$c_{\min}=\min_{g\in\M}c_g$, and $V_1^{\mathrm{stoch}}
    =
    \max\{
        0,
        \max_{g\in\M}(\bar a_g-\gamma c_g)
    \}.$
Then
\begin{equation}
    V_1^{\mathrm{stoch}}
    \le
    \OPT_{\psi,\gamma}^{\mathrm{stoch}}(\M)
    \le
    \max\left\{
        0,
        \max_{1\le k\le K_{\max}}
        \left[h_\Lambda(a^\star)-\gamma k c_{\min}\right]
    \right\}.
    \label{eq:stoch-variety-bound-main}
\end{equation}
If $\gamma=0$, then
\begin{equation}
    a^\star
    \le
    \OPT_{\psi,0}^{\mathrm{stoch}}(\M)
    \le
    h_\Lambda(a^\star),
    \qquad
    \OPT_{\psi,0}^{\mathrm{stoch}}(\M)-a^\star
    \le
    \frac{\sqrt\Lambda-1}{\sqrt\Lambda+1}.
    \label{eq:stoch-no-cost-cap-main}
\end{equation}
\end{proposition}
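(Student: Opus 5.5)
The plan is to prove \eqref{eq:stoch-odds-lift-main} pointwise in the realization, then average twice, once over execution randomness and once over tasks, using concavity. Fix a nonzero feasible $\bm m$ with $k=k(\bm m)$. For any realization of the correctness indicators, $R_i(\bm m)\in\{0,\ldots,k\}$. \Cref{ass:recovery-envelope} combined with \Cref{lem:odds-lift-envelope} then gives
\[
\psi_k(R_i(\bm m))\le H(R_i(\bm m)/k)\le h_\Lambda(R_i(\bm m)/k).
\]
By \Cref{lem:odds-envelope-concavity}, $h_\Lambda$ is increasing and concave on $[0,1]$. Jensen's inequality applied to the expectation yields
\[
\E[\psi_k(R_i)]\le h_\Lambda\bigl(\E[R_i]/k\bigr)=h_\Lambda\Bigl(\tfrac1k\sum_g m_g a_{ig}\Bigr).
\]
Averaging over $i$ and applying Jensen once more to the uniform average over tasks gives
\[
\frac1n\sum_i h_\Lambda\bigl(\tfrac1k\sum_g m_ga_{ig}\bigr)\le h_\Lambda\bigl(\tfrac1k\sum_g m_g\bar a_g\bigr),
\]
which is \eqref{eq:stoch-odds-lift-main}. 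Note that only concavity and the envelope are used here, and independence is used only to compute $\E[R_i]$ via \eqref{eq:stoch-correct-count-main}.

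For the variety bound \eqref{eq:stoch-variety-bound-main}, I would handle the two sides separately.

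For the lower bound, the outside option is feasible with value $0$. The singleton $\bm e_g$ has $k=1$, and since $\psi_1(0)=0$ and $\psi_1(1)=1$, we get $\psi_1(R)=R$, so $\Pi^{\mathrm{stoch}}(\bm e_g)=\bar a_g-\gamma c_g$.

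For the upper bound, take any nonzero $\bm m$ in $\M$ with $k(\bm m)=k\le K_{\max}$. Then $\frac1k\sum_g m_g\bar a_g\le a^\star$ because it is a convex combination, so monotonicity of $h_\Lambda$ bounds the accuracy term by $h_\Lambda(a^\star)$. Also $C(\bm m)\ge kc_{\min}$ since $\gamma\ge0$. Taking the maximum over $k$ and including the value $0$ of $\bm 0$ gives the bound.

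For the $\gamma=0$ case, the lower bound follows from $V_1^{\mathrm{stoch}}\ge a^\star\ge 0$, and the upper bound is $h_\Lambda(a^\star)$ since $h_\Lambda\ge0$. The gap cap follows from the identity
\[
h_\Lambda(a)-a=\frac{(\Lambda-1)a(1-a)}{1+(\Lambda-1)a}
\]
together with the bound $\le(\sqrt\Lambda-1)/(\sqrt\Lambda+1)$ already established in \Cref{cor:diversification-implications}. If needed, this can be rechecked by maximizing over $a\in[0,1]$: the critical point is $a=1/(1+\sqrt\Lambda)$, and evaluating there gives exactly that value.

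The only delicate step is the double Jensen argument, which needs concavity of the envelope $h_\Lambda$ rather than of $\psi_k$ itself. This is precisely why the pointwise domination $\psi_k\le h_\Lambda(\cdot/k)$ must come first, and it is the step I would verify most carefully. Everything else is routine bookkeeping.
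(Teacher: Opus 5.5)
Your proposal is correct and follows essentially the same route as the paper: pointwise domination $\psi_k(R_i(\bm m))\le h_\Lambda(R_i(\bm m)/k)$, Jensen over the execution randomness and then over tasks, the convex-combination bound and $C(\bm m)\ge kc_{\min}$ for the variety bound, singleton feasibility for the lower bound, and maximization of $h_\Lambda(a)-a$ for the cap. As in the paper, the Jensen steps implicitly need $\Lambda\ge1$, which is the hypothesis of \Cref{lem:odds-envelope-concavity}; also, computing $\E[R_i(\bm m)]=\sum_g m_g a_{ig}$ uses only linearity of expectation, so independence is not needed even there.
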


\vspace{1mm}
The conclusion is the same as in the deterministic model. Under the iid law,
$\E[R_i(\bm m)]=\sum_gm_ga_{ig}$, and Jensen's inequality reduces the bound
to the same average success probabilities $\bar a_g$. Thus stochastic
execution adds no new term to the selector-strength cap; the additional
statistical issue is that the probabilities $a_{ig}$ must now be estimated.

\vspace{1mm}
\paragraph{Implicit workflow generation.}
The probabilities $a_{ig}$ are unknown. Whenever a workflow $g$ is first made
available to the optimizer, we execute it independently $L_1$ times on every
development task and define
\begin{equation}
    \widehat a_{ig}^{(1)}
    =
    \frac1{L_1}\sum_{\ell=1}^{L_1}Z_{ig}^{(1,\ell)},
    \qquad
    Z_{ig}^{(1,\ell)}
    \stackrel{\mathrm{iid}}{\sim}
    \mathrm{Bernoulli}(a_{ig}).
    \label{eq:stoch-stage-one-estimator}
\end{equation}
The superscript $(1)$ marks the Stage-1 estimation sample. For the analysis,
an independent $L_1$-sample panel may be associated with every workflow in
the finite class $\G$ and revealed only when that workflow is queried. This
lazy-sampling interpretation does not require the algorithm to enumerate
$\G$. For every $\delta_1\in(0,1)$, Hoeffding's inequality and a union bound
give
\begin{equation}
    \Prob\left\{
        \max_{\substack{i=1,\ldots,n\\g\in\G}}
        \left|\widehat a_{ig}^{(1)}-a_{ig}\right|
        \le
        \varepsilon_a(L_1,\delta_1)
    \right\}
    \ge
    1-\delta_1,
    \qquad
    \varepsilon_a(L_1,\delta_1)
    :=
    \sqrt{
        \frac{\log(2n|\G|/\delta_1)}{2L_1}
    }.
    \label{eq:stoch-stage-one-uniform}
\end{equation}

Let $\widehat{\mathbb P}_1$ be the product-Bernoulli execution law obtained by
replacing $a_{ig}$ with $\widehat a_{ig}^{(1)}$, and let $\widehat\E_1$
denote expectation under this plug-in law. Define
\begin{equation}
    \widehat\Pi_{\psi,\gamma}^{(1)}(\bm m)
    =
    \widehat\E_1\left[
        \frac1n\sum_{i=1}^n
        \psi_{k(\bm m)}\bigl(R_i(\bm m)\bigr)
    \right]
    -\gamma C(\bm m),
    \qquad
    \widehat\Pi_{\psi,\gamma}^{(1)}(\bm 0)=0.
    \label{eq:stoch-plugin-value-main}
\end{equation}
On the concentration event in \eqref{eq:stoch-stage-one-uniform}, a
common-uniform coupling of the true and plug-in Bernoulli executions gives
\begin{equation}
    \sup_{\substack{\bm m\in\mathbb Z_+^{\G}\\
                    k(\bm m)\le K_{\max}}}
    \left|
        \widehat\Pi_{\psi,\gamma}^{(1)}(\bm m)
        -
        \Pi_{\psi,\gamma}^{\mathrm{stoch}}(\bm m)
    \right|
    \le
    \varepsilon_{\mathrm{est}}(L_1,\delta_1)
    :=
    \min\left\{
        1,
        K_{\max}\varepsilon_a(L_1,\delta_1)
    \right\}.
    \label{eq:stoch-estimation-error-main}
\end{equation}

Algorithm~\ref{alg:stoch-cost-aware-generation} in
Appendix~\ref{app:stochastic-workflow-proofs} applies the grid, ellipsoid,
batched-pricing, primal-recovery, and rounding arguments to the plug-in iid
law. Its workflow-indexed constraints are separated using the plug-in
stochastic pricing score in
\eqref{eq:stoch-plugin-reduced-cost-main}. Although the iid execution law is
determined by the marginal probabilities, this stochastic reduced cost is not
generally the deterministic mean-column score
$\sum_i\mu_i\widehat a_{ig}^{(1)}-\gamma c_g$: the marginal task price depends
on the realized number of other correct candidates in the same execution
scenario. Every newly proposed workflow is therefore evaluated using its own
fresh $L_1$-sample panel before its plug-in stochastic score is used.

\vspace{1mm}
\paragraph{Fresh-sample evaluation.}
Let $\M_T$ be the workflow pool produced by the generation stage. Because
$\M_T$ was assembled using the Stage-1 estimates, those same executions are
not reused to select the final portfolio. We freeze $\M_T$ and collect a
second, independent sample. Specifically, for every replication
$\ell=1,\ldots,L_2$, task $i$, workflow $g\in\M_T$, and potential execution
copy $q=1,\ldots,K_{\max}$, let $ Z_{igq}^{(2,\ell)}
    \sim
    \mathrm{Bernoulli}(a_{ig})$
be mutually independent Stage-2 draws, independent of the complete generation
history. For a candidate execution-count vector $\bm m$, define
\begin{equation}
    R_i^{(2,\ell)}(\bm m)
    =
    \sum_{g\in\M_T}\sum_{q=1}^{m_g}
    Z_{igq}^{(2,\ell)}.
    \label{eq:stoch-stage-two-count}
\end{equation}
Thus one Stage-2 replication supplies a common fresh execution table from
which every candidate portfolio over $\M_T$ can be evaluated. Set
\begin{equation}
    \widehat\Pi_{L_2}^{\mathrm{final}}(\bm m)
    =
    \frac1{L_2}\sum_{\ell=1}^{L_2}
    \left[
        \frac1n\sum_{i=1}^n
        \psi_{k(\bm m)}\bigl(R_i^{(2,\ell)}(\bm m)\bigr)
    \right]
    -\gamma C(\bm m),
    \qquad
    \widehat\Pi_{L_2}^{\mathrm{final}}(\bm 0)=0.
    \label{eq:stoch-final-sample-main}
\end{equation}
For a cardinality grid $\mathcal K$, define $\mathcal C_{\mathcal K}(\M_T)
    =
    \{\bm 0\}
    \cup
    \left\{
        \bm m\in\mathbb Z_+^{\M_T}:
        k(\bm m)\in\mathcal K
    \right\}.$
Let $N_T$ denote the size of this class. Since repeated execution is allowed,
\begin{equation}
    N_T
    =
    1+\sum_{k\in\mathcal K}
    \binom{|\M_T|+k-1}{k},
    \qquad
    \varepsilon_{\mathrm{samp}}(L_2,\delta_2)
    :=
    \sqrt{
        \frac{\log(2N_T/\delta_2)}{2nL_2}
    }.
    \label{eq:stoch-sampling-error-main}
\end{equation}
Conditional on the frozen generated pool, Hoeffding's inequality and a union
bound imply that, with probability at least $1-\delta_2$, every portfolio in
$\mathcal C_{\mathcal K}(\M_T)$ is evaluated within
$\varepsilon_{\mathrm{samp}}(L_2,\delta_2)$ of its true stochastic value.

Let
\begin{equation*}
    \widetilde{\bm m}
    \in
    \operatorname*{arg\,max}_{
        \bm m\in\mathcal C_{\mathcal K}(\M_T)
    }
    \widehat\Pi_{L_2}^{\mathrm{final}}(\bm m)
\end{equation*}
be the Stage-2 empirical maximizer. Retain a feasible baseline
$\bm m^0\in\mathcal C_{\mathcal K}(\M_T)$ whose true stochastic value is
known to satisfy
$\Pi_{\psi,\gamma}^{\mathrm{stoch}}(\bm m^0)\ge\underline V>0$, and return
\begin{equation}
    \widehat{\bm m}
    =
    \begin{cases}
        \widetilde{\bm m},
        &
        \widehat\Pi_{L_2}^{\mathrm{final}}(\widetilde{\bm m})
        -\varepsilon_{\mathrm{samp}}(L_2,\delta_2)
        \ge\underline V,
        \\[1mm]
        \bm m^0,
        &\text{otherwise}.
    \end{cases}
    \label{eq:stoch-conservative-final-rule}
\end{equation}
The rule switches away from the baseline only when the fresh Stage-2 value
clears $\underline V$ by more than the uniform sampling deviation.

\vspace{1mm}
The complete two-stage procedure is stated as
Algorithm~\ref{alg:stoch-cost-aware-generation} in the appendix. The following
theorem records its end-to-end guarantee.

\begin{theorem}[High-probability stochastic implicit-class guarantee]
\label{thm:stoch-end-to-end-main}
Suppose \Cref{ass:stochastic-execution-law} holds, $\G$ is a finite implicitly
represented workflow-type class with $c_g\le c_{\max}$. Suppose
$\psi_k(r)=h(r/k)$ for a common increasing concave function
$h:[0,1]\to[0,1]$ with $h(0)=0$ and $h(1)=1$, and let $\mathcal K$ have
coverage ratio $\varrho\ge1$. Retain a baseline $\bm m^0$ satisfying
$\Pi_{\psi,\gamma}^{\mathrm{stoch}}(\bm m^0)\ge\underline V>0$.

Run Algorithm~\ref{alg:stoch-cost-aware-generation} with sample sizes
$L_1,L_2$, confidence levels $\delta_1,\delta_2\in(0,1)$, ellipsoid tolerance
$\varepsilon_{\mathrm{ell}}>0$, and pricing-batch size $m$. Suppose every
primitive pricing call satisfies
\eqref{eq:stoch-plugin-oracle-guarantee-app}, conditional on the query history,
with probability at least $p_{\mathrm{orc}}>0$, and let
$N_{\mathrm{price}}^{\mathrm{stoch}}(\varepsilon_{\mathrm{ell}})$ bound the
total number of pricing batches. Define $\varepsilon_{\mathrm{rnd}}
    :=
    \max_{k\in\mathcal K}\frac{c_{\psi,k}}e.$
Then, with probability at least $1
    -\delta_1
    -\delta_2
    -N_{\mathrm{price}}^{\mathrm{stoch}}
        (\varepsilon_{\mathrm{ell}})
        e^{-p_{\mathrm{orc}}m},$
the returned execution-count vector satisfies
\begin{equation}
    \Pi_{\psi,\gamma}^{\mathrm{stoch}}(\widehat{\bm m})
    \ge
    \max\left\{
        \underline V,
        \frac1\varrho
        \OPT_{\psi,\gamma}^{\mathrm{stoch}}(\G)
        -\varepsilon_{\mathrm{ell}}
        -\varepsilon_{\mathrm{rnd}}
        -2\varepsilon_{\mathrm{est}}(L_1,\delta_1)
        -2\varepsilon_{\mathrm{samp}}(L_2,\delta_2)
    \right\}.
    \label{eq:stoch-full-class-main}
\end{equation}
Consequently,
\begin{equation}
    \Pi_{\psi,\gamma}^{\mathrm{stoch}}(\widehat{\bm m})
    \ge
    \frac1\varrho
    \frac{\underline V}{
        \underline V
        +\varepsilon_{\mathrm{ell}}
        +\varepsilon_{\mathrm{rnd}}
        +2\varepsilon_{\mathrm{est}}(L_1,\delta_1)
        +2\varepsilon_{\mathrm{samp}}(L_2,\delta_2)
    }
    \OPT_{\psi,\gamma}^{\mathrm{stoch}}(\G).
    \label{eq:stoch-multiplicative-final}
\end{equation}
\end{theorem}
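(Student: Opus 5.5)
The argument intersects three high-probability events and then chains value inequalities through the true stochastic objective, the plug-in objective, and the Stage-2 empirical objective. The events are: $E_1$, the Stage-1 concentration event of \eqref{eq:stoch-stage-one-uniform}, which by \eqref{eq:stoch-estimation-error-main} gives $|\widehat\Pi^{(1)}_{\psi,\gamma}(\bm m)-\Pi^{\mathrm{stoch}}_{\psi,\gamma}(\bm m)|\le\varepsilon_{\mathrm{est}}$ uniformly over all $\bm m$ with $k(\bm m)\le K_{\max}$; $E_{\mathrm{sep}}$, the event that every pricing batch succeeds, whose failure probability is at most $N^{\mathrm{stoch}}_{\mathrm{price}}e^{-p_{\mathrm{orc}}m}$ by \eqref{eq:batch-failure-tail} and a union bound applied conditionally on the history; and $E_2$, the Stage-2 uniform sampling event over $\mathcal C_{\mathcal K}(\M_T)$. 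The Stage-2 draws are independent of the generation history, so conditional on the frozen pool $\M_T$ Hoeffding applies with the deterministic count $N_T$. Hence $\Prob(E_2^c)\le\delta_2$, and a final union bound gives the stated probability.

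On $E_1\cap E_{\mathrm{sep}}$, I would rerun the deterministic machinery on the plug-in law. First, the plug-in objective with $\psi_k(r)=h(r/k)$ still satisfies the cross-cardinality stability of \Cref{thm:geometric-cardinality-approx}. Its proof only compares portfolios of size $b$ and $k$ through concavity of $h$, and that comparison should survive taking expectations over Bernoulli outcomes: the subsampling argument of keeping a random $b$-subset of slots gives a hypergeometric thinning, and Jensen with concavity of $h$ handles the expectation. Second, the ellipsoid argument of \Cref{thm:pricing-certificate}, applied with the plug-in stochastic reduced cost, yields a recovered fractional allocation within $\varepsilon_{\mathrm{ell}}$ of the plug-in LP value for each $k\in\mathcal K$. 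Third, rounding contributes $\varepsilon_{\mathrm{rnd}}$ via the analogue of \Cref{thm:cost-rounding}. This produces some $\bm m^\dagger\in\mathcal C_{\mathcal K}(\M_T)$ (a realization of rounding; I take one with at least the expected value, which exists) satisfying
\[
\widehat\Pi^{(1)}(\bm m^\dagger)\ge \tfrac1\varrho\widehat{\OPT}{}^{(1)}(\G)-\varepsilon_{\mathrm{ell}}-\varepsilon_{\mathrm{rnd}}.
\]
Applying $E_1$ twice, once to $\widehat{\OPT}{}^{(1)}\ge\OPT^{\mathrm{stoch}}-\varepsilon_{\mathrm{est}}$ and once to transfer $\bm m^\dagger$ back, gives $\Pi^{\mathrm{stoch}}(\bm m^\dagger)\ge\frac1\varrho\OPT^{\mathrm{stoch}}(\G)-\varepsilon_{\mathrm{ell}}-\varepsilon_{\mathrm{rnd}}-2\varepsilon_{\mathrm{est}}$, using $1/\varrho\le1$ to absorb the factor on the first $\varepsilon_{\mathrm{est}}$.

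On $E_2$, the empirical maximizer satisfies $\Pi^{\mathrm{stoch}}(\widetilde{\bm m})\ge\widehat\Pi^{\mathrm{final}}(\widetilde{\bm m})-\varepsilon_{\mathrm{samp}}\ge\widehat\Pi^{\mathrm{final}}(\bm m^\dagger)-\varepsilon_{\mathrm{samp}}\ge\Pi^{\mathrm{stoch}}(\bm m^\dagger)-2\varepsilon_{\mathrm{samp}}$. The conservative rule \eqref{eq:stoch-conservative-final-rule} settles the two cases. If we switch, then $\Pi^{\mathrm{stoch}}(\widehat{\bm m})\ge\widehat\Pi^{\mathrm{final}}(\widetilde{\bm m})-\varepsilon_{\mathrm{samp}}\ge\underline V$, and the chain above also holds, so the max bound follows. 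If we do not switch, the returned baseline has value $\ge\underline V$, and $\Pi^{\mathrm{stoch}}(\bm m^\dagger)-2\varepsilon_{\mathrm{samp}}\le\widehat\Pi^{\mathrm{final}}(\widetilde{\bm m})-\varepsilon_{\mathrm{samp}}<\underline V$, so $\underline V$ dominates the max anyway. This gives \eqref{eq:stoch-full-class-main}. The multiplicative form follows as in \Cref{cor:pl-multiplicative-rounding}: with $V\ge\max\{\underline V,\,x-\varepsilon\}$ for $x=\OPT/\varrho$, the convex combination $\frac{\varepsilon}{\underline V+\varepsilon}\underline V+\frac{\underline V}{\underline V+\varepsilon}(x-\varepsilon)$ gives $V\ge\frac{\underline V}{\underline V+\varepsilon}x$.

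I expect the main obstacle to be the middle step, transferring the grid stability and LP-rounding certificate to the stochastic plug-in objective. The reduced cost is no longer the mean-column score, so both the concave-coverage LP structure and the $\varepsilon_{\mathrm{rnd}}$ bound must be re-derived scenario by scenario. I would try to condition on a common-uniform execution scenario, apply the deterministic results pathwise (each scenario is a deterministic $0/1$ instance), and then average, since $\varepsilon_{\mathrm{rnd}}$ and the factor $1/\varrho$ are scenario-independent.
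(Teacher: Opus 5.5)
Your proposal is correct and follows the paper's proof. It uses the same three events and union bound, and builds the plug-in generated-pool certificate from grid stability, ellipsoid primal recovery and rounding; the paper proves these scenario by scenario on the common-uniform space $\Omega_k$, as you suggest. It then applies the Stage-1 perturbation bound twice and uses the same two-case analysis of the conservative Stage-2 rule. One small caution: in the grid step, Jensen's inequality applied to the expectation would give an upper bound on $\E[h(R_i/b)]$, not the lower bound you need. What is needed is the pointwise chord bound $h(x)\ge h(p_i)\min\{x/p_i,1\}$ followed by linearity of expectation, as in the deterministic proof.
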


\section{Numerical Experiments}
\label{sec:numerical-experiments}

The numerical study addresses two questions. First, how reliably can a
post-output selector identify a correct answer from a mixed candidate set, and
how does this ability vary across applications and selector models? We study
selector recovery on three domains (ABCD, Schema-Guided Dialogue (SGD), and
HotpotQA) using three selector model families. Second, does selector-aware
workflow portfolio design improve deployment performance? We answer these
questions using each data set by evaluating the complete pipeline: initial workflow
evaluation, selector calibration, cost-aware portfolio optimization,
dual-guided workflow generation, selector recalibration after the workflow
bank expands, and fresh held-out deployment.

The three domains differ in the type of decision the system must make and the
information available for making it. ABCD \citep{chen2021abcd} is a
customer-service routing task with a common operational taxonomy: after
observing the first few customer turns, the system must identify the
appropriate service subflow. SGD \citep{rastogi2020sgd} contains task-oriented
conversations between users and virtual assistants across many services and
domains. Each service is accompanied by a schema describing the intents it
supports and the information fields relevant to those intents. The system must
therefore interpret a conversation relative to a service-specific set of
possible intents and slots, rather than a single taxonomy shared across all
conversations. HotpotQA \citep{yang2018hotpotqa} is an open-domain
question-answering task in which answering a question typically requires
combining information from multiple passages. It therefore provides a
substantively different setting in which candidate workflows must perform
multistep evidence-based reasoning. Together, the three domains span
fixed-taxonomy service routing, schema-dependent dialogue understanding, and
multistep question answering. 

\vspace{1mm}
\paragraph{Common experimental protocol.}
Candidate-generating workflows are stochastic. For every task--workflow pair,
we execute the workflow independently $L=5$ times using
\texttt{do\_sample=True}, temperature one, and distinct random seeds. We retain
each execution's output, correctness indicator, seed, and parse status and
estimate
\[
    \widehat a_{ig}
    =
    \frac{1}{5}\sum_{\ell=1}^{5}Z_{ig\ell}.
\]
Selectors are evaluated deterministically
(\texttt{do\_sample=False}). Candidate order is randomized to reduce position
effects. When one task contributes multiple selector panels or orderings,
confidence intervals treat the task, rather than each individual selector
decision, as the independent sampling unit.

In all three end-to-end experiments, we set $K_{\max}=6$ and search $K\in\{1,\ldots,6\}$. The primary formulation permits repeated execution of a workflow type, and we also report a no-repeat benchmark imposing $m_g\leq1$. We optimize the LP relaxation using a 16-scenario sample-average approximation (SAA), where each scenario is drawn from the plug-in independent Bernoulli execution law defined by the estimated probabilities $\widehat a_{ig}$. We then apply 512 randomized-rounding trials. For every returned integral portfolio, we compute the distribution of the number of correct outputs exactly: because this count is a sum of independent Bernoulli variables with potentially different probabilities, it follows a Poisson--binomial distribution. We average the selector recovery curve over this exact distribution rather than reporting the finite 16-scenario average.

A correct final decision is valued at \$1, recurring compute cost is expressed using the small-model reference-call normalization described below, and we set
$\gamma=1$. To conserve space, we report ABCD in detail, summarize the SGD and HotpotQA results in \Cref{sec:numerical-other-domains}, and provide their
complete experiments in \Cref{app:sgd-end-to-end,app:hotpotqa-end-to-end} in the Appendix.



\subsection{ABCD customer-service routing}
\label{sec:numerical-abcd}

\paragraph{Task and data.}
ABCD is a customer-service dialogue dataset organized around operational flows
and policy-constrained service actions \citep{chen2021abcd}. We study an early
routing task: after observing at most the customer's first three messages and
the available dialogue context, the system must identify the appropriate service subflow from 96 possible labels, including routes
associated with order cancellation, refund status, subscription changes,
troubleshooting, and escalation.
We preserve the official data splits
and the naturally occurring distribution of labels within each split. The
experiment uses 800 development tasks to evaluate workflows and optimize the
portfolio, all 1,004 available tasks from the development split to calibrate
the selector, and 800 tasks from the official test split for final held-out
evaluation. No single label accounts for more than 4.2\% of the observations
in any split. Thus, performance cannot be driven by repeatedly predicting a
small number of dominant classes; the system must distinguish among a large
set of possible routing decisions.

\vspace{1mm}
\paragraph{Initial workflow bank and stochastic execution.}
We begin with a deliberately simple bank of 18 single-call workflows, formed by
crossing three generator models (Qwen2.5-3B-Instruct,
Mistral-7B-Instruct-v0.3, and Granite-3.3-8B-Instruct) with six prompting
strategies. The strategies differ in the reasoning process the model is asked to follow
before making the routing decision. The \emph{direct} strategy asks for the
routing label immediately. \emph{Evidence first} asks the model to identify
the parts of the conversation that are most informative for the decision
before choosing a label. \emph{Decomposition} asks it to consider the
customer's intent, the current service state, and the routing logic separately
before combining them into a final decision. \emph{Verify and revise} asks the
model to make an initial judgment, check it for possible errors, and revise it
if needed within the same response. \emph{Alternatives} asks the model to
compare several plausible routing labels before selecting one, while
\emph{limited context} deliberately withholds part of the available dialogue
information to create a less informed workflow.
Importantly, these differences do not change
the workflow topology: every initial workflow consists of a single model call,
with no separate verifier, fallback call, or multistep agent interaction. This
provides a controlled starting bank against which we later evaluate whether
richer multistep agentic workflows add value.

We set the reference cost of one 7B-model call to $0.001$ and scale execution
cost linearly with model size and the number of model calls.\footnote{As of
August 22, 2026, Together AI prices Qwen2.5-7B-Instruct-Turbo, a 7B-parameter
model, at $0.30$ per million input tokens and $0.30$ per million output tokens.
At these rates, a call to this 7B model using a total of 3,333.33 input and
output tokens costs $3{,}333.33\times\frac{0.30}{10^6}=0.001.$
Thus, the $0.001$ reference cost corresponds directly to the market price of a
3,333.33-token call to a hosted 7B model.}
The generated workflows instead consist of executable compositions of
GPT-4o-mini calls connected through Python control flow. Depending on the
generated program, calls may be sequenced or run in parallel and combined
through voting, critique, verification, clarification, or conditional
fallback. 
OpenAI describes GPT-4o-mini as a small model and, at its release, characterized it as belonging to roughly the same small-model tier as models such as Llama~3~8B. For consistency in experimental cost accounting, we assign each GPT-4o-mini invocation one reference small-model call and the same normalized cost of \$0.001.

Across the 800 development tasks, the Mistral decomposition workflow has the
highest estimated one-execution accuracy, at 46.625\%. However, when each of
the 18 initial workflows is executed once, the probability that at least one
of them produces the correct answer rises to 69.896\%. This difference shows
that the workflows are complementary: many tasks missed by the best individual
workflow are solved by another workflow in the pool. 

\begin{table}[t]
\centering
\caption{ABCD design and selector calibration. The primary selector
uses execution slots rather than answer deduplication and aggregates cyclic
candidate-order rotations by vote. Confidence intervals for $\lambda$ are
clustered at the task level.}
\label{tab:abcd-design-calibration}
\small
\begin{tabular}{p{0.39\textwidth}p{0.52\textwidth}}
\toprule
Quantity & setting or estimate \\
\midrule
Development / calibration / held-out tasks
    & $800 / 1{,}004 / 800$ \\
Allowed routing labels & 96 ABCD subflows \\
Initial workflow bank
    & 18 workflows: 3 generator models $\times$ 6 reasoning strategies \\
Stochastic execution
    & $L=5$ independent draws per task--workflow cell; temperature one \\
Run-size search
    & $K\in\{1,2,3,4,5,6\}$; repeats allowed in the primary analysis \\
Workflow generation
    & one-scenario common-uniform ellipsoid pricing; 20 distinct-candidate
      evaluation budget \\
Final portfolio optimization
    & 16 SAA scenarios and 512 randomized-rounding trials per $K$ \\
Selector
    & Qwen2.5-7B-Instruct; greedy decoding; up to six cyclic rotations \\
Initial-bank calibration sample
    & 1,500 balanced panels from 586 distinct tasks \\
Initial-bank selector strength
    & $\widehat\lambda=2.9848$, 95\% CI $[2.4930,3.5844]$ \\
Initial-bank implied pairwise recovery
    & $\widehat\lambda/(1+\widehat\lambda)=74.90\%$ \\
Average selector advantage over uniform choice
    & 21.33 percentage points on the initial-bank calibration design \\
Expanded-bank selector strength
    & $\widehat\lambda=2.5229$, 95\% CI $[2.0918,3.0670]$ \\
Expanded-bank implied pairwise recovery
    & $\widehat\lambda/(1+\widehat\lambda)=71.61\%$ \\
Selector parse rate
    & 100\% on both initial- and expanded-bank calibration panels \\
\bottomrule
\end{tabular}
\end{table}

\vspace{1mm}
\paragraph{Precision and stability of the $L=5$ execution design.}
With only five executions for each task-workflow pair, the individual
$\widehat a_{ig}$ estimates can be noisy.\footnote{For reference, estimating a
single Bernoulli success probability with a worst-case 95\% Wilson interval
requires $L=93$ for a $\pm0.10$ margin and $L=381$ for a $\pm0.05$ margin.}
Our goal, however, is not to estimate every task-workflow success probability
precisely. Rather, we need sufficiently stable estimates of each workflow's
average performance across the 800 development tasks to support portfolio
construction. We therefore examine whether the workflow-level results are sensitive to using
only a few executions. For each workflow, we recompute its average accuracy
using only the first $L'=1,2,3$ executions and compare the resulting workflow
ranking with the ranking based on all $L=5$ executions. We measure agreement
using Spearman's rank correlation \citep{spearman1961proof}, where values close
to one indicate similar rankings. The correlations are 0.945, 0.971, and
0.958 for $L'=1,2,3$, respectively. Relative to the $L=5$ estimates, the largest absolute change in any workflow's average accuracy across these
comparisons is 1.25 percentage points, approximately 2.7\% of the best
workflow's 46.625\% average accuracy, 
and four of the five highest-ranked workflows remain
in the top five. These results suggest that, although individual
task-workflow estimates remain noisy, $L=5$ provides reasonably stable
workflow-level information for portfolio construction.

\vspace{1mm}
\paragraph{Bank-specific selector calibration.}
We next estimate how well the selector can distinguish correct from incorrect
outputs produced by the initial ABCD workflow bank. For each
$K\in\{2,\ldots,6\}$ and each possible number of correct candidates
$r\in\{1,\ldots,K-1\}$, we construct 100 candidate panels containing exactly
$r$ correct execution outputs and $K-r$ incorrect outputs. The selector
observes the task, dialogue context, and candidate labels, but does not observe
workflow identity, generator identity, execution cost, or which candidates are
correct. The resulting calibration sample contains 1,500 panels drawn from 586
distinct tasks.

We estimate selector strength by maximum likelihood under the Plackett--Luce
recovery model. Let $S_j\in\{0,1\}$ indicate whether the selector chooses a
correct candidate on calibration panel $j$, where $r_j$ of the $K_j$
candidates are correct. Under the Plackett--Luce specification,
\[
    \operatorname{logit}
    \Pr(S_j=1\mid r_j,K_j)
    =
    \alpha
    +
    \log\!\left(\frac{r_j}{K_j-r_j}\right),
    \qquad
    \lambda=\exp(\alpha).
\]
Thus, $\lambda$ measures how strongly the selector favors correct candidates
relative to incorrect ones, and we estimate it by fitting the logistic model
above. The resulting estimate is $\widehat\lambda=2.9848$, with a
task-clustered 95\% confidence interval of $[2.4930,3.5844]$. Under the fitted
model, when exactly one of two candidates is correct, the selector chooses the
correct candidate with probability 74.90\%. Averaged across all panel
compositions in the calibration design, its fitted probability of selecting a
correct candidate is 21.33 percentage points higher than under uniform random
selection.

\Cref{fig:abcd-selector-recovery} compares the observed selector recovery
rates with two benchmarks: uniform random selection and the fitted
Plackett--Luce curve. The fitted curve captures the main pattern that selector
performance improves as the fraction of correct candidates increases, although
the empirical recovery rates do not lie exactly on the one-parameter model.
We thus use Plackett--Luce as a simple operational approximation for
portfolio optimization rather than as an exact behavioral description of the
selector. To check whether this approximation leads to useful deployment
decisions, we subsequently evaluate the optimized portfolios using the actual
selector on fresh held-out tasks. The final deployment results therefore do not
depend solely on the fitted parametric recovery model.

\begin{figure}[t]
    \centering
    \includegraphics[width=0.64\textwidth]{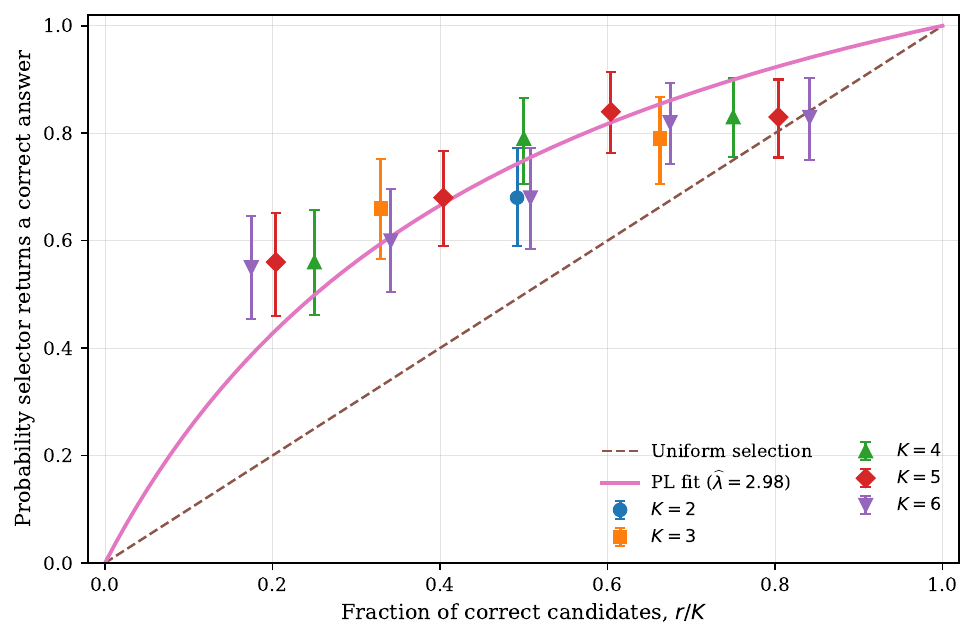}
    \caption{ABCD selector recovery. Points show task-clustered estimates for
    controlled $(K,r)$ cells, vertical bars show 95\% task-clustered bootstrap
    intervals, the dashed line is uniform selection, and the solid curve is the
    fitted Plackett--Luce model. Small horizontal offsets separate cells with
    the same correct fraction.}
    \label{fig:abcd-selector-recovery}
\end{figure}

\vspace{1mm}
\paragraph{Initial-pool optimization.}
We next test whether the complementarity in the initial bank can be converted into deployed value after accounting for selector confusion and compute cost.
We set $K_{\max}=6$, matching the largest candidate set used in selector calibration. For every $K\in\{1,\ldots,6\}$, we solve the repeat-allowed stochastic LP relaxation using 16 independent execution-outcome scenarios drawn from the plug-in Bernoulli law. We then apply randomized rounding 512 times and evaluate each distinct integral portfolio using the exact Poisson--binomial calculation described above. The best portfolio from the initial
workflow bank has $K=6$ and contains one Granite verify-and-revise execution,
one Mistral decomposition execution, two Mistral verify-and-revise executions,
one Qwen evidence-first execution, and one Qwen verify-and-revise execution.
On the development sample, its calibrated selector accuracy is 52.523\%, and
its workflow-cost-adjusted value is 0.520074 per task. Thus, the optimized
portfolio assigns two execution slots to the Mistral verify-and-revise
workflow. When repeated execution is prohibited, the best no-repeat portfolio
has a slightly lower development value of 0.518755.

\vspace{1mm}
\paragraph{Dual-guided workflow generation.}
We use ADAS Meta Agent Search \citep{hu2024adas} to generate new workflows,
with OpenAI's GPT-4o-mini serving both as the meta-agent that proposes workflow
structures and as the execution model within the generated workflows. The
search is guided by task prices from the stochastic dual. These prices identify
development tasks on which an additional correct workflow output would be most
valuable, allowing ADAS to focus its search on weaknesses of the current
workflow bank. To implement this idea, for each $K$ we run the ellipsoid separation procedure
associated with \Cref{alg:stoch-cost-aware-generation}, using a one-scenario
sample-average approximation (SAA) of the stochastic projected dual. Because
the stochastic dual assigns a separate slot price $\theta_q$ to each labeled
execution copy $q=1,\ldots,K$, workflow pricing is checked separately for each
copy. At a given ellipsoid iterate, we first check the explicit dual
constraints and the workflow constraints corresponding to workflows already
evaluated. ADAS is invoked only if these constraints do not separate the
current iterate. Whenever ADAS proposes a new workflow, we freeze its design
and evaluate it independently using the same $L=5$ execution protocol before
using its estimated performance in subsequent pricing calculations.

We impose a prespecified budget of 20 distinct new workflow evaluations during
generation. After the generation stage, the final finite-pool portfolio
optimization uses 16 SAA scenarios and 512 randomized-rounding trials. Across
the fixed-$K$ searches, the procedure makes 26 ADAS pricing queries and
incorporates four generated workflows, expanding the evaluated bank from 18
to 22 workflow types. Because each fixed-$K$ search is subject to its
prespecified workflow-evaluation budget, the resulting expanded bank should be
viewed as the output of a budgeted dual-guided search rather than an exhaustive
search over the full workflow class. 

\vspace{1mm}
\paragraph{Expanded-bank recalibration and reoptimization.}
After adding the generated workflows, we recalibrate the same Qwen selector
using a new controlled panel constructed from outputs in the expanded workflow
bank. This step allows the estimated selector model to reflect the new kinds of
candidate outputs it will encounter after workflow generation. The estimated
selector strength is $\widehat\lambda=2.5229$, with a 95\% confidence interval
of $[2.0918,3.0670]$. This interval overlaps the initial-bank confidence
interval, although the point estimate is lower. The comparison illustrates why
we recalibrate after expanding the workflow bank: $\lambda$ summarizes selector
performance for the candidate distribution it faces and should not be treated
as a fixed intrinsic property of the selector.

We then reoptimize the portfolio over all 22 workflow types. The best
repeat-allowed portfolio has $K=4$, where $K$ counts complete workflow
executions, or equivalently the four candidate outputs ultimately presented to
the selector. Two of these four slots are assigned to generated workflow G4
(\texttt{FinalABCDRouterWithLabel}) in
\Cref{tab:abcd-generated-workflows}. A single execution of G4 makes three
independently seeded GPT-4o-mini routing calls on the same ABCD task and returns
the label receiving the most votes. Assigning G4 to two slots therefore means
running this entire three-call workflow twice independently, producing two
candidate outputs. The remaining two slots contain one initial-bank Mistral
verify-and-revise execution and one Qwen evidence-first execution. Thus, the
$K=4$ portfolio presents four candidate outputs to the final selector but uses
eight underlying model calls per task: six from the two G4 executions and one
from each of the two initial-bank workflows.

On the development sample, this portfolio has calibrated selector accuracy of
54.190\% and a workflow-cost-adjusted value of 0.534410. The no-repeat
benchmark also selects $K=4$, but replaces the second copy of G4 with G3
(\texttt{Final\_Single\_Routing\_Agent}), a generated router with critique and
fallback. Its workflow-cost-adjusted value is 0.533396.

\vspace{1mm}
\paragraph{Composition of the generated workflows.}
\Cref{tab:abcd-generated-workflows} summarizes the four workflows generated by
ADAS and added to the evaluated workflow bank. Every model call within these
workflows uses GPT-4o-mini. Thus, the generated workflows differ not in the
underlying model, but in how they organize multiple calls and combine their
outputs, using mechanisms such as parallel sampling, voting, critique,
clarification, and conditional fallback. The repeat-allowed optimal portfolio
assigns two execution slots to \texttt{FinalABCDRouterWithLabel}. Each deployed
task therefore receives two independent runs of this three-call voting
workflow, together with one Mistral and one Qwen workflow from the initial
bank. In the no-repeat benchmark, the second copy is replaced by one execution
of \texttt{Final\_Single\_Routing\_Agent}.

\begin{table}[t]
\centering
\caption{ADAS-generated workflows incorporated into the expanded ABCD bank.
All generated model calls use GPT-4o-mini through the ADAS
\texttt{LLMAgentBase} interface. Each GPT-4o-mini invocation is assigned one
\$0.001 reference-call unit; mean calls and normalized cost are measured during
independent workflow evaluation.}
\label{tab:abcd-generated-workflows}
\small
\begin{tabular}{@{}p{0.15\textwidth}p{0.56\textwidth}rr@{}}
\toprule
Workflow & Executable structure & Mean calls & Cost \\
\midrule

\textbf{G1}
&
Five independent routing calls followed by a plurality vote.
If the highest vote is tied, three additional routing calls are made and
a second plurality vote resolves the tie.
& 8.0 & 0.008 \\[2mm]

\textbf{G2}
&
Five independent routing calls followed by a plurality vote.
If the leading vote is tied, one additional clarification routing call
determines the returned label.
& 6.0 & 0.006 \\[2mm]

\textbf{G3}
&
One initial routing call followed by a critique/validation call.
If the proposed label is invalid, ambiguous, or unsure, an additional
fallback routing call is made.
& 2.0 & 0.002 \\[2mm]

\textbf{G4}
&
Three independent routing calls on the same task, followed by a vote;
the workflow returns the label with the largest vote count.
& 3.0 & 0.003 \\

\bottomrule
\end{tabular}

\vspace{1mm}
{\footnotesize
\emph{Notes.} G1--G4 correspond respectively to the frozen ADAS programs
\texttt{FinalEnhancedRoutingAgent}, \texttt{FinalRoutingAgent},
\texttt{Final\_Single\_Routing\_Agent}, and
\texttt{FinalABCDRouterWithLabel}. Cost is recurring normalized execution cost
per workflow run.}
\end{table}

\vspace{1mm}
\paragraph{Fresh held-out deployment.}
We freeze all reported portfolios before examining the 800 held-out tasks. For
each workflow type appearing in a reported portfolio, we collect five new
stochastic executions per task. We first evaluate each portfolio under the
plug-in Plackett--Luce model, using the estimated workflow success
probabilities and selector strength. We then evaluate actual deployment
performance by using Qwen2.5-7B-Instruct as the final selector. The selector
does not observe workflow identity or correctness. To reduce sensitivity to
candidate position, we present each candidate set under up to six cyclic
rotations of the candidate order and use the candidate receiving the most
selection votes across rotations.

\vspace{1mm}
\subsection{Results on ABCD}
\Cref{tab:abcd-heldout-deployment,fig:abcd-heldout-accuracy}
report deployment performance on 800 fresh held-out ABCD tasks.
Actual selector accuracy increases from 43.625\% for the best
initial singleton to 46.750\% for the optimized initial-bank
portfolio and to 50.250\% for the portfolio obtained after
ellipsoid-guided workflow generation. Thus,
optimizing the initial workflow bank contributes 3.125 percentage points, and
expanding the bank through workflow generation contributes an additional
3.500 percentage points. Relative to the initial singleton, the complete
procedure gains 6.625 percentage points, or 15.2\%. The same ordering appears
under the calibrated Plackett--Luce model, where predicted selector accuracy
rises from 43.550\% to 47.269\% and then to 50.648\%. Notably, the final
portfolio uses only four execution slots, compared with six for the optimized
initial-bank portfolio. The generated workflows therefore allow the system to
use fewer workflow executions while achieving higher held-out accuracy. The
no-repeat benchmark reaches 50.125\%, only 0.125 percentage points below the
primary repeat-allowed solution.

The improvement does not come simply from increasing the probability that at
least one candidate is correct. That probability, which we refer to as oracle
coverage, decreases modestly from 59.517\% for the optimized initial-bank
portfolio to 58.264\% for the expanded-bank portfolio. At the same time, the
accuracy obtained by selecting uniformly at random from the realized candidate
outputs rises from 40.533\% to 45.681\%. Thus, the generated workflows produce
candidate sets in which correct outputs are more prevalent, even though the
chance of having at least one correct output is slightly lower. After
subtracting both workflow execution cost and the realized cost of the selector,
actual total-system net value increases from 0.435207 for the singleton to
0.455829 for the optimized initial-bank portfolio and 0.490671 for the final
ADAS-expanded portfolio.

\begin{table}[t]
\centering
\caption{Fresh held-out ABCD deployment on 800 tasks. PL accuracy is the
plug-in Plackett--Luce prediction; actual accuracy uses the blind deterministic
Qwen selector. Brackets report 95\% Wilson intervals. Workflow and selector
costs are per task, and actual total-system net subtracts both.}
\label{tab:abcd-heldout-deployment}
\scriptsize
\resizebox{\textwidth}{!}{%
\begin{tabular}{lrrrrrrrr}
\toprule
Method & $K$ & PL accuracy & Actual accuracy (95\% CI) & Random accuracy
& Oracle coverage & Workflow cost & Selector cost & Actual total-system net \\
\midrule
Best initial singleton
& 1 & 0.4355 & $0.4363\ [0.4023,0.4708]$ & 0.4355 & 0.4355
& 0.001043 & 0.000000 & 0.435207 \\
Optimized initial bank
& 6 & 0.4727 & $0.4675\ [0.4332,0.5021]$ & 0.4053 & 0.5952
& 0.005157 & 0.006514 & 0.455829 \\
ADAS-expanded bank
& 4 & 0.5065 & $0.5025\ [0.4679,0.5371]$ & 0.4568 & 0.5826
& 0.007486 & 0.004343 & 0.490671 \\
ADAS-expanded, no-repeat
& 4 & 0.5069 & $0.5013\ [0.4667,0.5358]$ & 0.4572 & 0.5829
& 0.006486 & 0.004343 & 0.490421 \\
\bottomrule
\end{tabular}%
}
\end{table}

\begin{figure}[h]
    \centering
    \includegraphics[width=0.86\textwidth]{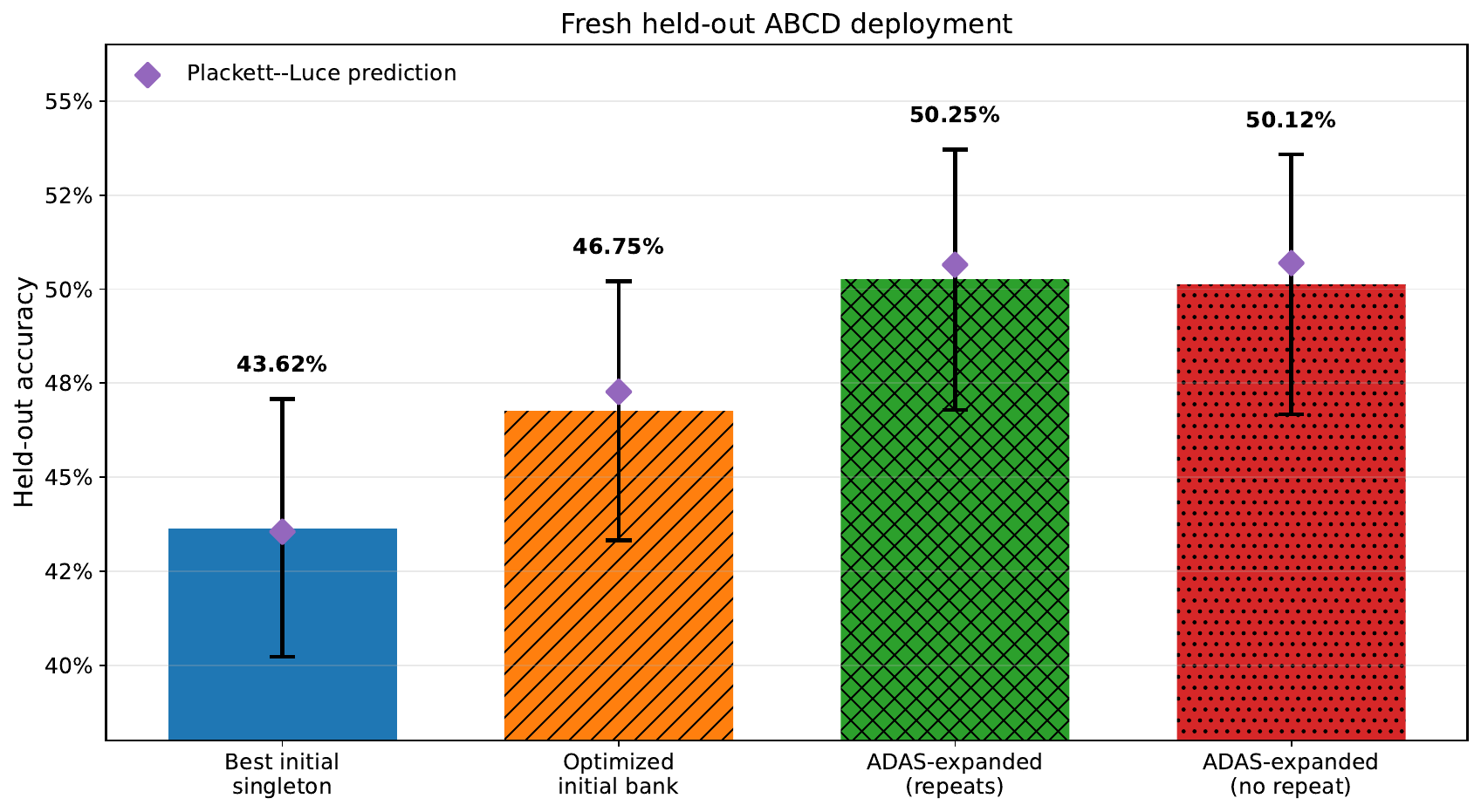}
    \caption{Fresh held-out ABCD accuracy. Bars report actual deterministic
    selector accuracy, error bars show 95\% Wilson intervals over 800 tasks,
    and diamonds report the corresponding plug-in Plackett--Luce predictions.}
    \label{fig:abcd-heldout-accuracy}
\end{figure}

\subsection{Results on SGD and HotpotQA}
\label{sec:numerical-other-domains}

The same end-to-end pipeline produces different sources of value across the
other two domains. On SGD, the initial workflow bank is already strong and
complementary: optimizing it raises actual held-out selector accuracy from
85.250\% for the best singleton to 92.750\%. None of the 20 evaluated ADAS
candidates is incorporated at the queried dual prices, so the final portfolio
coincides with the optimized initial-bank portfolio. Thus, SGD demonstrates
the value of portfolio optimization without requiring workflow generation.

On HotpotQA, optimizing the initial bank raises actual held-out accuracy only
from 30.250\% to 31.125\%, but dual-guided generation incorporates a
three-call evidence-extraction, verification, and multihop-solving workflow.
Two independent executions of this workflow attain 55.250\% actual accuracy;
one execution attains 54.875\% and has slightly higher total-system net value
after selector cost is included. Together with ABCD, these results show that
portfolio optimization, workflow generation, and multiplicity are distinct
sources of value whose importance varies across applications. Complete
experimental designs, calibration results, portfolio compositions, and
held-out evaluations are reported in
\Cref{app:sgd-end-to-end,app:hotpotqa-end-to-end}.

\section{Conclusion}
\label{sec:conclusion}

Agentic AI systems often approach the same task through multiple workflows and
then rely on a selector to choose the final answer. This paper shows that
workflow variety creates both opportunity and risk. While additional executions may
solve cases that the current portfolio misses, they also consume compute
and may introduce plausible wrong answers that make final selection more
difficult. We develop a framework for managing this tradeoff. Our selector strength bound
places a sharp limit on the value of workflow variety and identifies conditions
under which a single workflow or no deployment is optimal. We then develop
optimization methods for choosing the run size and allocating execution slots
across workflow types. For large implicit workflow classes, dual prices guide a
workflow generation oracle toward residual tasks on which an additional correct
output would be most valuable, while the ellipsoid method provides performance
guarantees without requiring the workflow class to be enumerated.

The numerical results illustrate both the value and the limits of workflow portfolios. Relative to the best initial singleton, the final portfolio raises
actual held-out selector accuracy by 6.625 percentage points on ABCD, 7.500 points on SGD, and 25.000 points on HotpotQA. Workflow generation adds value on
ABCD and HotpotQA but not on SGD, where optimizing the already strong initial bank is sufficient. Multiplicity likewise has heterogeneous value: it binds in
ABCD and HotpotQA but not in SGD, and its incremental accuracy benefit need not justify its additional compute and selection cost. 
The central message is that more inference time variety is not automatically better. Effective deployment requires firms to coordinate workflow
generation, run size, execution allocation, and compute expenditure with the
recovery capabilities of the selector.


\bibliographystyle{informs2014}
\bibliography{refs}

\newpage
\setcounter{page}{1}
\section*{\centering Appendices for ``Managing Agentic AI Workflow Portfolios under Imperfect Selection and Compute Cost.''}\vspace{0.3cm}\label{sec:Appendix}
\begin{APPENDICES}
\normalsize

\section{A Graph-Based Formalization of Workflow Classes}
\label{app:workflow-grammar}

This appendix gives one formal representation of the implicit workflow class
$\G$. The main analysis requires only that each workflow $g$ can be executed on
a task, produces a candidate answer and trace, and has an evaluated correctness
column and recurring cost. A graph grammar is useful for making the feasible
class explicit, but the portfolio optimization results do not depend on this
particular representation.

Let $\mathcal U=\{u_1,\ldots,u_Q\}$ denote a library of primitive executable
modules. A module may be a prompt-based agent, retriever, solver, verifier,
critic, tool call, aggregation rule, learned classifier, neural-network block,
or answer formatter. A workflow $g$ is a finite directed computation graph
\[
    g=(V_g,E_g,\ell_g,s_g,t_g),
\]
where $V_g$ is a finite node set, $E_g$ represents information flow,
$\ell_g:V_g\to\mathcal U$ assigns a module to each node, and $s_g,t_g$ are input
and output nodes. We impose $|V_g|\le L_{\max}$ and any application-specific
restrictions on retrieval, tools, model calls, memory, context length, or
termination. The feasible workflow class $\G$ consists of all graphs satisfying
these restrictions.

\section{Other selector technologies}

The endogenous framework allows a different recovery curve $\psi_k$ for each candidate-set size $k$. The selector-strength bound in \Cref{sec:selector-curve} applies whenever these curves admit a common fraction-based envelope with finite odds-lift. The fixed-size IP remains valid for any nondecreasing recovery curve, while the randomized-rounding certificate additionally requires diminishing increments. A selector that reaches recovery probability one at an interior correct fraction has infinite odds-lift, so the uniform selector-strength cap becomes vacuous even though the exact size-conditioned IP remains applicable.

\paragraph{Conservative clean recovery.}
Suppose each correct candidate is rejected with probability $\beta$, each incorrect candidate is falsely certified with probability $\alpha$, and verifier outputs are conditionally independent. The event that at least one correct candidate is certified and no incorrect candidate is certified has probability
\begin{equation}
    \psi_{\alpha,\beta,k}^{\mathrm{clean}}(r)
    =
    \1\{r\ge1\}(1-\beta^r)(1-\alpha)^{k-r}.
\end{equation}

\paragraph{Uniform selection among certified candidates.}
If the selector chooses uniformly among certified candidates, with
$U\sim\mathrm{Binomial}(r,1-\beta)$ and
$W\sim\mathrm{Binomial}(k-r,\alpha)$,
\begin{equation}
    \psi_{\alpha,\beta,k}^{\mathrm{unif}}(r)
    =
    \sum_{u=1}^{r}\sum_{w=0}^{k-r}
    \frac{u}{u+w}
    \binom{r}{u}(1-\beta)^u\beta^{r-u}
    \binom{k-r}{w}\alpha^w(1-\alpha)^{k-r-w}.
\end{equation}

\paragraph{Voting.}
Under a conservative model in which accepted wrong workflows coordinate on the same wrong answer, let
$C_r\sim\mathrm{Binomial}(r,1-\beta)$ and
$W_{k-r}\sim\mathrm{Binomial}(k-r,\alpha)$. A sufficient recovery event is $C_r>W_{k-r}$, giving
\begin{equation}
    \psi_{\alpha,\beta,k}^{\mathrm{vote}}(r)
    =
    \Prob\{C_r>W_{k-r}\}.
\end{equation}
Voting is typically threshold-like and need not be concave. The size-conditioned IP remains exact, but the concave-coverage rounding certificate does not apply unless the estimated curve satisfies diminishing increments.

\section{A Numerical Illustration of Workflow Multiplicity}
\label{app:workflow-multiplicity-example}

This example illustrates why it can be optimal to assign multiple execution
slots to the same workflow type under an imperfect selector. Suppose there are
two deterministic workflow types, $A$ and $B$, and three task types with the
correctness patterns and population shares shown in
\Cref{tab:multiplicity-task-types}.

\begin{table}[h]
\centering
\caption{Task types and workflow correctness.}
\label{tab:multiplicity-task-types}
\begin{tabular}{cccc}
\toprule
Task type & Workflow $A$ & Workflow $B$ & Population share \\
\midrule
Type 1 & Correct   & Incorrect & $0.40$ \\
Type 2 & Incorrect & Correct   & $0.20$ \\
Type 3 & Incorrect & Incorrect & $0.40$ \\
\bottomrule
\end{tabular}
\end{table}
For compactness, write execution-count vectors in the coordinate order
$(A,B)$, and define
\[
    \bm m^A=(1,0),
    \qquad
    \bm m^{AB}=(1,1),
    \qquad
    \bm m^{AAB}=(2,1).
\]

Workflow $A$ has standalone accuracy $0.40$, whereas workflow $B$ has
standalone accuracy $0.20$. Workflow $B$ is nevertheless complementary because
it solves Type~2 tasks, which workflow $A$ misses.

Suppose the selector follows the Plackett--Luce recovery curve with
$\lambda=2$:
\begin{equation}
    \psi^{\mathrm{PL}}_{k,2}(r)
    =
    \frac{2r}{2r+k-r}.
    \label{eq:multiplicity-pl-recovery}
\end{equation}
For the portfolio $\bm m^{AB}$, exactly one of the two candidates is correct on
Types~1 and~2, so
\begin{equation}
    J_2(\bm m^{AB})
    =
    0.40\left(\frac{2}{3}\right)
    +
    0.20\left(\frac{2}{3}\right)
    =
    0.40.
    \label{eq:multiplicity-ab}
\end{equation}

Now consider the portfolio $\bm m^{AAB}$. On Type~1 tasks, two of the three
candidates are correct, giving recovery probability $4/5$. On Type~2 tasks,
only workflow $B$ is correct, giving recovery probability $1/2$. Therefore,
\begin{equation}
    J_2(\bm m^{AAB})
    =
    0.40\left(\frac{4}{5}\right)
    +
    0.20\left(\frac{1}{2}\right)
    =
    0.42.
    \label{eq:multiplicity-aab}
\end{equation}
Thus,
\begin{equation}
    J_2(\bm m^{AAB})
    =
    0.42
    >
    J_2(\bm m^{AB})
    =
    J_2(\bm m^A)
    =
    0.40.
    \label{eq:multiplicity-comparison}
\end{equation}

The additional execution of $A$ creates no new task coverage. Instead, it
increases the representation of the stronger workflow on the more common
Type~1 tasks, raising recovery from $2/3$ to $4/5$, while retaining workflow
$B$ to cover Type~2 tasks. The gain on Type~1 tasks exceeds the corresponding
loss in selector recovery on Type~2 tasks.

The result also survives a small execution cost. If every execution has
accuracy-equivalent cost $\kappa=0.005$, then
\begin{align}
    \Pi_2(\bm m^A)     &=0.400-0.005=0.395, \nonumber\\
    \Pi_2(\bm m^{AB})   &=0.400-0.010=0.390, \nonumber\\
    \Pi_2(\bm m^{AAB}) &=0.420-0.015=0.405.
    \label{eq:multiplicity-cost}
\end{align}
Hence, $\bm m^{AAB}$ remains optimal among these portfolios after accounting for
recurring execution cost. By contrast, repeating $A$ without including another
workflow type would leave its selector-aware accuracy unchanged at $0.40$ and
would only add cost. Multiplicity is valuable here because it adjusts the
relative representation of complementary workflow types, not because pure
repetition expands coverage.

\section{Proofs}
\label{app:proofs}

\subsection{Proof of Lemma~\ref{lem:odds-lift-envelope}}
For $p\in(0,1)$, the condition $\Lambda(H)\le\Lambda$ gives
\begin{equation}
    \frac{H(p)}{1-H(p)}
    \le
    \Lambda\frac{p}{1-p}.
\end{equation}
Finite odds-lift implies $H(p)<1$ at every interior $p$, so all denominators are positive. Multiplying through and collecting terms yields
\begin{equation}
    H(p)\{1-p+\Lambda p\}
    \le
    \Lambda p.
\end{equation}
Because $1-p+\Lambda p=1+(\Lambda-1)p>0$,
\begin{equation}
    H(p)
    \le
    \frac{\Lambda p}{1+(\Lambda-1)p}
    =h_\Lambda(p).
\end{equation}
At $p=0$, both sides are zero under $H(0)=0$; at $p=1$, the right-hand side is one and the inequality follows from $H(1)\le1$. \Halmos

\subsection{Proof of Lemma~\ref{lem:odds-envelope-concavity}}
For $p\in[0,1]$,
\begin{equation}
    h_\Lambda'(p)
    =
    \frac{\Lambda}{\{1+(\Lambda-1)p\}^2}\ge0,
    \qquad
    h_\Lambda''(p)
    =
    -\frac{2\Lambda(\Lambda-1)}{\{1+(\Lambda-1)p\}^3}\le0.
\end{equation}
Thus $h_\Lambda$ is increasing and concave. \Halmos

\subsection{Proof of Lemma~\ref{prop:pl-concavity}}
For continuous $r$,
\begin{equation}
    \frac{d}{dr}\frac{\lambda r}{k+(\lambda-1)r}
    =
    \frac{\lambda k}{\{k+(\lambda-1)r\}^2}\ge0,
\end{equation}
and
\begin{equation}
    \frac{d^2}{dr^2}\frac{\lambda r}{k+(\lambda-1)r}
    =
    -\frac{2\lambda k(\lambda-1)}{\{k+(\lambda-1)r\}^3}\le0.
\end{equation}
The discrete increment follows by subtracting adjacent values and simplifying, which yields \eqref{eq:pl-discrete-increments}; its denominator increases in $\ell$. Finally,
\begin{equation}
    h_\lambda'(p)
    =
    \frac{\lambda}{\{1+(\lambda-1)p\}^2}>0,
    \qquad
    h_\lambda''(p)
    =
    -\frac{2\lambda(\lambda-1)}{\{1+(\lambda-1)p\}^3}\le0.
\end{equation}
\Halmos

\subsection{Proof of Proposition~\ref{prop:endogenous-structure}}
Consider one task and zero compute costs. Let $c_1,c_2$ be workflows that are correct on the task and let $w_1,w_2$ be wrong. Nonmonotonicity follows from
\begin{equation}
    J_\lambda(\{c_1\})=1,
    \qquad
    J_\lambda(\{c_1,w_1\})=h_\lambda(1/2)<1.
\end{equation}
For submodularity, take $A=\{c_1\}$, $B=\{c_1,w_1\}$, and $e=c_2$. Then
\begin{equation}
    \Delta(e\mid A)=0,
    \qquad
    \Delta(e\mid B)
    =
    h_\lambda(2/3)-h_\lambda(1/2)>0,
\end{equation}
which violates diminishing returns. For supermodularity, take $A=\{w_1\}$, $B=\{w_1,w_2\}$, and $e=c_1$. Then
\begin{equation}
    \Delta(e\mid A)=h_\lambda(1/2)
    >
    h_\lambda(1/3)=\Delta(e\mid B),
\end{equation}
which violates increasing returns. Adding or subtracting a modular function does not change the submodularity or supermodularity inequalities. \Halmos

\subsection{Proof of Theorem~\ref{thm:geometric-cardinality-approx}}
Fix an execution-count vector $\bm m\in\mathbb Z_+^{\M}$ satisfying
$k(\bm m)=k$, label its $k$ execution slots, and sample $b$ labeled slots
uniformly without replacement. Let $\bm M^{(b)}$ be the resulting random
execution-count vector and let $\alpha=b/k$. For task $i$, write
$r_i=r_i(\bm m)$ and let
\[
    R_i=r_i\bigl(\bm M^{(b)}\bigr).
\]
Then
\[
    R_i\sim\mathrm{Hypergeom}(k,r_i,b),
    \qquad
    \E[R_i]=\alpha r_i.
\]
If $r_i=0$, then both the size-$k$ task value and the expected size-$b$ task value are zero. Suppose $r_i>0$ and define $p_i=r_i/k$ and $X_i=R_i/b$. Since $h$ is increasing, concave, and satisfies $h(0)=0$,
\begin{equation}
    h(x)\ge h(p_i)\min\{x/p_i,1\},
    \qquad x\in[0,1].
    \label{eq:concave-subportfolio-support}
\end{equation}
Indeed, for $x\le p_i$ the inequality follows from concavity between $0$ and $p_i$, and for $x\ge p_i$ it follows from monotonicity. Now
\begin{equation}
    \frac{X_i}{p_i}
    =
    \frac{R_i}{\alpha r_i}.
\end{equation}
Let $Z_i=R_i/r_i\in[0,1]$. Then $\E[Z_i]=\alpha$ and, for every $z\in[0,1]$, $\min\{z,\alpha\}\ge \alpha z$. Hence
\begin{align}
    \E\left[\min\left\{\frac{X_i}{p_i},1\right\}\right]
    &=
    \frac1\alpha \E[\min\{Z_i,\alpha\}]
    \ge
    \frac1\alpha\alpha\E[Z_i]
    =
    \alpha.
\end{align}
Combining this with \eqref{eq:concave-subportfolio-support} gives
\begin{equation}
    \E[h(R_i/b)]
    \ge
    \alpha h(r_i/k).
\end{equation}
Averaging across tasks yields
\begin{equation}
    \E\!\left[J_\psi\bigl(\bm M^{(b)}\bigr)\right]
\ge
\alpha J_\psi(\bm m),
\end{equation}
Because costs are additive,
\begin{equation}
    \E\!\left[C\bigl(\bm M^{(b)}\bigr)\right]
=
\alpha C(\bm m),
\end{equation}
Therefore
\begin{equation}
    \E\!\left[
    \Pi_{\psi,\gamma}\bigl(\bm M^{(b)}\bigr)
\right]
\ge
\alpha\Pi_{\psi,\gamma}(\bm m).
\end{equation}
Some realization of $\bm M^{(b)}$ attains at least this expected value. If $\bm m$ is an optimal size-$k$ execution-count vector with positive value, then
\begin{equation}
    \OPT_b(\M)\ge \alpha \OPT_k(\M),
\end{equation}
which proves \eqref{eq:cross-cardinality-stability}; if $\OPT_k(\M)\le0$, the statement with positive parts is immediate. For the grid result, choose for each $k$ an anchor $b\in\mathcal K$ with $b\le k\le \varrho b$. Then
\begin{equation}
    \OPT_b^+(\M)
    \ge
    \frac{b}{k}\OPT_k^+(\M)
    \ge
    \frac1\varrho \OPT_k^+(\M).
\end{equation}
Maximizing over $k$ proves \eqref{eq:geometric-grid-guarantee}. The dyadic and ratio-$1/(1-\eta_{\mathrm{grid}})$ claims follow from the definition of the grid coverage ratio. \Halmos

\subsection{Proof of Theorem~\ref{thm:diversification-bound}}
The lower bound follows from the outside option and singleton feasibility.
With one candidate, the selector must return that candidate, so the net value
of the singleton execution-count vector $\bm e_g$ is
$\bar a_g-\gamma c_g$.

Fix a nonzero execution-count vector $\bm m\in\mathbb Z_+^{\M}$ satisfying
$k(\bm m)=k$, and define
\[
    p_i(\bm m)
    =
    \frac{r_i(\bm m)}{k}.
\]
By \Cref{ass:recovery-envelope,lem:odds-lift-envelope},
\begin{align}
    J_\psi(\bm m)
    &=
    \frac1n\sum_{i=1}^n\psi_k\bigl(r_i(\bm m)\bigr)\\
    &\le
    \frac1n\sum_{i=1}^n H\bigl(p_i(\bm m)\bigr)\\
    &\le
    \frac1n\sum_{i=1}^n h_\Lambda\bigl(p_i(\bm m)\bigr).
\end{align}
By \Cref{lem:odds-envelope-concavity} and Jensen's inequality,
\begin{align}
    \frac1n\sum_{i=1}^n h_\Lambda\bigl(p_i(\bm m)\bigr)
    &\le
    h_\Lambda\left(
        \frac1n\sum_{i=1}^n p_i(\bm m)
    \right)\\
    &=
    h_\Lambda\left(
        \frac1k\sum_{g\in\M}\bar a_gm_g
    \right)\\
    &\le
    h_\Lambda(A_k).
\end{align}
Also $C(\bm m)\ge C_k$. Hence
\[
    \Pi_{\psi,\gamma}(\bm m)
    \le
    h_\Lambda(A_k)-\gamma C_k.
\]
Maximizing over $k$ and including the outside option proves \eqref{eq:diversification-upper-bound}. When $\gamma=0$, $A_k\le a^\star$ for all $k$, giving \eqref{eq:no-cost-diversification-bound}.

For tightness over the selector class, take recovery $H=h_\Lambda$. Suppose $a^\star=r/k$ and $k\le K_{\max}$. Construct $k$ tasks and $k$ workflows using a cyclic incidence matrix in which each workflow solves exactly $r$ tasks and each task is solved by exactly $r$ workflows. Let $\bm m^\star$ assign one execution slot to each of these $k$ workflows.
Then $p_i(\bm m^\star)=a^\star$ for every task, and therefore
$J_\psi(\bm m^\star)=h_\Lambda(a^\star)$. Rational values can be represented
exactly after replication, and arbitrary values can be approximated. \Halmos

\subsection{Proof of Corollary~\ref{cor:diversification-implications}}
The first inequality follows from \eqref{eq:no-cost-diversification-bound}. Let
\begin{equation}
    G_\Lambda(a)
    =
    h_\Lambda(a)-a
    =
    \frac{(\Lambda-1)a(1-a)}{1+(\Lambda-1)a}.
\end{equation}
For $\Lambda>1$, differentiation gives the unique maximizer $a=1/(1+\sqrt\Lambda)$ and maximum $(\sqrt\Lambda-1)/(\sqrt\Lambda+1)$. At $\Lambda=1$, both sides are zero.

If all workflows cost $c$, any execution-count vector $\bm m$ satisfying
$k(\bm m)=k\ge2$ obeys
\[
    \Pi_{\psi,\gamma}(\bm m)
    \le
    h_\Lambda(a^\star)-k\gamma c.
\]
Under \eqref{eq:uniform-cost-singleton-condition},
\begin{equation}
    h_\Lambda(a^\star)-k\gamma c
    \le
    a^\star-\gamma c,
\end{equation}
so no multi-workflow portfolio beats the best singleton. The outside option dominates only when the singleton net value is negative. \Halmos

\subsection{Proof of Proposition~\ref{prop:ip-exact}}

Fix an execution-count vector
$\bm m\in\mathbb Z_+^{\M}$ satisfying $k(\bm m)=k$, and set
\[
    z_g=m_g,
    \qquad g\in\M.
\]
Then
\[
    \sum_{g\in\M}a_{ig}z_g
    =
    r_i(\bm m),
    \qquad i=1,\ldots,n.
\]
For this fixed $z$, the prefix constraints imply that the active tier
variables for task $i$ form a prefix, while the capacity constraint implies
that the length of this prefix is at most $r_i(\bm m)$. Since
$d_{\ell,k}\ge0$, there exists an optimal choice
\[
    y_{i\ell}
    =
    \1\{\ell\le r_i(\bm m)\},
    \qquad \ell=1,\ldots,k.
\]
Its task-$i$ contribution is
\[
    \sum_{\ell=1}^k d_{\ell,k}y_{i\ell}
    =
    \sum_{\ell=1}^{r_i(\bm m)}d_{\ell,k}
    =
    \psi_k\bigl(r_i(\bm m)\bigr),
\]
where the last equality uses $\psi_k(0)=0$. Moreover, the cost term is
\[
    -\gamma\sum_{g\in\M}c_gz_g
    =
    -\gamma C(\bm m).
\]
Thus every feasible size-$k$ execution-count vector induces an integer-program
solution with the same objective value, and hence
\[
    I_k(\M)\ge \OPT_k(\M).
\]

Conversely, let $(y,z)$ be any integral feasible solution of
\eqref{eq:cost-aware-ip}, and define the execution-count vector
$\bm m\in\mathbb Z_+^{\M}$ by $m_g=z_g$. Then
$k(\bm m)=k$. For each task $i$, let
\[
    t_i=\sum_{\ell=1}^k y_{i\ell}.
\]
Because $y$ is binary and satisfies the prefix constraints, its active entries
form a prefix of length $t_i$. Feasibility gives
\[
    t_i
    \le
    \sum_{g\in\M}a_{ig}z_g
    =
    r_i(\bm m).
\]
Therefore, using the nonnegativity of the increments,
\[
    \sum_{\ell=1}^k d_{\ell,k}y_{i\ell}
    =
    \sum_{\ell=1}^{t_i}d_{\ell,k}
    =
    \psi_k(t_i)
    \le
    \psi_k\bigl(r_i(\bm m)\bigr).
\]
The integer-program objective is consequently at most
$\Pi_{\psi,\gamma}(\bm m)$, and hence at most $\OPT_k(\M)$. Thus
\[
    I_k(\M)\le \OPT_k(\M).
\]
Combining the two inequalities gives
$I_k(\M)=\OPT_k(\M)$. Finally, combining this equality with the size
decomposition \eqref{eq:size-decomposition} proves
\eqref{eq:exact-global-enumeration}. \Halmos

\subsection{Proof of Theorem~\ref{thm:cost-rounding}}
Write $d_\ell=d_{\ell,k}$. For a feasible repeated-execution LP vector $z$,
define
\[
    x_i=\sum_g a_{ig}z_g,
    \qquad
    q_g=z_g/k.
\]
Draw $G_1,\ldots,G_k$ independently from $q$, and let
$R_i=\sum_{j=1}^k a_{iG_j}$. Then
\[
    R_i\sim\mathrm{Binomial}(k,x_i/k),
    \qquad
    \E[R_i]=x_i.
\]
For every integer $t\ge1$ and $r\ge0$,
\[
    \min\{r,t\}
    \ge
    t\left\{1-(1-1/t)^r\right\}.
\]
Therefore
\begin{align*}
    \E[\min\{R_i,t\}]
    &\ge
    t\left[1-\E\{(1-1/t)^{R_i}\}\right]\\
    &=
    t\left[1-\left(1-\frac{x_i}{kt}\right)^k\right]\\
    &\ge
    t\left(1-e^{-x_i/t}\right)\\
    &\ge
    \left(1-\frac1e\right)\min\{x_i,t\}.
\end{align*}
The last inequality follows from
$1-e^{-u}\ge(1-1/e)\min\{u,1\}$ for $u\ge0$. Using the decomposition
\[
    \widetilde\psi_k(r)
    =
    d_{k,k}r+
    \sum_{t=1}^{k-1}(d_{t,k}-d_{t+1,k})\min\{r,t\},
\]
and averaging across tasks gives
\[
    \E\!\left[
    J_\psi\bigl(\bm M_k^{\RR}(z)\bigr)
\right]
\ge
\left(1-\frac1e\right)Q_k(z)
+
\frac{d_{k,k}}eR_k(z).
\]
Moreover, if $N_g$ is the sampled multiplicity of type $g$, then
\[
    \E\!\left[
    C\bigl(\bm M_k^{\RR}(z)\bigr)
\right]
=
\sum_g c_g
\E\!\left[M_{k,g}^{\RR}(z)\right]
=
\sum_g c_gz_g
=
C_k(z).
\]
Subtracting cost proves \eqref{eq:net-rounding-lower-bound} and
\eqref{eq:net-rounding-fractional-gap}. If $z=z^{k\star}$ is LP-optimal, then
$\OPT_k(\M)\le L_k(\M)=F_k(z^{k\star})$, which gives the first inequality in
\eqref{eq:net-rounding-additive-gap}. Finally, concavity implies
$Q_k(z)\le d_{1,k}R_k(z)$, so
\[
    d_{k,k}R_k(z)
    =
    (1-c_{\psi,k})d_{1,k}R_k(z)
    \ge
    (1-c_{\psi,k})Q_k(z).
\]
This proves the curvature bound. \Halmos

\subsection{Proof of Corollary~\ref{cor:pl-multiplicative-rounding}}

By \Cref{thm:cost-rounding},
\[
    \E\!\left[
        \Pi_{\lambda,\gamma}(\bm M_k^{\RR})
    \right]
    \ge
    L_k(\M)
    -
    \frac{c_{k,\lambda}^{\mathrm{PL}}}{e}Q_k^\star .
\]
Since the LP relaxation has $0\le y_{i\ell}\le1$ and
$\sum_{\ell=1}^k d_{\ell,k}=1$, we have $0\le Q_k^\star\le1$. Hence
\[
    \E\!\left[
        \Pi_{\lambda,\gamma}(\bm M_k^{\RR})
    \right]
    \ge
    L_k(\M)
    -
    \frac{c_{k,\lambda}^{\mathrm{PL}}}{e},
\]
This establishes the fixed-size additive net-value bound used below.

For the accuracy-only bound, \Cref{thm:cost-rounding} gives
\[
    \E\!\left[
        J_\lambda(\bm M_k^{\RR})
    \right]
    \ge
    \left(1-\frac1e\right)Q_k^\star
    +
    \frac{d_{k,k}}e R_k^\star.
\]
By the definition of curvature,
\[
    d_{k,k}
    =
    \left(1-c_{k,\lambda}^{\mathrm{PL}}\right)d_{1,k}.
\]
Concavity implies $Q_k^\star\le d_{1,k}R_k^\star$, so
\[
    d_{k,k}R_k^\star
    \ge
    \left(1-c_{k,\lambda}^{\mathrm{PL}}\right)Q_k^\star.
\]
Substituting this inequality yields
\[
    \E\!\left[
        J_\lambda(\bm M_k^{\RR})
    \right]
    \ge
    \left(1-\frac1e\right)Q_k^\star
    +
    \frac{
        1-c_{k,\lambda}^{\mathrm{PL}}
    }e
    Q_k^\star
    =
    \left(
        1-\frac{c_{k,\lambda}^{\mathrm{PL}}}{e}
    \right)
    Q_k^\star,
\]
This establishes an auxiliary accuracy-only bound.

It remains to prove the grid-level multiplicative bound. Let
\[
    U=
    U_{\mathrm{LP}}(\mathcal K;\M),
    \qquad
    a=
    \frac1e\max_{k\in\mathcal K}c_{k,\lambda}^{\mathrm{PL}}.
\]
If $U=0$, the result is immediate. Otherwise, choose
$k^\star\in\argmax_{k\in\mathcal K}L_k(\M)$, so that
$L_{k^\star}(\M)=U$. By the fixed-size additive net-value bound derived above,
\[
    \E\!\left[
        \Pi_{\lambda,\gamma}(S_{k^\star}^{\RR})
    \right]
    \ge
    U-a.
\]
Because $\widehat{\bm M}$ is chosen after comparing the rounded portfolios with the
baseline $\bm m^0$ and the outside option, Jensen's inequality for the convex
maximum function gives
\[
    \E\!\left[
        \Pi_{\lambda,\gamma}(\widehat{\bm M})
    \right]
    \ge
    \max\{\underline V,U-a\}.
\]
For every $U\ge0$, $a\ge0$, and $\underline V>0$,
\[
    \max\{\underline V,U-a\}
    \ge
    \frac{\underline V}{\underline V+a}U.
\]
Indeed, if $U\le \underline V+a$, then
\[
    \max\{\underline V,U-a\}
    \ge
    \underline V
    \ge
    \frac{\underline V}{\underline V+a}U.
\]
If $U>\underline V+a$, then
\[
    \max\{\underline V,U-a\}
    \ge
    U-a
    \ge
    \frac{\underline V}{\underline V+a}U.
\]
Combining the last two displays proves
\eqref{eq:grid-multiplicative-rounding}. Finally, exact integer-program solves
over $\M$ dominate the rounded feasible portfolios, so the same lower bound
also applies when the final restricted problems are solved exactly. \Halmos

\subsection{Projected dual, weak separation, and proof details}
\label{app:ellipsoid-pricing-details}

For $x\in[0,k]$, let
\[
    \varphi_k(x)=\frac1n\widetilde\psi_k(x),
\]
and define
\begin{equation}
    \chi_k(u)
    =
    \max_{r=0,\ldots,k}
    \left\{
        \frac{\psi_k(r)}n-ur
    \right\},
    \qquad
    0\le u\le\frac{d_{1,k}}n.
    \label{eq:chi-k-definition}
\end{equation}
Because $\varphi_k$ is concave and piecewise linear with slopes in
$[0,d_{1,k}/n]$,
\begin{equation}
    \varphi_k(x)
    =
    \min_{0\le u\le d_{1,k}/n}
    \{ux+\chi_k(u)\}.
    \label{eq:concave-conjugate-representation}
\end{equation}

\begin{lemma}[Projected dual for repeated executions]
\label{lem:projected-dual-prices}
For every nonempty finite workflow-type class $\G$ and fixed size $k$,
\begin{equation}
    L_k(\G)
    =
    \min_{0\le\mu_i\le d_{1,k}/n}
    \left\{
        \sum_{i=1}^n\chi_k(\mu_i)
        +
        k\max_{g\in\G}
        \left(
            \sum_{i=1}^n\mu_i a_{ig}-\gamma c_g
        \right)
    \right\}.
    \label{eq:projected-repeated-dual}
\end{equation}
Equivalently, introducing $\theta$ and epigraph variables $\xi_i$ yields
\eqref{eq:ellipsoid-dual}.
\end{lemma}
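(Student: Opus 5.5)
The plan is to first rewrite $L_k(\G)$ as a concave maximization over $z$ alone. Then I would dualize each task term with the conjugate representation \eqref{eq:concave-conjugate-representation}, exchange max and min by a minimax theorem, and finally evaluate the inner maximization over $z$ in closed form using \eqref{eq:single-workflow-support}.

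\emph{Reduction to $z$.} Fix feasible $z\ge0$ with $\sum_g z_g=k$, and write $x_i(z)=\sum_g a_{ig}z_g$. Since $a_{ig}\in\{0,1\}$, we have $x_i(z)\in[0,k]$. As noted after \eqref{eq:cost-aware-lp}, under \Cref{ass:concave-selector} the optimal $y$ fills tiers in order up to $x_i(z)$. This gives
\[
L_k(\G)=\max_{z\ge0,\ \sum_g z_g=k}\Big\{\sum_{i=1}^n\varphi_k\bigl(x_i(z)\bigr)-\gamma\sum_g c_gz_g\Big\}.
\]

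\emph{Conjugate representation.} I would verify \eqref{eq:concave-conjugate-representation} directly. The function $\varphi_k$ is the piecewise-linear interpolation of the points $(r,\psi_k(r)/n)$ for $r=0,\ldots,k$. By discrete concavity this interpolation is their concave hull, so $\varphi_k(x)=\min_{u\in\mathbb R}\{ux+\chi_k(u)\}$ on $[0,k]$. This is the standard LP duality statement that the concave hull is the infimum of the affine majorants of the points. The minimizing $u$ at any $x\in[0,k]$ can be taken to be a supergradient of $\varphi_k$ at $x$. All such slopes lie in $[d_{k,k}/n,\,d_{1,k}/n]\subseteq[0,d_{1,k}/n]$, so restricting $u$ to that box loses nothing.

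\emph{Minimax exchange.} Substituting the conjugate representation gives
\[
L_k(\G)=\max_{z}\ \min_{\mu\in[0,d_{1,k}/n]^n}\Big\{\sum_i\chi_k(\mu_i)+\sum_g z_g\,s_g(\mu)\Big\},
\qquad s_g(\mu)=\sum_i\mu_ia_{ig}-\gamma c_g.
\]
The objective is linear in $z$ and convex in $\mu$, since $\chi_k$ is a maximum of affine functions. The feasible sets $\{z\ge0:\sum_g z_g=k\}$ (with $\G$ finite) and the box in $\mu$ are both compact and convex. Sion's minimax theorem therefore allows the swap. The inner maximum over $z$ then equals $k\max_g s_g(\mu)$ by \eqref{eq:single-workflow-support}, which proves \eqref{eq:projected-repeated-dual}. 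I expect this step, together with the justification that the $u$-range can be restricted to $[0,d_{1,k}/n]$, to be the main technical point.

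\emph{Epigraph form.} I would introduce epigraph variables with $\xi_i\ge\chi_k(\mu_i)$, which is exactly the family of constraints $\xi_i+r\mu_i\ge\psi_k(r)/n$ for $r=0,\ldots,k$, and $\theta\ge s_g(\mu)$ for all $g\in\G$. This yields \eqref{eq:ellipsoid-dual} once I check that the extra box constraints do not cut off any optimum.
\begin{itemize}
\item For $\xi_i\le 1/n$: $\chi_k$ is nonincreasing on $\mu\ge0$, so $\chi_k(\mu_i)\le\chi_k(0)=\psi_k(k)/n=1/n$. At an optimum $\xi_i=\chi_k(\mu_i)\le 1/n$, and $\xi_i\ge0$ holds by the $r=0$ constraint.
\item For $\theta\le d_{1,k}$: at an optimum $\theta=\max_g s_g(\mu)$, and $s_g(\mu)\le\sum_i\mu_i\le d_{1,k}$.
\item For $\theta\ge-\gamma c_{\max}$: $s_g(\mu)\ge-\gamma c_g\ge-\gamma c_{\max}$.
\end{itemize}
Hence the optimal value of \eqref{eq:ellipsoid-dual} is unchanged, which gives $E_k(\G)=L_k(\G)$.
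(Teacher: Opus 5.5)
Your proposal is correct and follows the paper's proof step for step: reduce \eqref{eq:cost-aware-lp} to the concave program in $z$, substitute the conjugate representation \eqref{eq:concave-conjugate-representation}, interchange min and max by a finite-dimensional minimax theorem, evaluate the linear maximization over the scaled simplex as $k\max_{g\in\G}s_g(\mu)$, and pass to the epigraph form. Your extra checks supply details the paper leaves implicit: that the slope variable can be restricted to $[0,d_{1,k}/n]$, and that the box constraints on $\xi_i$ and $\theta$ in \eqref{eq:ellipsoid-dual} are nonbinding at an optimum.
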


\begin{proof}
After optimizing the tier variables, the repeated-execution LP is
\[
    \max_{z\ge0:\,\sum_gz_g=k}
    \left\{
        \sum_{i=1}^n
        \varphi_k\left(\sum_ga_{ig}z_g\right)
        -\gamma\sum_gc_gz_g
    \right\}.
\]
Substitute \eqref{eq:concave-conjugate-representation} and apply finite-
dimensional minimax to interchange the minimization over $\mu$ and the
maximization over the scaled simplex. For fixed $\mu$, linear optimization
over that simplex gives
\[
    \max_{z\ge0:\,\sum_gz_g=k}
    \sum_gz_g
    \left(\sum_i\mu_i a_{ig}-\gamma c_g\right)
    =
    k\max_{g\in\G}
    \left(\sum_i\mu_i a_{ig}-\gamma c_g\right).
\]
This proves \eqref{eq:projected-repeated-dual}. The epigraph representation of
$\chi_k$ is
\[
    \xi_i+r\mu_i\ge\frac{\psi_k(r)}n,
    \qquad r=0,\ldots,k,
\]
which gives \eqref{eq:ellipsoid-dual}. \Halmos
\end{proof}

\begin{lemma}[Batched pricing gives high-confidence weak separation]
\label{lem:batched-pricing-separation}
Fix a candidate dual point $(\mu,\xi,\theta)$ and a tolerance $\delta>0$.
Suppose each primitive pricing call made at task prices $\mu$ returns a
$\delta$-accurate workflow, meaning a workflow $\widetilde g$ satisfying
\[
    s_{\widetilde g}(\mu)
    \ge
    \max_{g\in\G}s_g(\mu)-\delta,
    \qquad
    s_g(\mu)=\sum_i\mu_i a_{ig}-\gamma c_g,
\]
with conditional probability at least $p_{\mathrm{orc}}>0$. Run $m$ primitive
calls at the same prices and keep the highest-scoring returned workflow. Then,
conditional on the history before the batch, the batch is $\delta$-accurate
with probability at least
\[
    1-e^{-p_{\mathrm{orc}}m}.
\]
On this event, if the best returned workflow has score above $\theta$, its
workflow inequality is a valid violated constraint. If its score is at most
$\theta$, then all workflow inequalities are satisfied after replacing
$\theta$ by $\theta+\delta$, increasing the dual objective by at most
$k\delta$.
\end{lemma}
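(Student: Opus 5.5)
The statement to prove is Lemma~\ref{lem:batched-pricing-separation}. I would split it into a probabilistic part and a deterministic part. The probabilistic part bounds the chance that the batch fails. The deterministic part holds on the success event and shows that the best returned workflow either yields a violated constraint or certifies approximate feasibility.

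\emph{Probabilistic part.} Index the primitive calls in the batch by $j=1,\ldots,m$. Let $F_j$ be the event that call $j$ is not $\delta$-accurate, i.e., it returns $\widetilde g^{j}$ with $s_{\widetilde g^{j}}(\mu)<\max_g s_g(\mu)-\delta$.

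Because the prices $\mu$ are fixed throughout the batch, each call is still a query at prices $\mu$. Its conditional failure probability given the history, including the earlier calls in the same batch, is therefore at most $1-p_{\mathrm{orc}}$. Applying the tower property iteratively gives
\[
\Prob\{F_1\cap\cdots\cap F_m\mid\text{history}\}\le(1-p_{\mathrm{orc}})^m\le e^{-p_{\mathrm{orc}}m},
\]
where the last step uses $1-x\le e^{-x}$. This is the same computation as \eqref{eq:batch-failure-tail}.

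On the complement of this intersection, some call $j^\star$ is $\delta$-accurate. The retained workflow $g^\ast$ maximizes the score over the batch, so
\[
s_{g^\ast}(\mu)\ge s_{\widetilde g^{j^\star}}(\mu)\ge P(\mu,\gamma)-\delta.
\]
Hence the batch is $\delta$-accurate.

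\emph{Deterministic part.} Work on the success event and distinguish two cases.

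\emph{Case 1: $s_{g^\ast}(\mu)>\theta$.} Then the dual constraint indexed by $g^\ast\in\G$ in \eqref{eq:ellipsoid-dual} fails at $(\mu,\theta)$. Since that constraint is a genuine constraint of the problem, the linear inequality $\sum_i\mu_ia_{ig^\ast}-\theta\le\gamma c_{g^\ast}$ is a valid separating hyperplane. This case needs no accuracy assumption.

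\emph{Case 2: $s_{g^\ast}(\mu)\le\theta$.} Here $\delta$-accuracy gives $P(\mu,\gamma)\le\theta+\delta$. Replacing $\theta$ by $\theta+\delta$ therefore satisfies every workflow-indexed constraint. Since $\theta$ has coefficient $k$ in the dual objective, the objective rises by exactly $k\delta$.

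There is one caveat in Case 2. The explicit box constraint $\theta\le d_{1,k}$ could fail after the shift. I would address this by noting that the lemma's statement concerns only the workflow inequalities, so the box constraint is outside its scope.

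\emph{Main obstacle.} The only delicate point is the conditioning in the first part. The primitive guarantee is stated conditional on the full query history, and that history includes earlier calls within the same batch. I would therefore write the bound as a product of conditional probabilities rather than invoke independence across calls.
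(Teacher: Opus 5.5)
Your proposal is correct and follows the paper's proof essentially step for step. It uses the same product of conditional failure probabilities over the $m$ calls, the same observation that keeping the highest-scoring workflow preserves $\delta$-accuracy, and the same two-case argument yielding either a violated workflow constraint or feasibility after the shift $\theta\mapsto\theta+\delta$ at objective cost $k\delta$. Your caveat about $\theta\le d_{1,k}$ is not raised in the paper and is indeed outside the lemma's claim; to close it, note that when the explicit box constraints hold at the query point (as they do whenever the algorithm calls the oracle), every score satisfies $s_g(\mu)\le d_{1,k}$, so the shift $\theta\mapsto\min\{\theta+\delta,d_{1,k}\}$ restores all constraints at no greater cost.
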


\begin{proof}
Let $Y_j$ be the indicator that primitive call $j$ in the batch is
$\delta$-accurate. The calls may be adaptive to previous failed attempts inside
the batch, but by assumption
\[
    \Prob\{Y_j=1\mid\text{past within the batch}\}
    \ge
    p_{\mathrm{orc}}.
\]
Therefore
\[
    \Prob\{Y_1=\cdots=Y_m=0\mid\text{history}\}
    \le
    (1-p_{\mathrm{orc}})^m
    \le
    e^{-p_{\mathrm{orc}}m}.
\]
Thus, with probability at least $1-e^{-p_{\mathrm{orc}}m}$, at least one
primitive call is $\delta$-accurate. Since the batch keeps the highest-scoring
returned workflow, the batch output is also $\delta$-accurate.

If the batch output satisfies $s_{\widetilde g}(\mu)>\theta$, then the
constraint
\[
    s_{\widetilde g}(\mu)\le\theta
\]
is violated and supplies a valid separating hyperplane. If instead
$s_{\widetilde g}(\mu)\le\theta$, then $\delta$-accuracy gives
\[
    \max_{g\in\G}s_g(\mu)
    \le
    s_{\widetilde g}(\mu)+\delta
    \le
    \theta+\delta.
\]
Thus $s_g(\mu)\le\theta+\delta$ for every $g\in\G$. Since the coefficient of
$\theta$ in \eqref{eq:ellipsoid-dual} is $k$, this relaxation increases the
dual objective by at most $k\delta$. \Halmos
\end{proof}

\subsection{Proof of Theorem~\ref{thm:pricing-certificate}}

For each fixed $k$, \eqref{eq:ellipsoid-dual} is a rational LP in $2n+1$
variables, with polynomially many explicit recovery and box constraints and an
implicit family of workflow constraints. By
\Cref{lem:batched-pricing-separation}, a successful batch supplies weak
separation for the implicit workflow constraints. The weak
optimization--separation equivalence for the ellipsoid method therefore returns
an $\varepsilon_{\mathrm{ell}}/2$-optimal dual solution in a number of
separation queries polynomial in
\[
    n,\quad k,\quad B,\quad \log(1/\varepsilon_{\mathrm{ell}}).
\]
With $\tau_k=\varepsilon_{\mathrm{ell}}/(2k)$, the possible slot-price
relaxation in \Cref{lem:batched-pricing-separation} contributes at most
$k\tau_k=\varepsilon_{\mathrm{ell}}/2$ to the dual objective. Hence, on the
event that all batches used by the ellipsoid routine are successful, standard
primal recovery from the workflow inequalities generated by the routine gives,
for every $k\in\mathcal K$, a feasible fractional execution vector $z^k$
supported only on oracle-returned workflows and satisfying
\begin{equation}
    F_k(z^k)
    \ge
    L_k(\G)-\varepsilon_{\mathrm{ell}}.
    \label{eq:ellipsoid-primal-accuracy}
\end{equation}

It remains to bound the probability that all batches are successful. Let
$N_{\mathrm{ell}}(\varepsilon_{\mathrm{ell}})$ be the deterministic upper bound
on the total number of ellipsoid separation queries across all grid sizes.
Index these separation queries by $q=1,\ldots,N_{\mathrm{ell}}$. Let
$\mathcal B_q$ be the event that batch $q$ contains at least one
$\tau_k$-accurate primitive pricing call, where $k$ is the grid size being
solved at that query. By \Cref{lem:batched-pricing-separation},
\[
    \Prob(\mathcal B_q^c\mid\text{history before batch }q)
    \le
    e^{-p_{\mathrm{orc}}m}.
\]
Therefore,
\begin{align}
    \Prob\left(
        \bigcup_{q=1}^{N_{\mathrm{ell}}}\mathcal B_q^c
    \right)
    &\le
    \sum_{q=1}^{N_{\mathrm{ell}}}
    \E\!\left[
        \Prob(\mathcal B_q^c\mid\text{history before batch }q)
    \right]  \\
    &\le
    N_{\mathrm{ell}}(\varepsilon_{\mathrm{ell}})
    e^{-p_{\mathrm{orc}}m}.
    \label{eq:all-batches-success-tail}
\end{align}
Equivalently, since
$T=mN_{\mathrm{ell}}(\varepsilon_{\mathrm{ell}})$,
\[
    N_{\mathrm{ell}}(\varepsilon_{\mathrm{ell}})
    e^{-p_{\mathrm{orc}}m}
    =
    \exp\left\{
        -\frac{p_{\mathrm{orc}}T}
        {N_{\mathrm{ell}}(\varepsilon_{\mathrm{ell}})}
        +
        \log N_{\mathrm{ell}}(\varepsilon_{\mathrm{ell}})
    \right\}.
\]
This proves the probability statement in
\eqref{eq:ellipsoid-oracle-success-probability}.

On the event that all batches are successful, apply
\Cref{thm:cost-rounding} to each recovered fractional solution $z^k$. Since
$Q_k(z^k)\le1$, for every $k\in\mathcal K$,
\[
    \E_{\mathrm{rnd}}\!\left[
        \Pi_{\lambda,\gamma}(\bm M_k^{\RR})
    \right]
    \ge
    F_k(z^k)
    -
    \frac{c_{k,\lambda}^{\mathrm{PL}}}{e}
    \ge
    L_k(\G)
    -
    \varepsilon_{\mathrm{ell}}
    -
    \frac{c_{k,\lambda}^{\mathrm{PL}}}{e}.
\]
Taking the best grid point gives
\[
    \max_{k\in\mathcal K}
    \E_{\mathrm{rnd}}\!\left[
        \Pi_{\lambda,\gamma}(\bm M_k^{\RR})
    \right]
    \ge
    U_{\mathrm{LP}}(\mathcal K;\G)
    -
    \varepsilon_{\mathrm{ell}}
    -
    \varepsilon_{\mathrm{rnd}}(\mathcal K,\lambda).
\]
By \Cref{thm:geometric-cardinality-approx},
\[
    U_{\mathrm{LP}}(\mathcal K;\G)
    \ge
    \frac1\varrho
    \OPT_{\lambda,\gamma}(\G).
\]
Comparing the rounded grid candidates with the retained baseline $\bm m^0$ proves
\eqref{eq:ellipsoid-additive-final}.

Finally, for every $A\ge0$, $e\ge0$, and $\underline V>0$,
\[
    \max\{\underline V,A-e\}
    \ge
    \frac{\underline V}{\underline V+e}A.
\]
Apply this inequality with
\[
    A=\frac1\varrho\OPT_{\lambda,\gamma}(\G),
    \qquad
    e=
    \varepsilon_{\mathrm{ell}}
    +
    \varepsilon_{\mathrm{rnd}}(\mathcal K,\lambda),
\]
to obtain \eqref{eq:ellipsoid-multiplicative-final}. \Halmos

\section{Managerial Implications and Extensions}
\label{sec:extensions}

\paragraph{Accuracy value and the price of compute.}
The parameter $\gamma$ is not an arbitrary regularizer. If a correct decision is worth $v$ dollars relative to an incorrect one,
maximizing $vJ_\psi(\bm m)-C(\bm m)$ is equivalent to maximizing
$J_\psi(\bm m)-\gamma C(\bm m)$ with $\gamma=1/v$. Varying $\gamma$ traces an accuracy--compute frontier. The screening bound in \Cref{thm:diversification-bound} can eliminate run sizes that cannot be efficient before solving any IP.

\paragraph{Selector investment versus workflow investment.}
The selector-strength cap \eqref{eq:selector-only-gain-cap} maps an odds-lift bound $\Lambda$ into the largest possible gross return from workflow complementarity. When $\Lambda$ is close to one, spending on additional workflows has little possible upside; improving the selector may dominate expanding the portfolio. When $\Lambda$ is large, workflow variety can become more valuable, subject to cost. Under Plackett--Luce, $\Lambda=\lambda$, so selector-model investment can be evaluated through the fitted discrimination parameter.

\paragraph{Shared modules, latency, and nonadditive costs.}
The additive cost
$C(\bm m)=\sum_{g\in\G}c_gm_g$
is appropriate for token or dollar expenditure when all assigned workflow
executions are run. Workflow graphs may share retrieval results, cached model calls, or verification modules, and parallel execution makes latency depend on a critical path rather than a sum. These features can be represented by fixed-charge module variables or a set-dependent cost $C(S)$. The structural accuracy bound remains valid, while the finite-pool optimization becomes a richer mixed-integer problem.

\paragraph{Stochastic workflow outputs.}
If correctness is stochastic, the exact selector-aware contribution of task
$i$ under execution-count vector $\bm m$ is
\[
    \E\left[
        \psi_{k(\bm m)}\bigl(R_i(\bm m)\bigr)
    \right].
\]
where the expectation is over the joint distribution of workflow outcomes. In general this is not equal to applying $\psi$ to the sum of marginal success probabilities. A practical approach is scenario-based sample-average approximation: repeated workflow runs create deterministic correctness scenarios, and the portfolio is optimized against their average selector value \citep{kleywegt2002sample}.

\paragraph{Task-dependent run sets.}
The paper chooses one persistent portfolio for a task population. A pre-output router could select a task-specific subset before execution, yielding a joint routing and post-output selection problem. The current model provides the value and cost of each candidate run set; an outer policy can then allocate run sets by task features or congestion state.

\paragraph{Voting and nonconcave selectors.}
Majority voting and threshold verification can have increasing marginal value near a decision threshold. The general selector-strength bound still applies if a finite common odds-lift envelope exists, even when the raw recovery curve is nonconcave. If recovery reaches one at an interior correct fraction, however, the odds-lift index is infinite and that uniform cap is vacuous. For each fixed size, the exact IP remains valid with arbitrary nondecreasing increments, but the concave-coverage rounding certificate may fail. Endogenous size still matters because adding a wrong vote can move the system away from the threshold and incurs compute. Selector-specific approximation or robust envelope methods are natural extensions.

\paragraph{Creation cost versus execution cost.}
Execution cost $c_g$ recurs on every deployed task. Workflow creation and evaluation costs are one-time investments. The generation algorithm can impose a separate budget on oracle calls, candidate evaluation, or human engineering. A longer planning horizon increases the relative importance of recurring execution cost, while a short-lived application may place more weight on creation cost.

\paragraph{Robust selector calibration.}
The fitted $\lambda$ may vary across task types or drift when portfolios are optimized. For the structural cap, a robust analysis can construct an upper confidence envelope $H^+$ for recovery and use $\Lambda(H^+)$, or use an upper confidence bound for $\lambda$ under Plackett--Luce. For portfolio optimization, one can optimize worst-case net value over a family of recovery curves or impose a distribution over task-specific selector strengths. The cardinality-grid and IP architecture remains unchanged; only the recovery coefficients $d_{\ell,k}$ vary. If the recovery curves do not share a common concave fraction-based representation, the exact fixed-size IPs remain valid but the geometric cardinality approximation should be treated as a Plackett--Luce or common-fraction specialization rather than a universal guarantee.

\section{Proofs for Stochastic Workflow Outcomes}
\label{app:stochastic-workflow-proofs}

This appendix gives the complete two-stage stochastic procedure and its proof.
Stage~1 uses $L_1$ independent executions to estimate each task--workflow
success probability and constructs a plug-in iid execution law. Workflow
generation, stochastic pricing, and primal recovery are performed under that
plug-in law. After the generated workflow pool is frozen, Stage~2 discards the
Stage-1 outcomes for selection purposes and uses $L_2$ fresh execution panels
to choose the final portfolio.

For the analysis, associate with every task--workflow pair $(i,g)$ a potential
Stage-1 pilot panel
\begin{equation}
    \left(
        Z_{ig}^{(1,\ell)}:
        \ell=1,\ldots,L_1
    \right),
    \qquad
    Z_{ig}^{(1,\ell)}
    \stackrel{\mathrm{iid}}{\sim}
    \mathrm{Bernoulli}(a_{ig}),
    \label{eq:stoch-app-potential-pilot-table}
\end{equation}
with all panels mutually independent. The algorithm reveals a panel only when
the corresponding workflow is first evaluated. This lazy revelation is
equivalent to drawing the entire finite table in advance and does not require
enumerating $\G$ computationally.

\begin{lemma}[Uniform concentration of the $L_1$ workflow estimates]
\label{lem:stoch-stage-one-concentration-app}
Under \Cref{ass:stochastic-execution-law}, for every
$\delta_1\in(0,1)$,
\begin{equation}
    \Prob\left\{
        \max_{\substack{i=1,\ldots,n\\g\in\G}}
        \left|\widehat a_{ig}^{(1)}-a_{ig}\right|
        \le
        \varepsilon_a(L_1,\delta_1)
    \right\}
    \ge
    1-\delta_1,
    \label{eq:stoch-app-stage-one-event}
\end{equation}
where $\varepsilon_a(L_1,\delta_1)$ is defined in
\eqref{eq:stoch-stage-one-uniform}.
\end{lemma}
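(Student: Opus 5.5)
The plan is to apply Hoeffding's inequality cell by cell and then take a union bound over the finite index set $\{1,\ldots,n\}\times\G$. For a fixed pair $(i,g)$, \Cref{ass:stochastic-execution-law} together with the potential pilot table \eqref{eq:stoch-app-potential-pilot-table} makes $Z_{ig}^{(1,1)},\ldots,Z_{ig}^{(1,L_1)}$ i.i.d.\ Bernoulli$(a_{ig})$, taking values in $[0,1]$. Hoeffding's two-sided bound then gives
\[
\Prob\{|\widehat a_{ig}^{(1)}-a_{ig}|>\varepsilon\}\le 2\exp(-2L_1\varepsilon^2).
\]

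Next I will set $\varepsilon=\varepsilon_a(L_1,\delta_1)$, so that $2L_1\varepsilon^2=\log(2n|\G|/\delta_1)$. Each cell then fails with probability at most $\delta_1/(n|\G|)$. A union bound over the $n|\G|$ cells, which requires $\G$ to be finite, bounds the probability that any cell deviates by more than $\varepsilon$ by $\delta_1$. Taking complements gives \eqref{eq:stoch-app-stage-one-event}.

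The only subtle step, and the one I expect to need the most care, is the lazy-sampling issue. The algorithm reveals a panel only when the oracle proposes the corresponding workflow, and which workflows get proposed depends on earlier revealed outcomes. This might appear to break independence for the estimates that are actually used.

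The way around it is to run the argument on the full potential table $(Z_{ig}^{(1,\ell)})$, defined for every $(i,g)\in\{1,\ldots,n\}\times\G$ in advance and independently of the algorithm's randomness. The event in \eqref{eq:stoch-app-stage-one-event} is a statement about this entire fixed table, taking the maximum over all of $\G$, not only over the revealed workflows. It therefore holds with probability at least $1-\delta_1$ regardless of which panels are ultimately revealed or in what adaptive order. The estimate $\widehat a_{ig}^{(1)}$ of any revealed workflow is simply its entry in this table, so adaptivity never enters the bound.
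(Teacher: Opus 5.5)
Your proposal is correct and follows the paper's proof essentially step for step. The paper likewise applies Hoeffding cell by cell and takes a union bound over the $n|\G|$ potential pilot panels at $t=\varepsilon_a(L_1,\delta_1)$. It then disposes of adaptive revelation exactly as you do, by regarding the complete pilot table as drawn before generation begins.
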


\begin{proof}
For each fixed $(i,g)$, Hoeffding's inequality gives
\begin{equation*}
    \Prob\left\{
        \left|\widehat a_{ig}^{(1)}-a_{ig}\right|>t
    \right\}
    \le
    2e^{-2L_1t^2}.
\end{equation*}
A union bound over the $n|\G|$ potential pilot panels and the choice
$t=\varepsilon_a(L_1,\delta_1)$ prove the result. Since the complete pilot
table can be regarded as drawn before the generation process begins, revealing
its entries adaptively does not alter the bound. \Halmos
\end{proof}

\begin{lemma}[Perturbation of iid stochastic portfolio values]
\label{lem:stoch-value-perturbation-app}
Let $b=(b_{ig})$ and $b'=(b'_{ig})$ be two success-probability matrices such
that
$\max_{i,g}|b_{ig}-b'_{ig}|\le\eta$. Let
$\Pi_{\psi,\gamma}^{b}$ and $\Pi_{\psi,\gamma}^{b'}$ denote portfolio values
under the corresponding product-Bernoulli execution laws. Then
\begin{equation}
    \sup_{\substack{\bm m\in\mathbb Z_+^{\G}\\
                    k(\bm m)\le K_{\max}}}
    \left|
        \Pi_{\psi,\gamma}^{b}(\bm m)
        -
        \Pi_{\psi,\gamma}^{b'}(\bm m)
    \right|
    \le
    \min\{1,K_{\max}\eta\}.
    \label{eq:stoch-app-value-perturbation}
\end{equation}
Consequently, on the event in
\eqref{eq:stoch-app-stage-one-event},
\begin{equation}
    \sup_{\substack{\bm m\in\mathbb Z_+^{\G}\\
                    k(\bm m)\le K_{\max}}}
    \left|
        \widehat\Pi_{\psi,\gamma}^{(1)}(\bm m)
        -
        \Pi_{\psi,\gamma}^{\mathrm{stoch}}(\bm m)
    \right|
    \le
    \varepsilon_{\mathrm{est}}(L_1,\delta_1).
    \label{eq:stoch-app-plugin-true-uniform}
\end{equation}
\end{lemma}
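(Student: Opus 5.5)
We prove the perturbation bound by a coupling of the two product-Bernoulli execution laws on a common probability space, and then obtain the plug-in consequence from the concentration event of Lemma~\ref{lem:stoch-stage-one-concentration-app}. Fix a feasible nonzero $\bm m$ with $k=k(\bm m)\le K_{\max}$; the case $\bm m=\bm 0$ is trivial because both values are zero. The cost term $\gamma C(\bm m)$ does not depend on the success probabilities, so it cancels in the difference. It therefore suffices to bound
\[
\left|\frac1n\sum_{i=1}^n\Bigl(\E^{b}[\psi_k(R_i)]-\E^{b'}[\psi_k(R_i')]\Bigr)\right|.
\]

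For each task $i$, workflow $g$, and copy $q\le m_g$, draw an independent $U_{igq}\sim\mathrm{Unif}(0,1)$. Set $Z_{igq}=\1\{U_{igq}\le b_{ig}\}$ and $Z'_{igq}=\1\{U_{igq}\le b'_{ig}\}$. Each family then has exactly the correct product-Bernoulli law, so we may take $R_i=\sum_{g,q}Z_{igq}$ and $R_i'=\sum_{g,q}Z'_{igq}$ under the joint law.

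The key observation is that $\psi_k$ maps into $[0,1]$. Hence
\[
|\psi_k(R_i)-\psi_k(R_i')|\le \1\{R_i\ne R_i'\}\le \1\{\exists (g,q):Z_{igq}\ne Z'_{igq}\}.
\]
Applying a union bound over the $k$ labeled slots, and using $\Prob\{Z_{igq}\ne Z'_{igq}\}=|b_{ig}-b'_{ig}|\le\eta$, gives
\[
\bigl|\E\psi_k(R_i)-\E\psi_k(R_i')\bigr|\le \min\{1,k\eta\}\le\min\{1,K_{\max}\eta\}.
\]
The cap $1$ holds because both expectations lie in $[0,1]$. Averaging over $i$ preserves this bound, and taking the supremum over $\bm m$ yields \eqref{eq:stoch-app-value-perturbation}. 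This argument uses no monotonicity or concavity of $\psi_k$; it needs only $\psi_k\in[0,1]$.

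For the consequence, take $b=\widehat a^{(1)}$ and $b'=a$. On the event in \eqref{eq:stoch-app-stage-one-event} we have $\max_{i,g}|b_{ig}-b'_{ig}|\le\varepsilon_a(L_1,\delta_1)$, so \eqref{eq:stoch-app-value-perturbation} with $\eta=\varepsilon_a(L_1,\delta_1)$ gives exactly $\varepsilon_{\mathrm{est}}(L_1,\delta_1)$.

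The only delicate point is that the uniform bound must cover every $\bm m$ and all $g\in\G$ simultaneously, including workflows revealed adaptively. This is handled by the lazy-sampling convention: the whole pilot table is regarded as drawn in advance, so the concentration event is a single event not affected by adaptivity, and on it the bound is deterministic in $\bm m$. No further union bound over portfolios is needed.
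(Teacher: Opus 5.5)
Your proposal is correct and follows the paper's proof essentially step for step. Both arguments couple the two product-Bernoulli laws through common uniforms and union-bound over the $k(\bm m)$ slots to get $\sum_g m_g|b_{ig}-b'_{ig}|\le k(\bm m)\eta$. Both then use $\psi_k\in[0,1]$ for the cap of $1$, cancel the probability-independent cost term, and specialize to $\eta=\varepsilon_a(L_1,\delta_1)$ on the Stage-1 concentration event, which covers the adaptively revealed workflows through the same lazy-sampling convention.
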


\begin{proof}
Fix $\bm m$ and couple every Bernoulli execution under $b$ and $b'$ using the
same independent uniform random variable. On task $i$, one execution of
workflow $g$ differs under the two laws with probability
$|b_{ig}-b'_{ig}|$. Hence the probability that any of the $k(\bm m)$ coupled
execution outcomes differs is at most
\begin{equation*}
    \sum_{g\in\G}m_g|b_{ig}-b'_{ig}|
    \le
    k(\bm m)\eta.
\end{equation*}
If no coupled outcome differs, the two correct counts and therefore the two
recovery values coincide. Since recovery lies in $[0,1]$, the difference in
expected task value is at most
$\min\{1,k(\bm m)\eta\}$. Averaging over tasks and observing that the cost
term is identical under the two laws prove
\eqref{eq:stoch-app-value-perturbation}. The second claim follows from
\Cref{lem:stoch-stage-one-concentration-app}. \Halmos
\end{proof}

For a fixed run size $k$, label the execution copies by $q=1,\ldots,k$. The
Stage-1 estimates are integer multiples of $1/L_1$. This permits a common
finite scenario representation of the entire plug-in law that does not change
when a new workflow is revealed. Let
\begin{equation}
    \Omega_k
    =
    \{1,\ldots,L_1\}^{n\times k},
    \qquad
    p_\omega=L_1^{-nk},
    \label{eq:stoch-app-scenario-space}
\end{equation}
and write $U_{iq}(\omega)\in\{1,\ldots,L_1\}$ for coordinate $(i,q)$ of
scenario $\omega$. For every workflow $g$, define its potential plug-in
correctness outcome in scenario $\omega$ by
\begin{equation}
    \widehat A_{igq}^{(1)}(\omega)
    =
    \1\left\{
        U_{iq}(\omega)
        \le
        L_1\widehat a_{ig}^{(1)}
    \right\}.
    \label{eq:stoch-app-scenario-array}
\end{equation}
For each fixed integral assignment of workflows to execution copies, the
variables in \eqref{eq:stoch-app-scenario-array} are independent across tasks
and copies and have the required Bernoulli means
$\widehat a_{ig}^{(1)}$. The same uniforms may be used for different workflow
choices within one copy because an integral assignment selects only one
workflow for that copy.

For a finite workflow class $\M$, define
\begin{equation}
    \mathcal C_k(\M)
    =
    \left\{
        \bm m\in\mathbb Z_+^{\M}:
        \sum_{g\in\M}m_g=k
    \right\},
    \qquad
    |\mathcal C_k(\M)|
    =
    \binom{|\M|+k-1}{k},
    \label{eq:stoch-app-multiset-class}
\end{equation}
and let
\begin{equation}
    \widehat\OPT_k^{(1)}(\M)
    =
    \max_{\bm m\in\mathcal C_k(\M)}
    \widehat\Pi_{\psi,\gamma}^{(1)}(\bm m).
    \label{eq:stoch-app-fixed-size-optimum}
\end{equation}

\begin{lemma}[Plug-in stochastic LP relaxation and projected dual]
\label{lem:stoch-true-dual-app}
Fix $k$ and suppose
$d_{1,k}\ge\cdots\ge d_{k,k}\ge0$. Let
$\widetilde\psi_k$ be defined by
\eqref{eq:fixed-size-analytical-continuation}, and set
\begin{equation}
    \mathcal X_k(\G)
    =
    \left\{
        x\ge0:
        \sum_{g\in\G}x_{gq}=1,
        \ q=1,\ldots,k
    \right\}.
    \label{eq:stoch-app-slot-simplex}
\end{equation}
For $x\in\mathcal X_k(\G)$, define
\begin{align}
    \widehat s_{i\omega}^{(1)}(x)
    &:={}
    \sum_{q=1}^k\sum_{g\in\G}
    \widehat A_{igq}^{(1)}(\omega)x_{gq},
    \label{eq:stoch-app-fractional-row-count}\\
    \widehat{\mathcal F}_k^{(1)}(x)
    &:={}
    \sum_{\omega\in\Omega_k}p_\omega
    \frac1n\sum_{i=1}^n
    \widetilde\psi_k\bigl(\widehat s_{i\omega}^{(1)}(x)\bigr)
    -\gamma\sum_{q=1}^k\sum_{g\in\G}c_gx_{gq},
    \label{eq:stoch-app-fractional-objective}\\
    \widehat{\mathcal L}_k^{(1)}(\G)
    &:={}
    \max_{x\in\mathcal X_k(\G)}
    \widehat{\mathcal F}_k^{(1)}(x).
    \label{eq:stoch-app-primal-relaxation}
\end{align}
Then
\begin{equation}
    \widehat{\mathcal L}_k^{(1)}(\G)
    \ge
    \widehat\OPT_k^{(1)}(\G).
    \label{eq:stoch-app-lp-upper-bound}
\end{equation}
Define
\begin{equation}
    \chi_k(u)
    =
    \max_{r=0,\ldots,k}
    \left\{
        \frac{\psi_k(r)}n-ur
    \right\},
    \qquad
    0\le u\le\frac{d_{1,k}}n.
    \label{eq:stoch-app-chi}
\end{equation}
Strong duality gives
\begin{align}
    \widehat{\mathcal L}_k^{(1)}(\G)
    =
    \min_{\mu,\theta}\quad
    &
    \sum_{\omega\in\Omega_k}p_\omega
    \sum_{i=1}^n\chi_k(\mu_{i\omega})
    +\sum_{q=1}^k\theta_q
    \label{eq:stoch-app-projected-dual}\\
    \textnormal{s.t.}\quad
    &
    \sum_{\omega\in\Omega_k}p_\omega
    \sum_{i=1}^n
    \mu_{i\omega}\widehat A_{igq}^{(1)}(\omega)
    -\gamma c_g
    \le
    \theta_q,
    &&g\in\G,\ q=1,\ldots,k,
    \nonumber\\
    &
    0\le\mu_{i\omega}\le\frac{d_{1,k}}n,
    &&i=1,\ldots,n,\ \omega\in\Omega_k.
    \nonumber
\end{align}
Equivalently, introducing epigraph variables $\xi_{i\omega}$ gives the
rational LP
\begin{align}
    \widehat{\mathcal L}_k^{(1)}(\G)
    =
    \min_{\mu,\xi,\theta}\quad
    &
    \sum_{\omega\in\Omega_k}p_\omega
    \sum_{i=1}^n\xi_{i\omega}
    +\sum_{q=1}^k\theta_q
    \label{eq:stoch-app-epigraph-dual}\\
    \textnormal{s.t.}\quad
    &
    \xi_{i\omega}+r\mu_{i\omega}
    \ge
    \frac{\psi_k(r)}n,
    &&i=1,\ldots,n,\ \omega\in\Omega_k,\ r=0,\ldots,k,
    \nonumber\\
    &
    \sum_{\omega\in\Omega_k}p_\omega
    \sum_{i=1}^n
    \mu_{i\omega}\widehat A_{igq}^{(1)}(\omega)
    -\gamma c_g
    \le
    \theta_q,
    &&g\in\G,\ q=1,\ldots,k,
    \nonumber\\
    &
    0\le\mu_{i\omega}\le\frac{d_{1,k}}n,
    \qquad
    0\le\xi_{i\omega}\le\frac1n,
    &&i=1,\ldots,n,\ \omega\in\Omega_k,
    \nonumber\\
    &
    -\gamma c_{\max}\le\theta_q\le d_{1,k},
    &&q=1,\ldots,k.
    \nonumber
\end{align}
The plug-in stochastic pricing score is therefore
\begin{equation}
    \widehat{\mathcal S}_{kq}^{(1)}(g;\mu)
    =
    \sum_{\omega\in\Omega_k}p_\omega
    \sum_{i=1}^n
    \mu_{i\omega}\widehat A_{igq}^{(1)}(\omega)
    -\gamma c_g.
    \label{eq:stoch-plugin-reduced-cost-main}
\end{equation}
\end{lemma}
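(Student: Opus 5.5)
The plan has three parts, proved in order: the upper bound \eqref{eq:stoch-app-lp-upper-bound}, the strong-duality identity \eqref{eq:stoch-app-projected-dual}, and the epigraph form \eqref{eq:stoch-app-epigraph-dual}.

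\textbf{Upper bound.} Fix $\bm m\in\mathcal C_k(\G)$ and label its slots $q=1,\ldots,k$ with workflow types $g_q$. Set $x_{gq}=\1\{g=g_q\}$, which lies in $\mathcal X_k(\G)$. Then
\[
\widehat s^{(1)}_{i\omega}(x)=\sum_q \widehat A^{(1)}_{ig_qq}(\omega).
\]
By the remark after \eqref{eq:stoch-app-scenario-array}, this count has exactly the plug-in law of $R_i(\bm m)$ under $\widehat{\mathbb P}_1$. Because $\omega$ is uniform on $\Omega_k$, the coordinates $U_{iq}$ are independent and uniform. Hence each indicator is Bernoulli$(\widehat a^{(1)}_{ig_q})$, and the indicators are independent across $(i,q)$. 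This uses that $L_1\widehat a_{ig}$ is an integer. The continuation $\widetilde\psi_k$ agrees with $\psi_k$ on $\{0,\ldots,k\}$, and the cost term equals $\gamma C(\bm m)$. So $\widehat{\mathcal F}^{(1)}_k(x)=\widehat\Pi^{(1)}_{\psi,\gamma}(\bm m)$, and maximizing over $\bm m$ gives \eqref{eq:stoch-app-lp-upper-bound}.

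\textbf{Strong duality.} I follow the proof of \Cref{lem:projected-dual-prices}. First, $\varphi_k=\widetilde\psi_k/n$ is concave and piecewise linear on $[0,k]$ with slopes in $[0,d_{1,k}/n]$. Every argument $\widehat s_{i\omega}(x)$ lies in $[0,k]$ because $\sum_g x_{gq}=1$. Representation \eqref{eq:concave-conjugate-representation} therefore applies to each $(i,\omega)$ term with its own multiplier $\mu_{i\omega}$. The primal then becomes
\[
\max_{x\in\mathcal X_k}\ \min_{\mu\in\prod[0,d_{1,k}/n]}\ \sum_\omega p_\omega\sum_i\big[\mu_{i\omega}\widehat s_{i\omega}(x)+\chi_k(\mu_{i\omega})\big]-\gamma\sum_{q,g}c_gx_{gq}.
\]
This function is bilinear-plus-convex in $\mu$ and linear in $x$, and both feasible sets are compact and convex (finite $\G$). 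Sion's minimax theorem, or finite LP duality after writing $\chi_k$ in epigraph form, lets me swap max and min.

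For fixed $\mu$, the inner maximum over $x$ separates across slots $q$, because $\mathcal X_k$ is a product of simplices. Each slot contributes $\max_g \widehat{\mathcal S}^{(1)}_{kq}(g;\mu)$. Introducing $\theta_q$ as the epigraph variable of that maximum yields \eqref{eq:stoch-app-projected-dual}, with one workflow constraint per pair $(g,q)$. This is why the slot price becomes copy-specific: the uniforms $U_{iq}$ depend on $q$.

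\textbf{Epigraph form and box constraints.} Replacing $\chi_k(\mu_{i\omega})$ by $\xi_{i\omega}$ subject to $\xi_{i\omega}\ge\psi_k(r)/n-r\mu_{i\omega}$ for all $r$ gives \eqref{eq:stoch-app-epigraph-dual}. The added boxes must not cut off an optimum, which I check by hand. Since $\chi_k(\mu)\in[0,1/n]$, using the $r=0$ and $r=k$ terms together with $\mu\le d_{1,k}/n\le 1/n$, any optimal $\xi_{i\omega}$ can be taken to be $\chi_k(\mu_{i\omega})$, so $0\le\xi_{i\omega}\le 1/n$ is harmless. At an optimum $\theta_q=\max_g\widehat{\mathcal S}_{kq}$. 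This maximum is at least $-\gamma c_{\max}$, since $\mu\ge0$. It is at most $\sum_\omega p_\omega\sum_i\mu_{i\omega}\le d_{1,k}$, which gives the stated bounds on $\theta_q$.

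\textbf{Main obstacle.} I expect the delicate step to be the exact coupling claim in the upper bound. Independence across copies holds only for integral assignments: a fractional $x$ reuses the same $U_{iq}$ across workflows within a slot. That is fine because only the inequality direction is needed there. The minimax step is standard once compactness is noted.
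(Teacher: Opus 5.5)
Your proposal is correct and follows the paper's proof essentially step for step. You embed integral slot assignments and use the common-uniform scenario construction to get the relaxation bound, substitute the concave-conjugate representation of $\widetilde\psi_k/n$ termwise, interchange max and min by finite-dimensional minimax, separate the inner maximization over the product of slot simplices to obtain the copy-specific prices $\theta_q$, and then pass to the epigraph form. Your explicit check that the boxes $0\le\xi_{i\omega}\le 1/n$ and $-\gamma c_{\max}\le\theta_q\le d_{1,k}$ do not cut off an optimum fills in a detail the paper leaves implicit; note that the bound $\chi_k(u)\le 1/n$ needs only $u\ge0$ and $\psi_k\le 1$, and the upper bound on $\theta_q$ comes from $\sum_\omega p_\omega\sum_i\mu_{i\omega}\le d_{1,k}$.
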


\begin{proof}
An integral $x$ assigns one workflow to every execution copy $q$ and induces
counts $m_g=\sum_qx_{gq}$. Its scenario-wise correct counts are integral, so
$\widetilde\psi_k=\psi_k$ at those counts. By the construction in
\eqref{eq:stoch-app-scenario-array}, its scenario distribution is exactly the
Stage-1 plug-in iid law. Thus its objective is
$\widehat\Pi_{\psi,\gamma}^{(1)}(\bm m)$, and relaxing integrality proves
\eqref{eq:stoch-app-lp-upper-bound}.

Let $\varphi_k(t)=\widetilde\psi_k(t)/n$. Concavity gives
\begin{equation}
    \varphi_k(t)
    =
    \min_{0\le u\le d_{1,k}/n}
    \{ut+\chi_k(u)\},
    \qquad
    0\le t\le k.
    \label{eq:stoch-app-conjugate}
\end{equation}
Substitute \eqref{eq:stoch-app-conjugate} into
\eqref{eq:stoch-app-fractional-objective} and apply finite-dimensional minimax
to interchange maximization over $x$ and minimization over $\mu$. For fixed
$\mu$, maximization separates by execution copy:
\begin{align}
    &\max_{x\in\mathcal X_k(\G)}
    \sum_{q=1}^k\sum_{g\in\G}x_{gq}
    \widehat{\mathcal S}_{kq}^{(1)}(g;\mu)
    \nonumber\\
    &\hspace{30mm}=
    \sum_{q=1}^k
    \max_{g\in\G}
    \widehat{\mathcal S}_{kq}^{(1)}(g;\mu).
    \label{eq:stoch-app-slot-support}
\end{align}
Introducing the variables $\theta_q$ yields
\eqref{eq:stoch-app-projected-dual}; replacing each $\chi_k$ by its finite
epigraph representation yields \eqref{eq:stoch-app-epigraph-dual}. \Halmos
\end{proof}

Even under iid execution, the score in
\eqref{eq:stoch-plugin-reduced-cost-main} is not generally equal to the
deterministic mean-column score
$\sum_i\bar\mu_i\widehat a_{ig}^{(1)}-\gamma c_g$. The scenario-dependent
marginal price $\mu_{i\omega}$ and the candidate's correctness indicator are
evaluated in the same execution scenario, so averaging them separately need
not preserve their product.

\begin{algorithm}[t]
\caption{IID Stochastic Dual-Guided Workflow Optimization}
\label{alg:stoch-cost-aware-generation}
\begin{algorithmic}[1]
\small
\STATE \textbf{Input}: initial workflow pool $\M_0$; cardinality grid
$\mathcal K$; recovery curves $\{\psi_k\}$; cost price $\gamma$; Stage-1
sample size $L_1$ and confidence level $\delta_1$; Stage-2 sample size $L_2$
and confidence level $\delta_2$; ellipsoid tolerance
$\varepsilon_{\mathrm{ell}}>0$; pricing-batch size $m$; primitive stochastic
pricing oracle; baseline $\bm m^0$ with certified value $\underline V>0$ and
$\operatorname{supp}(\bm m^0)\subseteq\M_0$.
\STATE For every $g\in\M_0$ and task $i$, collect $L_1$ independent
executions and compute $\widehat a_{ig}^{(1)}$ using
\eqref{eq:stoch-stage-one-estimator}. Set
$\M_{\mathrm{rec}}\leftarrow\M_0$.
\FOR{$k\in\mathcal K$}
    \STATE Set
    $\tau_k\leftarrow\varepsilon_{\mathrm{ell}}/(2k)$.
    \STATE Run the ellipsoid method on the plug-in epigraph dual
    \eqref{eq:stoch-app-epigraph-dual} with optimization tolerance
    $\varepsilon_{\mathrm{ell}}/2$.
    \WHILE{the ellipsoid routine has not terminated}
        \STATE At the current point $(\mu^t,\xi^t,\theta^t)$, separate violated
        explicit recovery and box constraints directly.
        \IF{no explicit constraint is violated}
            \FOR{$q=1,\ldots,k$}
                \STATE Make $m$ primitive pricing calls at
                $(k,q,\mu^t,\tau_k)$. Whenever a call proposes a workflow $g$
                that has not been evaluated, collect its independent
                $L_1$-sample panel, compute
                $(\widehat a_{1g}^{(1)},\ldots,\widehat a_{ng}^{(1)})$, and
                set $\M_{\mathrm{rec}}\leftarrow
                \M_{\mathrm{rec}}\cup\{g\}$ before scoring it.
                \STATE Retain the proposed workflow $g^{tq}$ with the largest
                score $\widehat{\mathcal S}_{kq}^{(1)}(g;\mu^t)$.
                \IF{$\widehat{\mathcal S}_{kq}^{(1)}(g^{tq};\mu^t)>
                \theta_q^t$}
                    \STATE Add its violated workflow constraint to the
                    ellipsoid system.
                \ELSE
                    \STATE Use the weak-separation certificate
                    $\max_{g\in\G}\widehat{\mathcal S}_{kq}^{(1)}(g;\mu^t)
                    \le\theta_q^t+\tau_k$.
                \ENDIF
            \ENDFOR
        \ENDIF
    \ENDWHILE
    \STATE By primal recovery, construct
    $x^k\in\mathcal X_k(\G)$ supported on $\M_{\mathrm{rec}}$ such that
    $\widehat{\mathcal F}_k^{(1)}(x^k)
    \ge\widehat{\mathcal L}_k^{(1)}(\G)
    -\varepsilon_{\mathrm{ell}}$.
\ENDFOR
\STATE Freeze the generated pool
$\M_T\leftarrow\M_{\mathrm{rec}}$. Do not reuse Stage-1 execution outcomes for
final portfolio selection.
\STATE Draw the $L_2$ independent Stage-2 execution panels
$\{Z_{igq}^{(2,\ell)}\}$, compute
$\widehat\Pi_{L_2}^{\mathrm{final}}$ for every
$\bm m\in\mathcal C_{\mathcal K}(\M_T)$, and choose the empirical maximizer
$\widetilde{\bm m}$.
\STATE Return $\widehat{\bm m}$ according to the conservative rule
\eqref{eq:stoch-conservative-final-rule}.
\normalsize
\end{algorithmic}
\end{algorithm}

At a query $(k,q,\mu)$, a primitive plug-in pricing call is
$\tau_k$-accurate if it returns $\widetilde g$ satisfying
\begin{equation}
    \widehat{\mathcal S}_{kq}^{(1)}(\widetilde g;\mu)
    \ge
    \max_{g\in\G}
    \widehat{\mathcal S}_{kq}^{(1)}(g;\mu)
    -\tau_k.
    \label{eq:stoch-plugin-oracle-guarantee-app}
\end{equation}
The guarantee is defined relative to the complete potential Stage-1 pilot
table in \eqref{eq:stoch-app-potential-pilot-table}; only the panels of
workflows actually proposed by the oracle need to be materialized.

\begin{lemma}[Batched plug-in stochastic pricing]
\label{lem:stoch-pricing-separation-app}
Suppose each primitive pricing call satisfies
\eqref{eq:stoch-plugin-oracle-guarantee-app}, conditional on the complete query
history, with probability at least $p_{\mathrm{orc}}>0$. If each pricing query
uses $m$ primitive calls and retains the highest-scoring returned workflow,
let $\mathcal E_{\mathrm{orc}}$ be the event that every resulting batch is
$\tau_k$-accurate. If at most
$N_{\mathrm{price}}^{\mathrm{stoch}}$ batches are used, then
\begin{equation}
    \Prob(\mathcal E_{\mathrm{orc}})
    \ge
    1-
    N_{\mathrm{price}}^{\mathrm{stoch}}
    e^{-p_{\mathrm{orc}}m}.
    \label{eq:stoch-app-oracle-event-probability}
\end{equation}
\end{lemma}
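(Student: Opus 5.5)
The plan is to mirror the proof of \Cref{lem:batched-pricing-separation}, first controlling a single batch and then taking a union bound over all batches. Index the batches in the order they are issued by $j=1,2,\ldots$; each pricing query $(k,q,\mu^t)$ in Algorithm~\ref{alg:stoch-cost-aware-generation} produces one batch. The accuracy notion in \eqref{eq:stoch-plugin-oracle-guarantee-app} is defined relative to the complete potential Stage-1 pilot table \eqref{eq:stoch-app-potential-pilot-table}. That table can be regarded as drawn before the algorithm starts, so the benchmark $\max_{g\in\G}\widehat{\mathcal S}_{kq}^{(1)}(g;\mu)$ is a well-defined quantity at every query. Lazy materialization of panels only changes which entries the algorithm sees, not this benchmark.

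For a fixed batch $j$, let $Y_{j,1},\ldots,Y_{j,m}$ indicate whether each primitive call is $\tau_k$-accurate. By hypothesis, $\Prob\{Y_{j,s}=1\mid\text{complete history before call }s\}\ge p_{\mathrm{orc}}$. That history includes earlier calls within the batch and any panels revealed along the way. Chaining these conditional probabilities gives
\[
\Prob\{Y_{j,1}=\cdots=Y_{j,m}=0\mid\text{history before batch }j\}\le(1-p_{\mathrm{orc}})^m\le e^{-p_{\mathrm{orc}}m}.
\]
If at least one call is accurate, the retained workflow has the largest plug-in score among the returned ones. It therefore also satisfies \eqref{eq:stoch-plugin-oracle-guarantee-app}, so the batch is $\tau_k$-accurate.

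Next, let $\mathcal B_j$ be the event that batch $j$ is accurate, with $\mathcal B_j$ taken to hold vacuously if fewer than $j$ batches are issued. Since at most $N_{\mathrm{price}}^{\mathrm{stoch}}$ batches occur, $\mathcal E_{\mathrm{orc}}^c\subseteq\bigcup_{j\le N_{\mathrm{price}}^{\mathrm{stoch}}}\mathcal B_j^c$. The tower property then gives
\[
\Prob(\mathcal B_j^c)=\E\bigl[\Prob(\mathcal B_j^c\mid\text{history})\bigr]\le e^{-p_{\mathrm{orc}}m}.
\]
A union bound yields \eqref{eq:stoch-app-oracle-event-probability}.

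The main subtlety is the adaptivity of the process: the number of batches, the query points, and the set of revealed panels are all random. I would handle this by using the deterministic bound $N_{\mathrm{price}}^{\mathrm{stoch}}$ together with the vacuous-event convention. Conditioning on the full history, including the pilot table, keeps the per-call guarantee valid despite the adaptivity.
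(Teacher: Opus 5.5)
Your proposal is correct and follows the same route as the paper. You bound a single batch's conditional failure probability by $(1-p_{\mathrm{orc}})^m\le e^{-p_{\mathrm{orc}}m}$, using that the retained highest-scoring workflow inherits $\tau_k$-accuracy from any accurate call, and then take a union bound over the at most $N_{\mathrm{price}}^{\mathrm{stoch}}$ batches; your handling of adaptivity (chaining within a batch, the vacuous convention for unissued batches, conditioning on the pre-drawn pilot table) just makes explicit details that the paper's two-line proof leaves implicit.
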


\begin{proof}
A batch fails only if all $m$ primitive calls fail, which has conditional
probability at most
$(1-p_{\mathrm{orc}})^m\le e^{-p_{\mathrm{orc}}m}$. A union bound over the
pricing batches proves the claim. \Halmos
\end{proof}

\begin{lemma}[Ellipsoid recovery under plug-in iid pricing]
\label{lem:stoch-ellipsoid-recovery-app}
For each $k\in\mathcal K$, run the plug-in stochastic dual with ellipsoid
optimization tolerance $\varepsilon_{\mathrm{ell},k}/2$ and pricing tolerance
$\tau_k=\varepsilon_{\mathrm{ell},k}/(2k)$. On
$\mathcal E_{\mathrm{orc}}$, primal recovery returns an
$x^k\in\mathcal X_k(\G)$ supported on the returned workflow pool $\M_T$ such
that
\begin{equation}
    \widehat{\mathcal F}_k^{(1)}(x^k)
    \ge
    \widehat{\mathcal L}_k^{(1)}(\G)
    -\varepsilon_{\mathrm{ell},k}.
    \label{eq:stoch-app-ellipsoid-primal-recovery}
\end{equation}
\end{lemma}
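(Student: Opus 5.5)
The plan mirrors the deterministic argument in the proof of \Cref{thm:pricing-certificate}, applied to the plug-in epigraph dual \eqref{eq:stoch-app-epigraph-dual} for a fixed $k$. The first step is to check that this dual satisfies the hypotheses of the weak optimization--separation equivalence. It is a rational LP; rationality holds because $p_\omega=L_1^{-nk}$ and $\widehat a^{(1)}_{ig}\in L_1^{-1}\mathbb Z$. It has finitely many variables $(\mu_{i\omega},\xi_{i\omega},\theta_q)$ and an explicit bounded box, so it lives in a polytope of bounded encoding length. It has polynomially many explicit recovery and box constraints, which can be checked directly. The only implicit constraints are the workflow-indexed family, one for each pair $(g,q)$. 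Feasibility of the box is clear: taking $\mu=0$, $\xi=1/n$ and $\theta_q=d_{1,k}$ satisfies all constraints, since $\widehat{\mathcal S}^{(1)}_{kq}(g;0)=-\gamma c_g\le d_{1,k}$.

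The second step is separation. At an iterate where no explicit constraint is violated, the algorithm runs a batch for each copy $q$. On $\mathcal E_{\mathrm{orc}}$ every batch is $\tau_k$-accurate in the sense of \eqref{eq:stoch-plugin-oracle-guarantee-app}. For each $q$ there are two cases. If the retained workflow has score above $\theta_q^t$, we obtain a valid violated hyperplane. Otherwise $\max_g\widehat{\mathcal S}^{(1)}_{kq}(g;\mu^t)\le\theta^t_q+\tau_k$, exactly as in \Cref{lem:batched-pricing-separation}. Raising every $\theta_q$ by $\tau_k$ restores feasibility of the whole implicit family. Since each $\theta_q$ has coefficient one and there are $k$ of them, the objective grows by at most $k\tau_k=\varepsilon_{\mathrm{ell},k}/2$. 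Hence the routine is a weak separation oracle with objective slack $\varepsilon_{\mathrm{ell},k}/2$. Combined with optimization tolerance $\varepsilon_{\mathrm{ell},k}/2$, the Grötschel--Lovász--Schrijver theorem then yields a dual value within $\varepsilon_{\mathrm{ell},k}$ of $\widehat{\mathcal L}^{(1)}_k(\G)$.

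The third step, primal recovery, is the main obstacle. Let $\mathcal W$ be the set of pairs $(g,q)$ whose workflow constraints were returned during the run; all of these have $g\in\M_{\mathrm{rec}}\subseteq\M_T$. Consider the restricted dual, which keeps only the explicit constraints and those in $\mathcal W$. The ellipsoid run certifies that this restricted dual has optimal value at least $\widehat{\mathcal L}^{(1)}_k(\G)-\varepsilon_{\mathrm{ell},k}$. The argument is the standard one: every point cut away was cut by a constraint in the restricted system or by the objective, so the same run with the same cuts would certify near-optimality of the restricted problem. By LP duality, the restricted dual's optimal value equals the plug-in primal \eqref{eq:stoch-app-primal-relaxation} restricted to $x$ supported on $\mathcal W$. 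Strong duality for that finite restriction follows from the minimax argument already used in \Cref{lem:stoch-true-dual-app}. Its optimizer $x^k$ lies in $\mathcal X_k(\G)$ after extending by zero.

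One technical subtlety needs care. The certified-feasible point might involve copies $q$ for which no workflow was recorded, in which case the restricted dual would be unbounded below in $\theta_q$. The explicit box $\theta_q\ge-\gamma c_{\max}$ controls this. I would also note that every separation call records its returned workflow for every $q$, so each $q$ has nonempty support and the restricted primal is feasible. With these points, \eqref{eq:stoch-app-ellipsoid-primal-recovery} follows.
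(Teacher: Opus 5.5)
Your proposal is correct and follows the paper's proof. Both use per-copy weak separation with objective slack $k\tau_k=\varepsilon_{\mathrm{ell},k}/2$ and take the remaining $\varepsilon_{\mathrm{ell},k}/2$ from the ellipsoid tolerance; the paper then invokes standard primal recovery, which you spell out via the restricted dual over the recorded pairs. Your point that every copy $q$ needs nonempty support is right; it can also be ensured by adding the columns of $\M_0\subseteq\M_T$ for every copy. One small inaccuracy, harmless here because the lemma makes no complexity claim: since $|\Omega_k|=L_1^{nk}$, the number of explicit constraints is polynomial only in the exponentially large dual dimension, not in $n$ and $k$.
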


\begin{proof}
Successful plug-in pricing supplies weak separation. If the best returned
score for copy $q$ is at most $\theta_q$, then
\eqref{eq:stoch-plugin-oracle-guarantee-app} implies that all copy-$q$
workflow constraints hold after replacing $\theta_q$ by
$\theta_q+\tau_k$. Across all $k$ copies, this increases the dual objective by
at most $k\tau_k=\varepsilon_{\mathrm{ell},k}/2$. Ellipsoid optimization
contributes the remaining half. Standard optimization--separation and primal
recovery give \eqref{eq:stoch-app-ellipsoid-primal-recovery}. \Halmos
\end{proof}

\begin{lemma}[Rounding the plug-in stochastic LP]
\label{lem:stoch-lp-rounding-app}
Fix $k$ and $x\in\mathcal X_k(\G)$. Independently for every copy $q$, draw
$G_q$ with $\Prob\{G_q=g\}=x_{gq}$ and define
\begin{equation*}
    M_{k,g}^{\RR}(x)
    =
    \sum_{q=1}^k\1\{G_q=g\},
    \qquad
    \bm M_k^{\RR}(x)
    =
    \bigl(M_{k,g}^{\RR}(x):g\in\G\bigr).
\end{equation*}
Let
\begin{align}
    \widehat Q_k^{(1)}(x)
    &:={}
    \sum_{\omega\in\Omega_k}p_\omega
    \frac1n\sum_{i=1}^n
    \widetilde\psi_k\bigl(\widehat s_{i\omega}^{(1)}(x)\bigr),
    \label{eq:stoch-app-rounding-Q}\\
    \widehat R_k^{(1)}(x)
    &:={}
    \sum_{\omega\in\Omega_k}p_\omega
    \frac1n\sum_{i=1}^n
    \widehat s_{i\omega}^{(1)}(x),
    \qquad
    C_k(x)
    :={}
    \sum_{q=1}^k\sum_{g\in\G}c_gx_{gq}.
    \label{eq:stoch-app-rounding-RC}
\end{align}
Then
\begin{align}
    \E_{\mathrm{rnd}}\!\left[
        \widehat\Pi_{\psi,\gamma}^{(1)}
        \bigl(\bm M_k^{\RR}(x)\bigr)
    \right]
    &\ge
    \left(1-\frac1e\right)\widehat Q_k^{(1)}(x)
    +\frac{d_{k,k}}e\widehat R_k^{(1)}(x)
    -\gamma C_k(x),
    \label{eq:stoch-app-rounding-lower}\\
    \widehat{\mathcal F}_k^{(1)}(x)
    -
    \E_{\mathrm{rnd}}\!\left[
        \widehat\Pi_{\psi,\gamma}^{(1)}
        \bigl(\bm M_k^{\RR}(x)\bigr)
    \right]
    &\le
    \frac1e
    \left\{
        \widehat Q_k^{(1)}(x)
        -d_{k,k}\widehat R_k^{(1)}(x)
    \right\}
    \le
    \frac{c_{\psi,k}}e\widehat Q_k^{(1)}(x).
    \label{eq:stoch-app-rounding-gap}
\end{align}
Since $\widehat Q_k^{(1)}(x)\le1$, a uniform rounding loss is
\begin{equation}
    \varepsilon_{\mathrm{rnd},k}
    =
    \frac{c_{\psi,k}}e.
    \label{eq:stoch-app-uniform-rounding-error}
\end{equation}
\end{lemma}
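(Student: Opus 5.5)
The plan is to reduce Lemma~\ref{lem:stoch-lp-rounding-app} to the deterministic argument behind \Cref{thm:cost-rounding}, applied scenario by scenario and task by task. Fix $\omega\in\Omega_k$ and a task $i$. Under the copy-wise rounding, the slots are filled independently across $q$, with $\Prob\{G_q=g\}=x_{gq}$. The first step is to identify the plug-in value of the rounded portfolio with an average over the same scenario array. For a realized integral assignment, the construction in \eqref{eq:stoch-app-scenario-array} makes the correctness of copy $q$ on task $i$ equal to $\widehat A^{(1)}_{iG_qq}(\omega)$. These indicators are independent Bernoulli$(\widehat a^{(1)}_{iG_q})$ across $(i,q)$ under $p_\omega$, which is exactly the plug-in iid law. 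Hence
\[
\widehat\Pi^{(1)}_{\psi,\gamma}(\bm M_k^{\RR}(x))=\sum_\omega p_\omega\frac1n\sum_i\psi_k\bigl(R_{i\omega}\bigr)-\gamma C(\bm M_k^{\RR}(x)),\qquad R_{i\omega}=\sum_{q=1}^k \widehat A^{(1)}_{iG_qq}(\omega).
\]
This relabeling (the multiset $\bm M_k^{\RR}$ versus a copy-labeled assignment) does not change the value, because the plug-in law is exchangeable over copies. I expect this identification to be the main point needing care. I would state it explicitly using the same argument as in the proof of Lemma~\ref{lem:stoch-true-dual-app}.

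Next, conditional on $\omega$, $R_{i\omega}$ is a sum of $k$ independent Bernoulli variables whose success probabilities are $\pi_q=\sum_g x_{gq}\widehat A^{(1)}_{igq}(\omega)$. Their sum of means is $\widehat s^{(1)}_{i\omega}(x)$. So $R_{i\omega}$ is Poisson--binomial rather than binomial, and the Binomial step of \Cref{thm:cost-rounding} has to be redone. For each integer $t\ge1$, I would use
\[
\min\{r,t\}\ge t\{1-(1-1/t)^r\},\qquad
\E[(1-1/t)^{R}]=\prod_q(1-\pi_q/t)\le e^{-s/t}.
\]
This avoids the binomial identity altogether and gives $\E\min\{R_{i\omega},t\}\ge t(1-e^{-s/t})\ge(1-1/e)\min\{s,t\}$ with $s=\widehat s^{(1)}_{i\omega}(x)$. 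The linear term satisfies $\E R_{i\omega}=s$ exactly.

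Plugging these bounds into the decomposition \eqref{eq:fixed-size-analytical-continuation} of $\widetilde\psi_k$, whose coefficients are nonnegative by concavity, gives
\[
\E\psi_k(R_{i\omega})\ge(1-1/e)\bigl[\widetilde\psi_k(s)-d_{k,k}s\bigr]+d_{k,k}s=(1-1/e)\widetilde\psi_k(s)+\tfrac{d_{k,k}}{e}s.
\]
This uses that $\psi_k=\widetilde\psi_k$ at integers. Averaging over $i$ and $\omega$ produces the accuracy part of \eqref{eq:stoch-app-rounding-lower}. For the cost term, $\E[M^{\RR}_{k,g}]=\sum_q x_{gq}$, so the expected cost equals $C_k(x)$ and cost is preserved exactly.

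The gap bound \eqref{eq:stoch-app-rounding-gap} follows by subtracting \eqref{eq:stoch-app-rounding-lower} from $\widehat{\mathcal F}^{(1)}_k(x)=\widehat Q^{(1)}_k(x)-\gamma C_k(x)$. The curvature bound uses $\widetilde\psi_k(s)\le d_{1,k}s$ pointwise, so $\widehat Q\le d_{1,k}\widehat R$ and $d_{k,k}\widehat R\ge(1-c_{\psi,k})\widehat Q$. Finally, $\widehat Q^{(1)}_k(x)\le1$ because $\widetilde\psi_k$ is capped at $\sum_\ell d_{\ell,k}=1$ on $[0,k]$, and $s\le k$. This gives \eqref{eq:stoch-app-uniform-rounding-error}.
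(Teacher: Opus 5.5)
Your proposal is correct and follows essentially the same route as the paper's proof: scenario-by-scenario Poisson--binomial counts, the product bound $\prod_q(1-\pi_q/t)\le e^{-s/t}$ applied to the threshold decomposition of $\widetilde\psi_k$, exact preservation of cost in expectation, and the curvature bound via $\widetilde\psi_k(s)\le d_{1,k}s$. Two points the paper leaves implicit are made explicit in your version: you identify the rounded portfolio's plug-in value with the scenario average, and you note that $s\le k$, which is what keeps $\widetilde\psi_k\le 1$.
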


\begin{proof}
Fix $(i,\omega)$ and set
$B_q=\widehat A_{i,G_q,q}^{(1)}(\omega)$ and
$p_q=\sum_gx_{gq}\widehat A_{igq}^{(1)}(\omega)$. Conditional on $\omega$,
the $B_q$ are independent under the rounding and
$N=\sum_qB_q$ has mean $\widehat s_{i\omega}^{(1)}(x)$. For every integer
$t\ge1$,
\begin{equation}
    \min\{r,t\}
    \ge
    t\{1-(1-1/t)^r\}.
    \label{eq:stoch-app-truncation-inequality}
\end{equation}
Hence
\begin{align}
    \E_{\mathrm{rnd}}[\min\{N,t\}\mid\omega]
    &\ge
    t\left[
        1-\prod_{q=1}^k\left(1-\frac{p_q}{t}\right)
    \right]
    \nonumber\\
    &\ge
    t\left(1-e^{-\widehat s_{i\omega}^{(1)}(x)/t}\right)
    \nonumber\\
    &\ge
    \left(1-\frac1e\right)
    \min\{\widehat s_{i\omega}^{(1)}(x),t\}.
    \label{eq:stoch-app-threshold-correlation}
\end{align}
Using
\begin{equation}
    \widetilde\psi_k(r)
    =
    d_{k,k}r
    +
    \sum_{t=1}^{k-1}
    (d_{t,k}-d_{t+1,k})\min\{r,t\},
    \label{eq:stoch-app-threshold-decomposition}
\end{equation}
and averaging over tasks and scenarios prove
\eqref{eq:stoch-app-rounding-lower}. Rounding also preserves execution cost in
expectation. Subtraction gives the first inequality in
\eqref{eq:stoch-app-rounding-gap}. Finally,
$\widetilde\psi_k(r)\le d_{1,k}r$ and
$d_{k,k}=(1-c_{\psi,k})d_{1,k}$ give the curvature bound, while
$0\le\widetilde\psi_k(r)\le1$ gives
$\widehat Q_k^{(1)}(x)\le1$. \Halmos
\end{proof}

\begin{lemma}[Stochastic cardinality grid under the plug-in iid law]
\label{lem:stoch-grid-app}
Suppose $\psi_k(r)=h(r/k)$ for a common increasing concave function
$h:[0,1]\to[0,1]$ with $h(0)=0$. For every workflow class $\M$ and
$1\le b\le k\le K_{\max}$,
\begin{equation}
    \left\{\widehat\OPT_b^{(1)}(\M)\right\}^+
    \ge
    \frac bk
    \left\{\widehat\OPT_k^{(1)}(\M)\right\}^+.
    \label{eq:stoch-app-cross-cardinality}
\end{equation}
Consequently, if $\mathcal K$ has coverage ratio $\varrho$,
\begin{equation}
    \max\left\{
        0,
        \max_{b\in\mathcal K}
        \widehat\OPT_b^{(1)}(\M)
    \right\}
    \ge
    \frac1\varrho
    \widehat\OPT_{\psi,\gamma}^{(1)}(\M),
    \label{eq:stoch-app-grid-guarantee}
\end{equation}
where
\begin{equation*}
    \widehat\OPT_{\psi,\gamma}^{(1)}(\M)
    =
    \max_{\substack{\bm m\in\mathbb Z_+^{\M}\\
                    k(\bm m)\le K_{\max}}}
    \widehat\Pi_{\psi,\gamma}^{(1)}(\bm m).
\end{equation*}
\end{lemma}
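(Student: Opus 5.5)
The plan is to mimic the proof of \Cref{thm:geometric-cardinality-approx}, replacing the deterministic correct counts by the random counts generated under the plug-in law $\widehat{\mathbb P}_1$. Fix $\bm m\in\mathcal C_k(\M)$ with $\widehat\Pi^{(1)}_{\psi,\gamma}(\bm m)>0$ (otherwise the positive-part statement is trivial). Label its $k$ execution slots and draw a uniformly random subset of $b$ slots without replacement, independently of the plug-in execution outcomes. This gives a random $\bm M^{(b)}\in\mathcal C_b(\M)$ with $\E[M^{(b)}_g]=(b/k)m_g$, so the expected cost is $\alpha C(\bm m)$ with $\alpha=b/k$.

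The key coupling step: under the product-Bernoulli plug-in law, the correctness indicators of the $k$ slots of $\bm m$ on task $i$ are independent Bernoulli$(\widehat a^{(1)}_{ig_j})$. The indicators of any fixed subset of slots therefore have exactly the plug-in law of the corresponding sub-portfolio. Hence $\widehat\E_1[\psi_b(R_i(\bm M^{(b)}))\mid \text{subset}]$ equals the plug-in value of that sub-portfolio. Averaging over the subset, the expected plug-in accuracy of $\bm M^{(b)}$ can be computed on a single joint space: first realize all $k$ outcomes, giving a count $R_i$, then subsample $b$ slots, giving $R_i'\sim\mathrm{Hypergeom}(k,R_i,b)$ conditionally on $R_i$.

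Conditionally on the realized outcome vector, I apply verbatim the pointwise argument from the proof of \Cref{thm:geometric-cardinality-approx}. If $R_i=0$, both sides vanish. Otherwise concavity, monotonicity and $h(0)=0$ give $h(x)\ge h(R_i/k)\min\{x k/R_i,1\}$, and $\min\{Z,\alpha\}\ge\alpha Z$ with $Z=R_i'/R_i$ yields $\E[h(R_i'/b)\mid R_i]\ge\alpha\, h(R_i/k)$. Taking expectation over $\widehat{\mathbb P}_1$ and averaging over tasks gives expected plug-in accuracy at least $\alpha$ times that of $\bm m$. Subtracting the cost then gives
\[
\E\bigl[\widehat\Pi^{(1)}_{\psi,\gamma}(\bm M^{(b)})\bigr]\ge\alpha\,\widehat\Pi^{(1)}_{\psi,\gamma}(\bm m),
\]
and some realization of the subset attains at least this value. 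This proves \eqref{eq:stoch-app-cross-cardinality}.

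The grid claim \eqref{eq:stoch-app-grid-guarantee} follows exactly as before. For the maximizing $k$ (or $\bm 0$), pick an anchor $b\in\mathcal K$ with $b\le k\le\varrho b$ and chain the inequalities, using $b/k\ge1/\varrho$. The only delicate point is the coupling step: one must verify that subsampling slots commutes with the product law, i.e.\ that the marginal of a slot subset is the plug-in law of the sub-multiset. I expect this to be immediate from independence across copies in \Cref{ass:stochastic-execution-law} applied to the plug-in probabilities.
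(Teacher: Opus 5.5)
Your proposal is correct and follows essentially the same route as the paper's proof. Both arguments subsample $b$ of the $k$ labeled copies uniformly without replacement and condition on the realized plug-in execution to get a hypergeometric retained count. Both apply the concave-support bound $h(x)\ge h(p)\min\{x/p,1\}$ with $\min\{z,\alpha\}\ge\alpha z$, and use the product-Bernoulli law to identify the retained copies with a direct run of the sub-multiset (expected cost $\alpha C(\bm m)$). Both then finish by choosing a grid anchor $b\le k\le\varrho b$.
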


\begin{proof}
Fix $\bm m\in\mathcal C_k(\M)$, label its $k$ execution copies, retain $b$
of them uniformly without replacement, and let $\bm M^{(b)}$ be the retained
count vector. Set $\alpha=b/k$. Conditional on a realized plug-in size-$k$
execution, if task $i$ has $r_i$ correct copies and $R_i$ of them are retained,
then
$R_i\sim\mathrm{Hypergeom}(k,r_i,b)$. For $r_i>0$, set
$p_i=r_i/k$ and $X_i=R_i/b$. Concavity and monotonicity give
\begin{equation}
    h(x)
    \ge
    h(p_i)\min\{x/p_i,1\},
    \qquad
    x\in[0,1].
    \label{eq:stoch-app-grid-support}
\end{equation}
Writing $Z_i=R_i/r_i$, we have $X_i/p_i=Z_i/\alpha$,
$\E[Z_i]=\alpha$, and $\min\{z,\alpha\}\ge\alpha z$ on $[0,1]$. Hence
$\E[h(R_i/b)]\ge\alpha h(r_i/k)$. The case $r_i=0$ is immediate.

By the iid plug-in execution law, any retained set of labeled copies has the
same distribution as a direct run of the retained execution-count vector, and
$\E[C(\bm M^{(b)})]=\alpha C(\bm m)$. Therefore
\begin{equation*}
    \E\left[
        \widehat\Pi_{\psi,\gamma}^{(1)}(\bm M^{(b)})
    \right]
    \ge
    \alpha
    \widehat\Pi_{\psi,\gamma}^{(1)}(\bm m).
\end{equation*}
Some retained multiset attains at least this expected value. Applying the
result to an optimal positive-valued size-$k$ vector proves
\eqref{eq:stoch-app-cross-cardinality}; choosing a grid anchor
$b\le k\le\varrho b$ proves \eqref{eq:stoch-app-grid-guarantee}. \Halmos
\end{proof}

\begin{lemma}[Plug-in generated-pool certificate]
\label{lem:stoch-generated-pool-certificate-app}
On $\mathcal E_{\mathrm{orc}}$, define
\begin{equation*}
    \varepsilon_{\mathrm{ell}}
    =
    \max_{k\in\mathcal K}\varepsilon_{\mathrm{ell},k},
    \qquad
    \varepsilon_{\mathrm{rnd}}
    =
    \max_{k\in\mathcal K}\varepsilon_{\mathrm{rnd},k},
\end{equation*}
and let
\begin{equation*}
    \widehat V_T^{(1)}
    =
    \max_{\bm m\in\mathcal C_{\mathcal K}(\M_T)}
    \widehat\Pi_{\psi,\gamma}^{(1)}(\bm m).
\end{equation*}
Then
\begin{equation}
    \widehat V_T^{(1)}
    \ge
    \frac1\varrho
    \widehat\OPT_{\psi,\gamma}^{(1)}(\G)
    -\varepsilon_{\mathrm{ell}}
    -\varepsilon_{\mathrm{rnd}}.
    \label{eq:stoch-app-plugin-generation-certificate}
\end{equation}
\end{lemma}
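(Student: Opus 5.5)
The plan is to chain, grid point by grid point, the primal recovery bound, the rounding bound, and the relaxation inequality, and then close the argument with the plug-in cardinality grid. Work on the event $\mathcal E_{\mathrm{orc}}$ and fix $k\in\mathcal K$. Lemma~\ref{lem:stoch-ellipsoid-recovery-app} supplies a fractional slot assignment $x^k\in\mathcal X_k(\G)$ supported on the frozen pool $\M_T$ with
\[
\widehat{\mathcal F}_k^{(1)}(x^k)\ge\widehat{\mathcal L}_k^{(1)}(\G)-\varepsilon_{\mathrm{ell},k}.
\]
Lemma~\ref{lem:stoch-true-dual-app}, via \eqref{eq:stoch-app-lp-upper-bound}, gives $\widehat{\mathcal L}_k^{(1)}(\G)\ge\widehat\OPT_k^{(1)}(\G)$. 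Combining these yields
\[
\widehat{\mathcal F}_k^{(1)}(x^k)\ge\widehat\OPT_k^{(1)}(\G)-\varepsilon_{\mathrm{ell}}.
\]

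Next, apply Lemma~\ref{lem:stoch-lp-rounding-app} to $x^k$. The rounded vector $\bm M_k^{\RR}(x^k)$ is always a size-$k$ count vector supported on $\M_T$, so it lies in $\mathcal C_{\mathcal K}(\M_T)$. Its expected plug-in value is at least $\widehat{\mathcal F}_k^{(1)}(x^k)-\varepsilon_{\mathrm{rnd},k}$, because $\widehat Q_k^{(1)}\le1$. Some realization of the rounding attains at least this expectation, so it suffices to use that realization. This gives
\[
\widehat V_T^{(1)}\ge\widehat\OPT_k^{(1)}(\G)-\varepsilon_{\mathrm{ell}}-\varepsilon_{\mathrm{rnd}}
\]
for every $k\in\mathcal K$. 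Since $\bm 0\in\mathcal C_{\mathcal K}(\M_T)$, we also have $\widehat V_T^{(1)}\ge0$. Taking the maximum over $k$, together with the bound from $\bm 0$, gives
\[
\widehat V_T^{(1)}\ge\max\Bigl\{0,\max_{b\in\mathcal K}\widehat\OPT_b^{(1)}(\G)\Bigr\}-\varepsilon_{\mathrm{ell}}-\varepsilon_{\mathrm{rnd}},
\]
where the two error terms are harmless inside the positive part.

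Finally, invoke Lemma~\ref{lem:stoch-grid-app} with $\M=\G$. Its conclusion \eqref{eq:stoch-app-grid-guarantee} bounds the bracketed maximum below by $\tfrac1\varrho\widehat\OPT_{\psi,\gamma}^{(1)}(\G)$, which is exactly \eqref{eq:stoch-app-plugin-generation-certificate}.

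The main point needing care is the application of Lemma~\ref{lem:stoch-grid-app} to the infinite-looking class $\G$. This is fine because $\G$ is finite by assumption. The plug-in law, built on the Stage-1 table, is well defined for all of $\G$ through the lazy-sampling construction, even though only the panels in $\M_T$ are ever materialized.

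A second check concerns the rounding step. The plug-in value of an integral $\bm M_k^{\RR}$ depends only on its count vector, not on the slot labels. This holds because, by \eqref{eq:stoch-app-scenario-array}, any integral slot assignment induces the iid plug-in law. Hence $\widehat\Pi^{(1)}_{\psi,\gamma}(\bm M_k^{\RR})$ is the correct quantity to compare with $\widehat V_T^{(1)}$, and the argument needs no further conditioning beyond the event $\mathcal E_{\mathrm{orc}}$.
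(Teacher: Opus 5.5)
Your proof is correct and follows the paper's argument. For each $k\in\mathcal K$ you chain primal recovery, the relaxation bound $\widehat{\mathcal L}_k^{(1)}(\G)\ge\widehat\OPT_k^{(1)}(\G)$, and the rounding lemma (using that the maximum over $\mathcal C_{\mathcal K}(\M_T)$ dominates the rounding expectation), then close with Lemma~\ref{lem:stoch-grid-app} applied to $\G$. Your explicit use of $\bm 0\in\mathcal C_{\mathcal K}(\M_T)$ to handle the positive part slightly tightens the paper's step ``some $k\in\mathcal K$ satisfies $\widehat\OPT_k^{(1)}(\G)\ge\frac1\varrho\widehat\OPT_{\psi,\gamma}^{(1)}(\G)$,'' which as written overlooks the case where every fixed-size plug-in optimum is negative; there the claim holds trivially via $\bm 0$.
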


\begin{proof}
For every $k\in\mathcal K$, the recovered $x^k$ is supported on $\M_T$.
By \Cref{lem:stoch-ellipsoid-recovery-app,lem:stoch-lp-rounding-app},
\begin{align}
    \widehat\OPT_k^{(1)}(\M_T)
    &\ge
    \E_{\mathrm{rnd}}\!\left[
        \widehat\Pi_{\psi,\gamma}^{(1)}
        \bigl(\bm M_k^{\RR}(x^k)\bigr)
    \right]
    \nonumber\\
    &\ge
    \widehat{\mathcal L}_k^{(1)}(\G)
    -\varepsilon_{\mathrm{ell},k}
    -\varepsilon_{\mathrm{rnd},k}
    \nonumber\\
    &\ge
    \widehat\OPT_k^{(1)}(\G)
    -\varepsilon_{\mathrm{ell},k}
    -\varepsilon_{\mathrm{rnd},k}.
    \label{eq:stoch-app-fixed-size-pool-quality}
\end{align}
By \Cref{lem:stoch-grid-app}, some $k\in\mathcal K$ satisfies
\begin{equation*}
    \widehat\OPT_k^{(1)}(\G)
    \ge
    \frac1\varrho
    \widehat\OPT_{\psi,\gamma}^{(1)}(\G).
\end{equation*}
Combining the two displays proves
\eqref{eq:stoch-app-plugin-generation-certificate}. \Halmos
\end{proof}

\begin{lemma}[Transfer of the generated-pool certificate to the true iid law]
\label{lem:stoch-plugin-transfer-app}
On the intersection of the Stage-1 concentration event
\eqref{eq:stoch-app-stage-one-event} and $\mathcal E_{\mathrm{orc}}$,
\begin{equation}
    \max_{\bm m\in\mathcal C_{\mathcal K}(\M_T)}
    \Pi_{\psi,\gamma}^{\mathrm{stoch}}(\bm m)
    \ge
    \frac1\varrho
    \OPT_{\psi,\gamma}^{\mathrm{stoch}}(\G)
    -\varepsilon_{\mathrm{ell}}
    -\varepsilon_{\mathrm{rnd}}
    -\left(1+\frac1\varrho\right)
        \varepsilon_{\mathrm{est}}(L_1,\delta_1).
    \label{eq:stoch-app-true-generation-certificate}
\end{equation}
In particular, the final term is at most
$2\varepsilon_{\mathrm{est}}(L_1,\delta_1)$.
\end{lemma}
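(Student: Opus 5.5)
We transfer the plug-in certificate of \Cref{lem:stoch-generated-pool-certificate-app} to the true iid law by two applications of the uniform perturbation bound \eqref{eq:stoch-app-plugin-true-uniform}. Throughout, we work on the intersection of the Stage-1 concentration event and $\mathcal E_{\mathrm{orc}}$. On this event, \Cref{lem:stoch-value-perturbation-app} gives
\[
\bigl|\widehat\Pi_{\psi,\gamma}^{(1)}(\bm m)-\Pi_{\psi,\gamma}^{\mathrm{stoch}}(\bm m)\bigr|\le\varepsilon_{\mathrm{est}}(L_1,\delta_1)
\]
simultaneously for every $\bm m$ with $k(\bm m)\le K_{\max}$, and for $\bm m=\bm 0$ both values are zero. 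This uniformity is what makes the argument work. The pool $\M_T$ is random and was built adaptively from the Stage-1 estimates. However, the bound holds for all $\bm m$ over the whole finite class $\G$, so no conditioning on $\M_T$ is required.

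The first step handles the generated pool. Every $\bm m\in\mathcal C_{\mathcal K}(\M_T)$ is feasible, so applying the perturbation bound to the plug-in maximizer gives
\[
\max_{\bm m\in\mathcal C_{\mathcal K}(\M_T)}\Pi^{\mathrm{stoch}}_{\psi,\gamma}(\bm m)\ge \widehat V_T^{(1)}-\varepsilon_{\mathrm{est}}.
\]
The second step handles the benchmark. Let $\bm m^\star$ be a maximizer of the true stochastic problem over $\G$, meaning a feasible vector attaining $\OPT_{\psi,\gamma}^{\mathrm{stoch}}(\G)$. The maximum exists because the supremum is attained: on the relevant support the problem is effectively finite. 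By the perturbation bound,
\[
\widehat\OPT^{(1)}_{\psi,\gamma}(\G)\ge\widehat\Pi^{(1)}_{\psi,\gamma}(\bm m^\star)\ge \OPT^{\mathrm{stoch}}_{\psi,\gamma}(\G)-\varepsilon_{\mathrm{est}}.
\]
We then substitute this into \eqref{eq:stoch-app-plugin-generation-certificate}. Because the benchmark term carries the factor $1/\varrho$, its error contributes only $\varepsilon_{\mathrm{est}}/\varrho$.

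Chaining the two steps gives
\[
\max_{\bm m\in\mathcal C_{\mathcal K}(\M_T)}\Pi^{\mathrm{stoch}}_{\psi,\gamma}(\bm m)\ge \frac1\varrho\OPT^{\mathrm{stoch}}_{\psi,\gamma}(\G)-\varepsilon_{\mathrm{ell}}-\varepsilon_{\mathrm{rnd}}-\Bigl(1+\frac1\varrho\Bigr)\varepsilon_{\mathrm{est}},
\]
which is \eqref{eq:stoch-app-true-generation-certificate}. The final remark follows because $\varrho\ge1$, so $1+1/\varrho\le2$.

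The main obstacle is confirming that the perturbation lemma legitimately applies to the data-dependent pool $\M_T$ and to the benchmark over all of $\G$. This is handled by the "potential pilot table" construction. The estimates $\widehat a^{(1)}_{ig}$ are defined for every $g\in\G$ before generation begins, so the concentration event is a single event covering all workflows at once. Once that event holds, the comparison is deterministic, and adaptivity in how $\M_T$ is chosen is harmless.
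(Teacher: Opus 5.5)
Your proposal is correct and follows the paper's proof essentially step for step. You apply the uniform perturbation bound \eqref{eq:stoch-app-plugin-true-uniform} once to pass from $\widehat V_T^{(1)}$ to the true maximum over $\mathcal C_{\mathcal K}(\M_T)$, invoke the generated-pool certificate of \Cref{lem:stoch-generated-pool-certificate-app}, and apply the bound a second time to compare $\widehat\OPT^{(1)}_{\psi,\gamma}(\G)$ with $\OPT^{\mathrm{stoch}}_{\psi,\gamma}(\G)$, which costs only $\varepsilon_{\mathrm{est}}/\varrho$; your remark that uniformity over all of $\G$ neutralizes the adaptivity of $\M_T$ is exactly the justification implicit in the paper's one-line appeal to \Cref{lem:stoch-value-perturbation-app}.
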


\begin{proof}
By \Cref{lem:stoch-value-perturbation-app}, the true and plug-in values of
every feasible portfolio differ by at most
$\varepsilon_{\mathrm{est}}(L_1,\delta_1)$. Therefore
\begin{align*}
    \max_{\bm m\in\mathcal C_{\mathcal K}(\M_T)}
    \Pi_{\psi,\gamma}^{\mathrm{stoch}}(\bm m)
    &\ge
    \widehat V_T^{(1)}
    -\varepsilon_{\mathrm{est}}(L_1,\delta_1)\\
    &\ge
    \frac1\varrho
    \widehat\OPT_{\psi,\gamma}^{(1)}(\G)
    -\varepsilon_{\mathrm{ell}}
    -\varepsilon_{\mathrm{rnd}}
    -\varepsilon_{\mathrm{est}}(L_1,\delta_1)\\
    &\ge
    \frac1\varrho
    \OPT_{\psi,\gamma}^{\mathrm{stoch}}(\G)
    -\varepsilon_{\mathrm{ell}}
    -\varepsilon_{\mathrm{rnd}}
    -\left(1+\frac1\varrho\right)
        \varepsilon_{\mathrm{est}}(L_1,\delta_1),
\end{align*}
which proves the claim. \Halmos
\end{proof}

Let $\mathcal H_T$ denote the sigma-field generated by the completed workflow
generation stage, including all Stage-1 pilot outcomes and oracle randomness.

\begin{lemma}[Conditional fresh-$L_2$ deviation]
\label{lem:stoch-fresh-sample-app}
Let $\mathcal A_T$ be a finite, $\mathcal H_T$-measurable class of
execution-count vectors. Conditional on $\mathcal H_T$, with probability at
least $1-\delta_2$,
\begin{equation}
    \sup_{\bm m\in\mathcal A_T}
    \left|
        \widehat\Pi_{L_2}^{\mathrm{final}}(\bm m)
        -
        \Pi_{\psi,\gamma}^{\mathrm{stoch}}(\bm m)
    \right|
    \le
    \sqrt{
        \frac{\log(2|\mathcal A_T|/\delta_2)}{2nL_2}
    }.
    \label{eq:stoch-app-uniform-deviation}
\end{equation}
Consequently, an exact empirical maximizer over $\mathcal A_T$ has true value
at least the best true value in $\mathcal A_T$ minus twice the right-hand side.
\end{lemma}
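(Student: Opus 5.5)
Conditional on $\mathcal H_T$, the class $\mathcal A_T$ is fixed and finite, and the Stage-2 draws $Z_{igq}^{(2,\ell)}$ are independent of $\mathcal H_T$ by construction. The plan is therefore to argue entirely under the conditional law given $\mathcal H_T$, prove a Hoeffding bound for each fixed $\bm m\in\mathcal A_T$, take a union bound over $\mathcal A_T$, and finish with the standard empirical-maximizer comparison.

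\textbf{Single portfolio.} Fix $\bm m\in\mathcal A_T$ and note that the cost term $\gamma C(\bm m)$ is deterministic and cancels in the difference. Write
\[
\widehat\Pi_{L_2}^{\mathrm{final}}(\bm m)+\gamma C(\bm m)=\frac{1}{nL_2}\sum_{\ell=1}^{L_2}\sum_{i=1}^n \psi_{k(\bm m)}\bigl(R_i^{(2,\ell)}(\bm m)\bigr).
\]
This is an average of $nL_2$ terms, each lying in $[0,1]$. The terms are mutually independent across $(i,\ell)$, since by \Cref{ass:stochastic-execution-law} the Stage-2 draws are independent across tasks and replications. The mean of the $(i,\ell)$ term is $\E[\psi_{k(\bm m)}(R_i(\bm m))]$, so the overall mean is exactly $\Pi_{\psi,\gamma}^{\mathrm{stoch}}(\bm m)+\gamma C(\bm m)$. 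Hoeffding's inequality then gives
\[
\Prob\bigl(|\widehat\Pi_{L_2}^{\mathrm{final}}(\bm m)-\Pi_{\psi,\gamma}^{\mathrm{stoch}}(\bm m)|>t \,\big|\, \mathcal H_T\bigr)\le 2e^{-2nL_2t^2}.
\]

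\textbf{Union bound.} Summing over the $|\mathcal A_T|$ portfolios and setting $t=\sqrt{\log(2|\mathcal A_T|/\delta_2)/(2nL_2)}$ yields \eqref{eq:stoch-app-uniform-deviation}. The same Stage-2 table is reused across portfolios, which creates dependence between different $\bm m$. This is harmless: the union bound needs no independence across portfolios, only the per-portfolio independence across $(i,\ell)$ established above.

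\textbf{Empirical maximizer.} On this event, let $\bm m^\dagger$ maximize the true value over $\mathcal A_T$, write $\widetilde{\bm m}$ for the empirical maximizer, and write $\epsilon$ for the right-hand side of \eqref{eq:stoch-app-uniform-deviation}. Then
\[
\Pi^{\mathrm{stoch}}(\widetilde{\bm m})\ge \widehat\Pi^{\mathrm{final}}(\widetilde{\bm m})-\epsilon\ge \widehat\Pi^{\mathrm{final}}(\bm m^\dagger)-\epsilon\ge \Pi^{\mathrm{stoch}}(\bm m^\dagger)-2\epsilon.
\]

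\textbf{Main obstacle.} The only delicate point is justifying that independence and the correct means survive conditioning on $\mathcal H_T$. $\mathcal A_T$ is $\mathcal H_T$-measurable, while the Stage-2 panels are drawn independently of the full generation history. Hence the conditional law of the Stage-2 table given $\mathcal H_T$ equals its unconditional product-Bernoulli law, and the argument above goes through under that conditional law.
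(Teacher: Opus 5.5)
Your proposal is correct and takes essentially the same approach as the paper. For each fixed $\bm m$, it applies Hoeffding's inequality to the $nL_2$ conditionally independent $[0,1]$-valued terms $\psi_{k(\bm m)}(R_i^{(2,\ell)}(\bm m))$, takes a union bound over $\mathcal A_T$, and compares the empirical and true maximizers in both directions. Your remarks about the cancelling cost term, the shared Stage-2 table across portfolios, and conditioning on $\mathcal H_T$ only spell out what the paper's shorter proof leaves implicit.
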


\begin{proof}
For fixed $\bm m$, the $nL_2$ random variables
\begin{equation*}
    \psi_{k(\bm m)}
    \bigl(R_i^{(2,\ell)}(\bm m)\bigr),
    \qquad
    i=1,\ldots,n,
    \quad
    \ell=1,\ldots,L_2,
\end{equation*}
are conditionally independent under
\Cref{ass:stochastic-execution-law}, lie in $[0,1]$, and have average mean
equal to the stochastic accuracy component of $\bm m$. Hoeffding's inequality
and a union bound over $\mathcal A_T$ prove
\eqref{eq:stoch-app-uniform-deviation}. Applying the uniform bound once to the
empirical maximizer and once to a true maximizer gives the final statement.
\Halmos
\end{proof}

\paragraph{Proof of Proposition~\ref{prop:stoch-odds-lift-main}.}
Fix a nonzero $\bm m$ and write $k=k(\bm m)$. For every task $i$,
\Cref{ass:recovery-envelope,lem:odds-lift-envelope} and concavity of
$h_\Lambda$ give
\begin{equation}
    \E[\psi_k(R_i(\bm m))]
    \le
    \E\!\left[
        h_\Lambda\!\left(\frac{R_i(\bm m)}k\right)
    \right]
    \le
    h_\Lambda\!\left(
        \frac{\E[R_i(\bm m)]}{k}
    \right).
    \label{eq:stoch-app-odds-jensen-chain}
\end{equation}
Using \eqref{eq:stoch-correct-count-main}, averaging over tasks, and applying
Jensen once more yield
\begin{equation*}
    \frac1n\sum_i\E[\psi_k(R_i(\bm m))]
    \le
    h_\Lambda\!\left(
        \frac1k\sum_gm_g\bar a_g
    \right),
\end{equation*}
which proves \eqref{eq:stoch-odds-lift-main}. For a size-$k$ vector over
$\M$, the argument of $h_\Lambda$ is at most $a^\star$ and
$C(\bm m)\ge kc_{\min}$, proving
\eqref{eq:stoch-variety-bound-main}. Singleton feasibility gives the lower
bound. Maximizing $h_\Lambda(a)-a$ over $a\in[0,1]$ proves
\eqref{eq:stoch-no-cost-cap-main}. \Halmos

\paragraph{Proof of Theorem~\ref{thm:stoch-end-to-end-main}.}
Let $\mathcal E_{\mathrm{est}}$ be the Stage-1 concentration event in
\eqref{eq:stoch-app-stage-one-event}, let $\mathcal E_{\mathrm{orc}}$ be the
event that every pricing batch is $\tau_k$-accurate, and let
$\mathcal E_{\mathrm{samp}}$ be the Stage-2 uniform-deviation event in
\eqref{eq:stoch-app-uniform-deviation} for
$\mathcal C_{\mathcal K}(\M_T)$. By
\Cref{lem:stoch-stage-one-concentration-app,lem:stoch-pricing-separation-app,lem:stoch-fresh-sample-app},
\begin{equation}
    \Prob\left(
        \mathcal E_{\mathrm{est}}
        \cap
        \mathcal E_{\mathrm{orc}}
        \cap
        \mathcal E_{\mathrm{samp}}
    \right)
    \ge
    1
    -\delta_1
    -\delta_2
    -N_{\mathrm{price}}^{\mathrm{stoch}}
        (\varepsilon_{\mathrm{ell}})
        e^{-p_{\mathrm{orc}}m}.
    \label{eq:stoch-app-joint-probability}
\end{equation}

Work on this intersection and write
\begin{equation*}
    V_T^\star
    =
    \max_{\bm m\in\mathcal C_{\mathcal K}(\M_T)}
    \Pi_{\psi,\gamma}^{\mathrm{stoch}}(\bm m).
\end{equation*}
By \Cref{lem:stoch-plugin-transfer-app},
\begin{equation}
    V_T^\star
    \ge
    \frac1\varrho
    \OPT_{\psi,\gamma}^{\mathrm{stoch}}(\G)
    -\varepsilon_{\mathrm{ell}}
    -\varepsilon_{\mathrm{rnd}}
    -2\varepsilon_{\mathrm{est}}(L_1,\delta_1).
    \label{eq:stoch-app-generated-pool-bound-proof}
\end{equation}
Let
$\varepsilon_s=\varepsilon_{\mathrm{samp}}(L_2,\delta_2)$ and let
$\bm m_T^\star$ maximize true value over
$\mathcal C_{\mathcal K}(\M_T)$.

Suppose first that the conservative rule returns $\widetilde{\bm m}$. By the
rule and uniform deviation,
\begin{equation*}
    \Pi_{\psi,\gamma}^{\mathrm{stoch}}(\widetilde{\bm m})
    \ge
    \widehat\Pi_{L_2}^{\mathrm{final}}(\widetilde{\bm m})
    -\varepsilon_s
    \ge
    \underline V.
\end{equation*}
Empirical optimality and uniform deviation also give
\begin{align*}
    \Pi_{\psi,\gamma}^{\mathrm{stoch}}(\widetilde{\bm m})
    &\ge
    \widehat\Pi_{L_2}^{\mathrm{final}}(\widetilde{\bm m})
    -\varepsilon_s\\
    &\ge
    \widehat\Pi_{L_2}^{\mathrm{final}}(\bm m_T^\star)
    -\varepsilon_s\\
    &\ge
    V_T^\star-2\varepsilon_s.
\end{align*}
Hence this case yields
\begin{equation*}
    \Pi_{\psi,\gamma}^{\mathrm{stoch}}(\widehat{\bm m})
    \ge
    \max\{\underline V,V_T^\star-2\varepsilon_s\}.
\end{equation*}

Suppose instead that the rule returns the baseline $\bm m^0$. Then
\begin{equation*}
    \widehat\Pi_{L_2}^{\mathrm{final}}(\widetilde{\bm m})
    -\varepsilon_s
    <
    \underline V.
\end{equation*}
Empirical optimality and uniform deviation imply
\begin{equation*}
    \widehat\Pi_{L_2}^{\mathrm{final}}(\widetilde{\bm m})
    -\varepsilon_s
    \ge
    V_T^\star-2\varepsilon_s,
\end{equation*}
so $V_T^\star-2\varepsilon_s<\underline V$. Since the baseline has true value
at least $\underline V$, this case gives the same bound. Combining the two
cases with \eqref{eq:stoch-app-generated-pool-bound-proof} proves
\eqref{eq:stoch-full-class-main}.

Finally, for every $A,e\ge0$ and $\underline V>0$,
\begin{equation*}
    \max\{\underline V,A-e\}
    \ge
    \frac{\underline V}{\underline V+e}A.
\end{equation*}
Apply this inequality with
\begin{equation*}
    A
    =
    \frac1\varrho
    \OPT_{\psi,\gamma}^{\mathrm{stoch}}(\G),
    \qquad
    e
    =
    \varepsilon_{\mathrm{ell}}
    +\varepsilon_{\mathrm{rnd}}
    +2\varepsilon_{\mathrm{est}}(L_1,\delta_1)
    +2\varepsilon_{\mathrm{samp}}(L_2,\delta_2),
\end{equation*}
to obtain \eqref{eq:stoch-multiplicative-final}. \Halmos

\section{Additional End-to-End Experiments}
\label{app:additional-end-to-end-experiments}

The main text reports the ABCD experiment in detail. This appendix applies the
same stochastic workflow-evaluation, selector-calibration, portfolio-
optimization, and dual-guided generation pipeline to the two other domains in
the study. We organize each domain in the same order: task construction and
data, stochastic performance of the initial workflow bank, selector
calibration, finite-pool optimization, budgeted workflow generation, and fresh
held-out deployment. The SGD and HotpotQA experiments are reported below.

\subsection{Schema-Guided Dialogue}
\label{app:sgd-end-to-end}

\paragraph{Task construction and data.}
Schema-Guided Dialogue (SGD) contains task-oriented conversations spanning
multiple services, each accompanied by a natural-language schema that defines
the service, its available intents, and its slots \citep{rastogi2020sgd}. We
form a schema-conditioned active-intent classification task. At a user turn,
the workflow observes the service description, the descriptions of all intents
available for that service, required and optional slots, slot descriptions,
and the dialogue prefix through the current user message. It must return
exactly one intent from that service's allowed intent set.

To reduce nearly repeated observations within a dialogue, we retain the first
user turn at which each distinct service--active-intent pair appears. We
exclude frames whose active intent is \texttt{NONE} and services with fewer
than two available intents. We preserve the official train, development, and
test split structure and sample 800 development tasks for workflow evaluation
and portfolio construction, 1,004 development-split tasks for selector
calibration, and 800 official test tasks for held-out deployment. The sampled
development, calibration, and held-out sets contain 33, 24, and 30 observed
active-intent labels, respectively; no single intent accounts for more than
6.13\%, 9.27\%, and 7.38\% of the corresponding samples.

\paragraph{Initial workflow bank and stochastic execution.}
We use the same initial bank as in the ABCD experiment: 18 single-call
workflows obtained by crossing Qwen2.5-3B-Instruct,
Mistral-7B-Instruct-v0.3, and Granite-3.3-8B-Instruct with the direct,
evidence-first, decomposition, verify-and-revise, alternatives, and
limited-context prompting strategies. Every workflow--task pair is executed
independently $L=5$ times at temperature one. The best estimated standalone
workflow on the development sample is Granite alternatives, with
one-execution accuracy 82.175\%. If every workflow in the initial bank is run
once independently, expected oracle coverage is 97.467\%, indicating that the
bank retains meaningful complementarity despite its strong best singleton.

The execution parser succeeds on 95.207\% of development executions, and the
five draws produce 1.433 distinct intent predictions per task--workflow cell
on average, confirming that repeated executions are not simply identical
copies of a deterministic output. Workflow-level estimates are also stable at small $L$. Relative
to the full $L=5$ estimates, the workflow rankings based on the first one,
two, and three draws have Spearman correlations 0.897, 0.989, and 0.994; the largest corresponding absolute change in a workflow's average accuracy is 1.975,
0.700, and 0.492 percentage points.

\begin{table}[t]
\centering
\caption{SGD design, selector calibration, and workflow-generation summary.}
\label{tab:sgd-design-calibration}
\small
\begin{tabular}{@{}p{0.42\textwidth}p{0.50\textwidth}@{}}
\toprule
Quantity & Setting or estimate \\
\midrule
Development / calibration / held-out tasks
    & $800 / 1{,}004 / 800$ \\
Observed active-intent labels
    & $33 / 24 / 30$; service-specific allowed intent sets \\
Initial workflow bank
    & 18 workflows: 3 generator models $\times$ 6 prompting strategies \\
Stochastic workflow execution
    & $L=5$ independent draws per task--workflow cell; temperature one \\
Run-size search
    & $K\in\{1,2,3,4,5,6\}$ \\
Final finite-pool optimization
    & 16 SAA scenarios and 512 randomized-rounding trials per $K$ \\
Selector
    & Qwen2.5-7B-Instruct; greedy decoding; cyclic-rotation vote \\
Initial selector calibration
    & 1,500 balanced panels from 702 distinct tasks \\
Initial selector strength
    & $\widehat\lambda=7.2145$, 95\% CI $[5.7747,9.0901]$ \\
Implied pairwise recovery
    & $\widehat\lambda/(1+\widehat\lambda)=87.83\%$ \\
Average selector advantage over uniform choice
    & 34.33 percentage points on the calibration design \\
Budgeted workflow generation
    & 26 ADAS pricing queries; 20 distinct candidates evaluated \\
Generated workflows incorporated
    & 0; final optimization bank remains at 18 workflow types \\
Second selector calibration
    & $\widehat\lambda=8.1506$, 95\% CI $[6.5437,10.4949]$ \\
\bottomrule
\end{tabular}
\end{table}

\paragraph{Selector calibration.}
We construct 100 feasible candidate panels for every
$K\in\{2,\ldots,6\}$ and every interior correct count
$r\in\{1,\ldots,K-1\}$. The selector observes the service schema, dialogue
prefix, and candidate intent labels but not workflow identity, cost, or
correctness. The primary rotation-vote calibration contains 1,500 panels from
702 distinct tasks and has a 100\% parse rate. Maximum-likelihood estimation of
the Plackett--Luce recovery model gives
$\widehat\lambda=7.2145$ with a 95\% confidence interval of
$[5.7747,9.0901]$, obtained by resampling tasks. This estimate corresponds to an 87.83\% fitted probability
of selecting the correct candidate when exactly one of two candidates is
correct and an average 34.33 percentage-point advantage over uniform selection
on the calibration design.

\begin{figure}[t]
    \centering
    \includegraphics[width=0.78\textwidth]
    {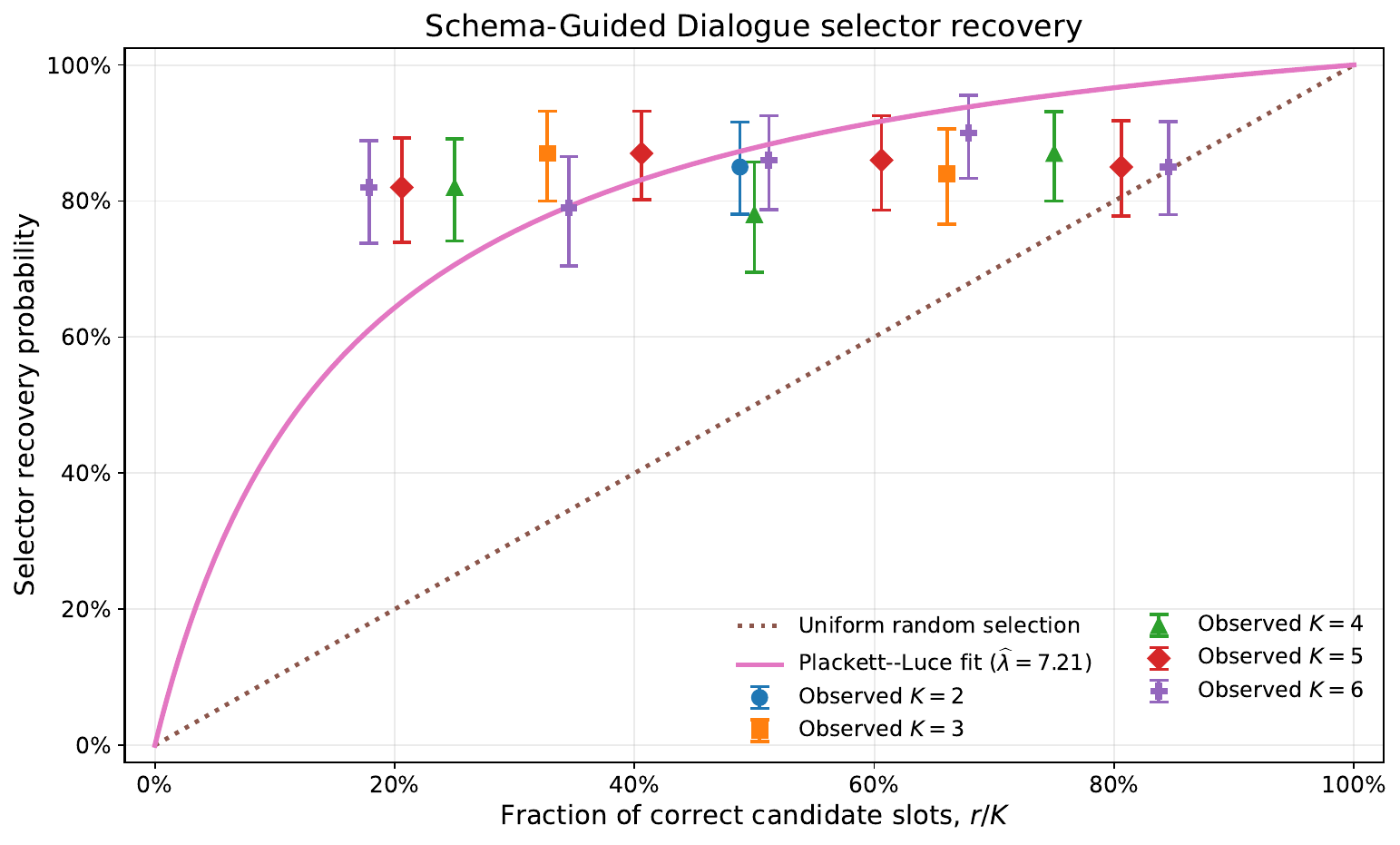}
    \caption{SGD selector recovery. Points report observed recovery on
    controlled $(K,r)$ panels, vertical bars report 95\% bootstrap intervals obtained by resampling tasks, the dotted line is uniform selection, and the solid
    curve is the fitted Plackett--Luce model. Small horizontal offsets separate
    observations having the same correct fraction.}
    \label{fig:sgd-selector-recovery}
\end{figure}

\paragraph{Initial-pool optimization.}
For each $K\in\{1,\ldots,6\}$, we apply the common 16-scenario SAA and
512-trial randomized-rounding procedure described in the main text and then
evaluate each distinct integral portfolio using the corresponding exact
plug-in calculation. Net
development value increases throughout the searched range, and the selected
portfolio has $K=6$. It contains Granite alternatives, Granite direct,
Granite limited context, Mistral evidence first, Qwen decomposition, and Qwen
limited context, one execution each. Its calibrated development accuracy is
92.612\%, and its workflow-cost-adjusted value is 0.920758. Thus, although
repetition is allowed, the selected SGD portfolio uses six distinct workflow
types. The separately optimized no-repeat solution has essentially the same
development value.

\begin{figure}[t]
    \centering
    \includegraphics[width=0.80\textwidth]
    {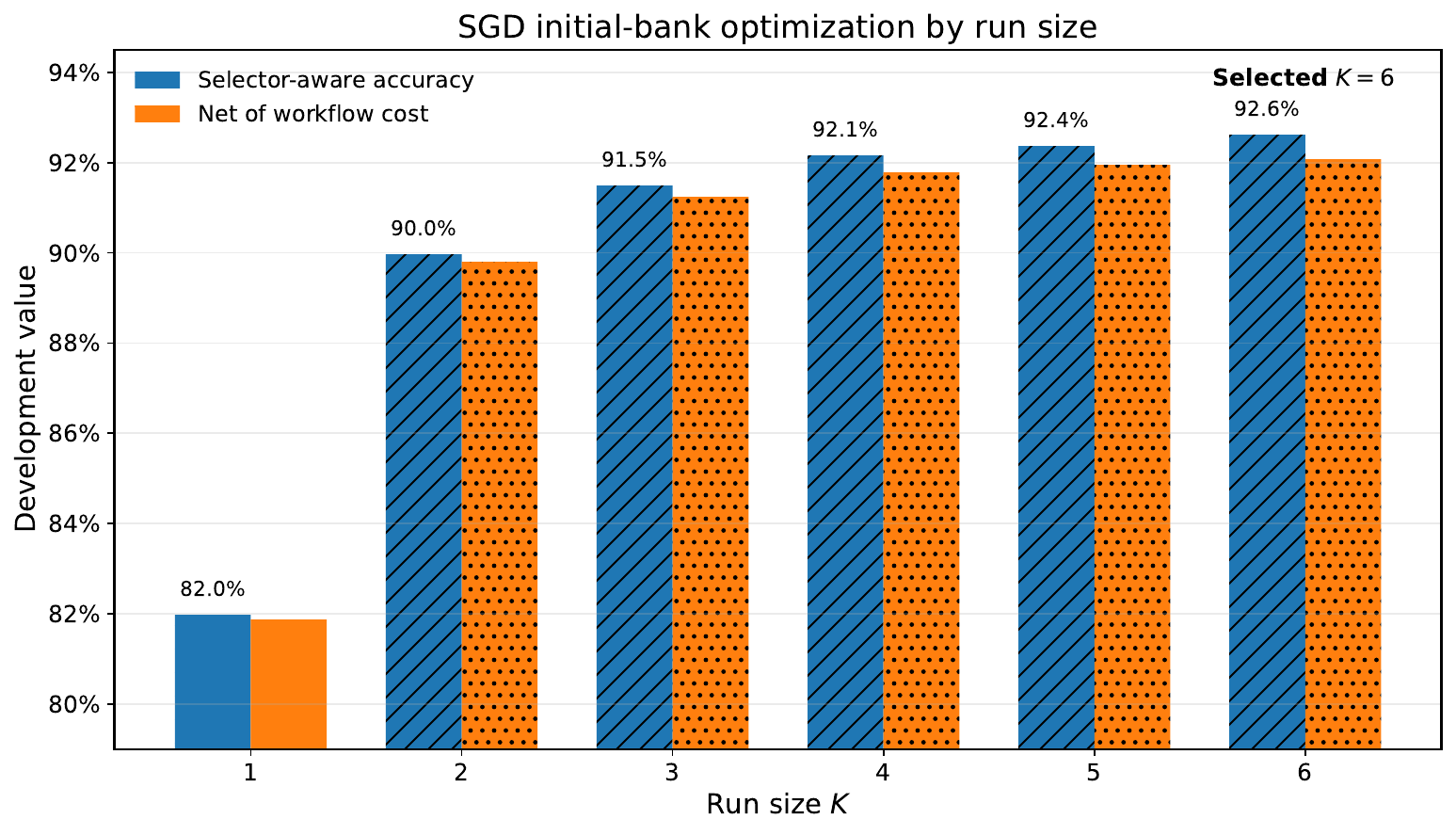}
    \caption{Development performance of the initial SGD workflow bank by run
    size. Bars report exact plug-in selector-aware accuracy and the same value
    after subtracting recurring workflow cost. The primary search selects
    $K=6$.}
    \label{fig:sgd-development-run-size}
\end{figure}

\paragraph{Budgeted dual-guided workflow generation.}
We apply the same practical stochastic ellipsoid-generation procedure as in
the ABCD experiment. For each $K$, workflow generation uses one common-uniform
SAA scenario, checks the explicit and previously materialized workflow
constraints before invoking ADAS, and evaluates every distinct new workflow
with five independent executions on each development task. The prespecified
paper budget permits 20 distinct new workflow evaluations, allocated across
the six run sizes. The SGD run makes 26 ADAS pricing queries and evaluates all
20 candidate workflows. None satisfies the stochastic pricing criterion
required for incorporation at the queried dual points, so the optimization
bank remains at its original 18 workflow types. Each fixed-$K$ search reaches
its candidate-evaluation budget; this result is therefore evidence from a
budgeted search, not a claim that no useful workflow exists anywhere in the
implicit class.

Because the workflow bank does not change, the pipeline's second calibration
is a repeated calibration of the same 18-workflow candidate distribution
rather than an expanded-bank calibration. It gives
$\widehat\lambda=8.1506$ with 95\% confidence interval
$[6.5437,10.4949]$, overlapping the initial estimate. We use this second
estimate for the final model-based predictions below; the held-out deployment
results additionally run the selector directly and do not rely solely on the
parametric recovery model.

\paragraph{Fresh held-out deployment.}
We freeze the reported portfolios, collect five new stochastic executions per
selected workflow on each of 800 official test tasks, and run the deterministic
Qwen selector using cyclic candidate-order rotations and rotation vote.
\Cref{tab:sgd-heldout-deployment,fig:sgd-heldout-accuracy} report the results.
Actual selector accuracy rises from 85.250\% for the best initial singleton to
92.750\% for the optimized initial-bank portfolio, an increase of 7.500
percentage points or 8.8\% relative to the singleton. Since the budgeted ADAS
search incorporates no new workflow, the final repeat-allowed portfolio is the
same optimized initial-bank portfolio and has the same held-out accuracy. The
no-repeat benchmark reaches 92.375\%.

The calibrated model predicts the same ordering, with selector-aware accuracy
84.700\% for the singleton, 94.708\% for the optimized repeat-allowed
portfolio, and 94.543\% for the no-repeat benchmark. For the optimized
portfolio, uniform selection among the realized candidate slots would attain
83.150\%, while oracle coverage is 97.904\%. After subtracting both workflow
cost and realized selector cost, actual total-system net value rises from
0.851357 for the singleton to 0.915629 for the optimized portfolio.

\begin{table}[t]
\centering
\caption{Fresh held-out SGD deployment on 800 tasks. PL accuracy is the
plug-in Plackett--Luce prediction; actual accuracy uses the blind deterministic
Qwen selector. Brackets report 95\% Wilson intervals. Workflow and selector
costs are per task, and actual total-system net subtracts both.}
\label{tab:sgd-heldout-deployment}
\scriptsize
\resizebox{\textwidth}{!}{%
\begin{tabular}{lrrrrrrrr}
\toprule
Method & $K$ & PL accuracy & Actual accuracy (95\% CI) & Random accuracy
& Oracle coverage & Workflow cost & Selector cost & Actual total-system net \\
\midrule
Best initial singleton
& 1 & 0.8470 & $0.8525\ [0.8262,0.8754]$ & 0.8470 & 0.8470
& 0.001143 & 0.000000 & 0.851357 \\
Optimized initial bank / final repeat portfolio
& 6 & 0.9471 & $0.9275\ [0.9074,0.9435]$ & 0.8315 & 0.9790
& 0.005357 & 0.006514 & 0.915629 \\
Final no-repeat benchmark
& 6 & 0.9454 & $0.9238\ [0.9033,0.9402]$ & 0.8303 & 0.9772
& 0.005357 & 0.006514 & 0.911879 \\
\bottomrule
\end{tabular}%
}
\end{table}

\begin{figure}[t]
    \centering
    \includegraphics[width=0.82\textwidth]
    {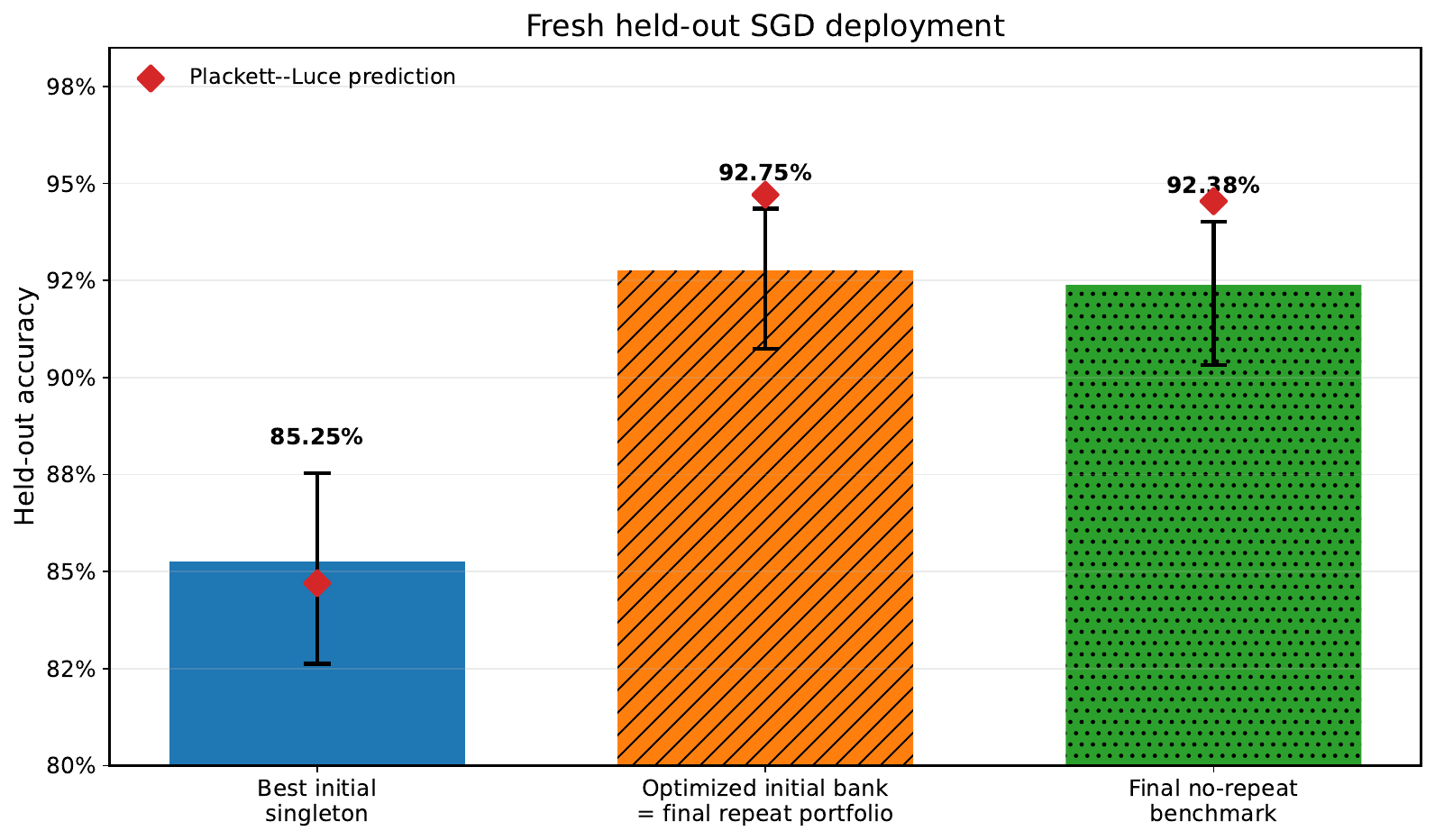}
    \caption{Fresh held-out SGD accuracy. Bars report actual deterministic
    selector accuracy, error bars show 95\% Wilson intervals over 800 tasks,
    and diamonds report the corresponding plug-in Plackett--Luce predictions.}
    \label{fig:sgd-heldout-accuracy}
\end{figure}

\paragraph{Multiplicity.}
Within the selector-calibrated range $K\leq6$, the repeat-allowed solution uses
only distinct workflow types, and the repeat and no-repeat values are
effectively identical. Repetition first appears at $K=7$ in the extended
model-based sensitivity analysis and produces only small gains. Thus, SGD
supports the value of portfolio diversity and post-output selection but,
unlike ABCD, does not provide meaningful evidence that multiplicity is
valuable at the endogenous deployment solution.

\subsection{HotpotQA}
\label{app:hotpotqa-end-to-end}

\paragraph{Task construction, data, and answer scoring.}
HotpotQA is an open-domain question-answering benchmark designed to require
reasoning across multiple pieces of evidence \citep{yang2018hotpotqa}. We use
the distractor configuration. Each workflow receives the question together
with all supplied Wikipedia passages and must return a concise answer span or
short phrase. Supporting-fact annotations are retained only as audit metadata
and are not shown to candidate workflows or to the selector.

The public distractor test labels are not distributed. We therefore sample 800
workflow-development questions from the official training split and form
disjoint selector-calibration and held-out samples of 1,004 and 800 questions
from the labeled distractor validation split. These samples contain 733, 913,
and 741 distinct normalized answers, respectively. The largest answer share is
2.63\% in the development sample, 3.98\% in the calibration sample, and 2.63\%
in the held-out sample. Answers are evaluated using the normalized exact-match
rule implemented in the notebooks: text is lowercased, articles and
nonalphanumeric punctuation are removed, and whitespace is collapsed before
comparison with the reference answer.

\paragraph{Initial workflow bank and stochastic execution.}
We use the same 18-workflow initial bank as in the other domains, obtained by
crossing Qwen2.5-3B-Instruct, Mistral-7B-Instruct-v0.3, and
Granite-3.3-8B-Instruct with the direct, evidence-first, decomposition,
verify-and-revise, alternatives, and limited-context prompting strategies.
Every workflow--question pair is executed independently $L=5$ times at
temperature one.

The best estimated standalone workflow on the development sample is Granite
evidence first, with one-execution accuracy 44.950\%. If every workflow in the
initial bank is executed once independently, expected oracle coverage is
77.578\%, showing substantial complementarity among the candidate generators.
The answer parser succeeds on 88.468\% of executions, and the five stochastic
draws produce 2.908 distinct normalized answers per workflow--question cell on
average, confirming substantial within-cell stochastic variation. Workflow-level estimates are stable despite noisy task-level
probabilities. Relative to the full $L=5$ ranking, rankings based on the first
one, two, and three executions have Spearman correlations 0.939, 0.993, and
0.990; all five of the highest-ranked workflows remain in the top five. The
largest absolute change in a workflow's average accuracy is 3.000, 0.875, and
0.633 percentage points, respectively.

\begin{table}[t]
\centering
\caption{HotpotQA design, selector calibration, and workflow-generation
summary.}
\label{tab:hotpotqa-design-calibration}
\small
\begin{tabular}{@{}p{0.43\textwidth}p{0.49\textwidth}@{}}
\toprule
Quantity & Setting or estimate \\
\midrule
Development / calibration / held-out questions
    & $800 / 1{,}004 / 800$ \\
Data split
    & Development from official train; disjoint calibration and held-out
      subsets from labeled distractor validation \\
Observed normalized answers
    & $733 / 913 / 741$ \\
Initial workflow bank
    & 18 workflows: 3 generator models $\times$ 6 prompting strategies \\
Stochastic workflow execution
    & $L=5$ independent draws per workflow--question cell; temperature one \\
Run-size search
    & $K\in\{1,2,3,4,5,6\}$ \\
Final finite-pool optimization
    & 16 SAA scenarios and 512 randomized-rounding trials per $K$ \\
Selector
    & Qwen2.5-7B-Instruct; greedy decoding; cyclic-rotation vote \\
Initial selector calibration
    & 1,500 balanced panels from 633 distinct questions \\
Initial selector strength
    & $\widehat\lambda=1.7062$, 95\% CI $[1.4557,2.0068]$ \\
Implied pairwise recovery
    & $\widehat\lambda/(1+\widehat\lambda)=63.05\%$ \\
Average selector advantage over uniform choice
    & 10.87 percentage points on the calibration design \\
Budgeted workflow generation
    & 26 ADAS pricing queries; 20 distinct candidates evaluated \\
Generated workflows incorporated
    & 1; final optimization bank contains 19 workflow types \\
Expanded-bank selector strength
    & $\widehat\lambda=1.7529$, 95\% CI $[1.5028,2.0499]$ \\
\bottomrule
\end{tabular}
\end{table}

\paragraph{Selector calibration.}
For each $K\in\{2,\ldots,6\}$ and each interior correct count
$r\in\{1,\ldots,K-1\}$, we construct 100 feasible panels containing exactly
$r$ correct candidate slots and $K-r$ incorrect slots. The selector observes
the question, all supplied passages, and the candidate answers, but not
workflow identity, cost, or correctness. The primary rotation-vote calibration
contains 1,500 panels from 633 distinct questions and has a 100\% base-panel
parse rate.

Maximum-likelihood estimation of the Plackett--Luce recovery model gives
$\widehat\lambda=1.7062$ with a 95\% confidence interval of
$[1.4557,2.0068]$, obtained by resampling tasks. The fitted selector
therefore chooses the correct answer with probability 63.05\% when
exactly one of two candidates is correct. Averaged over the calibration design,
selector recovery exceeds uniform selection by 10.87 percentage
points. Selector recovery is positive but substantially weaker than in the two
service-operation domains, consistent with the greater difficulty of
recognizing a correct free-form multi-hop answer among plausible alternatives.

\begin{figure}[t]
    \centering
    \includegraphics[width=0.80\textwidth]
    {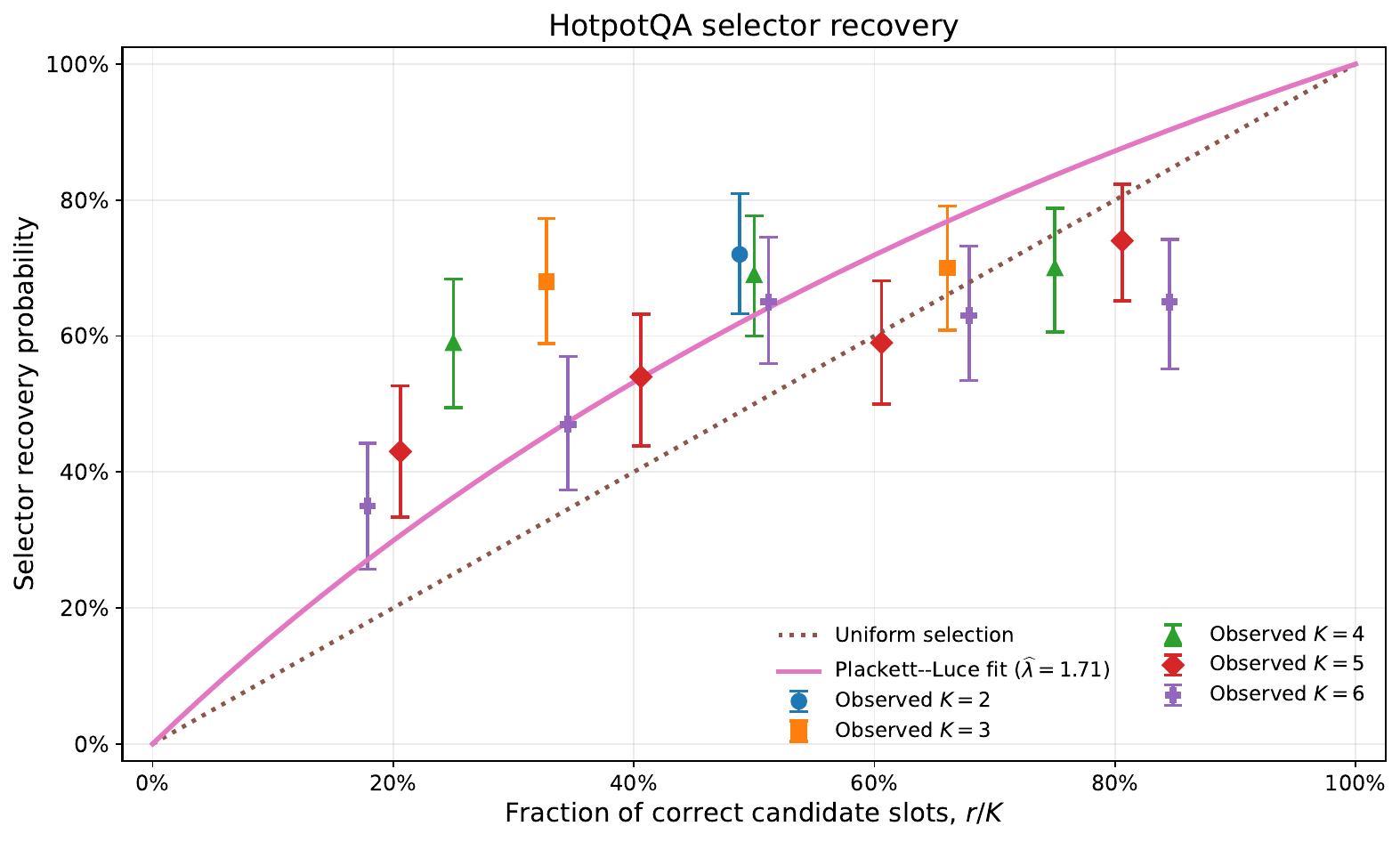}
    \caption{HotpotQA selector recovery. Points report observed recovery on
    controlled $(K,r)$ panels, vertical bars report 95\% bootstrap intervals obtained by resampling tasks, the dotted line is uniform selection, and the solid
    curve is the fitted Plackett--Luce model. Small horizontal offsets separate
    observations having the same correct fraction.}
    \label{fig:hotpotqa-selector-recovery}
\end{figure}

\paragraph{Initial-pool optimization.}
For each $K\in\{1,\ldots,6\}$, we apply the common 16-scenario SAA and
512-trial randomized-rounding procedure described in the main text and then
evaluate each distinct integral portfolio using the corresponding exact
plug-in calculation. The best
initial-bank solution has $K=6$ and assigns five execution slots to Granite
evidence first and one slot to Granite decomposition. Its calibrated
development accuracy is 48.270\%, compared with 44.950\% for the best
singleton, and its workflow-cost-adjusted value is 0.475840.

Multiplicity is already valuable in the initial bank. When repeated workflow
types are prohibited, the best no-repeat solution is attained at $K=3$ by
Granite alternatives, Granite decomposition, and Granite evidence first and
has workflow-cost-adjusted value 0.466358. Thus, allowing repetition increases
the initial-bank endogenous optimum by 0.009482 in accuracy-equivalent
net-value units.

\paragraph{Budgeted dual-guided workflow generation.}
We apply the same practical stochastic ellipsoid-generation procedure used in
the ABCD and SGD experiments. For each $K$, generation uses one common-uniform
SAA scenario, checks explicit and already materialized workflow constraints
before invoking ADAS, and evaluates every distinct new workflow using five
independent executions on each development question. The prespecified paper
budget permits 20 distinct workflow evaluations across the six run sizes.

The search makes 26 ADAS pricing queries, evaluates all 20 distinct candidates,
and incorporates one workflow during the $K=2$ search, expanding the final
optimization bank from 18 to 19 workflow types. We label this workflow G1; its
frozen ADAS name is \texttt{FinalizedMultiHop\allowbreak QASolver}. Each fixed-$K$ search reaches its
prespecified candidate-evaluation budget, so the result should be interpreted
as a budgeted ellipsoid search rather than as a claim that the complete
implicit workflow class has been exhausted.

\begin{table}[t]
\centering
\caption{Executable structure of the ADAS-generated HotpotQA workflow
incorporated into the final bank. All three model calls use GPT-4o-mini through
the ADAS \texttt{LLMAgentBase} interface.}
\label{tab:hotpotqa-generated-workflow}
\small
\begin{tabular}{@{}p{0.10\textwidth}p{0.62\textwidth}rr@{}}
\toprule
Workflow & Executable structure & Mean calls & Cost \\
\midrule
\textbf{G1}
& A first GPT-4o-mini call extracts evidence from the supplied passages. A
  second call verifies the relevance and accuracy of that evidence. A third
  call receives the question, passages, and verified evidence and derives the
  final answer. If either evidence stage returns no usable output, the workflow
  returns a fixed insufficient-evidence response.
& 3.0 & 0.003 \\
\bottomrule
\end{tabular}

\vspace{1mm}
{\footnotesize
\emph{Notes.} G1 is the frozen ADAS program
\texttt{adas\_ell\_\allowbreak 5860177799ff} /\newline \texttt{FinalizedMultiHop\allowbreak QASolver}. Cost is recurring
normalized execution cost per complete workflow run.}
\end{table}

After adding G1, we recalibrate the deterministic Qwen selector on the expanded
candidate distribution. The estimate increases modestly to
$\widehat\lambda=1.7529$ with 95\% confidence interval
$[1.5028,2.0499]$.

\paragraph{Expanded-bank reoptimization and multiplicity.}
G1 changes the portfolio substantially. As a standalone workflow, it has
estimated development accuracy 66.100\%, compared with 44.950\% for the best
initial workflow. Under the primary objective, which subtracts recurring
workflow cost but not selector cost, the final repeat-allowed optimum has
$K=2$ and assigns both execution slots to G1. Its calibrated development
accuracy is 66.863\%, and its workflow-cost-adjusted value is 0.662631. The
endogenous no-repeat optimum is one execution of G1, with calibrated accuracy
66.100\% and workflow-cost-adjusted value 0.658000.

At fixed $K=2$, multiplicity is valuable under the primary workflow-cost
objective.
When repetition is prohibited, the best $K=2$ portfolio combines G1 with
Granite evidence first and has value 0.596823. Allowing two independent
executions of G1 raises that fixed-size value by 0.065808. After optimizing run
size separately under each formulation, the repeat-allowed optimum remains
0.004631 above the no-repeat optimum.

\begin{figure}[t]
    \centering
    \includegraphics[width=0.86\textwidth]
    {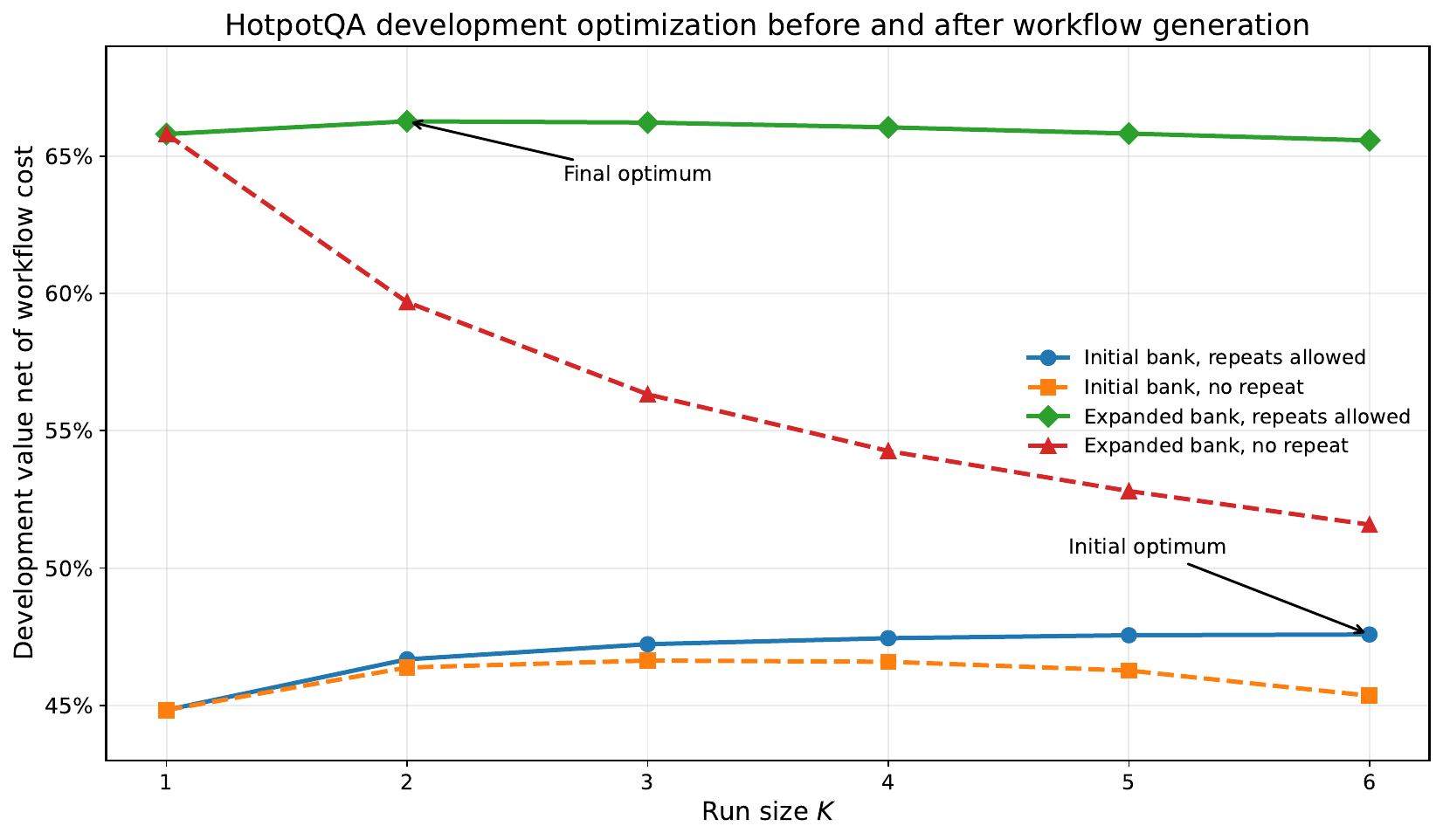}
    \caption{HotpotQA development value before and after workflow generation.
    Lines compare repeat-allowed and no-repeat optimization over the initial and
    expanded workflow banks. Workflow generation shifts the selected run size
    from $K=6$ to $K=2$ and makes repeated execution of G1 optimal under the
    primary workflow-cost objective.}
    \label{fig:hotpotqa-development-value}
\end{figure}

\begin{table}[t]
\centering
\caption{Key HotpotQA development portfolios. Accuracy is the exact plug-in
Plackett--Luce value of the reported integral portfolio. Net value subtracts
recurring workflow cost but not selector cost, matching the primary
optimization objective.}
\label{tab:hotpotqa-development-portfolios}
\small
\begin{tabular}{@{}p{0.25\textwidth}cp{0.39\textwidth}rr@{}}
\toprule
Method & $K$ & Portfolio & Accuracy & Net value \\
\midrule
Best initial singleton
    & 1 & Granite evidence first & 0.449500 & 0.448357 \\
Optimized initial bank
    & 6 & Granite decomposition $+$ 5$\times$ Granite evidence first
    & 0.482697 & 0.475840 \\
Generated workflow, no repeat
    & 1 & G1 & 0.661000 & 0.658000 \\
Final repeat-allowed portfolio
    & 2 & 2$\times$G1 & 0.668631 & 0.662631 \\
\bottomrule
\end{tabular}
\end{table}

\paragraph{Fresh held-out deployment.}
We freeze the reported portfolios, collect five new stochastic executions per
selected workflow on each of 800 held-out questions, and then deploy the
Qwen2.5-7B-Instruct selector deterministically using cyclic candidate-order
rotations and rotation vote. \Cref{tab:hotpotqa-heldout-deployment,fig:hotpotqa-heldout-accuracy} report both the plug-in Plackett--Luce
prediction and the realized selector performance.

Actual selector accuracy is 30.250\% for the best initial singleton and
31.125\% for the optimized initial-bank portfolio. After workflow generation,
accuracy rises to 55.250\% for the two-copy G1 portfolio and 54.875\% for the
single-G1 no-repeat benchmark. Relative to the best initial singleton, the full
repeat-allowed procedure gains 25.000 percentage points, or 82.6\%. Workflow
generation contributes 24.125 percentage points relative to the optimized
initial portfolio. The fitted recovery model is especially accurate for the
final repeated portfolio, predicting 55.017\% compared with the realized
55.250\%.

The held-out comparison also clarifies the value and cost of multiplicity. A
second independent G1 execution raises actual accuracy by 0.375 percentage
points relative to one G1 execution. Under the primary objective, which charges
workflow cost but not selector cost, the two-copy portfolio has slightly higher
net value: 0.546500 versus 0.545750. Once realized selector cost is also
subtracted, however, the one-copy benchmark has slightly higher total-system
net value, 0.545750 versus 0.544329. Thus, HotpotQA provides evidence that
multiplicity can improve accuracy while also illustrating that the incremental
gain may not justify the additional post-output selection cost.

\begin{table}[t]
\centering
\caption{Fresh held-out HotpotQA deployment on 800 questions. PL accuracy is
the plug-in Plackett--Luce prediction; actual accuracy uses the blind
deterministic Qwen selector. Brackets report 95\% Wilson intervals. Workflow
and selector costs are per question, and actual total-system net subtracts
both.}
\label{tab:hotpotqa-heldout-deployment}
\scriptsize
\resizebox{\textwidth}{!}{%
\begin{tabular}{lrrrrrrrr}
\toprule
Method & $K$ & PL accuracy & Actual accuracy (95\% CI) & Random accuracy
& Oracle coverage & Workflow cost & Selector cost & Actual total-system net \\
\midrule
Best initial singleton
& 1 & 0.2980 & $0.3025\ [0.2717,0.3352]$ & 0.2980 & 0.2980
& 0.001143 & 0.000000 & 0.301357 \\
Optimized initial bank
& 6 & 0.3331 & $0.3113\ [0.2801,0.3442]$ & 0.2997 & 0.4642
& 0.006857 & 0.006514 & 0.297879 \\
Final ADAS portfolio, repeats
& 2 & 0.5502 & $0.5525\ [0.5179,0.5866]$ & 0.5430 & 0.5692
& 0.006000 & 0.002171 & 0.544329 \\
Final ADAS portfolio, no repeat
& 1 & 0.5430 & $0.5488\ [0.5141,0.5829]$ & 0.5430 & 0.5430
& 0.003000 & 0.000000 & 0.545750 \\
\bottomrule
\end{tabular}%
}
\end{table}

\begin{figure}[t]
    \centering
    \includegraphics[width=0.84\textwidth]
    {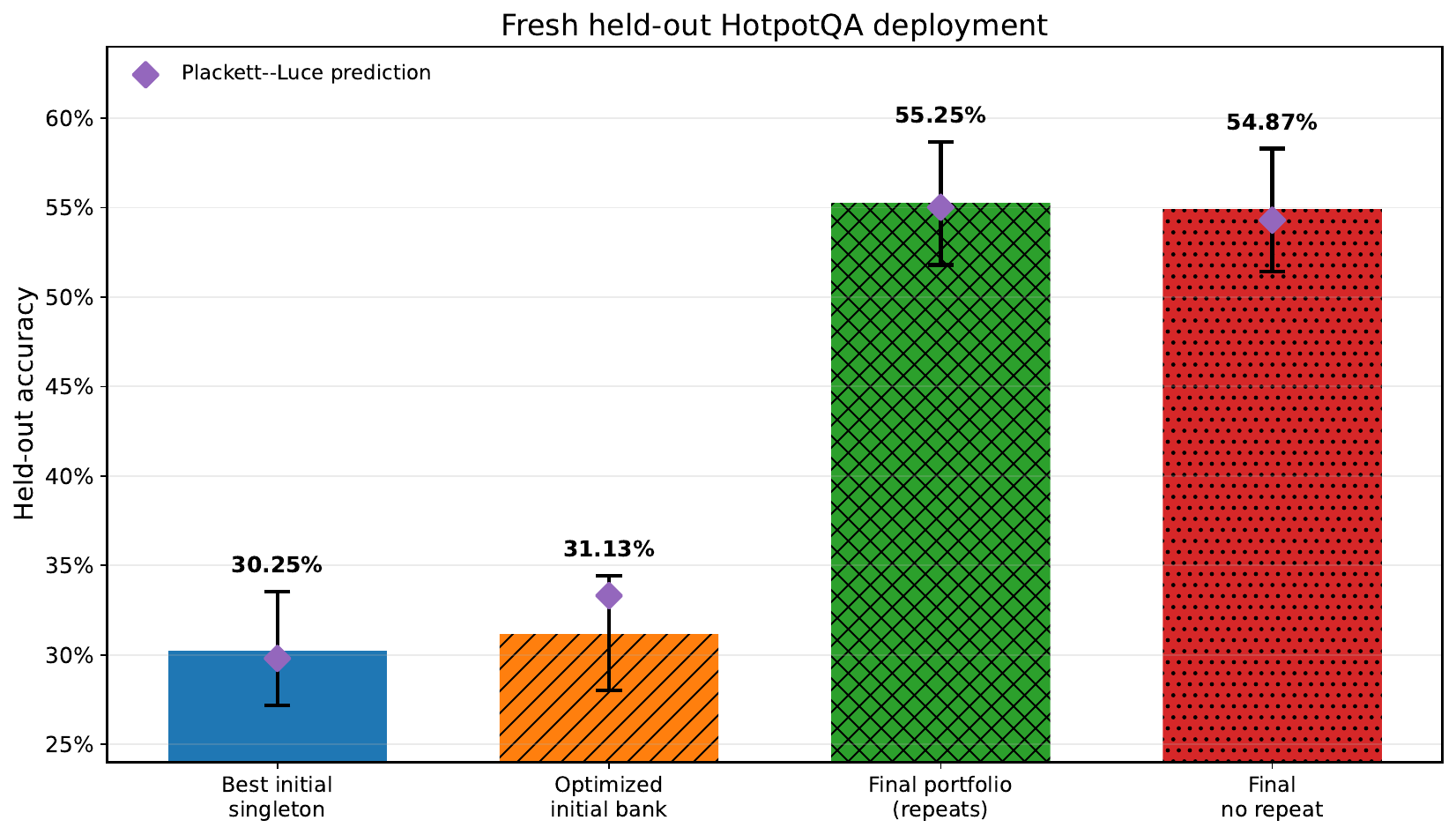}
    \caption{Fresh held-out HotpotQA accuracy. Bars report actual
    deterministic-selector accuracy, error bars show 95\% Wilson intervals over
    800 questions, and diamonds report the corresponding plug-in
    Plackett--Luce predictions.}
    \label{fig:hotpotqa-heldout-accuracy}
\end{figure}

\end{APPENDICES}

\end{document}